\newcommand{\tabincell}[2]{\begin{tabular}{@{}#1@{}}#2\end{tabular}} 
\newcommand{\modify}[1]{{ #1}}
\newtheorem{theorem}{Theorem}[section]
\newtheorem{corollary}{Remark}[section]
\newtheorem{assumption}{Hypothesis}[section]
\newtheorem{lemma}[theorem]{Lemma}
\newtheorem{definition}{Definition}[section]
\newtheorem{prop}{Proposition}[section]
\begin{document}

\title{ A Comprehensive and Modularized Statistical Framework for Gradient Norm Equality in \\Deep Neural Networks}

\author{Zhaodong Chen, Lei Deng, \IEEEmembership{Member}, \IEEEmembership{IEEE}, Bangyan Wang, Guoqi Li, \IEEEmembership{Member}, \IEEEmembership{IEEE},   \\Yuan Xie, \IEEEmembership{Fellow}, \IEEEmembership{IEEE}
%\IEEEcompsocitemizethanks{\IEEEcompsocthanksitem M. Shell was with the Department
%of Electrical and Computer Engineering, Georgia Institute of Technology, Atlanta,
%GA, 30332.\protect\\
% note need leading \protect in front of \\ to get a newline within \thanks as
% \\ is fragile and will error, could use \hfil\break instead.
%E-mail: see http://www.michaelshell.org/contact.html
%\IEEEcompsocthanksitem J. Doe and J. Doe are with Anonymous University.}% <-this % stops an unwanted space
\thanks{The work was partially supported by National Science Foundation (Grant No. 1725447), Tsinghua University Initiative Scientiﬁc Research Program, Tsinghua-Foshan Innovation Special Fund (TFISF), and National Natural Science Foundation of China (Grant No. 61876215). Zhaodong Chen and Lei Deng contributed equally to this work, corresponding author: Guoqi Li. Zhaodong Chen, Lei Deng, Bangyan Wang, and Yuan Xie are with the Department of Electrical and Computer Engineering, University of California, Santa Barbara, CA 93106, USA (email: \{chenzd15thu, leideng, bangyan, yuanxie\}@ucsb.edu). Guoqi Li is with the Department of Precision Instrument, Center for Brain Inspired Computing Research, Tsinghua University, Beijing 100084, China (email: liguoqi@mail.tsinghua.edu.cn).
}}

% The paper headers
%\markboth{Journal of \LaTeX\ Class Files,~Vol.~14, No.~8, August~2015}%
%{Shell \MakeLowercase{\textit{et al.}}: A User-Friendly Toolbox For Well-behaved Neural Network Design and Analysis}

\IEEEtitleabstractindextext{%
\begin{abstract}
The rapid development of deep neural networks (DNNs) in recent years can be attributed to the various techniques that address gradient explosion and vanishing. In order to understand the principle behind these techniques and develop new methods, plenty of metrics have been proposed to identify networks that are free of gradient explosion and vanishing. However, due to the diversity of network components and complex serial-parallel hybrid connections in modern DNNs, the evaluation of existing metrics usually requires strong assumptions, complex statistical analysis, or has limited application fields, which constraints their spread in the community. In this paper, inspired by the Gradient Norm Equality and dynamical isometry, we first propose a novel metric called Block Dynamical Isometry, which measures the change of gradient norm in individual block. Because our Block Dynamical Isometry is norm-based, its evaluation needs weaker assumptions compared with the original dynamical isometry. To mitigate the challenging derivation, we propose a highly modularized statistical framework based on free probability. Our framework includes several key theorems to handle complex serial-parallel hybrid connections and a library to cover the diversity of network components. Besides, several sufficient prerequisites are provided. Powered by our metric and framework, we analyze extensive initialization, normalization, and network structures. We find that Gradient Norm Equality is a universal philosophy behind them. Then, we improve some existing methods based on our analysis, including an activation function selection strategy for initialization techniques, a new configuration for weight normalization, and a depth-aware way to derive coefficients in SeLU. Moreover, we propose a novel normalization technique named second moment normalization, which is theoretically 30\% faster than batch normalization without accuracy loss. Last but not least, our conclusions and methods are evidenced by extensive experiments on multiple models over CIFAR10 and ImageNet.
\end{abstract}

% Note that keywords are not normally used for peerreview papers.
\begin{IEEEkeywords}
Deep Neural Networks, Free Probability, Gradient Norm Equality
\end{IEEEkeywords}}

\maketitle

\IEEEdisplaynontitleabstractindextext

\IEEEpeerreviewmaketitle

\IEEEraisesectionheading{\section{Introduction}\label{sec:introduction}}

\IEEEPARstart{I}{t} has become a common sense that deep neural networks (DNNs) are more effective compared with the shallow ones \cite{delalleau2011shallow}. However, the training of very deep neural models usually suffers from gradient explosion and vanishing. To this end, plenty of schemes and network structures have been proposed. For instance: He et al. (2015) \cite{he2015delving}, Mishkin \& Matas (2015) \cite{mishkin2015all}, Xiao et al. (2018) \cite{xiao2018dynamical} and Zhang et al. (2019) \cite{zhang2019fixup} suggest that the explosion and vanishing can be mitigated with proper initialization of network parameters. Ioffe \& Szegedy (2015) \cite{ioffe2015batch}, Salimans \& Kingma (2016) \cite{salimans2016weight} and Qiao et al. (2019) \cite{qiao2019weight} propose several normalization schemes that can stabilize the neural networks during training. From the perspective of network structures, He et al. (2016) \cite{he2016deep} and Huang et al. (2017) \cite{huang2017densely} demonstrate that neural networks with shortcuts can effectively avoid gradient vanishing and explosion. These techniques are still crucial even when the network structures are constructed in data-driven ways like Neural Architecture Search (NAS): the gradient explosion and vanishing in NAS models are still handled by handcrafted methods like batch normalization (BN) and shortcut connections \cite{tan2019mnasnet}.

\modify{It's natural to ask: \emph{is there a common philosophy behind all these studies?} Such a philosophy is able to guide us to achieve novel hyper-parameter selection strategies and novel network structures.

In recent years, great efforts have been made to pursue such a philosophy. While He et al. (2015) \cite{he2015delving} and  Mishkin \& Matas (2015) \cite{mishkin2015all} preserve the information flow in the forward pass, powered by dynamical mean-field theory. Poole et al. (2016) \cite{poole2016exponential}, Xiao et al. (2018) \cite{xiao2018dynamical}, Yang et al. (2019) \cite{yang2019mean} and Schoenholz et al. (2016) \cite{schoenholz2016deep} study the stability of the statistics fixed point. They identify an order-to-chaos phase transition in deep neural networks. If the network seats steady on the border between the order and chaos phase, it will be trainable even with a depth of 10,000 \cite{xiao2018dynamical}. Pennington et al. (2017) \cite{pennington2017resurrecting}, Tarnowski et al. (2018) \cite{tarnowski2018dynamical}, Pennington et al. (2018) \cite{pennington2018emergence} and Ling \& Qiu (2018) \cite{ling2018spectrum} argue that networks achieving dynamical isometry (all the singular value of the network's input-output Jacobian matrix remain close to 1) do not suffer from gradient explosion or vanishing. Philipp et al. (2019) \cite{philipp2018gradients} directly evaluate the statistics of the gradient and propose a metric called gradient scale coefficient (GSC) that can verify whether a network would suffer gradient explosion. Arpit \& Bengio (2019) \cite{arpit2019benefits} find that networks with Gradient Norm Equality property usually have better performance. Gradient Norm Equality means that the Ferobenius Norm of the gradient is more or less equal in different layers' weights, therefore the information flow in the backward pass can be preserved and the gradient explosion and vanishing are prevented.

Although so many metrics have been proposed, most of them only focus on providing explanations for existing methods, and few of them are applied in discovering novel algorithms for cutting-edge DNN models. The major reason is that these metrics lack handy statistical tools to apply in complex network structures.
\vspace{-20pt}
\begin{figure}[ht]
\centering
\includegraphics[width=0.45\textwidth]{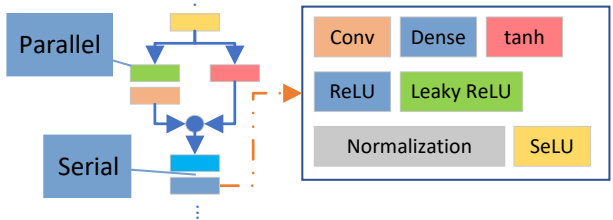}
\caption{Illustration of complex network structure.}
\vspace{-5pt}
\label{fig:framework}
\end{figure}

As illustrated in Fig. \ref{fig:framework}, the modern neural networks are usually composed of several different kinds of linear or nonlinear components like convolution, activation function, and normalization. These components are connected either in parallel or serial. The diversity of network components and different kind of connections result in two challenges: nontrivial prerequisites and complex derivation. Because of the former one, some studies rely on strong assumptions. For example, to calculate the quadratic mean norm (qm norm) of the Jacobian matrices, Philipp et al. (2019) \cite{philipp2018gradients} assume that the norm of the product of Jacobian matrices has approximate decomposability. The free probability used in Pennington et al. (2017) \cite{pennington2017resurrecting}, Tarnowski et al. (2018) \cite{tarnowski2018dynamical}, Pennington et al. (2018) \cite{pennington2018emergence} and Ling \& Qiu (2018) \cite{ling2018spectrum} requires the involved matrices to be freely independent with each other \cite{mingo2017free}, which is not commonly held and difficult to verify \cite{chen2012partial}. Because of the complex derivation, existing studies usually require strong statistics backgrounds and mathematical skills, which constraints their spread in the community. For example, even if the prerequisites of free probability are satisfied, the derivation still requires the probability density of the eigenvalues in each Jacobian matrix, which will then go through several different transforms and series expansions \cite{ling2018spectrum}. Last but not least, these challenges also limit the applicable scope of existing studies such that they only support simple serial networks with few kinds of components \cite{he2015delving,poole2016exponential,xiao2018dynamical,yang2019mean,schoenholz2016deep,arpit2019benefits}.

In this paper, we first propose a new metric, block dynamical isometry, to identify the networks without gradient explosion or vanishing. Our metric is inspired by the gradient norm equality \cite{arpit2019benefits} and dynamical isometry \cite{ling2018spectrum}. However, unlike previous studies in dynamical isometry that investigate the distribution of eigenvalues, we only focus on the $1^{st}$ and $2^{nd}$ moments. Therefore, our prerequisites are much weaker and easier to exam. We further provide several handy sufficient conditions to check them. To handle the parallel and serial connections as in Fig. \ref{fig:framework}, we extend the conclusions in Ling \& Qiu (2018) \cite{ling2018spectrum} and provide two main theorems to handle different kinds of connections. To deal with the diversity of network components, we develop a library that contains most popular components in modern DNNs. To make our theory available for most of the peers in the deep learning community, our framework is highly modularized and can be used just like sophisticated coding frameworks like TensorFlow and PyTorch. Specifically, a complex serial-parallel hybrid network can be easily analyzed by first looking up the components from the library, then checking the prerequisites, and at last linking them with the main theorems. In a word, our framework is much more comprehensive, rigorous, and easier-to-use compared with prior work.}

\begin{table*}[t]
\caption{Default notations.}
\centering
\resizebox{0.99\textwidth}{!}{
\begin{tabular}{c|c||c|c}
\hline
\multicolumn{4}{c}{Numbers, Arrays and Matrices}\\
\hline \hline
$a$ & a scalar & $\mathbf{a}$ & a column vector\\
\hline
$\mathbf{A}$ & a matrix & $\mathbf{n}, n\in\mathbb{R}$ & a vector or matrix\\
\hline
$\mathbf{I}$ & square identify matrix&&\\
\hline
\multicolumn{4}{c}{Operators}\\
\hline \hline
$Tr(\mathbf{A})$ & the trace of $\mathbf{A}$ & $tr(\mathbf{A})$ & the normalized trace of $\mathbf{A}$, e.g. $tr(\mathbf{I})=1$\\
\hline
$E[x]$& the expectation of r.v. $x$ & $D[x]$ & the variance of r.v. $x$\\
\hline
$\lambda_{\mathbf{A}}$ & the eigenvalues of $\mathbf{A}$&$\alpha_k(a)$& the $k^{th}$ order moment of r.v. $a$\\ 
\hline
$\mathbf{f}(\mathbf{a})$ & a mapping function taking $\mathbf{a}$ as input &$\mathbf{f_a}$ & the Jacobian matrix $\frac{\partial \mathbf{f}(\mathbf{a})}{\partial \mathbf{a}}$\\
\hline
$\phi(\mathbf{A}):=E[tr(\mathbf{A})]$& the expectation of $tr(\mathbf{A})$& $\varphi(\mathbf{A})$&$\phi(\mathbf{A^2})-\phi^2(\mathbf{A})$\\
\hline
$height(\mathbf{A})$& the height of matrix $\mathbf{A}$&$width(\mathbf{A})$& the width of matrix $\mathbf{A}$\\
\hline
$len(\mathbf{a})$& the length of vector $\mathbf{a}$&&\\
\hline
\multicolumn{4}{c}{Index}\\
\hline \hline
$[\mathbf{A}]_{i,j}$&element$(i,j)$ of $\mathbf{A}$&&\\
\hline
%\multicolumn{4}{c}{Neural Network}\\
%\hline \hline
%$\mathbf{\theta}$& trainable parameters &$LT(\mathbf{x}) = \mathbf{Kx}$& A linear transformation block\\
%\hline
%$A(\mathbf{x})$& element-wise activation function &$BN(\mathbf{x})$&batch normalization block\\
%\hline
%$B^{\#}(\mathbf{x})$, denotes a network block & block $B$ randomly appears & &\\
% \hline
\end{tabular}
}
\label{tab:notations}
\vspace{-10pt}
\end{table*}

To demonstrate the effectiveness of our framework, we provide several demos, which are summarized below.

\textbf{Comprehensiveness}: we provide statistical explanations for various existing studies including initialization techniques \cite{he2015delving, xiao2018dynamical, zhang2019fixup}, normalization techniques \cite{ioffe2015batch,salimans2016weight,arpit2016normalization}, self-normalizing neural network \cite{klambauer2017self} and complex network structures like ResNet \cite{he2016deep} and DenseNet \cite{huang2017densely}.

\textbf{Improvements}: We further improve several existing studies based on the insights provided by our framework. For initialization techniques, we systematically compare several activation functions, and identify that although tanh used in Xiao et al. (2018) \cite{xiao2018dynamical} is more stable, leaky ReLU with relatively higher negative slope coefficient is more effective in networks with moderate depth. Compared with tanh, this new configuration achieves up to $6.81\%$ higher accuracy on CIFAR-10, which is also $0.54\%$ higher than the BN baseline. Besides, we modify the PReLU activation function proposed by He et al. (2015) \cite{he2015delving} and give a novel one called sPReLU that automatically learns a good negative slope coefficient. It achieves $0.77\%$ higher accuracy than its BN baseline on CIFAR-10. In the Conv MobileNet v1 on ImageNet, \modify{our orthogonal initialization has only $0.78\%$ accuracy loss compared with BN}. For weight normalization \cite{salimans2016weight}, we combine it with the initialization techniques and propose a method called scaled weight standardization. In our 32-layer network on CIFAR-10, the accuracy is only $0.64\%$ lower than BN while mitigating the gradient vanishing. For self-normalizing neural network \cite{klambauer2017self}, we identify that the coefficients in the SeLU activation function should be given according to the depth of the network, and provide a new way to find these coefficients. Our new SeLU outperforms the original configuration by $0.42\%$ and $0.39\%$ with Gaussian and orthogonal weight on CIFAR-10, respectively.

\textbf{Novel Method}: Inspired by our analysis of normalization techniques, we propose a new normalization technique called second moment normalization. \modify{Its computational complexity is almost same with weight normalization \cite{salimans2016weight} and theoretically $30\%$ faster than BN; besides, with proper regularization like mixup \cite{zhang2017mixup}, it can achieve $23.74\%$ error rate on ResNet-50 on ImageNet, which is $0.07\%$ lower than BN under the same regularization ($23.81\%$).}

For the sake of clarity, we provide a describing of the default notations used throughout this paper in Table \ref{tab:notations}. Our codes in PyTorch are publicly available at \href{https://github.com/apuaaChen/GNEDNN\_release}{https://github.com/apuaaChen/GNEDNN\_release}.

\section{Related Work}\label{sec:related_works}
\subsection{Theorems of Well-behaved Neural Networks}

\hspace*{14pt}\textbf{Dynamical Isometry.} A neural network is dynamical isometry as long as every singular value of its input-output Jacobian matrix remains close to 1, thus the norm of every error vector and the angle between error vectors are preserved. With the powerful theorems of free probability and random matrix, Pennington et al. (2017) \cite{pennington2017resurrecting} investigate the spectrum density distribution of plaint fully-connected serial network with Gaussian/orthogonal weights and ReLU/hard-tanh activation functions; Tarnowski et al. (2018) \cite{tarnowski2018dynamical} explore the density of singular values of the input-output Jacobian matrix in ResNet and identify that dynamical isometry can be always achieved regardless of the choice of activation function. However, their studies only cover ResNet whose major branch consists of Gaussian and scaled orthogonal linear transforms and activation functions, and fail to provide a theoretical explanation of batch normalization.  Although our derivations of Theorem \ref{theorem:multiplication} and \ref{theorem:Addition} are inspired by the Result 2 \& 1 in Ling \& Qiu (2018) \cite{ling2018spectrum}, their discussions are limited to the spectrum of ResNet due to two reasons. First, their derivation requires the detailed spectrum density of involved components; second, they fail to realise that although the trace operator is cyclic-invariant, the normalized trace operator is not when rectangle matrices are involved, so that their Result 2 can only handle square Jacobian matrices. Last but not least, a universal problem in existing dynamical isometry related studies is that the derivation is based on the strong assumption that all the involved matrices are freely independent, which is uncommonly held and difficult to verify \cite{chen2012partial}.

\textbf{Order-to-Chaos Phase Transition.} Poole et al. (2016) \cite{poole2016exponential} and Schoenholz et al. (2016) \cite{schoenholz2016deep} analyze the signal propagation in simple serial neural networks and observe that there is an order-to-chaos phase transition determined by a quantity: $\chi :=\phi\left(\left(\mathbf{DW}\right)^T\mathbf{DW}\right)$ \cite{pennington2017resurrecting}, where $\mathbf{D}$ is the Jacobian matrix of activation function, $\mathbf{W}$ denotes the weight and $\phi$ represents the expectation of the normalized trace of a given matrix. The network is in the chaotic phase if $\chi>1$ and in the order phase when $\chi<1$. The chaotic phase results in gradient explosion while the order phase causes gradient vanishing. Due to the lack of convenient mathematic tools for ``$\phi$'' analysis, the current application of the order-to-chaos phase transition is also limited to vanilla serial networks.

\textbf{Gradient Scale Coefficient (GSC).} Philipp et al.(2018) \cite{philipp2018gradients} propose a metric that evaluates how fast the gradient explodes. Let $0\le l\le k \le L$ be the index of the network's layers, the GSC is defined as
\begin{equation}
    GSC(k, l) = \frac{\phi\left(\left(\Pi_{i=l}^k\mathbf{J_i}\right)^T\left(\Pi_{i=l}^k\mathbf{J_i}\right)\right)||f_k||_2^2}{||f_l||_2^2}.
\end{equation}
To efficiently calculate this metric, the authors suggest that $GSC(k,l) = \Pi_{i=l}^{k-1}GCS(i+1, i)$, which is derived under an assumption of $\phi\left(\left(\Pi_{i=l}^k\mathbf{J_i}\right)^T\left(\Pi_{i=l}^k\mathbf{J_i}\right)\right)=\Pi_{i=l}^k\phi(\mathbf{J_i}^T\mathbf{J_i})$. In our work, we provide not only a solid derivation for this assumption but also theoretical tools for networks with parallel branches, which makes our method applicable in more general situations.
\vspace{-10pt}
\subsection{Techniques that Stabilize the Network}

\hspace*{14pt}\textbf{Initialization.} It has long been observed that neural networks with proper initialization converge faster and better. Thus, handful initialization schemes have been proposed: He et al. (2015) \cite{he2015delving} introduce Kaiming initialization that maintains the second moment of activations through plaint serial neural networks with rectifier activations; Zhang et al. (2019) \cite{zhang2019fixup} expand initialization techniques to networks with shortcut connections like ResNet and achieves advanced results without BN; Xiao et al. (2018) \cite{xiao2018dynamical} provide an orthogonal initialization scheme for serial neural networks, which makes 10,000-layer networks trainable.

\textbf{Normalization.} Batch normalization (BN) \cite{ioffe2015batch} has become a standard implementation in modern neural networks \cite{he2016deep, huang2017densely}. BN leverages the statistics (mean \& variance) of mini-batches to standardize the pre-activations and allows the network to go deeper without significant gradient explosion or vanishing. Despite of BN's wide application, it has been reported that BN introduces high training latency \cite{chen2019effective,gitman2017comparison} and its effectiveness drops when the batch size is small \cite{wu2018group}. Moreover, BN is also identified as one of the major root causing quantization loss \cite{sheng2018quantization}. To alleviate these problems, Salimans \& Kingma (2016) \cite{salimans2016weight} instead normalize the weights, whereas it's less stable compared with BN \cite{gitman2017comparison}.

\textbf{Self-normalizing Neural Network.} Klambauer et al. (2017) \cite{klambauer2017self} introduce a novel activation function called ``scaled exponential linear unit" (SeLU), which can automatically force the activation towards zero mean and unit variance for better convergence

\textbf{Shortcut Connection.} In He et al. (2016) \cite{he2016deep}, the concept of shortcut in neural networks was first introduced and then further developed by Huang et al. (2017) \cite{huang2017densely}, which results in two most popular CNNs named ResNet and DenseNet. These models demonstrate that the shortcut connections make deeper models trainable.

\section{Gradient Norm Equality}\label{sec:dynamic_of_}
In this section, we analyze how the norm of backward gradient evolves through the deep neural network and derive our metric for gradient norm equality.
\vspace{-10pt}
\subsection{Dynamic of Gradient Norm}

Without loss of generality, let's consider a neural network consists of sequential blocks: 
\begin{equation}
    \mathbf{f}(\mathbf{x_0}) = \mathbf{f_{L,\theta_L}}\circ \mathbf{f_{L-1,\theta_{L-1}}}\circ...\circ\mathbf{f_{1, \theta_1}}\left(\mathbf{x_0}\right)
\label{equ:series_block_network}
\end{equation}
where $\mathbf{\theta_i}$ is the vectorized parameter of the $i^{th}$ layer. We represent the loss function as $\mathcal{L}(\mathbf{f}(\mathbf{x}), \mathbf{y})$ wherein $\mathbf{y}$ denotes the label vector. At each iteration, $\mathbf{\theta_i}$ is updated by $\mathbf{\theta_i} -\mathbf{\Delta \theta_i} = \mathbf{\theta_i} - \eta\frac{\partial}{\partial \mathbf{\theta_i}}\mathcal{L}(\mathbf{f}(\mathbf{x}), \mathbf{y})$, where $\eta$ is the learning rate. With the chain rule, we have
\begin{equation}
    \begin{split}
        &\frac{\partial}{\partial \mathbf{f_{i}}}\mathcal{L}(\mathbf{f}(\mathbf{x}), \mathbf{y}) = \left(\frac{\partial \mathbf{f_{i+1}}}{\partial \mathbf{f_{i}}}\right)^T\frac{\partial}{\partial \mathbf{f_{i+1}}}\mathcal{L}(\mathbf{f}(\mathbf{x}), \mathbf{y}),\\
        &\frac{\partial}{\partial \mathbf{\theta_{i}}}\mathcal{L}(\mathbf{f}(\mathbf{x}), \mathbf{y}) = \left(\frac{\partial \mathbf{f_{i}}}{\partial \mathbf{\theta_{i}}}\right)^T \frac{\partial}{\partial \mathbf{f_{i}}}\mathcal{L}(\mathbf{f}(\mathbf{x}), \mathbf{y}).
    \end{split}
\end{equation}

For the sake of simplicity, we denote $\frac{\partial \mathbf{f_{j}}}{\partial \mathbf{f_{j-1}}} := \mathbf{J_j}\in \mathbb{R}^{m_j\times n_j}$, and $\mathbf{\Delta \theta_i}$ is given by
\begin{equation}
    \mathbf{\Delta \theta_i} = \eta \left(\frac{\partial \mathbf{f_{i}}}{\partial \mathbf{\theta_{i}}}\right)^T\left(\Pi_{j=L}^{i+1}\mathbf{J_j}\right)^T \frac{\partial}{\partial \mathbf{f}(\mathbf{x})}\mathcal{L}(\mathbf{f}(\mathbf{x}), \mathbf{y}).
\end{equation}
Further, we denote $\mathbf{K_{i+1}}:=\left(\frac{\partial \mathbf{f_{i}}}{\partial \mathbf{\theta_{i}}}\right)^T\left(\Pi_{j=L}^{i+1}\mathbf{J_j}\right)^T$ and $\mathbf{u}:=\frac{\partial}{\partial \mathbf{f}(\mathbf{x})}\mathcal{L}(\mathbf{f}(\mathbf{x}), \mathbf{y})$. We represent the scale of $\mathbf{\Delta\theta_i}$ with its L2 norm: $|| \mathbf{\Delta\theta_i}||_2^2=\eta^2\mathbf{u}^T\mathbf{K}^T_{\mathbf{i+1}}\mathbf{K}_{\mathbf{i+1}} \mathbf{u}$. As $\mathbf{K}^T_{\mathbf{i+1}}\mathbf{K_{i+1}}$ is a real symmetric matrix, it can be broken down with eigendecomposition: $\mathbf{K}^T_{\mathbf{i+1}}\mathbf{K_{i+1}} = \mathbf{Q}^T\mathbf{\Lambda Q}$, where $\mathbf{Q}$ is an orthogonal matrix. Therefore we have: 
\begin{equation}
\begin{split}
    & ||\mathbf{\Delta\theta_i}||_2^2 = \eta^2(\mathbf{Qu})^T\mathbf{\Lambda}(\mathbf{Qu}),\\
    & E\left[||\mathbf{\Delta\theta_i}||_2^2\right] = \eta^2E\left[\sum_j\lambda_j[\mathbf{Qu}]_j^2\right].
\end{split}
\end{equation}
With the symmetry, we assume $\forall i,j$, $E[[\mathbf{Qu}_i]^2]=E[[\mathbf{Qu}_j]^2]$ and $E[\lambda_i]=E[\lambda_j]$. Since $\lambda_j$ is independent of $[\mathbf{Qu}]_j$, we have
\begin{equation}
\begin{split}
    & E\left[||\mathbf{\Delta\theta_i}||_2^2\right] =  \eta^2\sum_jE[\lambda_j]E\left[[\mathbf{Qu}]_i^2\right]\\
    &~~~~~~~~~~~~~~~~~~~\approx \eta^2\phi\left(\mathbf{K}^T_{\mathbf{i+1}}\mathbf{K_{i+1}}\right)E\left[||\mathbf{u}||_2^2\right].
\end{split}
\label{equ:backward_expect_l2_norm}
\end{equation}

If $E[||\Delta \mathbf{\theta_i}||_2^2] \rightarrow 0$, the update of parameters of the $i^{th}$ layer would be too tiny to make a difference and thus the gradient vanishing occurs; If $E[||\Delta \mathbf{\theta_i}||_2^2]\rightarrow \infty$, the parameters of the $i^{th}$ layer would be drastically updated and thus the gradient explosion happens. Therefore, the network is stable when $\phi\left(\mathbf{K}^T_{\mathbf{i+1}}\mathbf{K_{i+1}}\right)$ neither grows nor diminishes exponentially with the decreasing of $i$, and we can say that the network has the property of gradient norm equality \cite{arpit2019benefits}.
\vspace{-10pt}
\subsection{Block Dynamical Isometry}

In order to simplify the derivation of
\begin{equation}
    \phi\left(\mathbf{K}^T_{\mathbf{i+1}}\mathbf{K_{i+1}}\right)\!=\!\phi\left(\left(\Pi_{j=L}^{i+1}\mathbf{J_j}\right)\frac{\partial \mathbf{f_{i}}}{\partial \mathbf{\theta_{i}}}\left(\frac{\partial \mathbf{f_{i}}}{\partial \mathbf{\theta_{i}}}\right)^T\left(\Pi_{i=L}^{i+1}\mathbf{J_j}\right)^T\right),
\end{equation}
\modify{we temporarily propose the following Hypothesis \ref{assumption:jacobian_breakdown}, which is inspired by the assumption on the approximate decomposability of the norm of the product of Jacobians in Philipp et al. (2018) \cite{philipp2018gradients}.
\begin{assumption}
Under some prerequisites, given a set of Jacobian matrices $\{\mathbf{J_{L},...,\mathbf{J_{i+1}}}, \frac{\partial \mathbf{f_{i}}}{\partial \mathbf{\theta_{i}}}\}$, we have
\begin{equation}
\begin{split}
    &\phi\left(\left(\Pi_{j=L}^{i+1}\mathbf{J_j}\right)\frac{\partial \mathbf{f_{i}}}{\partial \mathbf{\theta_{i}}}\left(\frac{\partial \mathbf{f_{i}}}{\partial \mathbf{\theta_{i}}}\right)^T\left(\Pi_{i=L}^{i+1}\mathbf{J_j}\right)^T\right)\\
    &~~~~~~~~~~~~~~~~~~~~~~~~~=\phi\left(\frac{\partial \mathbf{f_{i}}}{\partial \mathbf{\theta_{i}}}\left(\frac{\partial \mathbf{f_{i}}}{\partial \mathbf{\theta_{i}}}\right)^T\right)\Pi_{j=L}^{i+1}\phi\left(\mathbf{J_jJ_j}^T\right).
\end{split}
\label{equ:gradient_phi_decomposite}
\end{equation}
\label{assumption:jacobian_breakdown}
\end{assumption}
With the theoretical tools developed in Section \ref{sec:analysis_spectrum_moment}, this hypothesis can be easily proved and the prerequisites can be confirmed (Remark \ref{remark:hyp31}).
} In Hypothesis \ref{assumption:jacobian_breakdown}, the only term that may result in the unsteady gradient is $\Pi_{j=L}^{i+1}\phi(\mathbf{J_jJ_j}^T)$. Therefore, the gradient norm equality can be achieved by making $\forall j,\phi(\mathbf{J_jJ_j}^T)\approx1$.

However, the above condition is not sufficient for neural networks with finite width. We have $tr\left(\mathbf{J_jJ_j}^T\right)=\frac{1}{m_j}\sum_{i=1}^{m_j}\lambda_i$, where $\lambda_i$ denotes the $i^{th}$ eigenvalue of $\mathbf{J_jJ_j}^T$. Under the assumption that $\forall p,q\neq p, \lambda_p$ is independent of $\lambda_q$, the variance of $tr(\mathbf{J_jJ_j}^T)$ is given by
\begin{equation}
\begin{split}
    &D[tr(\mathbf{J_jJ_j}^T)] = \frac{1}{m_j}\sum_{i=1}^{m_j}E[\lambda_i^2]-E^2[\lambda_i]\\
    & = \phi\left(\left(\mathbf{J_jJ_j}^T\right)^2\right) - \phi^2\left(\mathbf{J_jJ_j}^T\right):=\varphi\left(\mathbf{J_jJ_j}^T\right).
\end{split}
\end{equation}
As a result, for networks that are not wide enough, in order to make sure that the $\phi(\mathbf{J_jJ_j}^T)$ of each block sits steadily around $1$, we expect $\varphi(\mathbf{J_jJ_j}^T)$ of each layer to be small. Therefore, our metric can be formally formulated as below.

\begin{definition}
\textbf{(Block Dynamical Isometry) }Consider a neural network that can be represented as a sequence of blocks as Equation \eqref{equ:series_block_network} and the $j^{th}$ block's Jacobian matrix is denoted as $\mathbf{J_j}$. If $\forall j,~~\phi(\mathbf{J_jJ_j}^T)\approx1$ and $\varphi(\mathbf{J_jJ_j}^T)\approx 0$, we say the network achieves block dynamical isometry.
\label{def:block_dynamical_isometry}
\end{definition}
\modify{As $\phi(\mathbf{J_jJ_j}^T)$ and $\varphi(\mathbf{J_jJ_j}^T)$ can be seen as the $1^{st}$ and $2^{nd}$ moment of eigenvalues of $\mathbf{J_j}$, we name them as \textbf{spectrum-moments}.}
While $\phi(\mathbf{J_jJ_j}^T)\approx 1$ addresses the problem of gradient explosion and vanishing, $\varphi(\mathbf{J_jJ_j}^T)\approx 0$ ensures that $\phi(\mathbf{J_jJ_j}^T)\approx 1$ is steadily achieved. Actually, we find that in many cases, $\phi(\mathbf{J_jJ_j}^T)\approx 1$ is enough to instruct the design or analysis of a neural network.

We name our metric as ``Block Dynamical Isometry" is because it is quite similar to the dynamical isometry discussed in Saxe et al. (2013) \cite{saxe2013exact} and Pennington et al. (2017;2018) \cite{pennington2017resurrecting,pennington2018emergence}. The original dynamical isometry expects that every singular value of the whole network's input-output Jacobian matrix remains close to 1, while ours is expected in every sequential block. Definition \ref{def:block_dynamical_isometry} allows us to use divide and conquer in the analysis of a complex neural network: a network can be first divided to several blocks connected in serial, and then conquered individually.

\section{Analysis of Spectrum-Moments}\label{sec:analysis_spectrum_moment}
In this section, we will develop a statistical framework for the analysis of spectrum-moments, which will not only provide a theoretical base for Hypothesis \ref{assumption:jacobian_breakdown} but also simplify the calculation of $\phi(\mathbf{J_iJ_i}^T)$ and $\varphi(\mathbf{J_iJ_i}^T)$ of each block. 

Generally speaking, despite the great diversity of network structures, most of them can be regarded as basic network components, i.e. linear transforms and nonlinear activation functions, that are connected in serial or parallel. Therefore, our framework is expected to be able to handle $\phi\left(\left(\left(\sum_i \mathbf{J_i}\right)\left(\sum_i \mathbf{J_i}\right)^T\right)^p\right)$ and $\phi\left(\left(\left(\Pi_i \mathbf{J_i}\right)\left(\Pi_i \mathbf{J_i}\right)^T\right)^p\right)$, where $p\in\{1,2\}$. 
To serve this purpose, our framework consists of the following parts:
\begin{itemize}
    \item Two main theorems that build bridges between $\phi\!\left(\!\left(\left(\sum_i \mathbf{J_i}\right)\!\left(\sum_i \mathbf{J_i}\right)^T\right)^p\right)$, $\phi\!\left(\!\left(\left(\Pi_i \mathbf{J_i}\right)\!\left(\Pi_i \mathbf{J_i}\right)^T\right)^p\right)$ and their components $\phi\left(\left(\mathbf{J_iJ_i}^T\right)^p\right)$.
    \item A library of $\phi\left(\left(\mathbf{J_iJ_i}^T\right)^p\right)$ for common components in neural networks.
\end{itemize}
\vspace{-10pt}
\subsection{Inspiration: Propagation of Euclidean Norm in a Rotational-Invariant System}\label{sec:inspiration}

\begin{definition}
\textbf{(Rotational-Invariant Distribution)} Given a random vector $\mathbf{g_i}$, we say it has rotational-invariant distribution if it has the same distribution with $\mathbf{Ug_i}$, for any unitary matrix $\mathbf{U}$ independent of $\mathbf{g_i}$.
\label{def:rotational_invariant_distribution}
\end{definition}

Let the gradient of the $i^{th}$ layer be $\frac{\partial}{\partial \mathbf{f_{i}}}\mathcal{L}(\mathbf{f}(\mathbf{x}), \mathbf{y}) = \mathbf{g_i}$, where $\mathbf{g_i}$ has a rotational-invariant distribution. Under this assumption, intuitively, its elements share the same second moment. The gradient of its previous layer is $\frac{\partial}{\partial \mathbf{f_{i-1}}}\mathcal{L}(\mathbf{f}(\mathbf{x}), \mathbf{y}) = \mathbf{g_{i-1}} = \mathbf{J_i}\frac{\partial}{\partial \mathbf{f_{i}}}\mathcal{L}(\mathbf{f}(\mathbf{x}), \mathbf{y}) = \mathbf{J_ig_i}$, where $\mathbf{J_i}$ is a Jacobian matrix. As the values in the Jacobian matrix are trainable, we can also assume that it is a random matrix. With the singular value decomposition, we have $\mathbf{J_i} = \mathbf{U\Sigma V}^H$, where $\mathbf{U}$ and $\mathbf{V}$ are unitary matrices, and $\mathbf{\Sigma}$ is a diagonal matrix whose diagonal entries are the singular values ($\sigma_1, \sigma_2, ...$) of $\mathbf{J_i}$. When we calculate $\mathbf{U\Sigma V}^H\mathbf{g_i}$, $\mathbf{V}^H$ first rotates the origin distribution to the new orthogonal basis, and then $\mathbf{\Sigma}$ stretches each basis by the corresponding singular value. At last. $\mathbf{U}$ rotates the distribution to the output orthogonal basis.

On the one hand, the distribution of $\mathbf{g_i}$ is invariant under the rotation of $\mathbf{V}^H$. On the other hand, since $\mathbf{U}$ is a unitary matrix, it doesn't change the $L_2$ norm of $\mathbf{\Sigma V}^H\mathbf{g_i}$. Therefore, we have
\begin{equation}
\begin{split}
    & E\left[||\mathbf{g_{i-1}}||_2^2\right] = E\left[||[\sigma_1[\mathbf{g_i}]_1, \sigma_2[\mathbf{g_i}]_2, ..., \sigma_m [\mathbf{g_i}]_m]^T||_2^2\right]\\
    &=\sum_{j=1}^mE[\sigma_j^2]E[[\mathbf{g_i}]_j^2] = \phi\left(\mathbf{J_iJ_i}^T\right) E\left[||\mathbf{g_{i}}||_2^2\right].
\end{split}
\label{equ:trans_between_layer}
\end{equation}
The above derivation is valid when $\mathbf{g_i}$ is invariant under rotation. Therefore, if we want to calculate the $L_2$ norm of the gradient of all the layers with Equation \eqref{equ:trans_between_layer}, we have to make sure that any rotation of its intermediate value will not change $\phi\left((\Pi_{i} \mathbf{J_i})(\Pi_{i} \mathbf{J_i})^T\right)$.
\vspace{-10pt}
\subsection{Main Theorems}\label{sec:main_theorems}

Inspired by the previous inspiration as well as Tarnowski et al. (2018) \cite{tarnowski2018dynamical}'s study, we formulate the main theorems of this paper as below.

\begin{definition}
\textbf{($\mathbf{k^{th}}$ Moment Unitarily Invariant)} Let $\{\mathbf{A_i}\} := \{\mathbf{A_1}, \mathbf{A_2}...,\mathbf{A_L}\}$ be a series independent random matrices. Let $\{\mathbf{U_i}\}:=\{\mathbf{U_1}, \mathbf{U_3}...,\mathbf{U_L}\}$ be a series independent haar unitary matrices independent of  $\{\mathbf{A_1}, \mathbf{A_2}...,\mathbf{A_L}\}$.
We say that $(\Pi_{i} \mathbf{A_i})(\Pi_{i} \mathbf{A_i})^T$ is the $k^{th}$ moment unitarily invariant if $\forall 0<p\le k$, we have
\begin{equation}
    \phi\left(\left((\Pi_{i} \mathbf{A_i})(\Pi_{i} \mathbf{A_i})^T\right)^p\right) = \phi\left(\left((\Pi_{i} \mathbf{U_iA_i})(\Pi_{i} \mathbf{U_iA_i})^T\right)^p\right).
\end{equation}
And we say that $(\sum_{i} \mathbf{A_i})(\sum_{i} \mathbf{A_i})^T$ is $k^{th}$ moment unitarily invariant if $\forall 0<p\le k$, we have
\begin{equation}
    \phi\!\left(\!\left(\!(\sum_{i}\! \mathbf{A_i})\!(\sum_{i} \mathbf{A_i})^T\!\right)^p\!\right)\! =\! \phi\!\left(\!\left(\!(\!\sum_{i}\! \mathbf{U_i}\mathbf{A_i})(\sum_{i} \mathbf{U_i}\!\mathbf{A_i}\!)^T\!\right)^p\!\right).
\end{equation}
\label{def:moment_unitary_invariant}
\end{definition}

\begin{definition}
\textbf{(Central Matrix)} A matrix $\mathbf{A}$ is called a central matrix if  $\forall i, j$, we have $E[[\mathbf{A}]_{i,j}]=0$.
\label{def:central_matrix}
\end{definition}

\begin{definition}
\textbf{(R-diagonal Matrices)} (Definition 17 in Cakmak (2012) \cite{cakmak2012non}) A random matrix $\mathbf{X}$ is R-diagonal if it can be decomposed as $\mathbf{X}=\mathbf{UY}$, such that $\mathbf{U}$ is Haar unitary and free of $\mathbf{Y}=\sqrt{\mathbf{XX}^H}$. 
\label{def:R-diagonal}
\end{definition}

\begin{theorem}
\textbf{(Multiplication).} Given $\mathbf{J} := \Pi_{i=L}^1\mathbf{J_i}$, where $\{\mathbf{J_i}\in\mathbb{R}^{m_{i}\times m_{i-1}}\}$ is a series of independent random matrices. If $(\Pi_{i=L}^1\mathbf{J_i})(\Pi_{i=L}^1\mathbf{J_i})^T$ is at least the $1^{st}$ moment unitarily invariant (Definition \ref{def:moment_unitary_invariant}), we have
\begin{equation}
    \phi\left((\Pi_{i=L}^1\mathbf{J_i})(\Pi_{i=L}^1\mathbf{J_i})^T\right) = \Pi_{i=L}^1\phi\left(\mathbf{J_iJ_i}^T\right).
\label{equ:mul_expectation}
\end{equation}
If $(\Pi_{i=L}^1\mathbf{J_i})(\Pi_{i=L}^1\mathbf{J_i})^T$ is at least the $2^{nd}$ moment unitarily invariant (Definition \ref{def:moment_unitary_invariant}), we have
\begin{equation}
\begin{split}
    & \varphi\left((\Pi_{i=L}^1\mathbf{J_i})(\Pi_{i=L}^1\mathbf{J_i})^T\right)\\
    & =\phi^2\left((\Pi_{i=L}^1\mathbf{J_i})(\Pi_{i=L}^1\mathbf{J_i})^T\right) \sum_{i}\frac{m_{L}}{m_{i}}\frac{\varphi\left(\mathbf{J_iJ_i}^T\right)}{\phi^2\left(\mathbf{J_iJ_i}^T\right)}.
\end{split}
\label{equ:mul_variance}
\end{equation}
(Proof: Appendix \ref{proof:multiplication})
\label{theorem:multiplication}
\end{theorem}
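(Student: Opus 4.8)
The plan is to prove both identities by induction on the number of factors $L$, peeling $\mathbf{J_1}$ off from the right at each step. By the $1^{st}$ (resp. $2^{nd}$) moment unitary-invariance hypothesis it suffices to establish \eqref{equ:mul_expectation} (resp. \eqref{equ:mul_variance}) under the extra assumption that each $\mathbf{J_i}$ has the form $\mathbf{U_i}\mathbf{\tilde J_i}$ with $\mathbf{U_i}$ an independent Haar unitary; since any sub-product $\mathbf{J_L}\cdots\mathbf{J_k}$ again has a leading factor of this form (its left singular vectors absorb $\mathbf{U_L}$ and stay Haar), the induction is internally consistent and the hypothesis is never re-invoked. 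So assume $\mathbf{J_1}=\mathbf{U_1}\mathbf{\tilde J_1}$ with SVD $\mathbf{J_1}=\mathbf{A_1}\mathbf{\Sigma_1}\mathbf{B_1}^H$, noting that $\mathbf{A_1}$ is Haar and independent of $\mathbf{\Sigma_1}$ and of $\{\mathbf{J_2},\dots,\mathbf{J_L}\}$.

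The first step is a purely algebraic reduction using that the \emph{unnormalized} trace is cyclic (the normalized one is not, which is precisely where the dimension factors will enter). Writing $\mathbf{J_{>1}}:=\mathbf{J_L}\cdots\mathbf{J_2}$ and $\mathbf{N}:=\mathbf{J_{>1}}^T\mathbf{J_{>1}}\in\mathbb{R}^{m_1\times m_1}$, cyclicity gives $Tr(\mathbf{JJ}^T)=Tr(\mathbf{J_1J_1}^T\mathbf{N})$ and $Tr((\mathbf{JJ}^T)^2)=Tr((\mathbf{J_1J_1}^T\mathbf{N})^2)$. Substituting $\mathbf{J_1J_1}^T=\mathbf{A_1}\mathbf{D}\mathbf{A_1}^H$ with $\mathbf{D}=\mathbf{\Sigma_1\Sigma_1}^T$ diagonal and averaging over $\mathbf{A_1}$ first, the first-moment case needs only the elementary Haar identity $E_{\mathbf{A_1}}[\mathbf{A_1}^H\mathbf{N}\mathbf{A_1}]=tr(\mathbf{N})\mathbf{I}$, giving $E_{\mathbf{A_1}}[Tr(\mathbf{JJ}^T)]=Tr(\mathbf{J_1J_1}^T)\,tr(\mathbf{N})$; the second-moment case needs the second-order Haar integral (Weingarten / second-order freeness, essentially the rectangular-aware analogue of Result 2 in Ling \& Qiu) for $E_{\mathbf{A_1}}[Tr(\mathbf{D}\,\mathbf{A_1}^H\mathbf{N}\mathbf{A_1}\,\mathbf{D}\,\mathbf{A_1}^H\mathbf{N}\mathbf{A_1})]$, which produces a quadratic form in $Tr(\mathbf{D}),Tr(\mathbf{D}^2),Tr(\mathbf{N}),Tr(\mathbf{N}^2)$ together with $O(1/m_1)$ correction terms that must be retained.

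Next I would take the remaining expectation, using independence of $\mathbf{J_1}$ from $\{\mathbf{J_i}\}_{i\ge 2}$ to factor it and concentration of normalized traces (their fluctuations are of lower order, so $E[(Tr\,\cdot)^2]\approx(E[Tr\,\cdot])^2$) to express everything through $\phi$ and $\varphi$ of $\mathbf{J_1J_1}^T$ and of $\mathbf{N}$. This reduces \eqref{equ:mul_expectation} to $\phi(\mathbf{JJ}^T)=\tfrac{m_1}{m_L}\phi(\mathbf{J_1J_1}^T)\phi(\mathbf{N})$ and \eqref{equ:mul_variance} to the analogous identity for $\phi((\mathbf{JJ}^T)^2)$. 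The last ingredient is the rectangular rescaling $tr(\mathbf{A}^T\mathbf{A})=\tfrac{height(\mathbf{A})}{width(\mathbf{A})}\,tr(\mathbf{AA}^T)$ with $\mathbf{A}=\mathbf{J_{>1}}$, i.e. $\phi(\mathbf{N})=\tfrac{m_L}{m_1}\phi(\mathbf{J_{>1}J_{>1}}^T)$ and $\phi(\mathbf{N}^2)=\tfrac{m_L}{m_1}\phi((\mathbf{J_{>1}J_{>1}}^T)^2)$: in the first-moment identity the two dimension ratios cancel and the induction hypothesis closes it, whereas in the second-moment identity they do \emph{not} fully cancel and leave exactly the extra $\tfrac{m_L}{m_i}$ weights; substituting the induction hypothesis for $\varphi(\mathbf{J_{>1}J_{>1}}^T)$ and collecting first-order terms in the $\varphi$'s then yields \eqref{equ:mul_variance}.

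The main obstacle will be the second-moment step: carrying out the degree-four Haar integral correctly and, above all, deciding which of the terms it generates survive the outer expectation — the naive leading-order truncation discards a $-\tfrac{1}{m_1}(\cdots)$ contribution that is in fact of the same order as the fluctuation being computed, and getting that bookkeeping right (together with the rectangular-trace rescalings) is exactly what makes the $\tfrac{m_L}{m_i}$ weights appear instead of a plain sum of relative variances. Making the "concentration of normalized traces" approximation precise — pinning down which sub-leading terms are dropped and under what prerequisite — is the other delicate point, and is presumably where the prerequisites alluded to in the statement are needed.
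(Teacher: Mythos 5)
Your proposal is correct and follows the same overall architecture as the paper's proof --- use Definition \ref{def:moment_unitary_invariant} to pass to the Haar-dressed product $\Pi_i\mathbf{U_i}\mathbf{J_i}$, reduce everything to a two-factor statement, recover the rectangular dimension factors from the cyclicity of the \emph{unnormalized} (not the normalized) trace, and induct over the factors --- but you establish the two-factor step by a different device. The paper's engine is Lemma \ref{lemma:cyclic_variant}: Haar conjugation makes the neighbouring Gram matrices asymptotically free (Lemma \ref{lemma:unitary_invariant-free}), and the mixed first and second moments are extracted from the multiplicativity of the S-transform truncated to second order. You instead expose the Haar left-singular-vector matrix via the SVD of the dressed factor and evaluate the first- and second-order Haar (Weingarten) integrals directly; at leading order this reproduces exactly the second-order freeness identity $\phi(abab)=\phi(a^2)\phi(b)^2+\phi(a)^2\phi(b^2)-\phi(a)^2\phi(b)^2$ that the paper's truncated S-transform computation encodes, and combined with the rescaling $tr(\mathbf{P}^T\mathbf{P})=\tfrac{m_L}{m_1}tr(\mathbf{P}\mathbf{P}^T)$ for $\mathbf{P}:=\mathbf{J_L}\cdots\mathbf{J_2}$ it yields the same one-step recursion $\varphi(\mathbf{J}\mathbf{J}^T)=\tfrac{m_L}{m_1}\phi^2(\mathbf{P}\mathbf{P}^T)\varphi(\mathbf{J_1}\mathbf{J_1}^T)+\phi^2(\mathbf{J_1}\mathbf{J_1}^T)\varphi(\mathbf{P}\mathbf{P}^T)$ as the paper's lemma, whence the $m_L/m_i$ weights. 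Two remarks: your worry about a dropped $O(1/m_1)$ term is resolved simply by keeping the full second-order-freeness expression (the essential correction is the $-\phi(a)^2\phi(b)^2$ term, which is $O(1)$, while the genuine Weingarten corrections for Haar unitaries are lower order); and your ``concentration of normalized traces'' step --- replacing $E[tr(\cdot)^2]$ by $\phi^2(\cdot)$ when the outer expectation over the remaining randomness is taken --- is tacitly made by the paper as well, since it works throughout with $\phi=E\circ tr$ as the tracial state in which freeness is invoked, so this is not a gap relative to the published argument (flagging it explicitly is to your credit). What each route buys: yours avoids the S-transform/Lagrange-inversion bookkeeping and makes the finite-size corrections visible; the paper's generalizes more mechanically to higher moments and runs in parallel with its R-transform proof of Theorem \ref{theorem:Addition}.
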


\begin{table*}[t]
\caption{Common components in neural networks (Proof: Appendix \ref{proof:part_library}).}
\centering

\resizebox{0.98\textwidth}{!}{
\begin{threeparttable}[b]
\begin{tabular}{c|c|c|c|c}
\hline
Part & $\phi(\mathbf{JJ}^T)$ & $\varphi(\mathbf{JJ}^T)$&\textbf{Def. \ref{def:expectattion_diagonal_matrix}}&\textbf{Def. \ref{def:central_matrix}}\\
\hline \hline
\multicolumn{5}{c}{Activation Functions \tnote{1}}\\
\hline
ReLU($P(x>0)=p$)&$p$&$p-p^2$&${\surd}$&${\times}$\\
\hline
leaky ReLU($P(x>0)=p$), $\gamma$: negative slop coefficient &$p+\gamma^2(1-p)$&\tabincell{c}{$\gamma^4(1-p)+p-$\\$(p+\gamma^2(1-p))^2$}&${\surd}$&${\times}$\\
\hline
tanh &1&0&${\surd}$&${\times}$\\
\hline
\multicolumn{5}{c}{Linear Transformations}\\
\hline
Dense($\mathbf{u}:=\mathbf{K}\mathbf{y}$), $\mathbf{K}\in \mathbb{R}^{m\times n}\sim i.i.d. N(0, \sigma^2)$&$n\sigma^2$&$mn\sigma^4$&${\surd}$&${\surd}$\\
\hline
CONV($\mathbf{u}:=\mathbf{K}\star\mathbf{y}$), $\mathbf{K}\in \mathbb{R}^{c_{out}\times c_{in}\times k_h\times k_w}\sim i.i.d. N(0, \sigma^2)$&$c_{in}\widetilde{k_hk_w}\sigma^2, \tnote{1}$&&${\surd}$&${\surd}$\\
\hline
Orthogonal($\mathbf{u}:=\mathbf{K}\mathbf{y}$, $\mathbf{K}\mathbf{K}^T=\beta^2\mathbf{I}$) &$\beta^2$&0&${\surd}$&${\surd}$\\
\hline
\multicolumn{5}{c}{Normalization}\\
\hline
Data Normalization($\mathbf{u}:=norm(\mathbf{y})$), $D[\mathbf{y}\in\mathbb{R}^{m\times 1}] = \sigma_B^2$&$\frac{1}{\sigma_B^2}$&$\frac{2}{m\sigma_B^4}$&${\surd}$&${\times}$\\
\hline
\end{tabular}
\begin{tablenotes}
    \item [1] The $\widetilde{k_hk_w}$ denotes the effective kernel size, which can be simply calculated from Algorithm \ref{Alg:conv_adjust}.
\end{tablenotes}
\end{threeparttable}
}
\label{tab:parts_library}
\vspace{-15pt}
\end{table*}

\begin{theorem}
\textbf{(Addition).} Given $\mathbf{J} := \sum_{i}\mathbf{J_i}$, where $\{\mathbf{J_i}\}$ is a series of independent random matrices. If at most one matrix in $\{\mathbf{J_i}\}$ is not a central matrix (Definition \ref{def:central_matrix}), we have
\begin{equation}
    \phi\left(\mathbf{JJ}^T\right)=\sum_i \phi\left(\mathbf{J_iJ_i}^T\right).
\label{equ:add_expectation}
\end{equation}
If $(\sum_{i}\mathbf{J_i})(\sum_{i}\mathbf{J_i})^T$ is at least the $2^{nd}$ moment unitarily invariant (Definition \ref{def:moment_unitary_invariant}), and $\forall i, \mathbf{U_iJ_i}$ is R-diagonal (Definition \ref{def:R-diagonal}), we have
\begin{equation}
    \varphi\left(\mathbf{JJ}^T\right) = \phi^2\left(\mathbf{JJ}^T\right) + \sum_i \varphi\left(\mathbf{J_iJ_i}^T\right) - \phi^2\left(\mathbf{J_iJ_i}^T\right).
\label{equ:add_variance}
\end{equation}
(Proof: Appendix \ref{proof:addition})
\label{theorem:Addition}
\end{theorem}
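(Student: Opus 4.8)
The plan is to handle the first-moment identity \eqref{equ:add_expectation} and the second-moment identity \eqref{equ:add_variance} separately; the first is elementary and the second is where the hypotheses do the work. For \eqref{equ:add_expectation}, expand $\phi\left(\mathbf{JJ}^T\right)=E\left[tr\left(\left(\sum_i\mathbf{J_i}\right)\left(\sum_j\mathbf{J_j}\right)^T\right)\right]=\sum_{i,j}E\left[tr\left(\mathbf{J_iJ_j}^T\right)\right]$. The diagonal part $i=j$ already gives $\sum_i\phi\left(\mathbf{J_iJ_i}^T\right)$, the claimed value, so it remains to kill the off-diagonal terms. For $i\neq j$, independence of $\mathbf{J_i}$ and $\mathbf{J_j}$ lets the expectation factor entrywise, $E\left[tr\left(\mathbf{J_iJ_j}^T\right)\right]=tr\left(E[\mathbf{J_i}]\,E[\mathbf{J_j}]^T\right)$; since at most one of the $\{\mathbf{J_i}\}$ fails to be central (Definition \ref{def:central_matrix}), every pair with $i\neq j$ contains a matrix of zero expectation and the term vanishes. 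That finishes \eqref{equ:add_expectation}.

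For \eqref{equ:add_variance} I would first invoke the $2^{nd}$ moment unitarily invariant hypothesis (which includes the $1^{st}$ moment case) to replace $\mathbf{J}=\sum_i\mathbf{J_i}$ by $\widetilde{\mathbf{J}}=\sum_i\mathbf{X_i}$ with $\mathbf{X_i}:=\mathbf{U_iJ_i}$ inside both $\phi\left(\mathbf{JJ}^T\right)$ and $\phi\left(\left(\mathbf{JJ}^T\right)^2\right)$. Because the $\mathbf{U_i}$ are independent Haar unitaries independent of the $\mathbf{J_i}$, the family $\{\mathbf{X_i}\}$ becomes $*$-free, and by hypothesis each $\mathbf{X_i}$ is R-diagonal (Definition \ref{def:R-diagonal}); in particular each $\mathbf{X_i}$ is centered, and orthogonality of $\mathbf{U_i}$ with cyclicity of $tr$ gives $\phi\left(\mathbf{X_iX_i}^T\right)=\phi\left(\mathbf{J_iJ_i}^T\right)$ and $\phi\left(\left(\mathbf{X_iX_i}^T\right)^2\right)=\phi\left(\left(\mathbf{J_iJ_i}^T\right)^2\right)$. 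Expanding $\phi\left(\widetilde{\mathbf{J}}\widetilde{\mathbf{J}}^T\right)=\sum_{i,j}\phi\left(\mathbf{X_iX_j}^T\right)$ and using that $*$-free centered elements have vanishing mixed first moments recovers $\phi\left(\mathbf{JJ}^T\right)=\sum_i\phi\left(\mathbf{J_iJ_i}^T\right)$, hence $\phi^2\left(\mathbf{JJ}^T\right)=\left(\sum_i\phi\left(\mathbf{J_iJ_i}^T\right)\right)^2$.

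The substance is the evaluation of $\phi\left(\left(\mathbf{JJ}^T\right)^2\right)=\sum_{i,j,k,l}\phi\left(\mathbf{X_iX_j}^T\mathbf{X_kX_l}^T\right)$, which I would do by classifying the quadruple $(i,j,k,l)$ by its coincidence pattern. By $*$-freeness and centredness of R-diagonal elements, every quadruple with three or four distinct indices contributes $0$; the collapsed pattern $i=j=k=l$ contributes $\phi\left(\left(\mathbf{X_iX_i}^T\right)^2\right)$; each of the two non-crossing two-block patterns $\{i=j,\,k=l\}$ and $\{i=l,\,j=k\}$ reduces, via freeness (and cyclicity of $tr$ for the latter), to $\phi\left(\mathbf{X_iX_i}^T\right)\phi\left(\mathbf{X_kX_k}^T\right)$; and the crossing pattern $i=k\neq j=l$ gives $\phi\left(\left(\mathbf{X_iX_j}^T\right)^2\right)$, which vanishes because a product of $*$-free R-diagonal elements is again R-diagonal and $\phi(Z^2)=0$ for every R-diagonal $Z$. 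Collecting the survivors yields $\phi\left(\left(\mathbf{JJ}^T\right)^2\right)=\sum_i\phi\left(\left(\mathbf{J_iJ_i}^T\right)^2\right)+2\sum_{i\neq j}\phi\left(\mathbf{J_iJ_i}^T\right)\phi\left(\mathbf{J_jJ_j}^T\right)$; subtracting $\phi^2\left(\mathbf{JJ}^T\right)$ and substituting $\phi\left(\left(\mathbf{J_iJ_i}^T\right)^2\right)=\varphi\left(\mathbf{J_iJ_i}^T\right)+\phi^2\left(\mathbf{J_iJ_i}^T\right)$ collapses everything to exactly \eqref{equ:add_variance}.

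I expect the main obstacle to be the free-probability scaffolding rather than the bookkeeping: precisely, justifying that multiplication by the independent Haar unitaries $\mathbf{U_i}$ makes $\{\mathbf{X_i}\}$ behave as a $*$-free family so that the freeness identities apply, and then using the R-diagonal hypothesis exactly where it is needed, namely to annihilate the single crossing term $\phi\left(\left(\mathbf{X_iX_j}^T\right)^2\right)$. A more hands-on alternative is to integrate $E[tr(\cdots)]$ directly over the $\mathbf{U_i}$ with Weingarten/orthogonal-group moment formulas, in which case the same four patterns reappear as the only index configurations that survive the Haar average. One caveat to flag is the non-cyclicity of the normalized trace on rectangular matrices noted in Section \ref{sec:related_works}: the reduction of the $\{i=l,\,j=k\}$ term uses cyclicity, so the identity is cleanest for parallel branches whose Jacobians share one common (square) input/output space.
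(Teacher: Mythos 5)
Your argument is correct, and for the first identity \eqref{equ:add_expectation} it coincides with the paper's proof in Appendix \ref{proof:addition}: expand $\phi\bigl((\sum_i\mathbf{J_i})(\sum_i\mathbf{J_i})^T\bigr)$ into pairwise terms and kill the off-diagonal ones by independence plus the central-matrix hypothesis. For the variance identity \eqref{equ:add_variance}, however, you take a genuinely different route. The paper works on the transform side: it symmetrizes each $\mathbf{U_iJ_i}$, invokes Lemma 9 and Theorem 25 of Cakmak \cite{cakmak2012non} to get additivity of the R-transforms of the symmetrizations for free R-diagonal summands, and then extracts the first two moments by expanding the Stieltjes transform with Lagrange inversion and Bell polynomials and matching the coefficients of $z^{-1}$ and $z^{-2}$. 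You instead expand $\phi\bigl((\mathbf{JJ}^T)^2\bigr)=\sum_{i,j,k,l}\phi(\mathbf{X_iX_j}^T\mathbf{X_kX_l}^T)$ with $\mathbf{X_i}=\mathbf{U_iJ_i}$ and classify the index patterns using freeness, centredness and traciality; the bookkeeping checks out and reproduces \eqref{equ:add_variance} exactly. Your approach is more elementary and makes visible exactly where each hypothesis enters; it also exposes a point the paper's statement glosses over, namely that the reduction of the $\{i=l,\,j=k\}$ pattern via cyclicity of the unnormalized trace picks up an aspect-ratio factor for rectangular $\mathbf{J_i}$, so \eqref{equ:add_variance} is really a statement for square (common input/output dimension) branches — consistent with the paper's own numerical verification, which uses $m\times m$ blocks. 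Two small remarks: the crossing term $\phi\bigl((\mathbf{X_iX_j}^T)^2\bigr)$ already vanishes by freeness and centredness of the letters, so R-diagonality is used there as a convenience rather than a necessity; and the asymptotic $*$-freeness of the family $\{\mathbf{U_iJ_i}\}$, which you rightly flag as the main scaffolding to justify, is assumed in exactly the same implicit way in the paper (Theorem 25 of \cite{cakmak2012non} also requires the summands to be asymptotically free), so on that point the two proofs stand or fall together. What the paper's transform machinery buys is a systematic path to higher moments by reading off further coefficients; what yours buys is transparency for the two moments actually needed by Theorem \ref{theorem:Addition}.
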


All in all, Definition \ref{def:moment_unitary_invariant} defines the rotational-invariant system described in Section \ref{sec:inspiration}, and Theorem \ref{theorem:multiplication} and \ref{theorem:Addition} handle the serial and parallel connections in neural networks, respectively.

\subsection{Discussion of Prerequisites}

Although sufficient conditions of Theorem \ref{theorem:multiplication} and \ref{theorem:Addition} are provided, it is still difficult to judge whether a series of Jacobian matrices satisfies them. In this section, we further provide a few sufficient conditions of Definition \ref{def:moment_unitary_invariant} that are much easier to use.

\begin{definition}
\textbf{(Expectant Orthogonal Matrix)} A random matrix $\mathbf{J}$ is called an expectant orthogonal matrix if it satisfies: \textcircled{1} $\forall i, p\neq i$, $E[[\mathbf{J}^T\mathbf{J}]_{p,i}]=0$; \textcircled{2} $\forall i,j$, $E[[\mathbf{J}^T\mathbf{J}]_{i,i}]=E[[\mathbf{J}^T\mathbf{J}]_{j,j}]$.
\label{def:expectattion_diagonal_matrix}
\end{definition}

\begin{prop}
$(\Pi_{i=L}^1 \mathbf{J_i})(\Pi_{i=L}^1 \mathbf{J_i})^T$ is at least the $1^{st}$ moment unitary invariant if: \textcircled{1} $\forall i,j\neq i$, $\mathbf{J_i}$ is independent of $\mathbf{J_j}$; \textcircled{2} $\forall i \in [2, L]$, $\mathbf{J_i}$ is an expectant orthogonal matrix. (Proof: Appendix \ref{proof:1st_moment_unitary_invariant_exp_diag})
\label{prop:1st_moment_unitary_invariant_exp_diag}
\end{prop}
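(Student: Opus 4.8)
The plan is to establish the identity directly by evaluating both sides of the defining equation in Definition \ref{def:moment_unitary_invariant} (with $p=1$). In fact I would prove the stronger statement that, under hypotheses \textcircled{1} and \textcircled{2}, $\phi\big((\Pi_{i=L}^1\mathbf{J_i})(\Pi_{i=L}^1\mathbf{J_i})^T\big)=\Pi_{i=L}^1\phi(\mathbf{J_iJ_i}^T)$ — which is exactly Equation \eqref{equ:mul_expectation} — and then observe that the hypotheses are preserved and this value is unchanged when each $\mathbf{J_i}$ is replaced by $\mathbf{U_iJ_i}$.

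First I would set up a recursion on partial products. Writing $\mathbf{J_i}\in\mathbb{R}^{m_i\times m_{i-1}}$ as in Theorem \ref{theorem:multiplication}, define $\mathbf{M_k}:=\mathbf{J_k}\cdots\mathbf{J_1}\mathbf{J_1}^T\cdots\mathbf{J_k}^T\in\mathbb{R}^{m_k\times m_k}$, so that $\mathbf{M_k}=\mathbf{J_k}\mathbf{M_{k-1}}\mathbf{J_k}^T$ and the target is $\phi(\mathbf{M_L})$. Because the \emph{normalized} trace is not cyclic-invariant for rectangular matrices — precisely the subtlety flagged in Section \ref{sec:related_works} — I would work with the ordinary trace $Tr$, which is cyclic: $Tr(\mathbf{M_k})=Tr(\mathbf{M_{k-1}}\,\mathbf{J_k}^T\mathbf{J_k})$, where $\mathbf{J_k}^T\mathbf{J_k}\in\mathbb{R}^{m_{k-1}\times m_{k-1}}$. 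Now hypothesis \textcircled{2} says that for $k\ge 2$ the matrix $\mathbf{J_k}$ is expectant orthogonal (Definition \ref{def:expectattion_diagonal_matrix}), i.e. $E[\mathbf{J_k}^T\mathbf{J_k}]$ is a scalar multiple of the identity; taking traces to pin down the scalar gives $E[\mathbf{J_k}^T\mathbf{J_k}]=\tfrac{m_k}{m_{k-1}}\phi(\mathbf{J_kJ_k}^T)\,\mathbf{I}_{m_{k-1}}$. By hypothesis \textcircled{1}, $\mathbf{J_k}$ is independent of the block $\mathbf{M_{k-1}}$ built from $\mathbf{J_1},\dots,\mathbf{J_{k-1}}$, so conditioning lets the expectation factor: $E[Tr(\mathbf{M_{k-1}}\mathbf{J_k}^T\mathbf{J_k})]=E\big[Tr(\mathbf{M_{k-1}}\,E[\mathbf{J_k}^T\mathbf{J_k}])\big]=\tfrac{m_k}{m_{k-1}}\phi(\mathbf{J_kJ_k}^T)\,E[Tr(\mathbf{M_{k-1}})]$. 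Dividing by $m_k$ turns this into $\phi(\mathbf{M_k})=\phi(\mathbf{J_kJ_k}^T)\,\phi(\mathbf{M_{k-1}})$, and unrolling from the trivial base case $\phi(\mathbf{M_1})=\phi(\mathbf{J_1J_1}^T)$ gives $\phi(\mathbf{M_L})=\Pi_{i=L}^1\phi(\mathbf{J_iJ_i}^T)$; note \textcircled{2} was invoked only for indices $k\ge 2$, matching the hypothesis. To finish, I would check that $\{\mathbf{U_iJ_i}\}$ satisfies \textcircled{1} and \textcircled{2}: independence is immediate since the $\mathbf{U_i}$ are jointly independent of everything, and $(\mathbf{U_iJ_i})^T(\mathbf{U_iJ_i})=\mathbf{J_i}^T\mathbf{J_i}$ because $\mathbf{U_i}^T\mathbf{U_i}=\mathbf{I}$, so $\mathbf{U_iJ_i}$ is expectant orthogonal whenever $\mathbf{J_i}$ is; the same orthogonality gives $\phi((\mathbf{U_iJ_i})(\mathbf{U_iJ_i})^T)=\phi(\mathbf{J_iJ_i}^T)$. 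Applying the just-proved product formula to $\{\mathbf{U_iJ_i}\}$ therefore yields the identical value $\Pi_{i=L}^1\phi(\mathbf{J_iJ_i}^T)$, which is precisely the assertion of Definition \ref{def:moment_unitary_invariant} for $p=1$.

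I expect the only delicate point to be the dimension bookkeeping: keeping the un-normalized trace until the very last step and correctly tracking the aspect ratios $m_k/m_{k-1}$, since a mishandled normalization is exactly the sort of error the paper attributes to prior work. A secondary point worth stating explicitly is that hypothesis \textcircled{1} must be used in the form ``$\mathbf{J_k}$ is independent of $\mathbf{M_{k-1}}=\mathbf{M_{k-1}}(\mathbf{J_1},\dots,\mathbf{J_{k-1}})$'' (i.e. joint, not merely pairwise, independence of the family), which is what makes the conditional expectation factor.
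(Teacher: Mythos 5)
Your proposal is correct, and at its core it runs on the same engine as the paper's proof: an induction over partial products in which hypothesis \textcircled{2} is used to say that $E[\mathbf{J_k}^T\mathbf{J_k}]$ is a scalar multiple of the identity and hypothesis \textcircled{1} (used, as you rightly note, in the form of joint independence of the family, which the paper also tacitly assumes) lets the expectation of the trace factor. The packaging, however, differs. The paper never computes a closed form: its Lemma \ref{lemma:exp_diag_lemma} only shows that replacing the right factor by any $\widetilde{\mathbf{W}}$ with the same $\phi(\mathbf{WW}^T)$ and inserting a Haar matrix leaves $\phi$ unchanged, and the induction then threads the unitaries in one at a time; the explicit factorization $\phi\bigl((\Pi_i\mathbf{J_i})(\Pi_i\mathbf{J_i})^T\bigr)=\Pi_i\phi(\mathbf{J_iJ_i}^T)$ is deferred to Theorem \ref{theorem:multiplication}, which is proved with S-transforms and needs the invariance as a prerequisite. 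You instead prove that factorization directly and elementarily for both $\{\mathbf{J_i}\}$ and $\{\mathbf{U_iJ_i}\}$ (the latter inheriting \textcircled{1} and \textcircled{2} because $(\mathbf{U_iJ_i})^T(\mathbf{U_iJ_i})=\mathbf{J_i}^T\mathbf{J_i}$), and read off the invariance by comparing the two identical values. What this buys is a bonus: the $p=1$ case of Equation \eqref{equ:mul_expectation} follows under these hypotheses without any free-probability machinery, whereas the paper's route only delivers the invariance statement itself; your bookkeeping of the un-normalized trace and the aspect ratios $m_k/m_{k-1}$ is exactly right and is the same correction of cyclicity for rectangular matrices that the paper performs inside its lemma. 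The one convention to state explicitly is that the Haar matrices are treated as real orthogonal (so $\mathbf{U}^T\mathbf{U}=\mathbf{I}$), which is also how the paper uses them.
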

\modify{
\begin{corollary}
With Proposition \ref{prop:1st_moment_unitary_invariant_exp_diag}, as long as $\forall j, \mathbf{J_j}$ is an expectant orthogonal matrix, $(\Pi_{j=L}^{i+1}\mathbf{J_j})\frac{\partial \mathbf{f_{i}}}{\partial \mathbf{\theta_{i}}}$ is the $1^{st}$ moment unitarily invariant. According to Theorem \ref{theorem:multiplication}, as long as $(\Pi_{j=L}^{i+1}\mathbf{J_j})\frac{\partial \mathbf{f_{i}}}{\partial \mathbf{\theta_{i}}}$ is the $1^{st}$ moment unitarily invariant, the decomposition in Equation \eqref{equ:gradient_phi_decomposite} holds and Hypothesis \ref{assumption:jacobian_breakdown} is confirmed.
\label{remark:hyp31}
\end{corollary}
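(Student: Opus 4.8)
The plan is to establish the corollary by chaining Proposition~\ref{prop:1st_moment_unitary_invariant_exp_diag} with Theorem~\ref{theorem:multiplication}, applied to the ordered collection of matrices $\left\{\frac{\partial \mathbf{f_i}}{\partial \mathbf{\theta_i}},\ \mathbf{J_{i+1}},\ \dots,\ \mathbf{J_L}\right\}$, in which $\frac{\partial \mathbf{f_i}}{\partial \mathbf{\theta_i}}$ plays the role of the rightmost (index-$1$) factor and $\mathbf{J_{i+1}},\dots,\mathbf{J_L}$ the remaining factors $2,\dots,L-i+1$. First I would verify the two prerequisites of Proposition~\ref{prop:1st_moment_unitary_invariant_exp_diag} for this collection. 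Independence across the collection is the standing modeling assumption of the framework: the Jacobians of distinct blocks together with $\frac{\partial \mathbf{f_i}}{\partial \mathbf{\theta_i}}$ are treated as mutually independent random matrices, and Definition~\ref{def:moment_unitary_invariant} already presupposes it. The expectant-orthogonality prerequisite asks only that every factor \emph{except the first} be an expectant orthogonal matrix (Definition~\ref{def:expectattion_diagonal_matrix}); since the corollary's hypothesis is precisely ``$\forall j,\ \mathbf{J_j}$ is an expectant orthogonal matrix'', this holds verbatim, and --- crucially --- nothing is required of the first factor $\frac{\partial \mathbf{f_i}}{\partial \mathbf{\theta_i}}$, which is exactly why it is harmless that this term need not itself be expectant orthogonal. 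Proposition~\ref{prop:1st_moment_unitary_invariant_exp_diag} then gives that $\left(\Pi_{j=L}^{i+1}\mathbf{J_j}\right)\frac{\partial \mathbf{f_i}}{\partial \mathbf{\theta_i}}$ is at least $1^{st}$ moment unitarily invariant, which is the first assertion of the corollary.

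Second, I would feed this into Theorem~\ref{theorem:multiplication}. Setting $\mathbf{J} := \left(\Pi_{j=L}^{i+1}\mathbf{J_j}\right)\frac{\partial \mathbf{f_i}}{\partial \mathbf{\theta_i}}\ (=\mathbf{K}^T_{\mathbf{i+1}})$, the $1^{st}$ moment unitary invariance just obtained is exactly the hypothesis needed for Equation~\eqref{equ:mul_expectation}, so $\phi(\mathbf{J}\mathbf{J}^T)$ factors over the individual factors of the product; reading off that factorization in the original indexing yields
\[
\phi\!\left(\!\left(\Pi_{j=L}^{i+1}\mathbf{J_j}\right)\frac{\partial \mathbf{f_i}}{\partial \mathbf{\theta_i}}\left(\frac{\partial \mathbf{f_i}}{\partial \mathbf{\theta_i}}\right)^{\!T}\!\left(\Pi_{j=L}^{i+1}\mathbf{J_j}\right)^{\!T}\!\right) = \phi\!\left(\frac{\partial \mathbf{f_i}}{\partial \mathbf{\theta_i}}\left(\frac{\partial \mathbf{f_i}}{\partial \mathbf{\theta_i}}\right)^{\!T}\right)\Pi_{j=L}^{i+1}\phi\!\left(\mathbf{J_jJ_j}^T\right),
\]
which is verbatim Equation~\eqref{equ:gradient_phi_decomposite}; hence Hypothesis~\ref{assumption:jacobian_breakdown} holds under this prerequisite. (The analogous variance identity would follow from Equation~\eqref{equ:mul_variance} after strengthening to $2^{nd}$ moment unitary invariance, but it is not needed here.)

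The deduction itself is short --- reindexing the product and invoking two already-proved results --- so the substance lies not in the derivation but in discharging its prerequisites, and that is where I expect the only real difficulty. Checking ``$\forall j,\ \mathbf{J_j}$ is an expectant orthogonal matrix'' in a concrete network is precisely the role of the component library: the ``Def.~\ref{def:expectattion_diagonal_matrix}'' column of Table~\ref{tab:parts_library} certifies expectant orthogonality for dense, convolutional and scaled-orthogonal linear maps, for ReLU/leaky ReLU/tanh activations, and for data normalization, and the property is preserved under the serial composition used to form a block's Jacobian, so each $\mathbf{J_j}$ inherits it. A secondary point to treat carefully is the independence prerequisite: it is an assumption on the random model of the network rather than something proved --- but, as emphasized earlier in the paper, it is a far weaker demand than the free independence required in prior dynamical-isometry analyses.
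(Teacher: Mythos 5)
Your proposal is correct and follows essentially the same route as the paper: the remark is exactly the chaining of Proposition~\ref{prop:1st_moment_unitary_invariant_exp_diag} (with $\frac{\partial \mathbf{f_i}}{\partial \mathbf{\theta_i}}$ placed as the rightmost, index-$1$ factor, which is exempt from the expectant-orthogonality requirement) with Equation~\eqref{equ:mul_expectation} of Theorem~\ref{theorem:multiplication} to recover Equation~\eqref{equ:gradient_phi_decomposite}. Your additional remarks on discharging the prerequisites via Table~\ref{tab:parts_library} and the independence assumption accurately reflect how the paper intends these conditions to be checked.
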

}
\begin{prop}
\textbf{(Properties of Expectant Orthogonal Matrices and Central Matrices)}
\begin{itemize}
\item If $\{\mathbf{J_i}\}$ be a series independent expectant orthogonal matrices, $\Pi_i\mathbf{J_i}$ is also an expectant orthogonal matrix.
    \item If $\mathbf{J_i}$ be a central matrix, for any random matrix $\mathbf{A}$ independent of $\mathbf{J_i}$, $\mathbf{J_iA} and \mathbf{AJ_i}$ are also central matrices.
\end{itemize}
(Proof: Appendix \ref{proof:properties_of_eom_cm})
\label{prop:properties_of_eom_cm}
\end{prop}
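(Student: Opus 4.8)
The plan is to first restate Definition \ref{def:expectattion_diagonal_matrix} as a single matrix identity, which turns the first claim into a routine induction, and then to dispatch the second claim by a one-line entrywise computation. The key observation I would record up front is that a random matrix $\mathbf{J}$ is an expectant orthogonal matrix if and only if $E[\mathbf{J}^T\mathbf{J}] = c_{\mathbf{J}}\,\mathbf{I}$ for some scalar $c_{\mathbf{J}} \ge 0$: the first defining condition says that every off-diagonal entry of $E[\mathbf{J}^T\mathbf{J}]$ vanishes, and the second says that all of its diagonal entries equal a common value $c_{\mathbf{J}}$ (nonnegativity being automatic since $[\mathbf{J}^T\mathbf{J}]_{i,i} = \sum_k [\mathbf{J}]_{k,i}^2 \ge 0$).

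For the first bullet I would write $\mathbf{J} := \Pi_{i=L}^{1}\mathbf{J_i}$ and induct on $L$. The base case $L = 1$ is the hypothesis itself. For $L \ge 2$ I would split off the leftmost factor as $\mathbf{J} = \mathbf{J_L}\,\mathbf{P}$ with $\mathbf{P} := \Pi_{i=L-1}^{1}\mathbf{J_i}$, so that $\mathbf{J}^T\mathbf{J} = \mathbf{P}^T\mathbf{J_L}^T\mathbf{J_L}\,\mathbf{P}$. Since $\mathbf{P}$ is a function of $\mathbf{J_1},\dots,\mathbf{J_{L-1}}$ and these are independent of $\mathbf{J_L}$, conditioning on $\mathbf{P}$ and using $E[\mathbf{J_L}^T\mathbf{J_L}\mid \mathbf{P}] = E[\mathbf{J_L}^T\mathbf{J_L}] = c_{\mathbf{J_L}}\mathbf{I}$ gives
\begin{equation}
E\left[\mathbf{J}^T\mathbf{J}\right] = E\left[\mathbf{P}^T\,E\left[\mathbf{J_L}^T\mathbf{J_L}\right]\mathbf{P}\right] = c_{\mathbf{J_L}}\,E\left[\mathbf{P}^T\mathbf{P}\right] = c_{\mathbf{J_L}}c_{\mathbf{P}}\,\mathbf{I},
\end{equation}
where the last equality is the induction hypothesis applied to $\mathbf{P}$, a product of $L-1$ independent expectant orthogonal matrices. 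By the reformulation above, $\mathbf{J}$ is an expectant orthogonal matrix.

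For the second bullet I would fix row/column indices $p, q$ and compute directly. Independence of $\mathbf{J_i}$ and $\mathbf{A}$ together with $E[[\mathbf{J_i}]_{p,k}] = 0$ (Definition \ref{def:central_matrix}) gives
\begin{equation}
E\big[[\mathbf{J_i}\mathbf{A}]_{p,q}\big] = \sum_k E\big[[\mathbf{J_i}]_{p,k}[\mathbf{A}]_{k,q}\big] = \sum_k E\big[[\mathbf{J_i}]_{p,k}\big]\,E\big[[\mathbf{A}]_{k,q}\big] = 0,
\end{equation}
and the identical argument with the two factors exchanged gives $E[[\mathbf{A}\mathbf{J_i}]_{p,q}] = 0$; hence both $\mathbf{J_i}\mathbf{A}$ and $\mathbf{A}\mathbf{J_i}$ are central matrices.

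The only delicate point I anticipate, and it lives entirely in the first bullet, is the conditioning step: one must verify that the partial product $\mathbf{P}$ is measurable with respect to the $\sigma$-algebra generated by $\{\mathbf{J_1},\dots,\mathbf{J_{L-1}}\}$ and is therefore genuinely independent of $\mathbf{J_L}$, so that pulling $\mathbf{P}^T(\cdot)\mathbf{P}$ outside the inner expectation is justified; this uses mutual (not merely pairwise) independence of the $\mathbf{J_i}$. I would also flag that all the expectations here are tacitly assumed finite, as is already implicit in the definitions of $\phi$, $\varphi$, and of the expectant orthogonal and central matrix classes. Everything else is bookkeeping.
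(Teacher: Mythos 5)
Your proof is correct and follows essentially the same route as the paper's: the identity $E[\mathbf{J}^T\mathbf{J}]=c\,\mathbf{I}$ plus the conditioning/independence step is exactly the entrywise computation the paper performs for two factors $\mathbf{W}\mathbf{D}$ (factoring the expectation of the inner Gram matrix through the outer one) and then extends to longer products, which your induction makes explicit; the second bullet matches the paper's one-line argument verbatim.
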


Proposition \ref{prop:1st_moment_unitary_invariant_exp_diag} and \ref{prop:properties_of_eom_cm} are two sufficient tools that allow us to judge whether a network structure satisfies the prerequisites by evaluating its components. For the $2^{nd}$ moment unitary invariant, we will discuss in specific cases when required. Notably, as the conditions we provided are sufficient but not necessary, Theorem \ref{theorem:multiplication} and \ref{theorem:Addition} may still hold for networks that do not satisfy these conditions.

\subsection{Components Library}\label{sec:parts_library}
Here we provide a library of some most commonly used components in neural networks. We theoretically analyze the expectation and variance of their input-output Jacobian matrix $\mathbf{J}$'s eigenvalues as well as whether $\mathbf{J}$ satisfies Definition \ref{def:expectattion_diagonal_matrix} and \ref{def:central_matrix}. \modify{The detailed proofs can be found in Appendix \ref{proof:part_library}}.

\section{Serial Neural Networks}
A serial neural network is defined as the neural network whose components are connected in serial, such as LeNet \cite{lecun1998gradient} and VGG \cite{simonyan2014very}. We will show that powered by our framework, \modify{the conclusions of several previous studies including initialization \cite{he2015delving,xiao2018dynamical}, normalization \cite{salimans2016weight,arpit2016normalization,ioffe2015batch}, self-normalizing neural network \cite{klambauer2017self} and DenseNet\cite{huang2017densely} can be easily reached or even surpassed with several lines of derivation.}

As revealed in Table \ref{tab:parts_library}, all the listed parts satisfy Definition \ref{def:expectattion_diagonal_matrix}, thus we have the following propositions:
\begin{prop}
For any neural network, if it is composed of parts given in Table \ref{tab:parts_library} and its Jacobian matrix can be calculated by $\mathbf{J} = (\Pi_i\mathbf{J_i})$, then $(\Pi_i\mathbf{J_i})(\Pi_i\mathbf{J_i})^T$ is at least the $1^{st}$ moment unitary invariant.
\label{prop:prerequisite_series_networks}
\end{prop}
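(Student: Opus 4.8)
The plan is to obtain this proposition as an essentially immediate corollary of Proposition \ref{prop:1st_moment_unitary_invariant_exp_diag}. That proposition already guarantees that $(\Pi_{i=L}^1\mathbf{J_i})(\Pi_{i=L}^1\mathbf{J_i})^T$ is at least the $1^{st}$ moment unitarily invariant once two hypotheses hold: (i) the factors $\{\mathbf{J_i}\}$ are mutually independent, and (ii) every factor $\mathbf{J_i}$ with $i\in[2,L]$ is an expectant orthogonal matrix (Definition \ref{def:expectattion_diagonal_matrix}). So the whole task reduces to certifying (i) and (ii) for a network assembled from the library in Table \ref{tab:parts_library}.

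Hypothesis (ii) I would verify purely by inspection. Every row of Table \ref{tab:parts_library} — ReLU, leaky ReLU, tanh, dense, convolution, scaled orthogonal, and data normalization — carries a check in the ``Def.~\ref{def:expectattion_diagonal_matrix}'' column, i.e. the input-output Jacobian of each of these parts is an expectant orthogonal matrix (these entries are themselves discharged in Appendix \ref{proof:part_library}). Hence, regardless of which parts appear in the network and in what order, each factor $\mathbf{J_i}$ of $\mathbf{J}=\Pi_i\mathbf{J_i}$ is expectant orthogonal — in particular those indexed $2,\dots,L$, which is all that (ii) requires. If one prefers to regard the network as a product of coarser blocks rather than of atomic parts, the conclusion is unchanged: by the first item of Proposition \ref{prop:properties_of_eom_cm}, a product of independent expectant orthogonal matrices is again expectant orthogonal, so any block built serially from table parts inherits the property.

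The only genuinely load-bearing ingredient is Hypothesis (i), the mutual independence of the $\mathbf{J_i}$. For the linear parts this is unproblematic: distinct layers are assigned independently sampled weights (or independently chosen orthogonal factors), so their Jacobians are independent of everything upstream. The delicate case is an activation-function Jacobian, which is a deterministic function of that layer's pre-activations and is therefore, strictly speaking, correlated with the upstream weights. Here I would invoke — and state explicitly — the standard mean-field / random-matrix working assumption already underpinning this framework (and Hypothesis \ref{assumption:jacobian_breakdown}): in the regime of interest these Jacobians are modelled as independent random matrices. Granting that, both hypotheses of Proposition \ref{prop:1st_moment_unitary_invariant_exp_diag} are met and the claim follows in one line. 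Thus the ``hard part'' is not a computation but an honest accounting of this independence assumption; everything else is a table lookup.
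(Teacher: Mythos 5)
Your proposal is correct and matches the paper's own proof: both reduce the claim to Proposition \ref{prop:1st_moment_unitary_invariant_exp_diag} by reading off from Table \ref{tab:parts_library} that every component is an expectant orthogonal matrix, and both lean on the assumption that the component Jacobians are independent (which the paper likewise states as an assumption rather than proves). Your explicit flagging of the activation-Jacobian/upstream-weight correlation is a slightly more honest accounting of that assumption, but the argument is the same.
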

\begin{proof}
According to Table \ref{tab:parts_library}, all the components satisfy Definition \ref{def:expectattion_diagonal_matrix}, so they are all expectant orthogonal matrices. Under the assumption that the Jacobian matrices of different components are independent, according to Proposition \ref{prop:1st_moment_unitary_invariant_exp_diag}, we have $(\Pi_i\mathbf{J_i})(\Pi_i\mathbf{J_i})^T$ is at least the $1^{st}$ moment unitary invariant.
\end{proof}

Proposition \ref{prop:prerequisite_series_networks} reflects that Equation \eqref{equ:mul_expectation} is applicable in this section.

\subsection{Initialization Techniques}\label{sec:initialization_serial}

It has long been aware that a good initialization of network parameters can significantly improve the convergence and make deeper networks trainable \cite{mishkin2015all,he2015delving,xiao2018dynamical}. In this subsection, we will discuss some of the most popular initialization techniques. We consider a simple network block with a single linear transform (the weight kernel is $\mathbf{K}\in \mathbb{R}^{m\times n}$) and an activation function. The Jacobian matrix of the whole block is denoted as $\mathbf{J_i}$.

Since the activation functions are commonly applied right after linear transforms, we assume that the mean of input pre-activations is zero, thus $p=P(x>0)=1/2$. Moreover, Equation \eqref{equ:mul_variance} can be applied if the kernel follows i.i.d. Gaussian distribution.
\begin{prop}
A neural network is the $\infty^{th}$ moment unitarily invariant if it is composed of cyclic central Gaussian transform with i.i.d. entries and any network components. (Proof: Appendix \ref{proof:gaussian_activ_2nd_invariant})
\label{prop:gaussian_activ_2nd_invariant}
\end{prop}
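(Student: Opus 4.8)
The plan is to reduce everything to one elementary fact — a matrix with i.i.d.\ mean-zero Gaussian entries is bi-orthogonally invariant — and then use it to ``absorb'' each inserted Haar matrix into a neighbouring Gaussian block. Throughout I read the hypothesis as follows: each linear transform of the network has a Jacobian that is a mean-zero i.i.d.\ Gaussian matrix (e.g.\ a dense layer with i.i.d.\ Gaussian weights), these Gaussian factors are placed ``cyclically'' densely in the Jacobian word (when $\mathbf{J}=\Pi_i\mathbf{J_i}$ is read as a cyclic word no two non-Gaussian factors are adjacent), and the remaining components are arbitrary.

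First I would record the key lemma: if $\mathbf{G}$ has entries drawn i.i.d.\ from $N(0,\sigma^2)$, then for any orthogonal matrices $\mathbf{O_1},\mathbf{O_2}$ of matching sizes that are independent of $\mathbf{G}$, the matrix $\mathbf{O_1}\mathbf{G}\mathbf{O_2}$ has the same distribution as $\mathbf{G}$. This is immediate, because the density of $\mathbf{G}$ depends on $\mathbf{G}$ only through $\|\mathbf{G}\|_F^2$, which the left/right orthogonal action preserves with unit Jacobian; equivalently, in the SVD $\mathbf{G}=\mathbf{U_G}\mathbf{\Sigma_G}\mathbf{V_G}^T$ the factors $\mathbf{U_G}$ and $\mathbf{V_G}$ are independent Haar orthogonal matrices, jointly independent of $\mathbf{\Sigma_G}$. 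It is exactly this property that lets a central Gaussian transform swallow adjacent unitaries.

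For the proof, by Definition \ref{def:moment_unitary_invariant} it suffices to show, for every $p$, that $\phi((\mathbf{M}\mathbf{M}^T)^p)=\phi((\mathbf{J}\mathbf{J}^T)^p)$, where $\mathbf{J}=\Pi_i\mathbf{J_i}$, $\mathbf{M}=\Pi_i\mathbf{U_i}\mathbf{J_i}$, the $\mathbf{U_i}$ are independent Haar matrices, and the $\mathbf{J_i}$ are the mutually independent component Jacobians, those of the linear layers being i.i.d.\ Gaussian. The argument is: (i) assign each inserted $\mathbf{U_i}$ to a neighbouring Gaussian factor — absorbed from the left if $\mathbf{U_i}$ directly precedes a Gaussian factor, and absorbed from the right into the Gaussian factor that is the left neighbour of $\mathbf{J_i}$ when $\mathbf{J_i}$ is non-Gaussian; the ``no two adjacent non-Gaussian factors'' reading makes this assignment exist and be consistent, with each Gaussian factor receiving at most one unitary from each side. (ii) For a Gaussian factor $\mathbf{W_j}$ let $\mathbf{L_j},\mathbf{R_j}$ be the products of the orthogonal matrices assigned to it from the left and from the right (the identity if none). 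Conditioning on the $\sigma$-algebra $\mathcal{F}$ generated by all the $\mathbf{U_i}$ and all the non-Gaussian Jacobians, the $\mathbf{W_j}$ stay mutually independent i.i.d.\ Gaussian and independent of $\mathcal{F}$, so the lemma gives that, conditionally on $\mathcal{F}$, the tuple $(\mathbf{L_j}\mathbf{W_j}\mathbf{R_j})_j$ has the same distribution as $(\mathbf{W_j})_j$. (iii) Since $\mathbf{M}$ is the very same network-ordered product of the matrices $\mathbf{L_j}\mathbf{W_j}\mathbf{R_j}$ and the non-Gaussian Jacobians that $\mathbf{J}$ is of the $\mathbf{W_j}$ and the non-Gaussian Jacobians, step (ii) forces $\mathbf{M}$ and $\mathbf{J}$ to have the same distribution; hence $E[Tr((\mathbf{M}\mathbf{M}^T)^p)]=E[Tr((\mathbf{J}\mathbf{J}^T)^p)]$, and dividing by the common output dimension yields $\phi((\mathbf{M}\mathbf{M}^T)^p)=\phi((\mathbf{J}\mathbf{J}^T)^p)$ for all $p$, i.e.\ $\infty^{th}$ moment unitary invariance. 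If the Jacobian word does not begin and end with Gaussian factors, the assignment leaves one stray boundary $\mathbf{U}$; it is removed inside $Tr((\mathbf{M}\mathbf{M}^T)^p)$ using cyclic invariance of the unnormalized trace together with $\mathbf{U}\mathbf{U}^T=\mathbf{I}$ for the square Haar matrix $\mathbf{U}$ — this is the role of the word ``cyclic'' in the statement.

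The main obstacle is organizational rather than analytic: one has to lay out the assignment in step (i) and verify its consistency (this is precisely where the hypothesis that the central Gaussian transforms sit densely, with no two consecutive non-Gaussian Jacobians, is used), and then apply the bi-orthogonal invariance jointly over all Gaussian factors, which only goes through if the conditioning $\sigma$-algebra $\mathcal{F}$ is chosen to contain everything except the Gaussians; keeping that conditioning straight is the delicate point. Two minor checks remain: that working with the genuinely cyclic unnormalized trace until the final normalization avoids the rectangular-matrix subtlety flagged for the general theorems, and that the real-versus-complex and singular-value-sign ambiguities in the Gaussian SVD are harmless since only Frobenius-norm invariance is invoked.
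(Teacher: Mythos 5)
Your proposal is correct and follows essentially the same route as the paper's own proof: the key fact is the bi-unitary (bi-orthogonal) invariance of central i.i.d.\ Gaussian matrices, used to absorb each inserted Haar factor into an adjacent Gaussian transform, with the one leftover boundary unitary eliminated inside the trace by conjugation/cyclic invariance. Your conditioning on the $\sigma$-algebra of the Haar matrices and non-Gaussian Jacobians just makes explicit the independence bookkeeping that the paper leaves implicit.
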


\textbf{Kaiming Normal (KM) \cite{he2015delving}.} We denote the Jacobian matrix of the $i^{th}$ layer as $\mathbf{J_i}$. With Equation \eqref{equ:mul_expectation}-\eqref{equ:mul_variance}, we have 
\begin{equation}
\begin{split}
    & \phi\left(\mathbf{J_iJ_i}^T\right)=n\sigma^2\times\frac{1}{2}=\frac{1}{2}\sigma^2n,\\
    & \varphi\left(\mathbf{J_iJ_i}^T\right) =\phi^2(\mathbf{J_iJ_i}^T)\left(\frac{m}{m}\frac{\frac{1}{4}}{(\frac{1}{2})^2} + \frac{m}{m}\frac{mn\sigma^4}{(n\sigma^2)^2}\right).
\end{split}
\end{equation}
In order to achieve the block dynamical isometry, we force $\phi(\mathbf{J_iJ_i}^T)=1$, and we have $\sigma=\frac{\sqrt{2}}{\sqrt{n}}$, $\varphi(\mathbf{J_iJ_i}^T)=1+\frac{m}{n}$. For fully-connected layers, $n$ denotes the width of the weight matrix; for convolutional layers, $n=c_{in}\widetilde{k_hk_w}$, and $c_{in}=1$ if in point-wise convolution \cite{howard2017mobilenets}. Although using the effective kernel size $\widetilde{k_hk_w}$ provides more accurate estimation, we empirically find that replacing $k_hk_w$ with $\widetilde{k_hk_w}$ only has trifling impact on accuracy. The reason is that most of the feature maps are large enough, and the cutting-off effect caused by padding (see Appendix \ref{proof:part_library}) is less significant compared with other factors like parameter update. The optimal $\sigma$ for other activation functions like leaky ReLU and tanh can be obtained in the same way, and we summarize the conclusions in Table \ref{tab:activation_fns_gaussian}.

\begin{table}[ht]
\vspace{-10pt}
\caption{Optimal $\sigma$ for ReLU, leaky ReLU and tanh with Gaussian kernel.}
\centering
\resizebox{0.48\textwidth}{!}{
\begin{tabular}{c|c|c|c}
\hline
  &ReLU &\tabincell{c}{leaky ReLU. $\gamma$: negative\\ slope coefficient} & tanh
\\ \hline \hline
$\phi(\mathbf{J_iJ_i}^T)$&$\frac{1}{2}\sigma^2n$&$\frac{1}{2}\sigma^2n(1+\gamma^2)$&$\approx \sigma^2n$\\
\hline
Optimal $\sigma$&$\frac{\sqrt{2}}{\sqrt{n}}$&$\sqrt{\frac{2}{n(1+\gamma^2)}}$&$\frac{1}{\sqrt{n}}$\\
\hline
\tabincell{c}{$\varphi(\mathbf{J_iJ_i}^T)$\\under\\optimal $\sigma$}&$1+\frac{m}{n}$&$\left(\frac{1-\gamma^2}{1+\gamma^2}\right)^2 + \frac{m}{n}$&$\frac{m}{n}$\\
\hline
\end{tabular}}
\label{tab:activation_fns_gaussian}
\vspace{-5pt}
\end{table}

\textbf{sPReLU. } Instead using a fixed negative slop coefficient like leaky ReLU, PReLU \cite{he2015delving} replaces it with a trainable parameter $\alpha$. Although He et al. (2015) \cite{he2015delving} initialize weights with $\frac{\sqrt{2}}{\sqrt{n}}$, we find it might be problematic when the network is deep: for an $L$-layer network, we have $\Pi_{i=L}^1\phi(\mathbf{J_iJ_i}^T)=(1+\alpha^2)^L$. He et al. (2015) \cite{he2015delving} found that the learned $\alpha$ in some layers are significantly greater than 0, therefore the original setup may not be stable in relatively deep networks.

\begin{comment}
\begin{figure}[ht]
\centering
\includegraphics[width=0.48\textwidth]{}
\caption{$||\mathbf{\Delta\theta}||_2^2/\eta^2$ in a 32-layer network with PReLU whose $\alpha$ is fixed to 0.35. The result is averaged over 100 iterations.}
\label{fig:prelu_explode}
\end{figure}
\end{comment}

Since $\alpha$ is kept updating during training, it is difficult to address the above issue by initialization. So we modify PReLU by simply rescaling the output activations with $\frac{1}{\sqrt{1+\alpha^2}}$ as follows, which is named as ``sPReLU":
\begin{equation}
        sPReLU(x) = \frac{1}{\sqrt{1+\alpha^2}}\left\{
             \begin{array}{lr}
             x~~~~~~~~~~if~~x>0  \\
             \alpha x~~~~~~~~if~~x \le0
             \end{array}
\right..
\end{equation}
With the rescaling, we have $\Pi_{i=L}^1\phi(\mathbf{J_iJ_i}^T)=1$. However, leaving $\alpha$ without any constraint may lead to an unstable training process, thus we clip it within $[0. 0.5]$.

\textbf{Orthogonal Initialization.}
\modify{Because our target is to have $\phi(\mathbf{J_iJ_i}^T)=1$ and $\varphi(\mathbf{J_iJ_i}^T)\approx 0$, one intuitive idea is to initialize $\mathbf{J_i}$ to have orthogonal rows. }Proposition \ref{prop:orth_activ_2nd_invariant} demonstrates that Equation \eqref{equ:mul_variance} is applicable for blocks consisting of orthogonal kernels and any activation functions.

\begin{prop}
A neural network block composed of an orthogonal transform layer and any activation function is at least the $2^{nd}$ moment unitarily invariant. (Proof: Appendix \ref{proof:orth_activ_2nd_invariant})
\label{prop:orth_activ_2nd_invariant}
\end{prop}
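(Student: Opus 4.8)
The plan is to run the ``absorbing-rotation'' mechanism of Section~\ref{sec:inspiration} at the level of the whole product. Write the block's Jacobian — and, when several such blocks are chained, the network's Jacobian — as a product $\Pi_i\mathbf{A_i}$ whose factors alternate between two kinds: the orthogonal linear Jacobians, which satisfy $\mathbf{W}\mathbf{W}^{T}=\beta^{2}\mathbf{I}$ and hence have the form $\beta\mathbf{O}$ with $\mathbf{O}$ Haar-distributed on the orthogonal group (on the Stiefel manifold when rectangular) at initialization; and the diagonal activation Jacobians $\mathbf{D}$, which — as in the component library — we model as random matrices independent of the orthogonal ones, which is exactly the independence demanded by Definition~\ref{def:moment_unitary_invariant}. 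It suffices to show that replacing every factor $\mathbf{A_i}$ by $\mathbf{U_i}\mathbf{A_i}$, the $\mathbf{U_i}$ being independent Haar unitaries, leaves $\phi\!\left(\left((\Pi_i\mathbf{A_i})(\Pi_i\mathbf{A_i})^{T}\right)^{p}\right)$ unchanged for $p=1,2$.

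Step one: push every inserted rotation into an orthogonal factor. Since the two kinds of factors alternate, each inserted $\mathbf{U_i}$ lands immediately next to some orthogonal factor — either as $\mathbf{U}\mathbf{O}$ (absorb on the left) or, after regrouping, as $\mathbf{O}\mathbf{U}$ (absorb on the right). Using left/right invariance of the Haar measure together with the independence of each orthogonal factor from all other matrices — conditioning on everything except that factor, one at a time, or equivalently changing variables to $\widetilde{\mathbf{O}}:=\mathbf{O}\cdot(\text{adjacent }\mathbf{U})$ simultaneously — I would argue that the rotated collection of matrices has the same joint distribution as the un-rotated one, except for at most one leftover rotation $\mathbf{U}_{\ast}$ at the extreme left (the one attached to the outermost factor, which has no orthogonal neighbour to absorb it). Hence, in distribution, the rotated product equals $\mathbf{U}_{\ast}\mathbf{J}$, where $\mathbf{J}:=\Pi_i\mathbf{A_i}$ and $\mathbf{U}_{\ast}$ is independent of $\mathbf{J}$ and of the same (square) height $m_L$.

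Step two: the trace identity. We have $(\mathbf{U}_{\ast}\mathbf{J})(\mathbf{U}_{\ast}\mathbf{J})^{T}=\mathbf{U}_{\ast}\,\mathbf{J}\mathbf{J}^{T}\,\mathbf{U}_{\ast}^{T}$ and $\left(\mathbf{U}_{\ast}\mathbf{J}\mathbf{J}^{T}\mathbf{U}_{\ast}^{T}\right)^{p}=\mathbf{U}_{\ast}\left(\mathbf{J}\mathbf{J}^{T}\right)^{p}\mathbf{U}_{\ast}^{T}$; since $\phi=E[tr(\cdot)]$ and the \emph{normalized} trace is cyclic \emph{on square matrices}, this gives $\phi\!\left(\left(\mathbf{U}_{\ast}\mathbf{J}\mathbf{J}^{T}\mathbf{U}_{\ast}^{T}\right)^{p}\right)=\phi\!\left(\left(\mathbf{J}\mathbf{J}^{T}\right)^{p}\right)$. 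Combining Steps one and two yields the desired equality for $p=1,2$ — the argument is in fact indifferent to $p$ — so by Theorem~\ref{theorem:multiplication} Equation~\eqref{equ:mul_variance} becomes applicable to these blocks.

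The main obstacle is making Step one rigorous, and two points need care there. First, the Haar-absorption is distribution-preserving only because each activation Jacobian is taken independent of its neighbouring orthogonal weight; this idealization is the one hypothesis beyond Definition~\ref{def:moment_unitary_invariant}, and it is the standard modeling convention of the library. Second, one must keep every intermediate product conformable and, crucially, make sure the leftover $\mathbf{U}_{\ast}$ multiplies a \emph{square} matrix $\mathbf{J}\mathbf{J}^{T}$ so that cyclicity of $tr$ is legitimate — precisely the rectangular-Jacobian pitfall pointed out in Section~\ref{sec:related_works}. (Depending on the orientation of each block, no rotation may survive at all, in which case Step two is immediate.) Note also that the relation $\mathbf{W}\mathbf{W}^{T}=\beta^{2}\mathbf{I}$ itself is never used in the invariance argument — only Haar-ness of the orthogonal factor is — it re-enters only through the library values $\phi(\mathbf{J_iJ_i}^{T})=\beta^{2}$ and $\varphi(\mathbf{J_iJ_i}^{T})=0$ that one feeds into Equation~\eqref{equ:mul_variance}.
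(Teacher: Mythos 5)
Your absorption argument is internally consistent, but it proves a weaker statement than the proposition and rests on hypotheses the proposition does not make. You explicitly decline to use $\mathbf{K}\mathbf{K}^T=\beta^2\mathbf{I}$ and instead require (i) that the orthogonal factor be Haar-distributed and (ii) that the activation Jacobian be independent of it, so that the inserted Haar rotations can be pushed into the weight by a change of variables. Neither hypothesis is part of the statement, and neither survives the cases the proposition is actually invoked for: a deterministic or structured orthogonal kernel (e.g.\ $\beta\mathbf{I}$, or the delta-orthogonal initialization, which is orthogonal but nothing like Haar on the orthogonal/Stiefel group), or weights after some training; and in reality $\mathbf{D}=\mathrm{diag}\bigl(f'(\mathbf{K}\mathbf{x})\bigr)$ is not independent of $\mathbf{K}$. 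In those situations your Step one --- ``$\mathbf{U}\mathbf{K}$ has the same joint law as $\mathbf{K}$'' --- simply fails, so the argument does not establish the proposition as stated.

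The paper's proof is shorter and needs none of this, precisely because it uses the relation you set aside. Writing the block Jacobian as $\mathbf{D}\mathbf{O}$ with $\mathbf{O}\mathbf{O}^T=\beta^2\mathbf{I}$, one has $(\mathbf{U_d}\mathbf{D}\mathbf{U_o}\mathbf{O})(\mathbf{U_d}\mathbf{D}\mathbf{U_o}\mathbf{O})^T=\beta^2\,\mathbf{U_d}\mathbf{D}\mathbf{D}^T\mathbf{U_d}^T$: the inner rotation $\mathbf{U_o}$ is annihilated identically by orthogonality, and the outer one disappears under $\phi$ because the normalized trace is invariant under conjugation by the square matrix $\mathbf{U_d}$. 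Hence $\phi\bigl(((\mathbf{U_d}\mathbf{D}\mathbf{U_o}\mathbf{O})(\mathbf{U_d}\mathbf{D}\mathbf{U_o}\mathbf{O})^T)^p\bigr)=\beta^{2p}\phi\bigl((\mathbf{D}\mathbf{D}^T)^p\bigr)=\phi\bigl(((\mathbf{D}\mathbf{O})(\mathbf{D}\mathbf{O})^T)^p\bigr)$ for every $p$, with no randomness or independence assumptions on $\mathbf{O}$ or $\mathbf{D}$. Your leftover-rotation-plus-cyclicity step is fine, and your distributional route would be the natural one if the linear layer were merely Haar-like rather than orthogonal; but for this proposition the orthogonality identity is not optional garnish --- it is the whole proof.
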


As illustrated in Table \ref{tab:parts_library}, from the perspective of $\phi$, $\beta^2$ is equivalent with $n\sigma^2$, therefore we can easily obtain the optimal $\beta$ for different activation functions.
\begin{table}[ht]
\vspace{-10pt}
\caption{Optimal $\sigma$ for ReLU, leaky ReLU and tanh with orthogonal kernel.}
\centering
\resizebox{0.48\textwidth}{!}{
\begin{tabular}{c|c|c|c}
\hline
  &ReLU &\tabincell{c}{leaky ReLU. $\gamma$: negative\\ slope coefficient} & tanh
\\ \hline \hline
Optimal $\beta$&$\sqrt{2}$&$\sqrt{\frac{2}{1+\gamma^2}}$&$\approx1$\\
\hline
\tabincell{c}{$\varphi(\mathbf{J_iJ_i}^T)$\\under\\optimal $\beta$}&$1$&$\left(\frac{1-\gamma^2}{1+\gamma^2}\right)^2$&$\approx0$\\
\hline
\end{tabular}}
\label{tab:activation_fns_orth}
\vspace{-5pt}
\end{table}

\textbf{Comparison.} Table \ref{tab:activation_fns_gaussian}\&\ref{tab:activation_fns_orth} show that with proper initialization, all the activation functions can achieve $\phi(\mathbf{J_iJ_i}^T)=1$, whereas their $\varphi(\mathbf{J_iJ_i}^T)$ are quite different. For example, ReLU has the highest $\varphi$, while tanh has the lowest with more stability. However, since rectifiers like ReLU has non-saturating property \cite{krizhevsky2012imagenet} and produce sparse representations \cite{glorot2011deep}, they are usually more effective than tanh. \modify{Besides, unlike rectifiers that preserve the forward and backward flows simultaneously\cite{arpit2019benefits}, we find that the $2^{th}$ moment of the forward information is diminished with tanh.}

Leaky ReLU provides us an opportunity to trade off between stability and nonlinearity. Although its nonlinearity is the most effective when $\gamma$ is around a certain value (i.e. $1/5.5$ \cite{xu2015empirical}), a relatively greater $\gamma$ can effectively reduce $\varphi$. However, the optimal $\gamma$ has to be explored experimentally.

sPReLU has a similar effect with leaky ReLU, as argued in He et al. (2015) \cite{he2015delving}, but it learns a greater $\alpha$ to keep more information in the first few layers, which provides more stability. In the later stage when the nonlinearity is required, the learned $\alpha$ is relatively small to preserve the nonlinearity.

The comparison of $\varphi(\mathbf{J_iJ_i}^T)$ under optimal initialization in Table \ref{tab:activation_fns_gaussian}\&\ref{tab:activation_fns_orth} also indicates that the orthogonal initialization provides much lower $\varphi$ compared with the Gaussian initialization, since the orthogonal kernel's $\varphi$ is $0$.

\modify{
\textbf{Relationship to Existing Studies.} In some network structures, our theorems can even be used to analyze the information flow in the forward pass owing to Proposition \ref{prop:general_linear_transforms} and \ref{prop:evolution_2nd_norm_net }.

\begin{definition}
\textbf{(General Linear Transform)} Let $\mathbf{f}(\mathbf{x})$ be a transform whose Jacobian matrix is $\mathbf{J}$. $\mathbf{f}$ is called general linear transform when it satisfies:
\begin{equation}
    E\left[\frac{||\mathbf{f}(\mathbf{x})||_2^2}{len(\mathbf{f}(\mathbf{x}))}\right] = \phi\left(\mathbf{JJ}^T\right)E\left[\frac{||\mathbf{x}||_2^2}{len(\mathbf{x})}\right].
\end{equation}
\label{def:general_linear_transformation}
\end{definition}

\begin{prop}
Data normalization with 0-mean inputs, linear transforms and rectifier activation functions are general linear transforms (Definition \ref{def:general_linear_transformation}). (Proof: Appendix \ref{proof:general_linear_transforms})
\label{prop:general_linear_transforms}
\end{prop}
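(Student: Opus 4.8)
The plan is to verify Definition~\ref{def:general_linear_transformation} case by case for the three families of transforms, in each case evaluating the left-hand side $E\left[\|\mathbf{f}(\mathbf{x})\|_2^2/len(\mathbf{f}(\mathbf{x}))\right]$ directly and matching it against the $\phi(\mathbf{JJ}^T)$ already listed in Table~\ref{tab:parts_library}. The common ingredient is a trace identity: for any Jacobian $\mathbf{J}\in\mathbb{R}^{m\times n}$, since $tr(\mathbf{I})=1$ we have $m\,\phi(\mathbf{JJ}^T)=E[Tr(\mathbf{JJ}^T)]=E[Tr(\mathbf{J}^T\mathbf{J})]=n\,\phi(\mathbf{J}^T\mathbf{J})$, which is precisely what converts the output-side normalization $1/m$ into the input-side normalization $1/n$.

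For linear transforms, write $\mathbf{f}(\mathbf{x})=\mathbf{Kx}$ with Jacobian $\mathbf{J}=\mathbf{K}\in\mathbb{R}^{m\times n}$ independent of $\mathbf{x}$. By independence, $E[\|\mathbf{Kx}\|_2^2]=E[\mathbf{x}^T\,E[\mathbf{K}^T\mathbf{K}]\,\mathbf{x}]$. Every linear layer in Table~\ref{tab:parts_library} is an expectant orthogonal matrix (Definition~\ref{def:expectattion_diagonal_matrix}), so $E[\mathbf{K}^T\mathbf{K}]=c\,\mathbf{I}_n$ for some scalar $c$; taking traces and applying the identity above gives $c=\tfrac{m}{n}\phi(\mathbf{KK}^T)$. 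Hence $E[\|\mathbf{Kx}\|_2^2/m]=\tfrac{c}{m}E[\|\mathbf{x}\|_2^2]=\phi(\mathbf{KK}^T)\,E[\|\mathbf{x}\|_2^2/n]$, which is the claim (the same computation also works when $\mathbf{K}$ is arbitrary but $\mathbf{x}$ has uncorrelated coordinates with a common second moment).

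For rectifier activations $\mathbf{f}$ acts entrywise and $len$ is preserved, so it suffices to show $E[\|\mathbf{f}(\mathbf{x})\|_2^2]=\phi(\mathbf{JJ}^T)E[\|\mathbf{x}\|_2^2]$. For leaky ReLU with negative slope $\gamma$ (ReLU being $\gamma=0$), $\|\mathbf{f}(\mathbf{x})\|_2^2=\sum_i\big(x_i^2\,\mathbf{1}[x_i>0]+\gamma^2 x_i^2\,\mathbf{1}[x_i\le 0]\big)$; under the $0$-mean hypothesis each pre-activation is symmetric about $0$, so $E[x_i^2\mathbf{1}[x_i>0]]=E[x_i^2\mathbf{1}[x_i\le 0]]=\tfrac12 E[x_i^2]$ and therefore $E[f(x_i)^2]=\tfrac{1+\gamma^2}{2}E[x_i^2]$. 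Summing over $i$ and comparing with $\phi(\mathbf{JJ}^T)=p+\gamma^2(1-p)=\tfrac{1+\gamma^2}{2}$ at $p=1/2$ (the Table~\ref{tab:parts_library} value) settles this case; the per-coordinate factor is the same for all $i$, so the equality survives the averaging even if the $E[x_i^2]$ differ. For data normalization with $0$-mean inputs, $norm(\cdot)$ reduces to a coordinatewise division by the standard deviation $\sigma_B$ with $D[\mathbf{x}]=\sigma_B^2$, so $\|\mathbf{f}(\mathbf{x})\|_2^2=\|\mathbf{x}\|_2^2/\sigma_B^2$ and both sides of Definition~\ref{def:general_linear_transformation} equal $1$, matching $\phi(\mathbf{JJ}^T)=1/\sigma_B^2$ from Table~\ref{tab:parts_library}.

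The main obstacle is the rectifier case: the equality genuinely requires the input coordinates to be \emph{symmetric} about $0$, not merely zero-mean, and I would discharge this by invoking the paper's standing assumption that activations sit directly after linear transforms, so that the pre-activations are approximately Gaussian and hence symmetric --- exactly the regime in which $p=1/2$. A secondary technical point, relevant to the normalization case, is that the batch mean and variance only asymptotically coincide with the population ones; I would treat this in the large-batch limit, consistent with the Jacobian already recorded in Table~\ref{tab:parts_library}.
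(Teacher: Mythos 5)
Your proposal is correct, but it takes a different route from the paper's appendix. The paper proves a single lemma: whenever a transform satisfies the exact identity $\mathbf{f}(\mathbf{x})=\mathbf{J}\mathbf{x}$ (with $\mathbf{J}$ the Jacobian), the general-linearity relation follows from an eigendecomposition of $\mathbf{J}^T\mathbf{J}$; it then disposes of linear transforms trivially and of rectifiers via positive homogeneity, $[\mathbf{f}(\mathbf{x})]_i=[\mathbf{f_x}]_{i,i}[\mathbf{x}]_i$, while data normalization is handled directly exactly as you do. You instead verify the definition case by case with explicit second-moment computations: for linear layers you use independence of $\mathbf{K}$ and $\mathbf{x}$ plus the expectant-orthogonality property $E[\mathbf{K}^T\mathbf{K}]=c\,\mathbf{I}$ and the trace identity $m\,\phi(\mathbf{K}\mathbf{K}^T)=n\,\phi(\mathbf{K}^T\mathbf{K})$, which needs no distributional assumption on $\mathbf{x}$ at all and is in that respect cleaner than the paper's lemma (whose proof tacitly assumes the eigen-directions of $\mathbf{J}^T\mathbf{J}$ decorrelate from $\mathbf{x}$); for rectifiers you split by sign and invoke symmetry of the pre-activations to get $E[f(x_i)^2]=\tfrac{1+\gamma^2}{2}E[x_i^2]$, matching the table value at $p=1/2$. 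The trade-off: the paper's homogeneity lemma covers rectifiers for arbitrary $p$ in one stroke (at the cost of an unstated decorrelation assumption between $f'(x_i)^2$ and $x_i^2$), whereas your argument is pinned to the symmetric / $p=1/2$ regime but makes the required hypothesis explicit and justifies it by the Gaussian-pre-activation setting in which the proposition is actually used. Both arguments land on the same conclusion, and your treatment of data normalization coincides with the paper's.
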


\begin{prop}
    For a serial neural network $\mathbf{f}(\mathbf{x})$ composed of general linear transforms and its input-output Jacobian matrix is $\mathbf{J}$, we have
    \begin{equation}
    E\left[\frac{||\mathbf{f}(\mathbf{x})||_2^2}{len(\mathbf{f}(\mathbf{x}))}\right] = \phi\left(\mathbf{JJ}^T\right)E\left[\frac{||\mathbf{x}||_2^2}{len(\mathbf{x})}\right].
    \end{equation}
    (Proof: Appendix \ref{proof:evolution_2nd_norm_net})
\label{prop:evolution_2nd_norm_net }
\end{prop}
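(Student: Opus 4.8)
The plan is to prove the identity by a one-layer-at-a-time induction on the depth, and then to convert the resulting product of per-block spectrum first moments into the first moment of the full input--output Jacobian by invoking Theorem \ref{theorem:multiplication}.

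Write $\mathbf{f} = \mathbf{f_L}\circ\cdots\circ\mathbf{f_1}$ with per-block Jacobians $\mathbf{J_i}$, and set $\mathbf{z}_0 := \mathbf{x}$, $\mathbf{z}_i := \mathbf{f_i}(\mathbf{z}_{i-1})$. Since each $\mathbf{f_i}$ is a general linear transform (Definition \ref{def:general_linear_transformation}), applying it to the (random) intermediate activation $\mathbf{z}_{i-1}$ gives
\begin{equation}
E\left[\frac{||\mathbf{z}_i||_2^2}{len(\mathbf{z}_i)}\right] = \phi\left(\mathbf{J_iJ_i}^T\right)E\left[\frac{||\mathbf{z}_{i-1}||_2^2}{len(\mathbf{z}_{i-1})}\right].
\end{equation}
Chaining these $L$ equalities telescopes to $E[||\mathbf{f}(\mathbf{x})||_2^2/len(\mathbf{f}(\mathbf{x}))] = \left(\Pi_{i=L}^1\phi(\mathbf{J_iJ_i}^T)\right)E[||\mathbf{x}||_2^2/len(\mathbf{x})]$.

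It then remains to identify $\Pi_{i=L}^1\phi(\mathbf{J_iJ_i}^T)$ with $\phi(\mathbf{JJ}^T)$, where $\mathbf{J} = \Pi_{i=L}^1\mathbf{J_i}$. This is exactly Equation \eqref{equ:mul_expectation} of Theorem \ref{theorem:multiplication}, whose sole hypothesis is that $(\Pi_{i=L}^1\mathbf{J_i})(\Pi_{i=L}^1\mathbf{J_i})^T$ is at least the $1^{st}$ moment unitarily invariant. By Proposition \ref{prop:general_linear_transforms} the admissible building blocks are data normalization with zero-mean input, linear transforms, and rectifier activations, and by Table \ref{tab:parts_library} the Jacobian of each is an expectant orthogonal matrix (Definition \ref{def:expectattion_diagonal_matrix}); hence Proposition \ref{prop:1st_moment_unitary_invariant_exp_diag} (equivalently Proposition \ref{prop:prerequisite_series_networks}) supplies the required invariance and Theorem \ref{theorem:multiplication} closes the argument.

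The main obstacle is not a computation but the scoping of the hypotheses used in the telescoping step: one must ensure that Definition \ref{def:general_linear_transformation} is read as holding for the input distributions actually produced inside the network (so that $\mathbf{f_i}$ retains its general-linear-transform identity when fed $\mathbf{z}_{i-1}$ rather than a fresh input), and that the per-block Jacobians are mutually independent, as needed both for the clean telescope and for Theorem \ref{theorem:multiplication}. These are standing assumptions of the framework, so once they are made explicit the proof reduces to the short induction above followed by a single citation of Theorem \ref{theorem:multiplication}.
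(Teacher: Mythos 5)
Your proposal is correct and takes essentially the same route as the paper's own proof: telescope the general-linear-transform identity of Definition \ref{def:general_linear_transformation} layer by layer to obtain $\Pi_i\phi(\mathbf{J_iJ_i}^T)$ as the overall factor, then invoke Equation \eqref{equ:mul_expectation} of Theorem \ref{theorem:multiplication} to identify this product with $\phi(\mathbf{JJ}^T)$. Your explicit check of the $1^{st}$ moment unitary-invariance prerequisite via Propositions \ref{prop:1st_moment_unitary_invariant_exp_diag} and \ref{prop:prerequisite_series_networks} is a careful addition that the paper leaves implicit, but it does not change the argument.
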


According to Proposition \ref{prop:general_linear_transforms}, the rectifier activation functions and linear transforms in He et al. (2015) \cite{he2015delving} are all general linear transforms. Therefore, Proposition \ref{prop:evolution_2nd_norm_net } shows that $\phi(\mathbf{JJ}^T)$ also describes the evolution of the $2{nd}$ moment/variance of activations in the forward pass, which is equivalent to He et al. (2015) \cite{he2015delving} but more convenient.
}

\vspace{-10pt}
\subsection{Normalization Techniques}
Even if the parameters in a neural network are properly initialized with schemes proposed in the previous subsection, there is no guarantee that their statistic properties remain unchanged during training, especially when the parameters are updated with a large learning rate. To address this issue, normalization techniques are introduced to maintain the parameters' statistic properties during training.

\textbf{Weight Normalization (WN) \cite{salimans2016weight}.} Let $\mathbf{K}$ denote the weight matrix, WN can be represented as $\hat{\mathbf{K}} = \frac{g}{||\mathbf{K}||}\mathbf{K}$, where $g$ is a constant scaling factor and $\hat{\mathbf{K}}$ is what we use for training and inference. To further improve the performance, a mean-only batch normalization is usually applied \cite{salimans2016weight}. Under this setup, the standard deviation of a normalized kernel is $\sigma_{\hat{\mathbf{K}}} = g$ and $\phi(\mathbf{JJ}^T)=ng^2$.
Salimans \& Kingma (2016) \cite{salimans2016weight} take $g=e^s/\sqrt{n}$, which may not be the optimal setup of activation functions, for there is no guarantee that $\phi(\mathbf{JJ}^T)=e^{2s}\approx1$. Therefore, it has been observed that WN is less stable in deep networks \cite{gitman2017comparison}.

\textbf{Scaled Weight Standardization (sWS).} Inspired by WN, we propose a new weight-related normalization technique, which is defined as: $\hat{\mathbf{K}} = \frac{g}{\sigma_{\mathbf{K}}}(\mathbf{K}-\mu_{\mathbf{K}})$, where $\mu_{\mathbf{K}}$ and $\sigma_{\mathbf{K}}$ denote the kernel's mean and variance, respectively. Therefore, we have $\mu_{\hat{\mathbf{K}}}=0$ and $\sigma_{\hat{\mathbf{K}}}=ng^2$, and the mean-only batch normalization is no longer required. As the most intuitive idea is to normalize the weights to ``Kaiming Normal (KM)" during the training, the optimal $g$ for different activation functions are listed in Table \ref{tab:activation_fns_weight_norm}.
\vspace{-5pt}
\begin{table}[ht]
\caption{Optimal $g$ for ReLU, leaky ReLU and tanh with sWS.}
\vspace{-10pt}
\centering
\resizebox{0.48\textwidth}{!}{
\begin{tabular}{c|c|c|c}
\hline
  &ReLU &\tabincell{c}{leaky ReLU. $\gamma$: negative\\ slope coefficient} & tanh
\\ \hline \hline
Optimal $g$&$\frac{\sqrt{2}}{\sqrt{n}}$&$\sqrt{\frac{2}{n(1+\gamma^2)}}$&$\frac{1}{\sqrt{n}}$\\
\hline
\end{tabular}}
\label{tab:activation_fns_weight_norm}
\vspace{-5pt}
\end{table}

Similar conclusion for ReLU has been reached by previous studies. For instance, Arpit et al. (2016) \cite{arpit2016normalization} propose a scheme called Normprop which normalizes the hidden layers with theoretical estimation of the following distribution:
\begin{equation}
    o_i = \frac{1}{\sqrt{\frac{1}{2}(1-\frac{1}{\pi})}}\left[ReLU\left(\frac{\gamma_i\mathbf{W}_i^T\mathbf{x}}{||\mathbf{W_i}||_F}+\beta_i\right)-\sqrt{\frac{1}{2\pi}}\right],
\end{equation}
where $o_i$ denotes the $i^{th}$ output, and $\gamma_i$ and $\beta_i$ are trainable parameters initialized with $1/1.21$ and $0$, respectively. $\mathbf{W}_i$ is the weight corresponding to the $i^{th}$ input $\mathbf{x}_i$. Because of $||\mathbf{W}_i||_F=\sqrt{n\sigma_W^2}$ and $\frac{1}{1.21\sqrt{\frac{1}{2}(1-\frac{1}{\pi})}}\approx1.415\approx\sqrt{2}$, the scaling of the weight is exactly the same with our derivation. 

\textbf{Data Normalization (DN) \cite{ioffe2015batch, ulyanov2016instance, wu2018group}.} This has become a regular component of deep neural networks, for it enables us to train deeper networks, use large learning rates and apply arbitrary initialization schemes \cite{ioffe2015batch}. In DN, the pre-activations are normalized to $N(0, 1)$ by
\begin{equation}
    \hat{\mathbf{x}} = \frac{\mathbf{x} - E[\mathbf{x}]}{\sqrt{D[\mathbf{x}] + \epsilon}},~~\mathbf{y} = \mathbf{\gamma}\hat{\mathbf{x}} + \mathbf{\beta}.
\label{equ:data_normalization}
\end{equation}
DN can be explained by slightly extending Proposition \ref{prop:evolution_2nd_norm_net }.
\begin{prop}
    We consider a serial network block composed of general linear transforms (Definition \ref{def:general_linear_transformation}). The $2^{nd}$ moment of the block's input activation is  $\alpha_2^{(0)}$ and the block's Jacobian matrix is $\mathbf{J}$. If the Jacobian matrix of its last component $\mathbf{J_l}$ satisfies $\phi(\mathbf{J_lJ_l}^T)=\frac{\beta}{\alpha_2^{(l-1)}}$ wherein $\beta$ is a constant value and $\alpha_2^{(l-1)}$ is the $2^{nd}$ moment of its input data, then we have $\phi(\mathbf{JJ}^T)=\frac{\beta}{\alpha_2^{(0)}}$.
\label{coro:bn_explain}
\end{prop}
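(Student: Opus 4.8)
The plan is to reduce the statement to two applications of the fact that $\phi(\cdot)$ governs how the second moment of activations propagates through a serial stack of general linear transforms, together with the normalizing action of the last component. Write $\alpha_2^{(k)}$ for the second moment of the entries of the activation after the first $k$ components of the block, so that $\alpha_2^{(0)}$ is the input second moment, $\alpha_2^{(l)}=E[||\mathbf{f}(\mathbf{x})||_2^2/len(\mathbf{f}(\mathbf{x}))]$ is the output second moment, and $\alpha_2^{(l-1)}$ is precisely the second moment of the input fed to the last component $\mathbf{f_l}$ --- this is the quantity appearing in the hypothesis $\phi(\mathbf{J_lJ_l}^T)=\beta/\alpha_2^{(l-1)}$.

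First I would apply Proposition \ref{prop:evolution_2nd_norm_net } to the whole block: since it is a serial composition of general linear transforms with input--output Jacobian $\mathbf{J}$, one gets $\alpha_2^{(l)}=\phi(\mathbf{JJ}^T)\,\alpha_2^{(0)}$. Next I would apply the defining identity of a general linear transform (Definition \ref{def:general_linear_transformation}) to the last component $\mathbf{f_l}$ alone, which yields $\alpha_2^{(l)}=\phi(\mathbf{J_lJ_l}^T)\,\alpha_2^{(l-1)}$. Substituting the hypothesis collapses the right-hand side to the constant $\beta$, so $\alpha_2^{(l)}=\beta$ regardless of the value of $\alpha_2^{(l-1)}$ --- this is the ``normalization'' doing its job. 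Equating the two expressions for $\alpha_2^{(l)}$ gives $\phi(\mathbf{JJ}^T)\,\alpha_2^{(0)}=\beta$, hence $\phi(\mathbf{JJ}^T)=\beta/\alpha_2^{(0)}$, which is the claim.

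The computation is short, so I expect the work here to be bookkeeping rather than analysis. The one point that needs care is consistency of notation: I must make sure the $\alpha_2^{(l-1)}$ in the hypothesis is the same intermediate second moment produced by the first $l-1$ components, so that the two steps above refer to one coherent chain of activations; and I must note that the full block (hence the sub-block of its first $l-1$ components) really is a serial network of general linear transforms, which is immediate from the standing assumption of the statement, so that Proposition \ref{prop:evolution_2nd_norm_net } applies. Routing the argument through Proposition \ref{prop:evolution_2nd_norm_net } on the full block --- rather than splitting off $\mathbf{J_l}$ and recombining with Theorem \ref{theorem:multiplication} --- sidesteps having to re-verify any unitary-invariance or independence prerequisites, which is why I would prefer it.
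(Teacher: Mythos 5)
Your proof is correct. It differs from the paper's only in how the chain is cut: the paper applies Proposition \ref{prop:evolution_2nd_norm_net } to the first $l-1$ components to get $\alpha_2^{(l-1)}=\Pi_{i=l-1}^1\phi(\mathbf{J_iJ_i}^T)\,\alpha_2^{(0)}$ and then substitutes this into the factorization $\phi(\mathbf{JJ}^T)=\phi(\mathbf{J_lJ_l}^T)\,\Pi_{i=l-1}^1\phi(\mathbf{J_iJ_i}^T)$, whereas you apply Proposition \ref{prop:evolution_2nd_norm_net } to the whole block, apply Definition \ref{def:general_linear_transformation} to the last component alone to get $\alpha_2^{(l)}=\phi(\mathbf{J_lJ_l}^T)\,\alpha_2^{(l-1)}=\beta$, and equate the two expressions for $\alpha_2^{(l)}$. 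Your version makes the "normalization" mechanism slightly more transparent (the output second moment is pinned to $\beta$ irrespective of $\alpha_2^{(l-1)}$), and it uses the fact, explicit in the statement's hypotheses, that the last component is itself a general linear transform. One small caveat: your closing remark that this routing "sidesteps" the unitary-invariance/independence prerequisites is only cosmetic, since the proof of Proposition \ref{prop:evolution_2nd_norm_net } itself invokes Equation \eqref{equ:mul_expectation} (Theorem \ref{theorem:multiplication}) to identify $\Pi_i\phi(\mathbf{J_iJ_i}^T)$ with $\phi(\mathbf{JJ}^T)$, so the same prerequisites sit underneath both routes; the paper's own proof is equally silent about them, so this does not constitute a gap relative to the paper's standard of rigor.
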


\begin{proof}
Since the network is composed of general linear transforms, with Proposition \ref{prop:evolution_2nd_norm_net }, we have
\begin{equation}
    \alpha_2^{(l-1)}=\Pi_{i=l-1}^1\phi\left(\mathbf{J_iJ_i}^T\right)\alpha_2^{(0)}
\label{equ:le_pre_info}.
\end{equation}
Therefore, we further have
\begin{equation}
    \phi\left(\mathbf{JJ}^T\right) =\Pi_{i=l-1}^1\phi\left(\mathbf{J_iJ_i}^T\right)\frac{\beta}{\Pi_{i=l-1}^1\phi\left(\mathbf{J_iJ_i}^T\right)\alpha_2^{(0)}}\!=\!\frac{\beta}{\alpha_2^{(0)}}.
\end{equation}
\end{proof}

According to Ioffe \& Szegedy (2015) \cite{ioffe2015batch}, DN is performed right after the linear transforms, thus its inputs have zero-mean, and we further have $\sigma_B^2=\alpha_{2,B}$. For instance, the input of the block shown in Fig. \ref{fig:simpleDN} is the output of a BN layer, therefore its $2^{nd}$ moment $\alpha_2^{(0)}$ is 1. With Proposition \ref{coro:bn_explain}, we have $\phi(\mathbf{JJ}^T)=1$, thus DN can effectively address gradient explosion or vanishing.
\begin{figure}[ht!]
\centering
\includegraphics[width=0.48\textwidth]{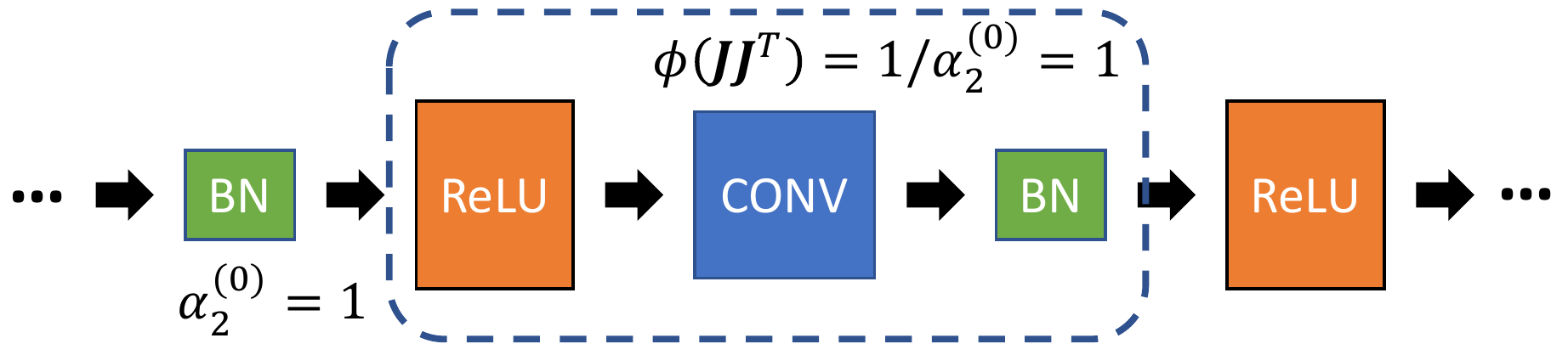}
\caption{Example block for Proposition \ref{coro:bn_explain}.}
\label{fig:simpleDN}
\vspace{-5pt}
\end{figure}

Proposition \ref{coro:bn_explain} can be interpreted from another prospective. As illustrated in Equation \eqref{equ:le_pre_info}, in a network composed of general linear transforms, the pre-activations' $2^{nd}$ moment $\alpha_2^{(l-1)}$ contains the information about the status of all layers it has passed through, and a network component with $\phi\propto \frac{1}{\alpha_2^{(l-1)}}$ can effectively offset the affect of these layers. This explains why DN techniques like BN is more stable with less awareness of the initialization and being sustainable to high learning rate.

\begin{comment}
\begin{proof}
With Proposition \ref{prop:general_linear_transforms}, all the components of the network are general linear transforms. As the second moment output activations of the data normalization is always $1$, with Proposition \ref{prop:evolution_2nd_norm_net }, $\phi(\mathbf{JJ}^T)=\frac{\alpha_2(output)}{\alpha_2(input)}=\frac{\beta}{\alpha_2^{(0)}}$.

\end{proof}

 illustrates another property of data normalization that has been neglected by previous works: with Proposition \ref{prop:evolution_2nd_norm_net }, the second moment of the last data normalization's input is $\alpha_2=\Pi_i\phi(\mathbf{J_iJ_i}^T)\alpha_2^{(0)}$, which contains the information about the status of all the layer it has passed through. According to Table \ref{tab:parts_library}, for data normalization, $\phi(\mathbf{JJ}^T)=\frac{1}{\sigma_B^2}$. As data normalization is applied after linear transform, $\sigma_B^2=\alpha_2$, thus data normalization offset the affect of all its previous layers based on the information carried by $\sigma_B^2$.  Inspired by these discussions, we propose the proposition as follows.
\begin{prop}
A network component whose Jacobian matrix $\mathbf{J}$ satisfies $\phi(\mathbf{JJ}^T)\propto \frac{1}{\alpha_2}$ can stabilize the network composed of general linear transforms (Definition \ref{def:general_linear_transformation}), the $\alpha_2$ denotes the second moment of input data.
\label{prop:network_stabilizer}
\end{prop}
\end{comment}

\textbf{Comparison.} The common topic for all the normalization techniques is standardizing the $1^{st}$ and $2^{nd}$ moments of the pre-activations, and the only difference is what the moments are estimated upon. Specifically, DN gets its $1^{st}$ and $2^{nd}$ moments from the pre-activations, while WN estimates the $2^{nd}$ moment from the weight kernel. However, different sources of estimation will result in different execution efficiency, stability and convenience.

For execution efficiency, as the weight kernels usually contain fewer data compared with pre-activations, estimating moments from the weight kernels usually has lower computational overhead. For stability, while WN depends on the ``Gaussian Assumption", which is not necessarily held during training, Proposition \ref{coro:bn_explain} is valid for any linear transforms. Moreover, each WN sweeps the snow from its own doorstep, whereas DN improves the condition of all the layers before it, thus even if one or two DN layers malfunction, the following ones would compensate for them. For convenience, as WN is born out of weight initialization, its hyper-parameters require carefully selection, which makes it less suitable for complex network structures. Oppositely, DN can automatically improve the network's condition without handcrafted hyper-parameters.

\textbf{Second Moment Normalization (SMN).} Inspired by the above comparison, we propose a novel normalization method called SMN, wherein the $1^{st}$ moment is obtained from the weight kernel while the $2^{nd}$ moment is estimated from the pre-activations. In SMN, the pre-activations are normalized by
\begin{equation}
    \hat{\mathbf{x}} = \frac{\mathbf{x}}{\sqrt{E\left[[\mathbf{x}]_i^2\right]}}, \mathbf{y}=\gamma\hat{\mathbf{x}} + \beta.
\end{equation}
Since our derivation is based on the assumption that the weight kernels have zero expectation, which may be violated during training, we further add weight centralization onto each weight:
\begin{equation}
    \hat{\mathbf{K}} = \mathbf{K} - E[[\mathbf{K}]_i].
\end{equation}

For stability and convenience, similar to DN, we have
\begin{equation}
    \phi\left(\hat{\mathbf{x}}_{\mathbf{x}}\hat{\mathbf{x}}_{\mathbf{x}}^T\right) \approx \frac{1}{\alpha_2^2},~~\varphi\left(\hat{\mathbf{x}}_{\mathbf{x}}\hat{\mathbf{x}}_{\mathbf{x}}^T\right) \approx 0,
\label{equ:eigs_l2norm}
\end{equation}
where $\hat{\mathbf{x}}_{\mathbf{x}}$ satisfies Definition \ref{def:expectattion_diagonal_matrix} but defies Definition \ref{def:central_matrix}. The proof is in Appendix \ref{proof:eigs_l2norm}. Because of $\phi(\hat{\mathbf{x}}_{\mathbf{x}}\hat{\mathbf{x}}_{\mathbf{x}}^T) \approx \frac{1}{\alpha_2^2}$, the $2^{nd}$ moment normalization can achieve similar effect with DN when applied right after linear transforms. Therefore, SMN is as stable and convenient as DN. For execution efficiency, we have
\begin{equation}
\begin{split}
    & SMN:~~\hat{\mathbf{x}} = \frac{\mathbf{x}}{\sqrt{E\left[[\mathbf{x}]_i^2\right]}},~~\hat{\mathbf{K}} = \mathbf{K} - E[[\mathbf{K}]_i],\\
    & WN:~~\hat{\mathbf{x}} = \mathbf{x} - E\left[[\mathbf{x}]_i\right],~~\hat{\mathbf{K}} = \frac{\mathbf{K}}{E[[\mathbf{K}]_i^2]]}.\\
\end{split}
\label{equ:compare_wn_smn}
\end{equation}
SMN can be viewed as a reversed version of WN, and there is only one additional element-wise square operator compared with WN, so it has much less computational overhead than DN. \modify{Following the analysis in Chen et al. (2019) \cite{chen2019effective}, in Appendix \ref{appendix:l2n_overhead}, we find that SMN reduces the number of operations from 13 to 10, which brings about $30\%$ speedup.}
\vspace{-10pt}
\begin{algorithm}[ht]
\DontPrintSemicolon
 \KwData{Input pre-activation $\mathbf{x}\in[batch\_size, c_{in}, H_i, W_i]$; Convolving kernel: $\mathbf{K}\in [c_{out}, c_{in}, h, w]$; Scaling factor $\mathbf{\gamma}\in [c_{out}]$; Bias $\mathbf{\beta}\in [c_{out}]$}
 \KwResult{Normalized pre-activation $y\in[batch\_size, c_{out}, H_o, W_o]$;}
 \Begin{
 $\mu_K = mean(\mathbf{K}[c_{out}, :])$\\
 $\hat{\mathbf{K}}=\mathbf{K} - \mu_K$  //weight centralization\\
 $\mathbf{x}=\mathbf{K}\ast \mathbf{x}$\\
 $\alpha_2=mean(square(\mathbf{x})[c_{out}, :])$\\
 $\mathbf{y}=\mathbf{\beta} + \frac{\mathbf{\gamma}}{\sqrt{\alpha_2}}\mathbf{x}$
}
 \Return{$\mathbf{y}$}
 \caption{Second Moment Normalization}
 \label{Alg:smn_alg}
\end{algorithm}
\vspace{-10pt}

We provide the detailed algorithm for SMN in the convolutional layer in Algorithm \ref{Alg:smn_alg}. Inspired by Ioffe \& Szegedy (2015) \cite{ioffe2015batch}, we centralize the mean of the weight kernels of each output channel rather than shifting the mean of all weights to zero. Similarly, the $2^{nd}$ moment is also standardized in a channel-wise manner. Also, the trainable parameters $\gamma$ and $\beta$ in BN are introduced to represent the identity transform \cite{ioffe2015batch}. Besides the $2^{nd}$ moment, according to prior work \cite{wu2018l1}, we can also use L1-norm to further reduce the complexity:
\begin{equation}
    \hat{\mathbf{x}} = \frac{\mathbf{x}}{E\left[|[\mathbf{x}]_i|\right]},~~\hat{\mathbf{K}} = \mathbf{K} - E[[\mathbf{K}]_i].
\end{equation}

Although our SMN can statistically replace BN, it somehow has weaker regularization ability, because estimating the $1^{st}$ moment from pre-activations introduces Gaussian noise that can regularize the training process \cite{luo2018towards}. Fortunately, this can be compensated by addition regularization like mixup \cite{zhang2017mixup}.

\begin{comment}
\begin{Proposition}
    For series network activated by ReLU, the optimal $\mathbf{g}$ respect to ReLU is $\sqrt{\frac{2}{n}}$ and $\sqrt{\frac{2}{c_{in}\widetilde{k_hk_w}}}$ for fully-connected layer and convolutional layer, respectively.
\label{coro:weight_norm}
\end{Proposition}
Nonetheless, both  and Qiao et al. (2019) \cite{qiao2019weight} fail to provide the optimal $g$: the former one  while the latter one takes $g=1$. As a result, Salimans \& Kingma (2016) \cite{salimans2016weight} is less stable compared with BN in deep networks and Qiao et al. (2019) \cite{qiao2019weight} handle the gradient problem with additional group normalization. Actually, our conclusion is quiet similar to the NormProp proposed by , which normalizes the hidden layers with theoretical estimation of the distributions:
\end{comment}
\subsection{Self-Normalizing Neural Network}\label{sec:self_normalizing_nn}
Klambauer et al. (2017) \cite{klambauer2017self} propose a self-normalizing property empowered by SeLU activation function given by 
\begin{equation}
    SeLU(x) = \lambda\left\{
             \begin{array}{lr}
             x~~~~~~~~~~~~~~if~~x>0  \\
             \alpha e^x-\alpha~~if~~x \le0
             \end{array}
\right. 
\label{equ:SELU}
\end{equation}
where $\alpha\approx 1.6733$, $\lambda \approx 1.0507$. However, the setup in \cite{klambauer2017self} only works for weights whose entries follow $N(0, \frac{1}{n})$. Here we generally let the linear transform have $\phi(\mathbf{JJ}^T)=\gamma_0$.

\begin{prop}
Let $\mathbf{J}$ be the Jacobian matrix of SeLU. When the pre-activations obey $N(0, \sigma^2)$, we have the following conclusions:
\begin{equation}
    \begin{split}
        & \phi\left(\mathbf{JJ}^T\right)=\lambda^2\alpha^2e^{2\sigma^2}cdf(-2\sigma^2, N(0, \sigma^2))+ \frac{\lambda^2}{2},\\
        & E[SeLU^2(\mathbf{x})]=\frac{1}{2}\lambda^2\sigma^2 + \frac{1}{2}\lambda^2\alpha^2 + \\
        & \lambda^2\alpha^2\left(e^{2\sigma^2}cdf(-2\sigma^2, N(0, \sigma^2))\!-\! 2e^{\frac{\sigma^2}{2}}cdf(-\sigma^2, N(0, \sigma^2))\right),\\
        & E[SeLU(\mathbf{x})] = \lambda\alpha e^{\frac{\sigma^2}{2}}cdf(-\sigma^2, N(0, \sigma^2))- \frac{\lambda\alpha}{2} + \sqrt{\frac{\sigma^2}{2\pi}}\lambda.
    \end{split}
\end{equation}
(Proof: Appendix \ref{proof:selu_expect_JJ})
\label{prop:selu_expect_JJ}
\end{prop}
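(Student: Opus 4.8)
The plan is to use that $SeLU$ is applied element-wise, so its Jacobian matrix $\mathbf{J}$ is diagonal with $i$-th diagonal entry $SeLU'([\mathbf{x}]_i)$; then $\mathbf{JJ}^T$ is diagonal with entries $SeLU'([\mathbf{x}]_i)^2$, and since $\phi(\mathbf{A})=E[tr(\mathbf{A})]$ uses the normalized trace and the pre-activations are i.i.d.\ $N(0,\sigma^2)$, we get $\phi(\mathbf{JJ}^T)=E[SeLU'(x)^2]$ for a single $x\sim N(0,\sigma^2)$. Differentiating \eqref{equ:SELU} gives $SeLU'(x)=\lambda$ on $x>0$ and $SeLU'(x)=\lambda\alpha e^{x}$ on $x\le 0$, and likewise $SeLU(x)=\lambda x$ on $x>0$, $SeLU(x)=\lambda\alpha(e^{x}-1)$ on $x\le 0$. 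Thus all three claimed identities reduce to one-dimensional Gaussian integrals split at $x=0$.

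The ``positive half'' is immediate from the symmetry of $N(0,\sigma^2)$: $P(x>0)=\tfrac12$, $E[x^2\mathbf{1}_{x>0}]=\tfrac12\sigma^2$, and $E[x\,\mathbf{1}_{x>0}]=\sigma/\sqrt{2\pi}=\sqrt{\sigma^2/(2\pi)}$. These account for the $\tfrac{\lambda^2}{2}$ term in $\phi(\mathbf{JJ}^T)$, the $\tfrac12\lambda^2\sigma^2$ term in $E[SeLU^2(x)]$, and the $\sqrt{\sigma^2/(2\pi)}\lambda$ term in $E[SeLU(x)]$.

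For the ``negative half'' I would first establish the single auxiliary computation $E[e^{ax}\mathbf{1}_{x\le 0}]=e^{a^2\sigma^2/2}\,cdf(-a\sigma^2,N(0,\sigma^2))$ by completing the square, $ax-\tfrac{x^2}{2\sigma^2}=-\tfrac{(x-a\sigma^2)^2}{2\sigma^2}+\tfrac{a^2\sigma^2}{2}$, which rewrites the integral as $e^{a^2\sigma^2/2}$ times the probability that an $N(a\sigma^2,\sigma^2)$ variable lies in $(-\infty,0]$. Specialising to $a=1$ and $a=2$ produces $e^{\sigma^2/2}cdf(-\sigma^2,N(0,\sigma^2))$ and $e^{2\sigma^2}cdf(-2\sigma^2,N(0,\sigma^2))$. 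Then I assemble: $E[SeLU'(x)^2]=\lambda^2 P(x>0)+\lambda^2\alpha^2 E[e^{2x}\mathbf{1}_{x\le 0}]$ yields the first formula; expanding $(\alpha e^{x}-\alpha)^2=\alpha^2(e^{2x}-2e^{x}+1)$ and adding the contributions $\tfrac12\lambda^2\sigma^2$ (from $x>0$) and $\tfrac12\lambda^2\alpha^2$ (from $P(x\le0)=\tfrac12$) yields $E[SeLU^2(x)]$; and $E[SeLU(x)]=\lambda E[x\mathbf{1}_{x>0}]+\lambda\alpha E[e^{x}\mathbf{1}_{x\le 0}]-\lambda\alpha P(x\le 0)$ yields the third.

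This argument has no real obstacle; it is a routine exercise in Gaussian integration. The only points requiring care are the bookkeeping in the completing-the-square step and the sign conventions in the paper's $cdf(t,N(0,\sigma^2))$ notation (the c.d.f.\ of $N(0,\sigma^2)$ evaluated at $t$, i.e.\ $\Phi(t/\sigma)$), so that each shifted-Gaussian mass is expressed in exactly the form appearing in the statement.
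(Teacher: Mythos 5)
Your proposal is correct and follows essentially the same route as the paper's proof: reduce each quantity to one-dimensional Gaussian integrals split at zero (the diagonal Jacobian giving $\phi(\mathbf{JJ}^T)=E[SeLU'(x)^2]$), and evaluate the negative-half terms by completing the square, which is exactly how the paper obtains the factors $e^{2\sigma^2}cdf(-2\sigma^2,N(0,\sigma^2))$ and $e^{\sigma^2/2}cdf(-\sigma^2,N(0,\sigma^2))$. Your packaging of the completing-the-square step as the single identity $E[e^{ax}\mathbf{1}_{x\le 0}]=e^{a^2\sigma^2/2}cdf(-a\sigma^2,N(0,\sigma^2))$ is just a tidy restatement of the same computation.
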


Let SeLU be applied layer-wisely and the $2^{nd}$ moment of output activations have a fixed point of $1$. With Proposition \ref{prop:evolution_2nd_norm_net }, the variance of pre-activations equals $\gamma_0$. Then, with Proposition \ref{prop:selu_expect_JJ}, the optimal $\alpha$ and $\lambda$ can be solved from
\begin{equation}
    \begin{split}
        & \left(\lambda^2\alpha^2e^{2\gamma_0}cdf(-2\gamma_0, N(0, \gamma_0))+ \frac{\lambda^2}{2}\right)\gamma_0 = 1 + \epsilon,\\
        & \lambda^2\alpha^2\left(e^{2\gamma_0}cdf(-2\gamma_0, N(0, \gamma_0)) - 2e^{\frac{\gamma_0}{2}}cdf(-\gamma_0, N(0, \gamma_0))\right) \\
        &\! +\! \frac{1}{2}\lambda^2\alpha^2 + \frac{1}{2}\lambda^2\gamma_0\! =\! 1.
    \end{split}
\label{equ:solution_of_selu}
\end{equation}
The former equation constrains $\phi(\mathbf{J_iJ_i}^T)\approx 1$ and the latter one ensures the fixed point of the $2^{nd}$ moment. $\epsilon$ is a small constant near $0$, which prevents SeLU from degenerating back to ReLU. When $\epsilon=0, \gamma_0=1$, the only solution of Equation \eqref{equ:solution_of_selu} is $\lambda=\sqrt{2}, \alpha=0$, which is equivalent to the KM initialization with ReLU. One explanation is that if $\epsilon=0$, we would have $\alpha_2(x_{out})/\alpha_2(x_{in})=\phi(\mathbf{JJ}^T)$, which is only held when the network satisfies Proposition \ref{prop:evolution_2nd_norm_net }. Notably, the original SeLU in \cite{klambauer2017self} can be solved from Equation \eqref{equ:solution_of_selu} by letting $\gamma_0=1$, $\epsilon\approx0.0716$.

Although $\phi(\mathbf{J_iJ_i}^T)\approx1$ can be achieved from multiple initialization schemes, SeLU's strength comes from its attractive fixed point \cite{klambauer2017self}, which is effective even when the assumptions and initial statistic properties are violated. However, this attractive property takes over 80-page proofs in \cite{klambauer2017self}, so it is challenging to extend to more general situation. In this work, we provide an empirical understanding by analogizing it with data normalization.

In Proposition \ref{coro:bn_explain}, we demonstrate that a network component with $\phi(\mathbf{J_lJ_l}^T)=\frac{\beta}{\alpha_2^{(l-1)}}$ can stabilize the general linear network block based on the information contained in $\alpha_2^{(l-1)}$, here we discuss a more general situation in which $\phi(\mathbf{J_lJ_l}^T)=h_l(\alpha_2^{(l-1)})$ where $h_l$ is a real function. We further assume that 
the network component satisfies Definition \ref{def:general_linear_transformation}. 
When $h_l(\alpha_2^{(l-1)})$ satisfies
\begin{equation}
\begin{split}
    & 1<h_l(\alpha_2^{(l-1)})<\frac{\beta}{\alpha_2^{(l-1)}},~~~~~if~~\alpha_2^{(l-1)}<\beta;\\
    & 1>h_l(\alpha_2^{(l-1)})>\frac{\beta}{\alpha_2^{(l-1)}},~~~~~if~~\alpha_2^{(l-1)}>\beta.
\end{split}
\label{equ:partial_norm_def}
\end{equation}
Since $\Pi_{i=l}^1\phi(\mathbf{J_iJ_i}^T) = h_l(\alpha_2^{(l-1)})\alpha_2^{(l-1)}/\alpha_2^{(0)}$, we have
\begin{equation}
    \left|\Pi_{i=l-1}^1\phi\left(\mathbf{J_iJ_i}^T\right)-\frac{\beta}{\alpha_2^{(0)}}\right|>\left|\Pi_{i=l}^1\phi\left(\mathbf{J_iJ_i}^T\right)-\frac{\beta}{\alpha_2^{(0)}}\right|,\\
\label{equ:partial_norm}
\end{equation}
which illustrates that $\Pi_{i}^1\phi(\mathbf{J_iJ_i}^T)$ converges to the fixed point of $\frac{\beta}{\alpha_2^{(0)}}$. As the convergence may take several layers, we call it as ``partial normalized". Similarly, when $\forall \alpha_2^{(l-1)}$, $h_l(\alpha_2^{(l-1)})$ satisfies
\begin{equation}
\begin{split}
    & h_l(\alpha_2^{(l-1)})>\frac{\beta}{\alpha_2^{(l-1)}},~~~~~~~~~~~if~~\alpha_2^{(l-1)}<\beta;\\
    & 0<h_l(\alpha_2^{(l-1)})<\frac{\beta}{\alpha_2^{(l-1)}},~~~~~if~~\alpha_2^{(l-1)}>\beta,
\end{split}
\label{equ:over_norm_def}
\end{equation}
we have 
\begin{equation}
    \left(\Pi_{i=l}^1\phi\left(\mathbf{J_iJ_i}^T\right)-\frac{\beta}{\alpha_2^{(0)}}\right)\left(\Pi_{i=l-1}^1\phi\left(\mathbf{J_iJ_i}^T\right)-\frac{\beta}{\alpha_2^{(0)}}\right)<0.
\end{equation}
$\Pi_{i}^1\phi(\mathbf{J_iJ_i}^T)$ swings around the fixed point of $\frac{\beta}{\alpha_2^{(0)}}$ but there is no guarantee for convergence, so we name its as ``over normalized".

\begin{figure}[ht]
\vspace{-10pt}
\centering
\includegraphics[width=0.48\textwidth]{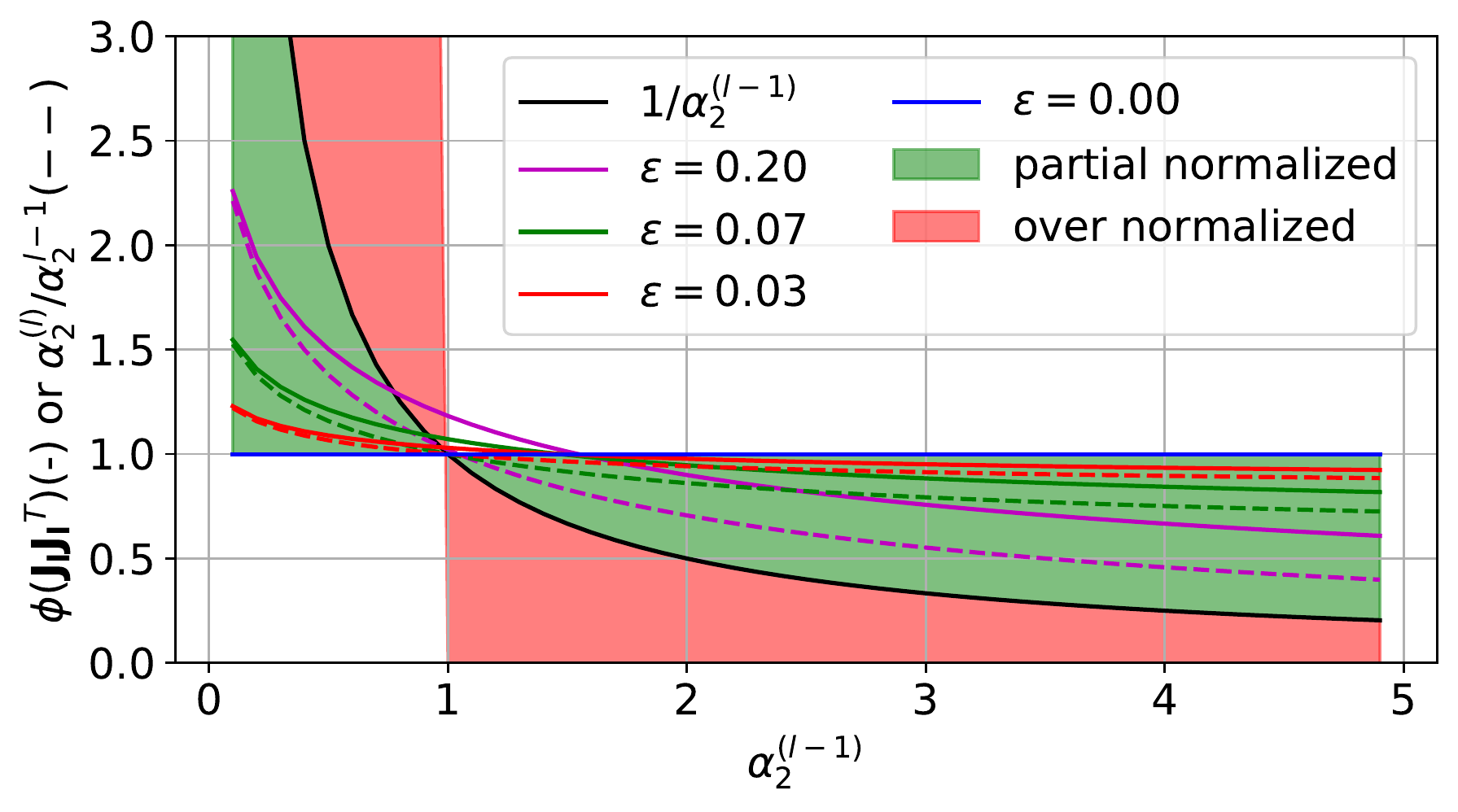}
\caption{$\phi(\mathbf{J_lJ_l}^T)$ (the solid line) and $\alpha_2^{(l)}/\alpha_2^{(l-1)}$ (the dashed line) of SeLU under different $\epsilon$. We have $\gamma_0=1$ for all the configurations.}
\label{fig:num_selu}
\vspace{-5pt}
\end{figure}

For SeLU, we have $\alpha_2^{(0)}\!=\!\beta\!=\!1$, and we plot $\phi(\mathbf{J_lJ_l}^T)\sim \alpha_2^{(l-1)}$ and $\alpha_2^{(l)}/\alpha_2^{(l-1)}\sim \alpha_2^{(l-1)}$ of different configurations in Fig. \ref{fig:num_selu}. It shows that 1) when $\epsilon$ is relatively small, we have $\phi(\mathbf{J_lJ_l}^T)\approx\alpha_2^{(l)}/\alpha_2^{(l-1)}$, and SeLU can be seen as a general linear transform; 2) when $\epsilon>0$, $\phi(\mathbf{J_lJ_l}^T)$ is in the ``partial normalized" region, which suggests that it will take a few layers to converge to a fixed point. Moreover, the $\phi(\mathbf{J_lJ_l}^T)$ of the configurations with greater $\epsilon$ is closer to $\frac{1}{\alpha_2^{(l-1)}}$, leading to to faster convergence; whereas a too large $\epsilon$ will result in gradient explosion, because of $\Pi_{i=L}^1\phi(\mathbf{J_iJ_i}^T)=(1+\epsilon)^L$. For a neural network with finite depth, we have
\begin{equation}
    (1+\epsilon)^L = 1 + L\epsilon + \sum_{i=2}^LC_L^i\epsilon^i.
\end{equation}
As a result, taking $\epsilon < \frac{1}{L}$ can effectively constrain the gradient norm while maintaining good normalization efficiency.

\vspace{-10pt}
\subsection{Shallow Network Trick}
Let's consider a neural network with sequential blocks: 
\begin{equation}
    \mathbf{f}(\mathbf{x_0}) = \mathbf{f_{L,\theta_L}}\circ \mathbf{f_{L-1,\theta_{L-1}}}\circ...\circ\mathbf{f_{1, \theta_1}}\left(\mathbf{x_0}\right),
\end{equation}
and the Jacobian matrix of the $i^{th}$ block is $\mathbf{J_i}$. We assume that $\mathbf{J} = \Pi_{i=L}^1\mathbf{J_i}$ is at least the $1^{st}$ moment unitarily invariant (Definition \ref{def:moment_unitary_invariant}). With Theorem \ref{theorem:multiplication}, we have $\phi(\mathbf{JJ}^T) = \Pi_i\phi(\mathbf{J_iJ_i}^T)$. In order to prevent the gradient explosion or vanishing, we expect $\forall i, \phi(\mathbf{J_iJ_i}^T) \approx 1$, which can be achieved with all the techniques discussed above. However, it might be influenced by many factors including the update of parameters under a large learning rate, invalid assumptions or systematic bias (like the cutting-off effect of padding), thus the actual $\phi(\mathbf{J_iJ_i}^T)$ can be represented as $1 + \gamma_i$ and $\phi(\mathbf{JJ}^T)$ can be $\Pi_{i=1}^L(1 + \gamma_i)$. Even if $\gamma_i$ for single layer is small enough, when $L$ is large, the influence of single $\gamma_i$ might accumulate and result in gradient explosion or vanishing. As a result, techniques like initialization, weight standardization and SeLU are less stable under large learning rates in deep networks. Fortunately, this can be addressed by the following proposition.
\begin{prop}
\textbf{(Shallow Network Trick).} Assume that for each of the $L$ sequential blocks in a neural network, we have $\phi(\mathbf{J_iJ_i}^T)=\omega+\tau\phi(\widetilde{\mathbf{J_i}}\widetilde{\mathbf{J_i}}^T)$ where $\mathbf{J_i}$ is its Jacobian matrix. Given $\lambda\in \mathbb{N}^+<L$, if $C_L^{\lambda}(1-\omega)^{\lambda}$ and $C_L^{\lambda}\tau^{\lambda}$ are small enough, the network would be as stable as a $\lambda$-layer network when both networks have $\forall~i,~\phi(\mathbf{J_iJ_i}^T)\approx1$.
\label{prop:shallow_network_trick}
\end{prop}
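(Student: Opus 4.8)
The plan is to turn the network-level quantity $\phi(\mathbf{JJ}^T)$ into a product of the per-block quantities, expand that product, and show that everything of ``order $\lambda$ or higher'' is negligible under the two stated conditions, leaving something whose drift mimics that of a $\lambda$-block network. First I would invoke Theorem \ref{theorem:multiplication}: since by the standing assumption of this subsection $\mathbf{J}=\Pi_{i=L}^{1}\mathbf{J_i}$ is at least $1^{st}$ moment unitarily invariant, Equation \eqref{equ:mul_expectation} gives
\[
\phi\left(\mathbf{JJ}^T\right)=\Pi_{i=1}^{L}\phi\left(\mathbf{J_iJ_i}^T\right)=\Pi_{i=1}^{L}\left(\omega+\tau\,\phi\left(\widetilde{\mathbf{J_i}}\widetilde{\mathbf{J_i}}^T\right)\right).
\]
Because $\phi(\mathbf{J_iJ_i}^T)\approx1$ for every $i$, the shortcut branch must satisfy $\tau\,\phi(\widetilde{\mathbf{J_i}}\widetilde{\mathbf{J_i}}^T)\approx1-\omega$, so I would write $\phi(\widetilde{\mathbf{J_i}}\widetilde{\mathbf{J_i}}^T)=\frac{1-\omega}{\tau}+\eta_i$, where $\tau\eta_i=\phi(\mathbf{J_iJ_i}^T)-1$ is the small per-block deviation, assuming in addition (a mild extra prerequisite, needed only for convergence of the series below) that the $\phi(\widetilde{\mathbf{J_i}}\widetilde{\mathbf{J_i}}^T)$, hence the $\eta_i$, stay bounded, and that $0\le\omega\le1$.

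Next I would expand the product over subsets, $\phi(\mathbf{JJ}^T)=\sum_{S\subseteq\{1,\dots,L\}}\omega^{L-|S|}\Pi_{i\in S}\big((1-\omega)+\tau\eta_i\big)$, and further expand each factor, so that a generic monomial is indexed by a selected subset $S$ together with an ``error'' subset $T\subseteq S$ and contributes, in absolute value, at most $|\omega|^{L-|S|}(1-\omega)^{|S|-|T|}\tau^{|T|}\Pi_{i\in T}|\eta_i|$. Summing all monomials with $|T|=0$ over all $|S|$ gives exactly $(\omega+(1-\omega))^{L}=1$, the intended value; summing those with $|T|=0$ but $|S|\ge\lambda$ is a binomial tail $\sum_{k\ge\lambda}C_L^{k}\omega^{L-k}(1-\omega)^{k}$, dominated, once $L(1-\omega)$ is well below $\lambda$, by $C_L^{\lambda}(1-\omega)^{\lambda}$ times a convergent geometric factor; and summing all monomials with $|T|\ge\lambda$ is, after collapsing the free design choices via $(\omega+(1-\omega))^{L-|T|}=1$, bounded by $\sum_{j\ge\lambda}C_L^{j}\tau^{j}\eta^{j}\le C_L^{\lambda}\tau^{\lambda}\eta^{\lambda}$ times a geometric factor, with $\eta=\max_i|\eta_i|$. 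Hence, up to an error controlled by $C_L^{\lambda}(1-\omega)^{\lambda}+C_L^{\lambda}\tau^{\lambda}$, $\phi(\mathbf{JJ}^T)$ equals a polynomial of degree at most $\lambda-1$ in the small quantities $1-\omega$ and $\{\tau\eta_i\}$.

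Finally I would compare this truncated expression with a genuine $\lambda$-block network, for which Theorem \ref{theorem:multiplication} gives $\phi(\mathbf{JJ}^T)=\Pi_{i=1}^{\lambda}(1+\rho_i')$, a polynomial of degree $\lambda$ of the same combinatorial type in the small per-block deviations $\rho_i'$; matching the two expansions coefficient by coefficient --- the degree-$k$ coefficient of the $L$-block product carries the prefactor $C_L^{k}$ but is damped by $\tau^{k}$ or $(1-\omega)^{k}$ so as not to exceed $C_\lambda^{k}$ --- shows that the multiplicative drift of $\phi(\mathbf{JJ}^T)$ away from $1$ for the $L$-block shortcut network is no larger than that of the $\lambda$-block network, which is the assertion. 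The hard part will be exactly this last step: turning the informal phrase ``as stable as a $\lambda$-layer network'' into something checkable, and verifying that the smallness of $C_L^{\lambda}(1-\omega)^{\lambda}$ and $C_L^{\lambda}\tau^{\lambda}$ really forces every low-order coefficient of the $L$-block expansion below its $\lambda$-block counterpart; the accompanying tail estimates are routine geometric-series bounds but do rely on the boundedness of $\eta$ noted above.
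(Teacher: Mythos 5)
Your proposal is correct and takes essentially the same route as the paper: both use $\phi(\mathbf{JJ}^T)=\Pi_{i=1}^{L}\phi(\mathbf{J_iJ_i}^T)$ (Theorem \ref{theorem:multiplication}), parameterize each block's deviation of $\phi(\widetilde{\mathbf{J_i}}\widetilde{\mathbf{J_i}}^T)$ from the optimum $\frac{1-\omega}{\tau}$, expand the product so that order-$i$ error terms carry $C_L^i(1-\omega)^i$ or $C_L^i\tau^i$, and neglect all terms of order $\ge\lambda$. Note that the paper's own proof stops at this informal ``effective depth'' conclusion (treating both relative errors $\frac{1-\omega}{\tau}(1+\gamma_i)$ and absolute errors $\frac{1-\omega}{\tau}+\delta_i$), so the final coefficient-by-coefficient comparison with a genuine $\lambda$-block network that you flag as the hard part is not something the paper attempts or needs.
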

\begin{proof}
Because of $\phi(\mathbf{J_iJ_i}^T)=\omega + \tau\phi(\mathbf{J_iJ_i}^T)$, the optimal $\phi(\mathbf{J_iJ_i})$ is $\frac{1-\omega}{\tau}$.
We consider both the absolute and relative errors of $\phi(\mathbf{J_iJ_i})$ by representing it as $\frac{1-\omega}{\tau}(1+\gamma_i)$ and $\frac{1-\omega}{\tau} + \delta_i$, respectively. For both kinds of error, we have
\begin{equation}
\begin{split}
    & \phi\left(\mathbf{JJ}^T\right)\! =\! \Pi_{i=1}^L(1 + (1-\omega)\gamma_i)\!=\!1 + \sum_{i=1}^LC_L^i(1-\omega)^i\Pi_{j}\gamma_j,\\
    & \phi\left(\mathbf{JJ}^T\right) = \Pi_{i=1}^L(1 + \tau\delta_i)= 1 + \sum_{i=1}^LC_L^i\tau^i\Pi_{j}\delta_j.
\end{split}
\label{equ:shallow_network_trick}
\end{equation}
When $\omega\rightarrow1^-$, we have $\tau\rightarrow 0^+$, $\lim_{i\rightarrow\infty}(1-\omega)^i=\lim_{i\rightarrow\infty}\tau^i=0$, and the accumulation of error would diminish as $i$ is large.

Here we borrow the concept of effective depth proposed in Philipp et al. (2018)\cite{philipp2018gradients}: assuming that $C_L^i(1-\omega)^i\Pi_{j}\gamma_j$ and $C_L^i\tau^i\Pi_{j}\delta_j$ are neglectable when $i>\lambda, \lambda<L$, all the errors are only influential within $\lambda$ layers, thus it would be as stable as a $\lambda$-layer shallow network.
\end{proof}

\vspace{-10pt}
\subsection{DenseNet}
We denote the activations as $\mathbf{x_i}\in \mathbb{R}^{c_is_{fm}\times 1}$ where $c_i$ is the number of channels and $s_{fm}$ is the size of feature maps, and denote $\delta_i = c_i - c_{i-1}$. In DenseNet\cite{huang2017densely}, the output of each layer within a dense block is concatenated with the input on the channel dimension to create dense shortcut connections, which is illustrated as follows:
\begin{equation}
    \mathbf{x}_i = \left[\mathbf{x}_{i-1}, \mathbf{H}_i\left(\mathbf{x}_{i-1}\right)\right],~~\frac{\partial \mathbf{x}_i}{\partial \mathbf{x}_{i-1}}={\left[\begin{array}{c}
     \mathbf{I} \\
     \mathbf{H}_i
\end{array}\right]} :=\mathbf{J_i},
\end{equation}
where $\mathbf{H}_i \in \mathbb{R}^{\delta_is_{fm}\times c_{i-1}s_{fm}}$, $\mathbf{I} \in \mathbb{R}^{c_{i-1}s_{fm}\times c_{i-1}s_{fm}}$. Since
\begin{equation}
    \mathbf{J_i}^T\mathbf{J_i} = \left[\mathbf{I}~~\mathbf{H}_i^T\right]{\left[\begin{array}{c}
     \mathbf{I} \\
     \mathbf{H}_i
\end{array}\right]} = \left[\mathbf{I} + \mathbf{H}_i^T\mathbf{H}_i \right]
\label{equ:densenet_JTJ}
\end{equation}
and $\mathbf{H}_l$ is composed of the parts defined in Table \ref{tab:parts_library}, with Proposition \ref{prop:properties_of_eom_cm}, the non-diagonal entries of $\mathbf{I} + \left(\mathbf{H}_l\right)^T\mathbf{H}_l$ have a zero expectation while the diagonal entries share an identical expectation, and $\mathbf{J_l}$ satisfies Proposition \ref{prop:1st_moment_unitary_invariant_exp_diag}. Therefore, $\phi(\mathbf{J_iJ_i}^T)$ can be calculated by
\begin{equation}
    \phi\left(\mathbf{J_iJ_i}^T\right) = \frac{c_{i-1}}{c_i} + \frac{\delta_i}{c_{i}}\phi\left(\mathbf{H_iH_i}^T\right).
\label{equ:phi_of_densenet}
\end{equation}

As a result, in order to achieve block dynamical isometry, we expect $\phi(\mathbf{H_iH_i}^T)\approx 1$, which can be achieved with the methods discussion in previous subsections. We will evaluate some configurations in Section \ref{Exp:densenet}. Equation \eqref{equ:phi_of_densenet} also reveals that DenseNet is an instance of the shallow network trick (Proposition \ref{prop:shallow_network_trick}), thus it is more stable compared with vanilla serial neural networks under the same depth.

\section{Serial-parallel hybrid Networks}
We define the serial-parallel hybrid network as a network consisting of a sequence of blocks connected in serial, while each block may be composed of several parallel branches. Famous serial-parallel hybrid networks include Inception\cite{szegedy2015going}, ResNet\cite{he2016deep}, and NASNet\cite{zoph2018learning}. As proposed in Proposition \ref{prop:prerequisite_series_networks}, as long as the input-output Jacobian matrices of all the blocks satisfy Definition \ref{def:expectattion_diagonal_matrix}, the network is at least the $1^{st}$ moment unitary invariant.

\begin{prop}
Let $\{\mathbf{J_i}\}$ denote a group of independent input-output Jacobian matrices of the parallel branches of a block. $\sum_i\mathbf{J_i}$ is an expectant orthogonal matrix, if it satisfies: 1) $\forall i$, $\mathbf{J_i}$ is an expectant orthogonal matrix; 2) at most one matrix in $\{\mathbf{J_i}\}$ is not central matrix. (Proof: Appendix \ref{proof:prerequisite_series_parallel_hybrid_networks})
\label{prop:prerequisite_series_parallel_hybrid_networks}
\end{prop}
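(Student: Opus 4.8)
The statement asks us to verify that $\mathbf{J}:=\sum_a\mathbf{J_a}$ satisfies the two defining conditions of an expectant orthogonal matrix (Definition \ref{def:expectattion_diagonal_matrix}): that $E[[\mathbf{J}^T\mathbf{J}]_{p,q}]=0$ whenever $p\neq q$, and that $E[[\mathbf{J}^T\mathbf{J}]_{p,p}]$ is the same for every $p$. The plan is simply to expand the Gram matrix $\mathbf{J}^T\mathbf{J}$, push the expectation through the finite sum, kill the cross terms using independence together with hypothesis~2, and then read off the two conditions from hypothesis~1 applied to each branch. (Here I reserve the subscript $a,b$ for the branch index and $p,q,l$ for matrix-entry indices, to avoid the clash with the letters used inside the definitions.)

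First I would write
\begin{equation}
\mathbf{J}^T\mathbf{J}=\Big(\sum_a\mathbf{J_a}\Big)^T\Big(\sum_b\mathbf{J_b}\Big)=\sum_a\mathbf{J_a}^T\mathbf{J_a}+\sum_{a\neq b}\mathbf{J_a}^T\mathbf{J_b},
\end{equation}
splitting into the self-terms $\mathbf{J_a}^T\mathbf{J_a}$ and the cross terms $\mathbf{J_a}^T\mathbf{J_b}$ with $a\neq b$. For a cross term, entrywise $[\mathbf{J_a}^T\mathbf{J_b}]_{p,q}=\sum_l[\mathbf{J_a}]_{l,p}[\mathbf{J_b}]_{l,q}$, and since the branches are mutually independent each summand's expectation factorizes as $E[[\mathbf{J_a}]_{l,p}]\,E[[\mathbf{J_b}]_{l,q}]$. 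Hypothesis~2 says at most one of the $\mathbf{J_a}$ fails to be a central matrix (Definition \ref{def:central_matrix}), so for every ordered pair $(a,b)$ with $a\neq b$ at least one of the two factors is the expectation of an entrywise zero-mean matrix and hence vanishes. Therefore $E[\mathbf{J}^T\mathbf{J}]=\sum_a E[\mathbf{J_a}^T\mathbf{J_a}]$ entrywise.

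It then remains to invoke hypothesis~1. For $p\neq q$, part~(1) of Definition \ref{def:expectattion_diagonal_matrix} applied to each $\mathbf{J_a}$ gives $E[[\mathbf{J_a}^T\mathbf{J_a}]_{p,q}]=0$, so the sum is $0$ and part~(1) holds for $\mathbf{J}$. For $p=q$, part~(2) applied to each $\mathbf{J_a}$ says $E[[\mathbf{J_a}^T\mathbf{J_a}]_{p,p}]$ equals a constant not depending on $p$; summing over $a$ keeps this independence of $p$, so part~(2) holds for $\mathbf{J}$ as well. Hence $\sum_a\mathbf{J_a}$ is an expectant orthogonal matrix, which is the claim.

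The computation is elementary, and the only step that genuinely requires care — and the natural candidate for the ``main obstacle'', modest as it is — is the cross-term argument: one must use independence of $\mathbf{J_a}$ and $\mathbf{J_b}$ at the level of individual entries $[\mathbf{J_a}]_{l,p}$, $[\mathbf{J_b}]_{l,q}$ (so the expectation of the product splits), and one must see that ``at most one non-central'' is precisely the hypothesis making one of the two factors zero for \emph{every} off-diagonal branch pair. This is also the unique point where both hypotheses are used together, which is what couples them into the conclusion.
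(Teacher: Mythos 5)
Your proposal is correct and follows essentially the same route as the paper's own proof: expand the Gram matrix of the sum, use linearity of expectation with independence plus the ``at most one non-central'' hypothesis to annihilate the cross terms, and read the two expectant-orthogonal conditions off the surviving self-terms. Your write-up is in fact slightly more explicit than the paper's (it spells out the entrywise factorization $E[[\mathbf{J_a}]_{l,p}]E[[\mathbf{J_b}]_{l,q}]$ and works consistently with $\mathbf{J}^T\mathbf{J}$ as in Definition \ref{def:expectattion_diagonal_matrix}), but there is no substantive difference in the argument.
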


According to Proposition \ref{prop:properties_of_eom_cm}, as long as each branch is composed of the parts in Table \ref{tab:parts_library} and at most one branch does not contain a zero-mean linear transform, with Proposition \ref{prop:prerequisite_series_parallel_hybrid_networks}, the series-parallel hybrid network is at least the $1^{st}$ moment unitarily invariant.

ResNet \cite{he2016deep} is one of the most popular network structures that can avoid gradient explosion and vanishing, it is also the simplest serial-parallel hybrid network. The Jacobian matrix of each residual block is $\mathbf{J_i} = \mathbf{I} + \widetilde{\mathbf{J_i}}$, with Equation \eqref{equ:add_expectation}, we have
\begin{equation}
    \phi\left(\mathbf{J_i^{(l)}J_i^{(l)}}^T\right) = 1 + \phi\left(\widetilde{\mathbf{J_i}^{(l)}}\widetilde{\mathbf{J_i}^{(l)}}^T\right).
\label{equ:shortcut}
\end{equation}
From the above equation, ResNet can be viewed as an extreme example of the shallow network trick (Proposition \ref{prop:shallow_network_trick}) wherein $(1-\omega)\rightarrow 0$. As a result, its extremely low effective depth provides higher stability. 

\textbf{Data Normalization in ResNet.} The $2^{nd}$ moment of the activations of ResNet with BN does not stay at a fixed point but keeps increasing through the layers \cite{zhang2019fixup}. Let's consider a ResNet whose $l^{th}$ block is represented as $\mathbf{x_{l+1}} = BN(\mathbf{f}(\mathbf{x_l})) + \mathbf{x_l}$. Since the $2^{nd}$ moment of $BN$'s output is 1, under the assumption that the outputs of the major branch and the shortcut branch are independent, we have $\alpha_2^{(l+1)} = 1 + \alpha_2^{(l)}$. At the down-sampling layers, since the shortcut connection is also handled by BN, $\alpha_2^{(l+1)}$ would be reset to $2$. We denote the Jacobian matrix of the major branch as $\widetilde{\mathbf{J}^{(l)}}$, with Proposition \ref{prop:evolution_2nd_norm_net }, it's easy to obtain $\phi(\widetilde{\mathbf{J}^{(l)}}\widetilde{\mathbf{J}^{(l)}}^T) = \frac{1}{\alpha_2^{(l-1)}}$, and then we have
\begin{equation}
\begin{split}
    &\phi\left(\mathbf{J_i^{(l+1)}J_i^{(l+1)}}^T\right)=1 + \frac{1}{\alpha_2^{l}} = \frac{\alpha_2^{(l+1)}}{\alpha_2^{(l)}},\\
    &\Pi_{i=L}^l\phi\left(\mathbf{J_i^{(i)}J_i^{(i)}}^T\right) = \frac{\alpha_2^{(L)}}{\alpha_2^{(l-1)}}.
\end{split}
\label{equ:scale_relu_phi}
\end{equation}
As the $2^{nd}$ moment of the activations in ResNet linearly rather than exponentially increases, and such an increasing is periodically stopped by down-sampling. Thus with Equation \eqref{equ:scale_relu_phi}, gradient explosion or vanishing will not happen in ResNet when the depth is finite.

\textbf{Fixup Initialization \cite{zhang2019fixup}.} Without loss of generality, we consider a ResNet consisting of $L$ residual blocks, wherein each block has $m$ convolutional layers activated by ReLU. The feature maps are down-sampled for $d$ times throughout the network. We assume that the convolutional layers are properly initialized such that $\phi(\mathbf{J_i^{(c)}J_i^{(c)}}^T)=\alpha$, and the convolutional layers in the down-sampling shortcuts are initialized to have $\phi(\mathbf{J_i^{(c)}J_i^{(c)}}^T)=\alpha_d$. For a single block whose number of input channels equals to number of output channels, we have $\phi(\mathbf{J_iJ_i}^T) = 1+(\frac{\alpha}{2})^m$; for the down-sampling block, we have $\phi(\mathbf{J_iJ_i}^T) = \alpha_d + (\frac{\alpha}{2})^m$. When $L$ is finite, we have the following proposition:
\begin{prop}
\textbf{(``Plus One" Trick).} Assume that for each of the $L$ sequential blocks of a series-parallel hybrid neural network, we have $\phi(\mathbf{J_iJ_i}^T)=1+\phi(\widetilde{\mathbf{J_i}}\widetilde{\mathbf{J_i}}^T)$ where $\mathbf{J_i}$ is its Jacobian matrix. The network has gradient norm equality as long as
\begin{equation}
    \phi\left(\widetilde{\mathbf{J_i}}\widetilde{\mathbf{J_i}}^T\right) = O(\frac{1}{L^p}), p>1.
\label{equ:plus_1_trick}
\end{equation}
(Proof: Appendix \ref{proof:plus_1_trick})
\label{prop:plus_1_trick}
\end{prop}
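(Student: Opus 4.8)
The plan is to reduce the statement to Theorem~\ref{theorem:multiplication} and then to a one-line estimate on products of the form $\Pi_i(1+\epsilon_i)$ with uniformly small $\epsilon_i$. First I would verify that Equation~\eqref{equ:mul_expectation} and Hypothesis~\ref{assumption:jacobian_breakdown} are applicable here: since each block has Jacobian $\mathbf{J_i}=\mathbf{I}+\widetilde{\mathbf{J_i}}$ and, by Proposition~\ref{prop:prerequisite_series_parallel_hybrid_networks} together with Proposition~\ref{prop:properties_of_eom_cm}, $\mathbf{J_i}$ is an expectant orthogonal matrix, Proposition~\ref{prop:1st_moment_unitary_invariant_exp_diag} guarantees that the product of the block Jacobians is at least $1^{st}$ moment unitarily invariant. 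Hence for every $0\le i<L$,
\begin{equation}
\phi\left(\mathbf{K_{i+1}}^T\mathbf{K_{i+1}}\right)=\phi\left(\frac{\partial\mathbf{f_i}}{\partial\mathbf{\theta_i}}\left(\frac{\partial\mathbf{f_i}}{\partial\mathbf{\theta_i}}\right)^T\right)\Pi_{j=i+1}^{L}\left(1+\phi\left(\widetilde{\mathbf{J_j}}\widetilde{\mathbf{J_j}}^T\right)\right),
\end{equation}
using Equation~\eqref{equ:gradient_phi_decomposite} and the assumed form $\phi(\mathbf{J_jJ_j}^T)=1+\phi(\widetilde{\mathbf{J_j}}\widetilde{\mathbf{J_j}}^T)$.

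The second and main step is to bound the partial product uniformly in $i$. By the hypothesis $\phi(\widetilde{\mathbf{J_j}}\widetilde{\mathbf{J_j}}^T)=O(1/L^p)$, there is a constant $C$, independent of $j$ and of $L$, with $0\le\phi(\widetilde{\mathbf{J_j}}\widetilde{\mathbf{J_j}}^T)\le C/L^p$. Each factor of the product exceeds $1$, so the product is at least $1$; and using $\log(1+x)\le x$,
\begin{equation}
\log\Pi_{j=i+1}^{L}\left(1+\phi\left(\widetilde{\mathbf{J_j}}\widetilde{\mathbf{J_j}}^T\right)\right)\le\sum_{j=i+1}^{L}\phi\left(\widetilde{\mathbf{J_j}}\widetilde{\mathbf{J_j}}^T\right)\le L\cdot\frac{C}{L^p}=\frac{C}{L^{p-1}},
\end{equation}
which tends to $0$ as $L\to\infty$ because $p>1$. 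Therefore $1\le\Pi_{j=i+1}^{L}\phi(\mathbf{J_jJ_j}^T)\le e^{C/L^{p-1}}$, a bound uniform in $i$ that converges to $1$ with depth. Since the factor coming from $\partial\mathbf{f_i}/\partial\mathbf{\theta_i}$ (and the loss-gradient term $\mathbf{u}$) is $O(1)$ and independent of $L$, it follows that $\phi(\mathbf{K_{i+1}}^T\mathbf{K_{i+1}})$ neither grows nor diminishes exponentially as $i$ decreases, which is exactly the gradient norm equality criterion of Section~\ref{sec:dynamic_of_}.

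I expect the only delicate point to be bookkeeping rather than mathematics: one must insist that the $O(1/L^p)$ bound holds with a constant uniform over the blocks, otherwise the step $\sum_{j}\phi(\widetilde{\mathbf{J_j}}\widetilde{\mathbf{J_j}}^T)\le L\cdot C/L^p$ is not licensed; and one should remark why $p>1$ (rather than merely $p\ge1$) is needed --- with $p=1$ the product would converge to $e^{C}\ne1$, a depth-independent but generally nonunit gain, and with $p<1$ it could diverge. I would also briefly contrast this with Proposition~\ref{prop:shallow_network_trick}: the ``Plus One'' structure forces $\omega=1$, so the accumulated error $\sum_i C_L^i(1-\omega)^i\Pi_j\gamma_j$ collapses to its first term, which is the conceptual reason why the bound obtained above is so clean.
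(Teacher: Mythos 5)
Your proof is correct and follows essentially the same route as the paper's: both reduce the claim to showing that $\Pi_{i=1}^{L}\bigl(1+\phi(\widetilde{\mathbf{J_i}}\widetilde{\mathbf{J_i}}^T)\bigr)$, with nonnegative terms of size $O(L^{-p})$, is trapped between $1$ and $1+O(L^{1-p})$, so neither explosion nor vanishing can occur. The only cosmetic difference is that you control the product via $\log(1+x)\le x$ (yielding the uniform bound $e^{C/L^{p-1}}$), whereas the paper expands the product directly and asserts the higher-order terms are $o(1/L^{p-1})$; your treatment of that step, and your explicit insistence on a block-uniform constant, are if anything slightly tidier.
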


As a result, it is optimal to have $\alpha_d=1,\alpha=2L^{-\frac{p}{m}}, p>1$. For Gaussian weights, we can initialize the weights with $N(0, L^{-p/m}\frac{2}{n})$. As KM initializes the weights to $N(0, \frac{2}{n})$, the Fixup initialization is just equivalent to scaling the weights initialized with KM by $L^{-p/2m}$; for orthogonal weights, we have $\beta=L^{-p/2m}\sqrt{2}$. For the down-sampling convolutions, it should be initialized to have Gaussian weights with $N(0, \frac{1}{n})$ or orthogonal weights with $\beta=1$.

Zhang et al. (2019) \cite{zhang2019fixup} observe that although ResNet with the Fixup initialization can achieve gradient norm equality, it does not regularize the training like BN does. To solve this problem, additional scalar multiplier and bias are added before each convolution, linear, and element-wise activation layer. The multipliers and biases are trainable under a learning rate of $1/10$ to improve stability. Moreover, further regularization like mixup \cite{zhang2017mixup} is used. Although we reach the same conclusion claimed in \cite{zhang2019fixup}, our derivation is much simpler owing to the highly modularized framework.

\section{Experiments} 
In this section, we first conduct several numerical experiments in Section \ref{sec: numerical_exp} to verify the correctness of our key theorems: Theorem \ref{theorem:multiplication} and \ref{theorem:Addition}. Then, in Section \ref{sec: cifar10_exp}, we perform extensive experiments to support our conclusions in previous sections. At last, in Section \ref{sec: imagenet_exp}, we further test several methods that yield interesting results on CIFAR-10 and ImageNet.

\begin{table*}[t]
\caption{Network structures.}
\centering
\resizebox{0.8\textwidth}{!}{
\begin{tabular}{c|c|c|c|c}
\hline
  &Out Size &Serial Network & ResNet & DenseNet
\\ \hline \hline
conv1&$32\times 32$&\multicolumn{2}{|c|}{$3\times 3,16,s~1$}&$3\times 3,24,s~1$\\
\hline
block1&$32\times 32$&${\left[\begin{array}{c} 
    3\times 3, 16, s~1 \\ 
    3\times 3, 16, s~1 \\
\end{array}\right]}\times 5$&${\left(\begin{array}{c} 
    3\times 3, 16, s~1 \\ 
    3\times 3, 16, s~1 \\
\end{array}\right)}\times 9$&${\left\{\begin{array}{c} 
    1\times 1, 48, s~1 \\ 
    3\times 3, 12, s~1 \\
\end{array}\right\}}\times 8$\\
\hline
ds1&$16\times16$&${\left[\begin{array}{c} 
    3\times 3, 32, s~2 \\ 
    3\times 3, 32, s~1 \\
\end{array}\right]}\times 1$&${\left(\begin{array}{c} 
    3\times 3, 32, s~2 \\ 
    3\times 3, 32, s~1 \\
\end{array}\right)}\times 1$&${\left.\begin{array}{c} 
    1\times 1, 60, s~2 \\ 
\end{array}\right.}$\\
\hline
block2&$16\times 16$&${\left[\begin{array}{c} 
    3\times 3, 32, s~1 \\ 
    3\times 3, 32, s~1 \\
\end{array}\right]}\times 4$&${\left(\begin{array}{c} 
    3\times 3, 32, s~1 \\ 
    3\times 3, 32, s~1 \\
\end{array}\right)}\times 8$& ${\left\{\begin{array}{c} 
    1\times 1, 48, s~1 \\ 
    3\times 3, 12, s~1 \\
\end{array}\right\}}\times 8$\\
\hline
ds2&$8\times8$&${\left[\begin{array}{c} 
    3\times 3, 64, s~2 \\ 
    3\times 3, 64, s~1 \\
\end{array}\right]}\times 1$&${\left(\begin{array}{c} 
    3\times 3, 64, s~2 \\ 
    3\times 3, 64, s~1 \\
\end{array}\right)}\times 1$&${\left.\begin{array}{c} 
    1\times 1, 78, s~2 \\ 
\end{array}\right.}$\\
\hline
block3&$8\times 8$&${\left[\begin{array}{c} 
    3\times 3, 64, s~1 \\ 
    3\times 3, 64, s~1 \\
\end{array}\right]}\times 4$&${\left(\begin{array}{c} 
    3\times 3, 64, s~1 \\ 
    3\times 3, 64, s~1 \\
\end{array}\right)}\times 8$& ${\left\{\begin{array}{c} 
    1\times 1, 48, s~1 \\ 
    3\times 3, 12, s~1 \\
\end{array}\right\}}\times 8$\\
\hline
&$1\times 1$&\multicolumn{3}{|c}{average pooling, 10-d fc, softmax}\\
\hline
\end{tabular}}
\label{tab:network_models}
\vspace{-15pt}
\end{table*}
\vspace{-10pt}
\subsection{Numerical Experiments}\label{sec: numerical_exp}
For Theorem \ref{theorem:multiplication}, we consider a network formulated as $\mathbf{x_{out}}=\mathbf{f_L}\circ \mathbf{f_{L-1}}\circ...\mathbf{f_1}(\mathbf{x_{in}})$. Each $\mathbf{f_i}$ consists of an $m_i\times m_{i-1}$ weight matrix whose entries follow i.i.d. $N(0, \sigma_i^2)$ and a ReLU activation function. The entries of $\mathbf{x_{in}}\in \mathbb{R}^{m_0\times 1}$ follow i.i.d. $N(\mu, \sigma^2)$. According to Proposition \ref{prop:gaussian_activ_2nd_invariant}, such a network certainly satisfies the prerequisites. 

The network can be determined by a joint state $\left[\{m_i\}, \{\sigma_i\}, \mu, \sigma, L\right]$. Let $U(a, b)$ denote the uniform distribution within $[a,b]$, and $U_{i}(a, b)$ represent the discrete uniform distribution on integers from $a$ to $b$. To cover different configurations, we repeat the experiment for 100 times and the joint state is uniformly drawn from $[\{U_{i}(1000, 5000)\},\! \{U(0.1, 5)\},\! U(\!-5,\! 5),\! U(0.1, 5), U_{i}(2, 20)]$.

For Theorem \ref{theorem:Addition}, we consider a network formulated as $\mathbf{x_{out}}=\sum_{i=1}^n\mathbf{f_i}(\mathbf{x_{in}})$. Each $\mathbf{f_i}$ consists of an $m\times m$ weight matrix whose entries follow i.i.d. $N(0, \sigma_i^2)$ and a ReLU activation function. The entries of $\mathbf{x_{in}}\in \mathbb{R}^{m\times 1}$ follow i.i.d. $N(\mu, \sigma^2)$. As Central i.i.d. Gaussian matrices are asymptotically R-diagonal (Equation 4.45 in Cakmak (2012) \cite{cakmak2012non}) and with Theorem 32 in Cakmak (2012) \cite{cakmak2012non}, all the blocks of the given network are R-diagonal. As the network is determined by a joint state $\left[ m, \{\sigma_i\}, \mu, \sigma, n\right]$, we also repeat the experiment for 100 times and at each time the joint state is uniformly drawn from $\left[U_{i}(1000, 5000), \{U(0.1, 5)\}, U(-5, 5), U(0.1, 5), U_{i}(2, 20)\right]$.

We denote the input-output Jacobian matrix of the whole network as $\mathbf{J}$, and we evaluate our theorems by measuring how well $\left(\phi(\mathbf{JJ}^T)/\phi(\mathbf{JJ}^T)_{t}, \varphi(\mathbf{JJ}^T)/\varphi(\mathbf{JJ}^T)_t\right)$ concentrates around $(1, 1)$, where $\phi(\mathbf{JJ}^T), \varphi(\mathbf{JJ}^T)$ are directly calculated from the defined Jacobian matrices while $\phi(\mathbf{JJ}^T)_t, \varphi(\mathbf{JJ}^T)_t$ are theoretical values derived by Theorem \ref{theorem:multiplication} and \ref{theorem:Addition}.

\begin{figure}[!htbp]
\vspace{-10pt}
\centering
\includegraphics[width=0.48\textwidth]{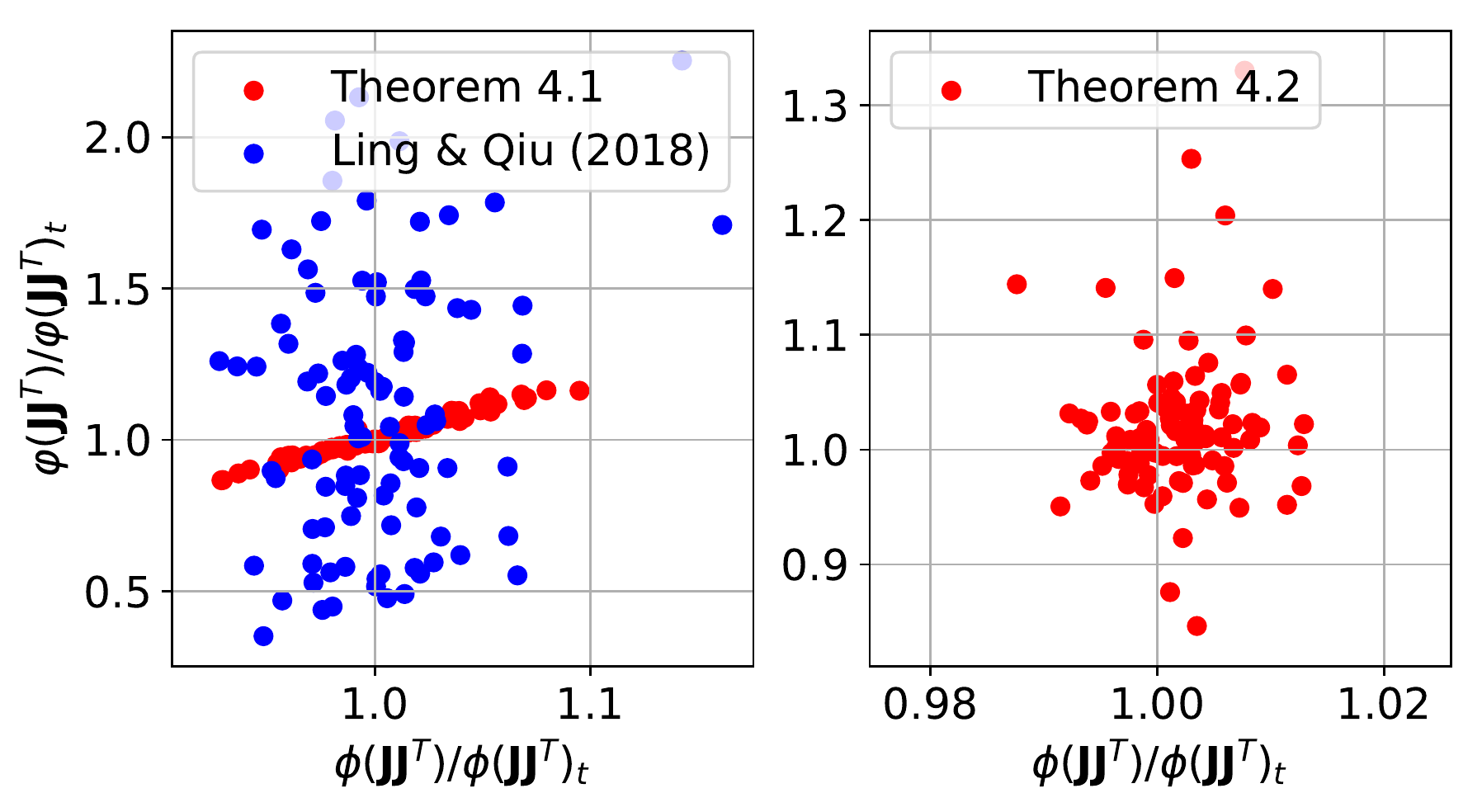}
\caption{Verification of Theorem \ref{theorem:multiplication} and \ref{theorem:Addition}. Each point denotes the result of one experiment.}
\label{fig:num4_1}
\vspace{-5pt}
\end{figure}

The results are shown in Fig. \ref{fig:num4_1}. We can see that despite of the numerical error, the experiment results well concentrate around $(1, 1)$. Besides, while the Result 2 in Ling \& Qiu (2018) \cite{ling2018spectrum} is quite similar to our Theorem \ref{theorem:multiplication}, their result can only handle the situations when the input and output feature map sizes are equal. Note that the estimation error of $\varphi(\mathbf{JJ}^T)$ with the theory in \cite{ling2018spectrum} is much greater than ours.

\subsection{Experiments on CIFAR-10}\label{sec: cifar10_exp}
We first validate the conclusions yielded by our theorems on CIFAR-10 classification.
The basic models we use are shown in Table \ref{tab:network_models}, where ``[]" denotes a vanilla network block, ``()" denotes a network block with shortcut connection, and ``\{\}" denotes a dense block whose major branch's output is concatenated with its input in the channel dimension. The shortcut connections in down-sampling layers are handled by average pooling and zero padding following Zhang et al. (2019)\cite{zhang2019fixup}. All the models are trained with a batch size of 128. We use SGD as the optimizer with momentum=0.9 and weight decay=0.0005. Besides, we clip the gradient within $[-2, 2]$ for all the experiments to increases the stability. 

For all the experiments of serial networks except for DenseNet, the ``serial network" in Table \ref{tab:network_models} is applied, which is equivalent to a ResNet-32 without shortcut connections. The models are trained for 130 epochs. The initial learning rate is set to 0.01, and decayed to 0.001 at epoch 80. For experiments on DenseNet, the models are trained for 130 epochs. The initial learning rate is set to 0.1, and decayed by $10\times$ at epoch 50, 80. For experiments on ResNet, we follow the configuration in Zhang et al. (2019)\cite{zhang2019fixup}, i.e. all the models are trained for 200 epochs with initial learning rate of 0.1 that is decayed by 10 at epoch 100, 150. 

To support our conclusions,  we evaluate all the configurations from two perspectives: module performance (test accuracy) and gradient norm distribution. Each configuration is trained from scratch 4 times to reduce the random variation and the test accuracy is averaged among the last 10 epochs. The gradient norm of each weight is represented by the $L_2$ norm of the weights' gradient, $||\mathbf{\Delta \theta_i}||^2_2/\eta^2$, which is collected from the first 3 epochs (1173 iterations). For clarity, we color the range from 15 percentile to 85 percentile, and represent the median value with a solid line.

\textbf{Initialization in Serial Network.}\label{exp:orth_init}
To support our conclusions in Section \ref{sec:initialization_serial},  we evaluate the initialization techniques in a 32-layer serial network on CIFAR-10. The test accuracy of all configurations is summarized in Table \ref{tab:Accuracy_eval_init_serial}, and the gradient distribution is illustrated in Fig. \ref{fig:init_grad}. We evaluates two kinds of orthogonal initialization strategies: the orthogonal initialization \footnote{\href{https://pytorch.org/docs/stable/nn.init.html\#torch.nn.init.orthogonal\_}{pytorch.org/docs/stable/nn.init.html\#torch.nn.init.orthogonal\_}} in Saxe et al. (2013) \cite{saxe2013exact} and the delta orthogonal initialization \footnote{We use the implementation for orthogonal initialization provided in \href{https://github.com/JiJingYu/delta\_orthogonal\_init\_pytorch}{https://github.com/JiJingYu/delta\_orthogonal\_init\_pytorch}} in Xiao et al. (2018) \cite{xiao2018dynamical}.

\begin{figure*}[ht!]
\centering
\includegraphics[width=0.98\textwidth]{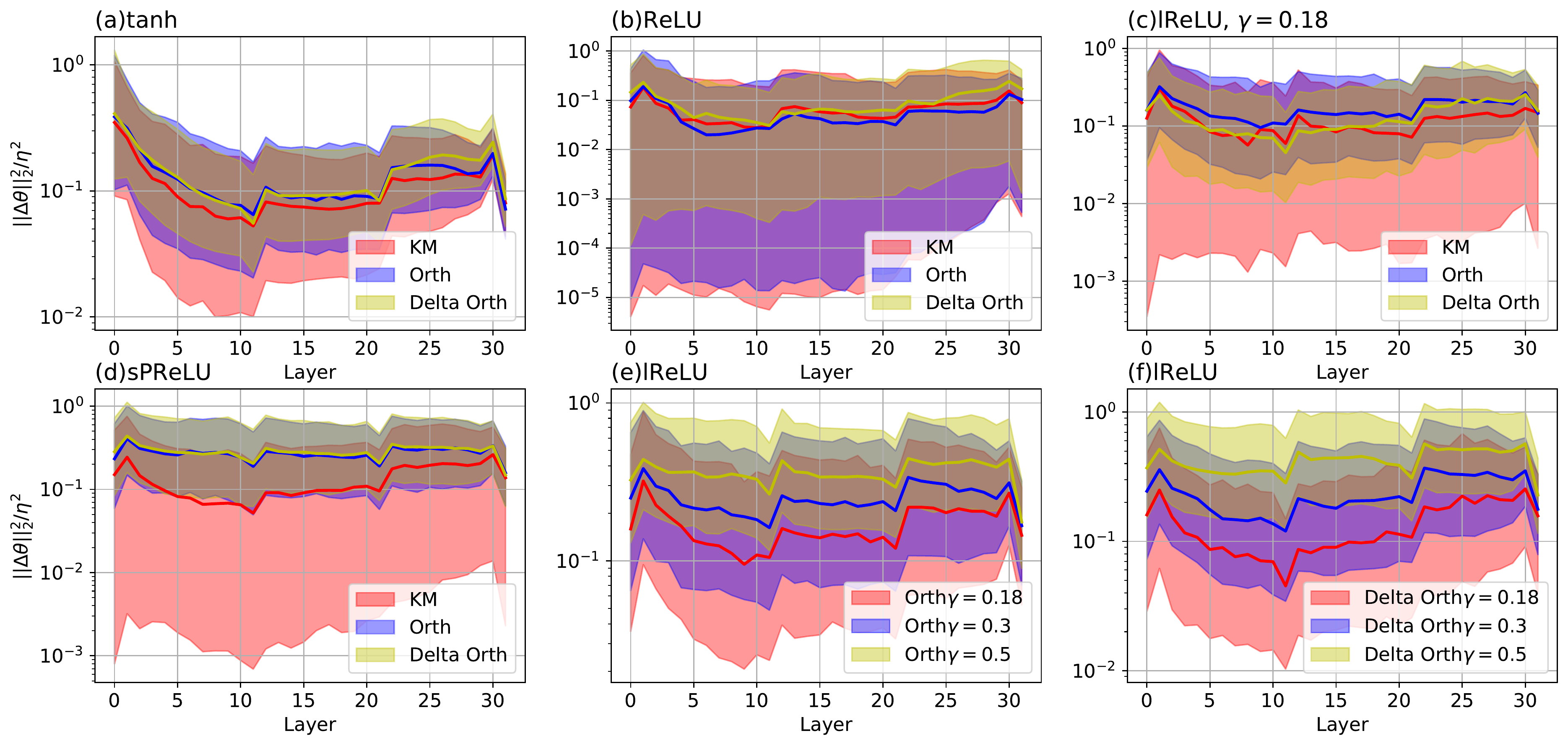}
\caption{Gradient norm distribution throughout the network under different configurations. The colored regions represent the range from 15 percentile to 85 percentile, while the solid line is the median. ``lReLU" denotes ``leaky ReLU".}
\label{fig:init_grad}
\vspace{-15pt}
\end{figure*}

\begin{table}[ht]
\caption{Test accuracy of initialization techniques on CIFAR-10 with different activation functions and configurations (Cl=$95\%$).}
\centering
\resizebox{0.45\textwidth}{!}{
\begin{tabular}{c|c|c}
\hline
{\bf Activation Function}  &{\bf Approach}	&{\bf Test Acc.}
\\ \hline \hline
\multirow{3}{*}{\textbf{tanh}}&BN&$\mathbf{85.77\%\pm 0.77\%}$\\
\cline{2-3}
&KM& $83.33\%\pm1.02\%$\\
\cline{2-3}
&Orth&$83.13\%\pm 0.54\%$\\
\cline{2-3}
&Delta Orth\cite{xiao2018dynamical}&$83.31\%\pm 0.38\%$\\
\hline \hline
\multirow{3}{*}{\textbf{ReLU}}&BN&$\mathbf{88.70\%\pm0.31\%}$\\
\cline{2-3}
&KM& $85.13\%\pm1.35\%$\\
\cline{2-3}
&Orth&$85.53\%\pm0.64\%$\\
\cline{2-3}
&Delta Orth&$86.10\%\pm1.33\%$\\
\hline \hline
\multirow{3}{*}{\textbf{lReLU},$\gamma=0.18$}&BN&$\mathbf{89.19\%\pm0.41\%}$\\
\cline{2-3}
&KM&$87.96\%\pm1.09\%$\\
\cline{2-3}
&Orth&$88.51\%\pm0.37\%$\\
\cline{2-3}
&Delta Orth&$87.97\%\pm1.34\%$\\
\hline
\multirow{2}{*}{\textbf{lReLU},$\gamma=0.3$}&BN&$89.58\%\pm0.51\%$\\
\cline{2-3}
&Orth&$89.24\%\pm0.44\%$\\
\cline{2-3}
&Delta Orth&$\mathbf{90.12\%\pm0.64\%}$\\
\hline
\multirow{2}{*}{\textbf{lReLU},$\gamma=0.5$}&BN&$88.60\%\pm0.34\%$\\
\cline{2-3}
&Orth&$88.91\%\pm0.27\%$\\
\cline{2-3}
&Delta Orth&$\mathbf{89.53\%\pm0.32\%}$\\
\hline \hline
\multirow{3}{*}{\textbf{PReLU}\cite{he2015delving}}&BN&$\mathbf{88.96\%\pm0.35\%}$\\
\cline{2-3}
&KM&$88.11\%\pm0.99\%$\\
\cline{2-3}
&Orth&$87.39\%\pm3.06\%$\\
\cline{2-3}
&Delta Orth&$82.00\%\pm7.39\%$\\
\hline \hline
\multirow{3}{*}{\textbf{sPReLU} (ours)}&BN&$88.96\%\pm0.35\%$\\
\cline{2-3}
&KM&$88.87\%\pm0.32\%$\\
\cline{2-3}
&Orth&$89.16\%\pm0.32\%$\\
\cline{2-3}
&Delta Orth&$\mathbf{89.73\%\pm0.34\%}$\\
\hline
\end{tabular}
}
\label{tab:Accuracy_eval_init_serial}
\vspace{-20pt}
\end{table}

To begin with, as illustrated in Fig. \ref{fig:init_grad}, the gradient distributions of all configurations with tanh, ReLU, leaky ReLU and sPReLU are more or less neutral, and Table \ref{tab:Accuracy_eval_init_serial} shows that all these configurations can converge, which demonstrates the effectiveness of the initialization schemes under relatively deep network and moderate learning rate. Second, the gradient norm distribution of tanh is more concentrated and neutral compared with rectifiers, whereas its test accuracy is much lower. Both these phenomenon accord with our predictions in Section \ref{sec:initialization_serial}: tanh is more stable compared with rectifier neurons, whereas rectifiers are more effective. Besides, the gradient explosion occasionally happens with PReLU. Moreover, with $\gamma=0.18$, leaky ReLU outperforms ReLU by $+2.83\%$, $+2.98\%$ and $+1.87\%$ on Gaussian, orthogonal and delta orthogonal weights, respectively, which can be partially attributed to the additional stability provided by leaky ReLU. The reason is that the gradient norm is more concentrated with leaky ReLU, as illustrated in Fig. \ref{fig:init_grad}(b)-(c). Fig. \ref{fig:init_grad}(e)-(f) compare the gradient norm of leaky ReLU with different negative slope coefficient $\gamma$, and models with a larger $\gamma$ have flatter distribution, whereas a too high $\gamma$ even results in weak nonlinearity. This trade-off between stability and nonlinearity is also illustrated in Table \ref{tab:Accuracy_eval_init_serial}: while the test accuracy of $\gamma=0.18$ is $+0.59\%$ higher than that of $\gamma=0.5$ when the network is stabilized with BN, the latter one is $+0.4\%$ or $+1.56\%$ higher with orthogonal or delta orthogonal weights, respectively. The highest test accuracy is achieved when $\gamma=0.3$. With delta orthogonal weights, it is even $+0.54\%$ higher than the BN baseline. For sPReLU, Table \ref{tab:Accuracy_eval_init_serial} shows that compared with PReLU, sPReLU achieves $+0.76\%$ accuracy gain on Gaussian weights, $+1.77\%$ on orthogonal initialization and $+7.73\%$ on delta orthogonal weights with a much narrower confidence interval. Besides, sPReLU achieves comparable results with leaky ReLU under $\gamma=0.3$ without the need of hand-crafted hyper-parameter. Last but not least, for all the activation functions except tanh and PReLU, orthogonal and delta orthogonal weights achieve better results compared with Gaussian weights in Table \ref{tab:Accuracy_eval_init_serial} and demonstrate more concentrated gradient norm in Fig. \ref{fig:init_grad}. For tanh, one possible explanation is that tanh diminishes the flow of information in the forward pass, and the noise introduced by the Gaussian distribution might partially alleviate this problem. All in all, our discussions in Section \ref{sec:initialization_serial} predict most of phenomenons in our experiences, which demonstrates the effectiveness of our theorem.

\begin{figure*}[hbt]
  \centering
    \includegraphics[width=0.98\textwidth]{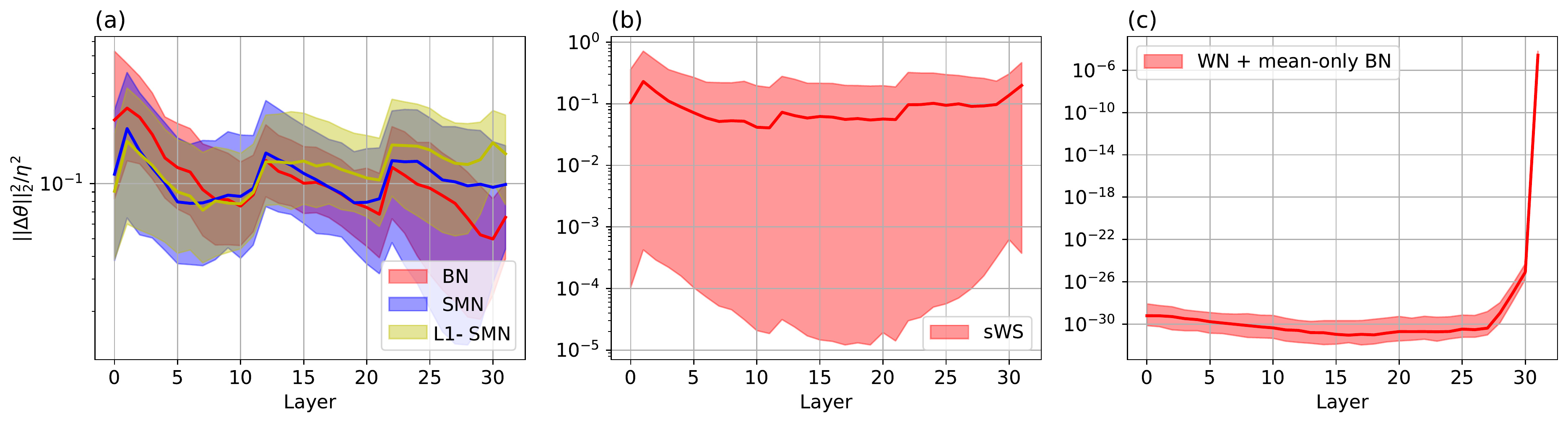}
  \caption{Gradient norm distribution throughout the network under different normalization techniques.}
  \label{fig:norm_grad}
 \vspace{-10pt}
\end{figure*}

\textbf{Normalization in Serial Network.}
In this part, we evaluate the performance of different normalization techniques. The test accuracy of all configurations is summarized in Table \ref{tab:Accuracy_eval_norm_serial}, and the gradient distribution is shown in Fig. \ref{fig:norm_grad}.

\begin{table}[ht]
\vspace{-10pt}
\caption{Test accuracy of normalization techniques on CIFAR-10 in serial networks (Cl=$95\%$).}
\centering
\resizebox{0.48\textwidth}{!}{
\begin{tabular}{c|c}
\hline
{\bf Approach}	&{\bf Test Acc.}
\\ \hline \hline
Batch Normalization (BN)&$\mathbf{88.70\%\pm0.31\%}$\\
\hline
\tabincell{c}{Second Moment Normalization (SMN) (ours)}&$\mathbf{88.50\%\pm0.26\%}$\\
\hline
\tabincell{c}{L1-norm SMN (L1-SMN) (ours)} & $88.34\%\pm0.61\%$ \\
\hline \hline
Scaled Weight Standardization (sWS) (ours)&$\mathbf{88.06\%\pm0.49\%}$\\
\hline
Weight Norm + mean-only BN\cite{salimans2016weight}&$10\%$ (not converge)\\
\hline
\end{tabular}
}
\label{tab:Accuracy_eval_norm_serial}
\vspace{-5pt}
\end{table}

According to Table \ref{tab:Accuracy_eval_norm_serial}, our SMN and its L1-norm version achieve comparable test accuracy compared with BN, and its gradient norm distribution in the 32-layer serial network is also akin to that of BN, both of which demonstrate the effectiveness of our novel normalization technique. While the original weight normalization does not converge due to the improper initialization, our scaled weight standardization demonstrates a neutral gradient norm distribution, and its test accuracy is only $0.64\%$ lower than the BN baseline. The only difference between SMN and sWS is that SMN estimates the $2^{nd}$ moment from pre-activations while sWS estimates from weight kernels, and the former one's distribution is obviously narrower than the latter one. This evidences our earlier conclusion that the $2^{nd}$ moment should be obtained from pre-activations for stability.

\textbf{Self-Normalizing Neural Network.}
In this part, we evaluate SeLU under difference setups of $\gamma_0$ and $\epsilon$ with Gaussian or orthogonal weights. The test accuracy of all configurations is summarized in Table \ref{tab:Accuracy_eval_selu_serial}, and the gradient distribution is illustrated in Fig. \ref{fig:selu_grad}.

\begin{table}[ht]
\vspace{-10pt}
\caption{Test accuracy of SeLU under different configurations (Cl=$95\%$). All the methods except for \cite{klambauer2017self} are ours.}
\centering
\resizebox{0.45\textwidth}{!}{
\begin{tabular}{c|c|c|c}
\hline
{\bf Weight Initialization}  &{\bf $\gamma_0$} &{\bf $\epsilon$}	&{\bf Test Acc.}
\\ \hline \hline
\multirow{5}{*}{\textbf{KM}}&\multirow{4}{*}{\textbf{1}}&\cite{klambauer2017self}&$89.00\%\pm 0.51\%$\\
\cline{3-4}
&& $0.00$ & $85.13\%\pm1.35\%$\\
\cline{3-4}
&& $0.03$ & $\mathbf{89.42\%\pm0.29\%}$\\
\cline{3-4}
& &$0.07$&$89.25\%\pm 0.58\%$\\
\cline{2-4}
&2&0.03&$89.42\%\pm 0.55\%$\\
\hline \hline
\multirow{5}{*}{\textbf{Orth}}&\multirow{4}{*}{\textbf{1}}&\cite{klambauer2017self}&$89.10\%\pm 0.33\%$\\
\cline{3-4}
&& $0.00$ & $85.53\%\pm0.64\%$\\
\cline{3-4}
&& $0.03$ & $\mathbf{89.49\%\pm0.32\%}$\\
\cline{3-4}
& &$0.07$&$89.10\%\pm 0.39\%$\\
\cline{2-4}
&2&0.03&$89.34\%\pm 0.39\%$\\
\hline \hline
\multicolumn{3}{c}{\textbf{BN with ReLU}}&$88.70\%\pm0.31\%$\\
\hline
\end{tabular}
}
\label{tab:Accuracy_eval_selu_serial}
\end{table}
\begin{figure}[ht]
\centering
\includegraphics[width=0.48\textwidth]{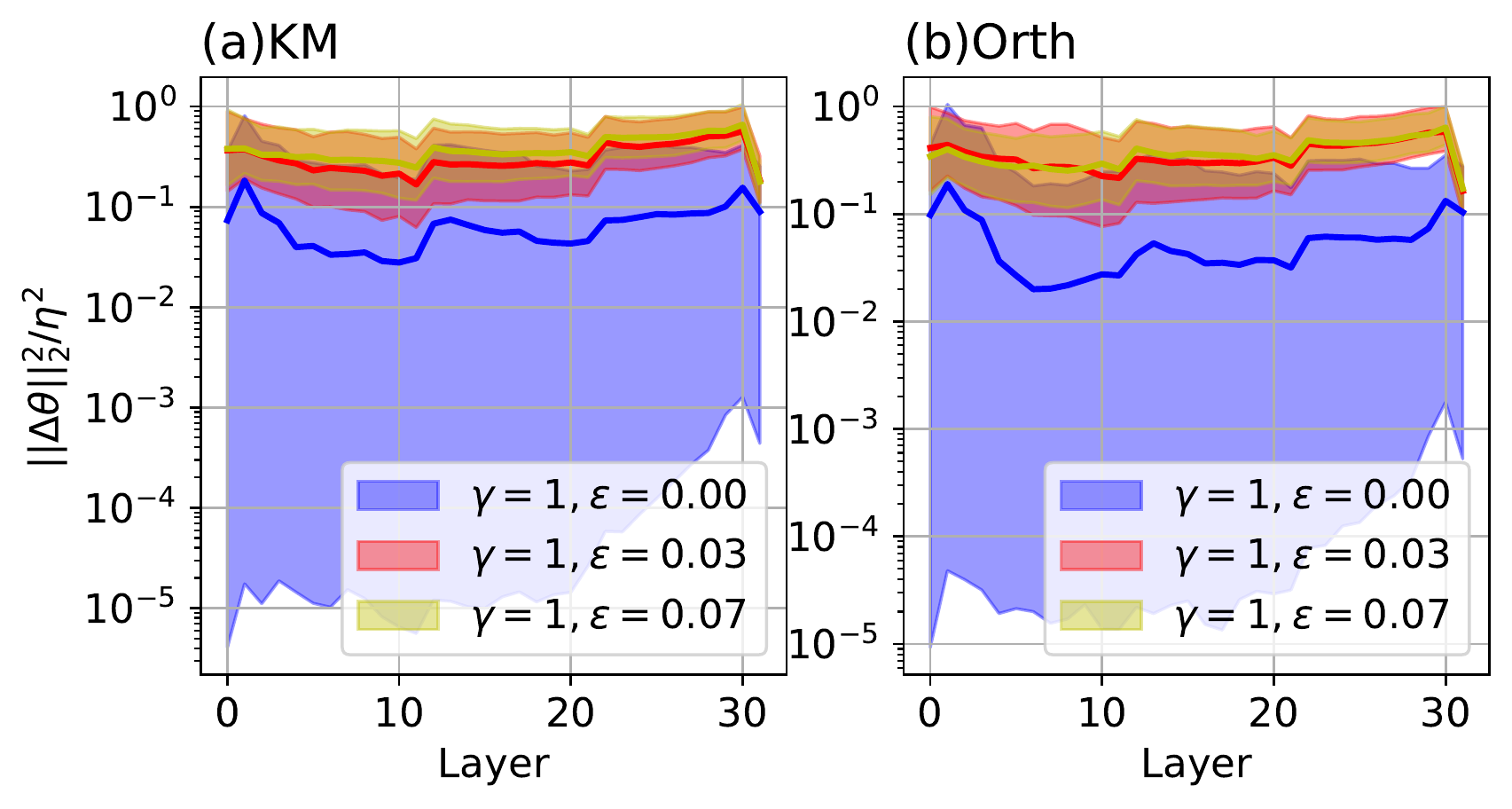}
\caption{Gradient norm distribution throughout the network with SeLU under different configurations.}
\label{fig:selu_grad}
\vspace{-15pt}
\end{figure}

For the orthogonal initialization, we find that the delta orthogonal initialization \cite{xiao2018dynamical} is less stable compared with the orthogonal initialization \cite{saxe2013exact}, which might be caused by that the sparse kernel in the delta orthogonal does not work well under the central limit theorem.
As shown in Table \ref{tab:Accuracy_eval_selu_serial}, when $\epsilon=0.07$, the test accuracy of our model is similar to the result in Klambauer et al. (2017)\cite{klambauer2017self}, which demonstrates that their work is a special case of ours. As our analysis suggests that $\epsilon$ should be slightly smaller than $\frac{1}{L}$, for the 32-layer network, we choose $\epsilon=0.03$, and the test accuracy is $+0.42\%$ and $+0.39\%$ higher than the original configuration with the Gaussian and orthogonal initialization, which indicates that the original choices of $\alpha$ and $\lambda$ are not optimal. As illustrated in Fig. \ref{fig:selu_grad}, a higher $\epsilon$ results in more neutral gradient norm distribution, which also accords to our prediction. Besides, our method can still achieve comparable result when $\gamma_0=2$. With SeLU, the orthogonal initialization does not have significant advantage over the Gaussian initialization, this reflects the normalization effectiveness of SeLU.

\textbf{DenseNet.}\label{Exp:densenet}
Here we evaluate the performance of some initialization and normalization techniques on DenseNet.
\begin{table}[ht]
\vspace{-10pt}
\caption{Test accuracy on DenseNet (Cl=$95\%$).}
\centering
\resizebox{0.48\textwidth}{!}{
\begin{tabular}{c|c}
\hline
{\bf Approach}	&{\bf Test Acc.}
\\ \hline \hline
Kaiming Init + ReLU\cite{he2015delving}&$89.37\%\pm0.43\%$\\
\hline
\tabincell{c}{Orthogonal Init + leaky ReLU,\\ $\gamma=0.3$ (ours)}& $89.56\%\pm0.30\%$\\
\hline
\tabincell{c}{Orthogonal Init + SeLU, \\$\gamma_0=2, \epsilon=0.03$ (ours)}& $\mathbf{90.51\%\pm0.35\%}$\\
\hline \hline
Batch Normalization (BN) &$\mathbf{92.10\%\pm0.54\%}$\\
\hline
Scaled Weight Standardization (sWS) (ours)&$91.35\%\pm0.46\%$\\
\hline
Second Moment Normalization (SMN) (ours)&$\mathbf{92.06\%\pm0.25\%}$\\
\hline
\end{tabular}
}
\label{tab:Accuracy_eval_dense}
\vspace{-5pt}
\end{table}

\begin{figure}[ht]
\centering
\includegraphics[width=0.48\textwidth]{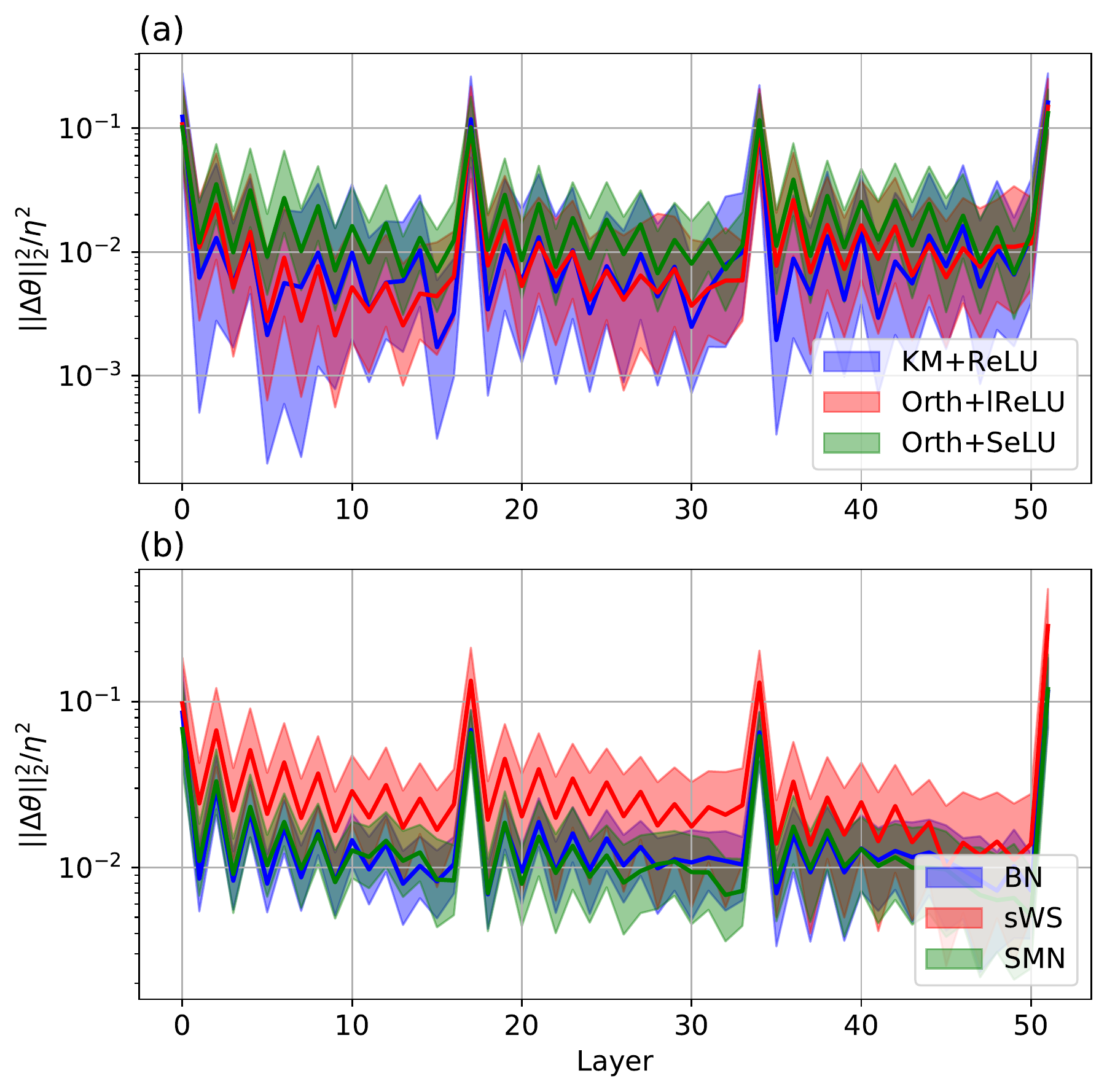}
\caption{Gradient norm distribution throughout DenseNet under different configurations.}
\label{fig:densenet_grad}
\vspace{-20pt}
\end{figure}

We take KM initialization as the baseline for initialization techniques and BN as baseline for normalization techniques. As listed in Table \ref{tab:Accuracy_eval_dense}, leaky ReLU yields $+0.19\%$ higher accuracy than the initialization baseline. For normalization techniques, while the accuracy of sWS is $0.75\%$ lower than BN, SMN we proposed is only $-0.04\%$ lower on accuracy, which further demonstrates its effectiveness. SeLU with $\epsilon=0.03$ surpasses other initialization techniques, whereas its accuracy is relatively lower than that with normalization techniques.

As illustrated in Fig. \ref{fig:densenet_grad}, even in the 52-layer network with a learning rate of 0.1, the gradient norm is still more concentrated than serial networks without dense connections, which verifies our conclusion that the dense connections can effectively stabilize the network. In Fig. \ref{fig:densenet_grad}(a), SeLU's gradient is more neutral compared with others; in Fig. \ref{fig:densenet_grad}(b), while SMN has a similar gradient distribution with BN, that of sWS is relatively higher. These phenomenons accord with the accuracy results in Table \ref{tab:Accuracy_eval_dense}.

\textbf{ResNet.}
Here we evaluate the performance of Fixup initialization and SMN on ResNet-56. The accuracy is summarized in Table \ref{tab:Accuracy_eval_resnet} and the gradient norm distribution is illustrated in Fig. \ref{fig:resnet_grad}. Fixup initialization with bias, scale, and mixup regularization achieves higher accuracy compared with BN, which illustrates its effectiveness. Moreover, although in Zhang et al. (2019) \cite{zhang2019fixup} $p$ is set to 2, we empirically show that $p=1.5$ can yield a slightly higher accuracy. The test accuracy of SMN is $0.43\%$ lower than BN, which can be reduced to $0.17\%$ with mixup regularization. However, as Fig. \ref{fig:resnet_grad} shows, SMN shares the similar gradient distribution with BN. These results imply that since the mean is estimated from weight kernels, compared with BN, SMN has a weaker regularization effect during training, and the data augmentation like mixup can partially compensate for it.

\begin{table}[ht]
\vspace{-5pt}
\caption{Test accuracy on ResNet-56 under different configurations (Cl=$95\%$).}
\centering
\resizebox{0.48\textwidth}{!}{
\begin{tabular}{c|c|c}
\hline
{\bf Method}  &{\bf Remarks} &{\bf Test Acc.}
\\ \hline \hline
\multirow{4}{*}{\textbf{Fixup}}&$p=2$\cite{zhang2019fixup}& $90.38\%\pm0.81\%$\\
\cline{2-3}
& $p=2$, b\&s\cite{zhang2019fixup}& $92.38\%\pm 0.15\%$\\
\cline{2-3}
& \tabincell{c}{$p=2$, b\& s, mixup\cite{zhang2019fixup}}& $93.93\%\pm 0.53\%$\\
\cline{2-3}
& \tabincell{c}{$p=1.5$, b\& s, mixup\\(ours)} & $\mathbf{94.28\%\pm 0.40\%}$\\
\hline \hline
\multirow{2}{*}{\textbf{BN}} & & $93.71\% \pm 0.27\%$\\
\cline{2-3}
& mixup & $\mathbf{94.10\% \pm 0.48\%}$\\
\hline
\multirow{2}{*}{\textbf{SMN} (ours)} & &$93.28\%\pm 0.82\%$\\
\cline{2-3}
& mixup & $\mathbf{93.93\%\pm 0.56\%}$\\
\hline
\end{tabular}
}
\label{tab:Accuracy_eval_resnet}
\end{table}
\vspace{-15pt}
\begin{figure}[ht]
\centering
\includegraphics[width=0.46\textwidth]{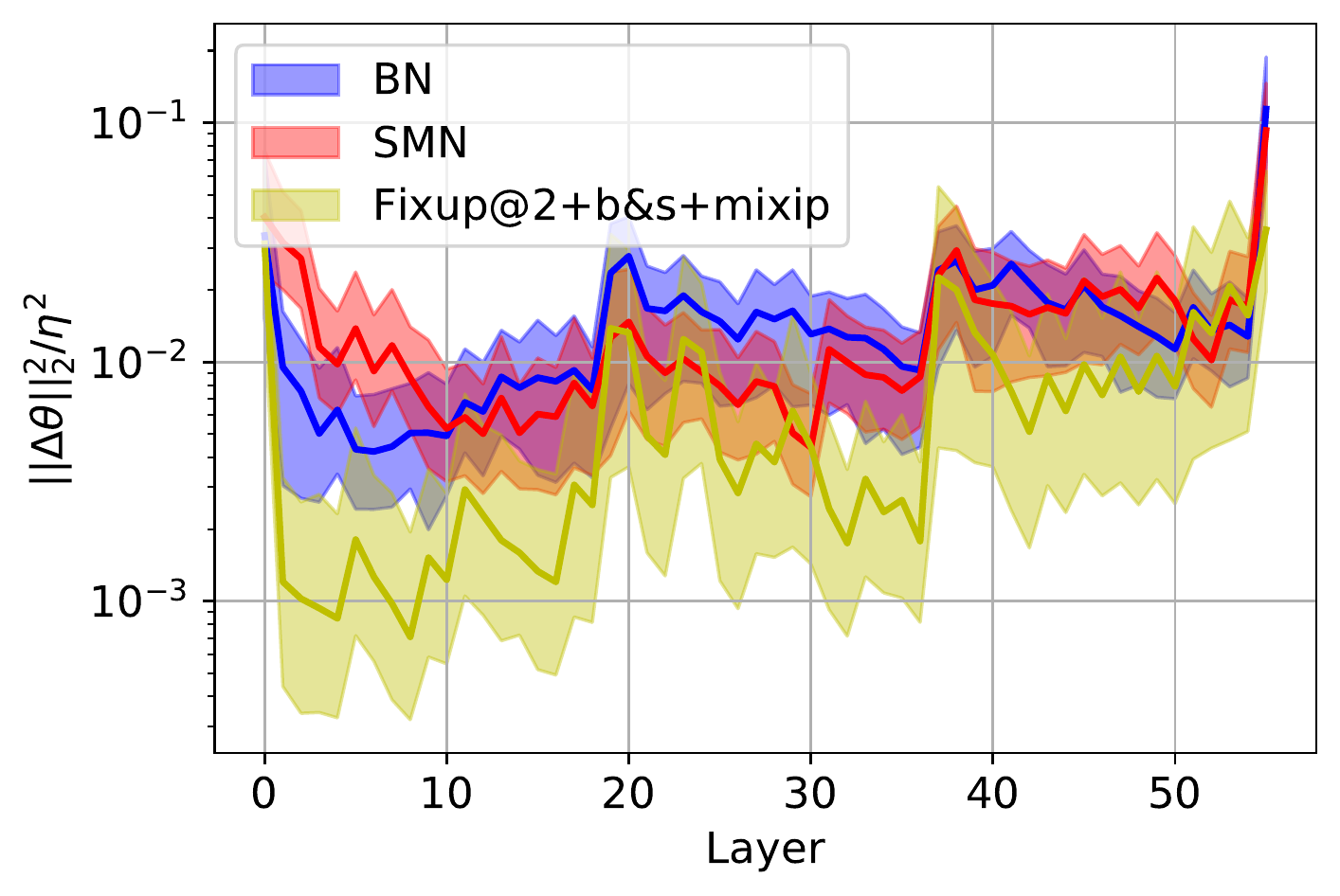}
\vspace{-10pt}
\caption{Gradient norm distribution throughout ResNet-56 under different configurations.}
\label{fig:resnet_grad}
\vspace{-10pt}
\end{figure}

\vspace{-10pt}
\subsection{Experiments on ImageNet}
Unlike previous theoretical studies \cite{haber2017stable, burkholz2018exact,tarnowski2018dynamical,pennington2017resurrecting,xiao2018dynamical} that only evaluate their conclusions on small datasets like MNIST and CIFAR-10, we further validate some of the important conclusions on ImageNet to demonstrate that they are still valid on large-scale networks.

We choose Conv MobileNet V1\cite{howard2017mobilenets} and ResNet 50 \cite{he2016deep} for serial and parallel networks, respectively. Conv MobileNet V1 is one of the latest serial networks, which has relatively good accuracy (71.7\% reported in Howard et al. (2017) \cite{howard2017mobilenets}) on ImageNet and is not over-parameterized like VGG \cite{simonyan2014very}. The ``Conv" means we use traditional convolutions instead of depthwise separable convolution, which is majorly due to two reasons. First, we find the latter one takes hundreds of epochs to converge. Second, as in depthwise convolution we have $c_{in}=1$, it is too small for most of mentioned techniques on serial networks. The Conv MobileNet V1 consists of 15 convolutional layers and a fully-connected layer at the end, wherein all the blocks are connected in serial and there are no shortcut connections between them. Originally, it is stabilized with BN. Since most methods for serial networks are not stable under the high learning rate, we follow the training scheme in Simonyan\& Zisserman (2014) \cite{simonyan2014very}, i.e. the model is trained for 90 epochs, the batch size is set to 512, and the initial learning rate is set to 0.02 with a decay by 10$\times$ at epoch 60, 75. For ResNet-50, we follow the classic training scheme in He et al. (2016)\cite{he2016deep}, i.e. the model is trained for 90 epochs, the batch size is set to 256, and the initial learning rate is set to 0.1 with a decay by 10$\times$ at epoch 30, 60. All the results are averaged over the last 10 epochs.

\textbf{On Conv MobileNet V1.} Previous experiments on CIFAR-10 illustrate that leaky ReLU with $\gamma\approx 1/L$ and SeLU surpass the accuracy of BN. Here we further evaluate their performance on ImageNet. The detailed configurations are: 1) Leaky ReLU, $\gamma=0.3$ with the orthogonal initialization; 2) SeLU, $\epsilon=0.06$ or $0.03$, $\gamma_0=1$ with the Gaussian initialization. We choose $\epsilon=0.06$ for it is slightly smaller than $\frac{1}{L}=0.067$. The results are given in Table \ref{tab:Accuracy_serial_imagenet}.

\begin{table}[ht]
\vspace{-10pt}
\caption{Test error of methods on Conv MobileNet V1 (Cl=$95\%$).}
\centering
\resizebox{0.48\textwidth}{!}{
\begin{tabular}{c|c|c}
\hline
{\bf Method}	&{\bf Top-1 Error} &{\bf Top-5 Error}
\\\hline \hline
SeLU\cite{klambauer2017self}&\multicolumn{2}{|c}{Explode in the first epoch}\\
\hline
SeLU $\epsilon=0.06$ (ours)&$\mathbf{30.37\%\pm0.10\%}$ &$\mathbf{11.67\%\pm0.03\%}$\\
\hline
SeLU $\epsilon=0.03$ (ours)&$30.87\%\pm0.07 \%$ &$11.84\%\pm0.04\%$\\
\hline \hline
ReLU, Gaussian\cite{he2015delving}&$31.16\%\pm0.08\%$&$11.87\%\pm0.06\%$\\
\hline
lReLU, Gaussian (ours)&$29.39\%\pm0.08 \%$ &$\mathbf{10.82\%\pm0.07\%}$\\
\hline
lReLU, Orth (ours) &$\mathbf{29.36\%\pm0.13 \%}$ &$10.82\%\pm0.08\%$\\
\hline
lReLU, Delta Orth (ours) &$29.47\%\pm0.09 \%$ &$10.92\%\pm0.08\%$\\
\hline \hline
BN&$28.58\%\pm0.07\%$ &$10.16\%\pm0.05\%$\\
\hline
\end{tabular}
}
\label{tab:Accuracy_serial_imagenet}
\end{table}

The original configuration of SeLU \cite{klambauer2017self} suffers from the gradient explosion due to the too large $\epsilon$. Via $\epsilon=0.06$, we reach $30.37\%$ top-1 error with Gaussian weights. However, for smaller $\epsilon$, i.e. $0.03$, the top-1 error is 0.5\% higher, for its normalization effectiveness is lower. For leaky ReLU with $\gamma=0.3$ and the Gaussian initialization, the top-1 error is only $0.89\%$ higher than the BN baseline and $1.77\%$ lower than the ReLU + Gaussian baseline.

\textbf{On ResNet-50.} For ResNet-50, we test the performance of the Fixup initialization and our SMN For the former one, we test both Zhang et al. (2019) \cite{zhang2019fixup}'s original configuration and ours with $p=1.5$. The scalar multiplier and bias are added and the interpolation coefficient in mixup is set to 0.7, just following \cite{zhang2019fixup}. For the latter one, we directly replace BN in original ResNet-50 with SMN without any further modification. For the BN baseline, the interpolation coefficient in mixup is set to 0.2, which is reported to be the best \cite{zhang2019fixup}. The results are summarized in Table \ref{tab:Accuracy_res_imagenet}.

\begin{table}[ht]
\vspace{-10pt}
\caption{Test error of methods on ResNet-50 (Cl=$95\%$).}
\centering
\resizebox{0.48\textwidth}{!}{
\begin{tabular}{c|c|c}
\hline
{\bf Method}	&{\bf Top-1 Error} &{\bf Top-5 Error}
\\ \hline \hline
BN&$24.35\%\pm0.15\%$&$7.49\%\pm0.09\%$\\
\hline
BN+mixup&$23.81\%\pm0.13\%$&$\mathbf{6.86\%\pm0.08\%}$\\
\hline
SMN (ours)&$24.90\%\pm0.19\%$&$7.65\%\pm0.14\%$\\
\hline
SMN+mixup (ours)&$\mathbf{23.74\%\pm0.23\%}$&$6.94\%\pm0.10\%$\\
\hline
L1-MN+mixup (ours)&$24.04\%\pm0.19\%$&$7.10\%\pm0.14\%$\\
\hline
Fixup\cite{zhang2019fixup}&$24.77\%\pm0.15\%$&$7.72\%\pm0.13\%$\\
\hline
Fixup (ours)&$24.72\%\pm0.12\%$&$7.70\%\pm0.10\%$\\
\hline
WN\cite{salimans2016weight}&$33\%$\cite{gitman2017comparison}&Not reported\\
\hline
\end{tabular}
}
\label{tab:Accuracy_res_imagenet}
\end{table}

Without the mixup regularization, the top-1 error of our SMN is 0.55\% higher than BN. However, we also observe that its top-1 training error is 0.68\% lower, which implies that the test accuracy loss is mainly due to the lack of regularity. Inspired by Zhang et al. (2019) \cite{zhang2019fixup}, we further utilize the mixup\cite{zhang2017mixup} to augment the input data with the interpolation coefficient of 0.2, which is the same with the baseline configuration. Then, the top-1 error becomes $23.74\%$, which is $0.07\%$ lower than BN. Also, we evaluate the L1-norm configuration with the mixup regularization, and the top-1 error is still comparable with BN. Notably, our SMN has a similar computational complexity with WN. \modify{In Appendix \ref{appendix:l2n_overhead}, we follow the analysis in Chen et al. (2019) \cite{chen2019effective} and find that SMN can reduce the number of operations from 13 to 10, which would bring about $30\%$ speedup. Moreover, under the same mixup configuration, SMN achieves the similar performance with BN, this demonstrates it could be a powerful substitute for BN.} For the Fixup initialization, our configuration can reach the same test error of the configuration in Zhang et al. (2019) \cite{zhang2019fixup}.
\label{sec: imagenet_exp}

\section{Conclusion}
In this paper, we propose a novel metric, block dynamical isometry, that can characterize DNNs using the gradient norm equality property. A comprehensive and highly modularized statistical framework based on advanced tools in free probability is provided to simplify the evaluation of our metric. Compared with existing theoretical studies, our framework can be applied to networks with various components and complex connections, which is much easier to use and only requires weaker prerequisites that are easy to verify. Powered by our novel metric and framework, unlike previous studies that only focus on a particular network structure or stabilizing methodology, we analyze extensive techniques including initialization, normalization, self-normalizing neural network and shortcut connections. Our analyses not only show that the gradient norm equality is a universal philosophy behind these methods but also provides inspirations for the improvement of existing techniques and the development of new methods. As demos, we introduce an activation function selection strategy for initialization, a novel configuration for weight normalization, a depth-aware way to derive coefficient in SeLU, and the second moment normalization. These methods achieve advanced results on both CIFAR-10 and ImageNet with rich network structures. Besides what we have presented in this paper, there is still potential in our framework that is not fully exploited. For instance, our analysis in Section \ref{sec:self_normalizing_nn} shows ``SeLU" may not be the only choice for self-normalizing neural networks. Moreover, although we focus on CNNs in this paper, the methodology also has the potential to improve other models like recurrent neural networks and spiking neural networks. Last but not least, our framework can also be utilized in other norm-based metrics like GSC \cite{philipp2018gradients}.

\bibliographystyle{IEEEtran}
\bibliography{ref}
\vspace{-30pt}
\begin{IEEEbiography}[{\includegraphics[width=1in, height=1.25in, clip, keepaspectratio]{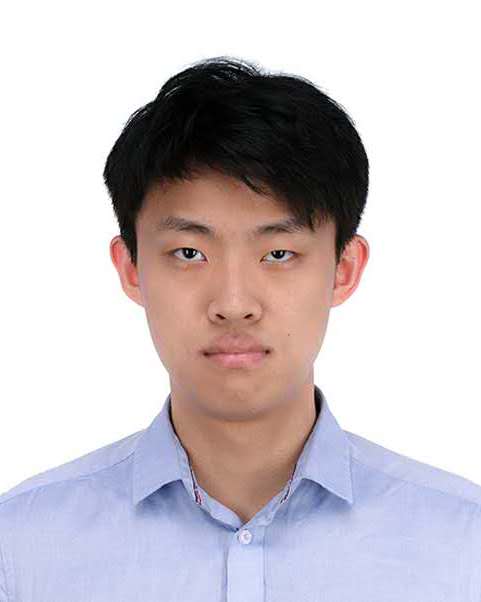}}] {Zhaodong Chen} received his B.E. degree from Tsinghua University, China in 2019. He is currently pursuing the Ph.D. degree with the Department of Computer Engineering, University of California, Santa Barbara. He is currently doing research in Scalable Energy-Efficient Architecture Lab. His research interest lays in deep learning and computer architecture.
\end{IEEEbiography}
\vspace{-60pt}
\begin{IEEEbiography}[{\includegraphics[width=1in, height=1.25in, clip, keepaspectratio]{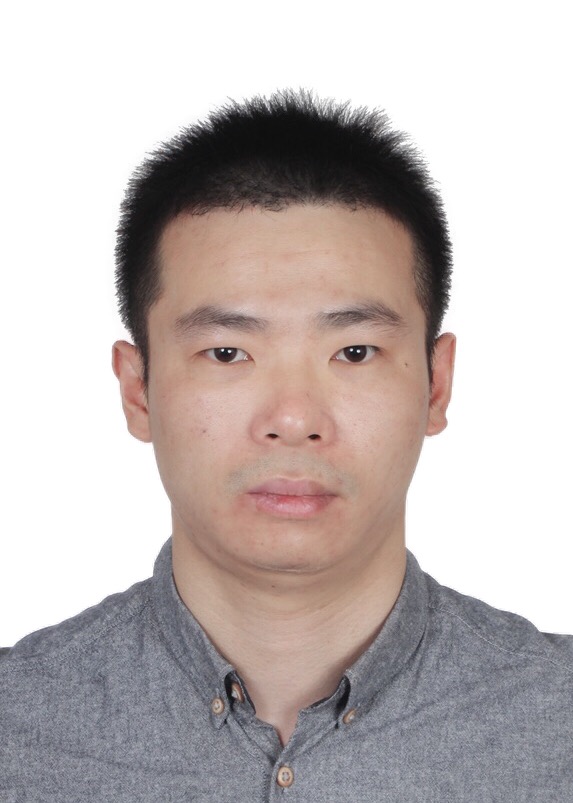}}] {Lei Deng} received the B.E. degree from University of Science and Technology of China, Hefei, China in 2012, and the Ph.D. degree from Tsinghua University, Beijing, China in 2017. He is currently a Postdoctoral Fellow at the Department of Electrical and Computer Engineering, University of California, Santa Barbara, CA, USA. His research interests span the area of brain-inspired computing, machine learning, neuromorphic chip, computer architecture, tensor analysis, and complex networks. Dr. Deng has authored or co-authored over 40 refereed publications. He was a PC member for \emph{ISNN} 2019. He currently serves as a Guest Associate Editor for \emph{Frontiers in Neuroscience} and \emph{Frontiers in Computational Neuroscience}, and a reviewer for a number of journals and conferences. He was a recipient of MIT Technology Review Innovators Under 35 China 2019.
\end{IEEEbiography}
\vspace{-40pt}
\begin{IEEEbiography}[{\includegraphics[width=1in, height=1.25in, clip, keepaspectratio]{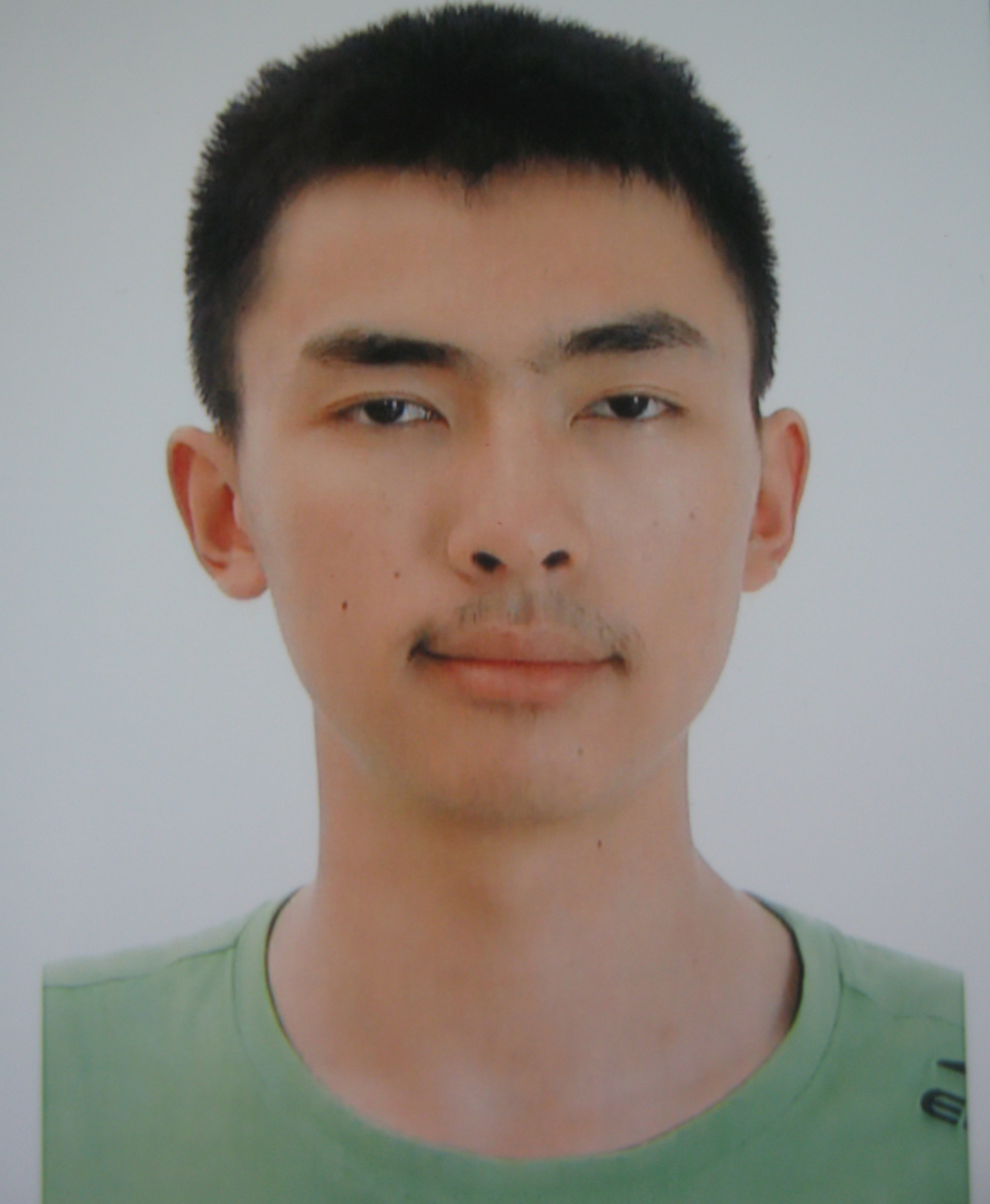}}] {Bangyan Wang} received his B.E. degree from Tsinghua University, China in 2017. He is currently a Ph.D. student at the Department of Electrical and Computer Engineering, University of California, Santa Barbara. His current research interests include domain-specific accelerator design and tensor analysis.
\end{IEEEbiography}
\vspace{-40pt}
\begin{IEEEbiography}[{\includegraphics[width=1in,height=1.25in,clip,keepaspectratio]{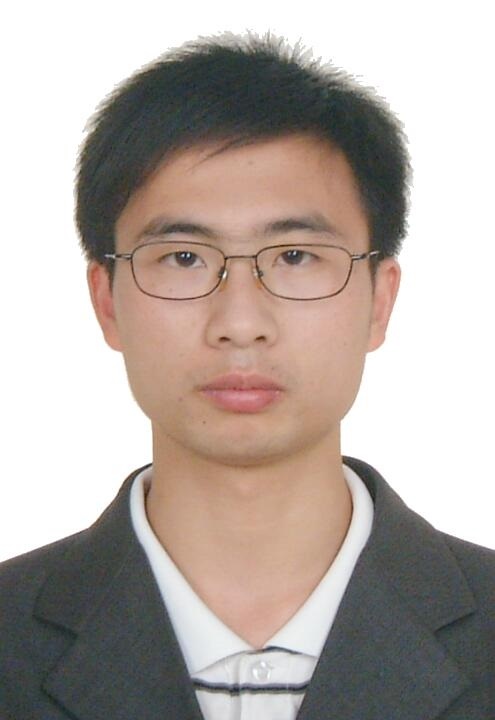}}]{Guoqi Li}  received the B.E. degree from the Xi’an University of Technology, Xi’an, China, in 2004, the M.E. degree from Xi’an Jiaotong University, Xi’an, China, in 2007, and the Ph.D. degree from Nanyang Technological University, Singapore, in 2011. He was a Scientist with Data Storage Institute and the Institute of High Performance Computing, Agency for Science, Technology and Research (ASTAR), Singapore, from 2011 to 2014. He is currently an Associate Professor with the Center for Brain Inspired Computing Research (CBICR), Tsinghua University, Beijing, China. His current research interests include machine learning, brain-inspired computing, neuromorphic chip, complex systems and system identification. 

Dr. Li has authored or co-authored over 100 journal and conference papers. He has been actively involved in professional services such as serving as the International Technical Program Committee Member, the Publication Chair, the Tutorial/Workshop Chair, and the Track Chair for international conferences. He is currently an Editorial-Board Member for \emph{Control and Decision} and \emph{Frontiers in Neuroscience}, and a Guest Associate Editor for \emph{Frontiers in Neuroscience}. He is a reviewer for \emph{Mathematical Reviews} published by the American Mathematical Society and serves as a reviewer for a number of other prestigious journals and conferences. He was the recipient of the 2018 First Class Prize in Science and Technology of the Chinese Institute of Command and Control, Best Paper Awards (\emph{EAIS} 2012 and \emph{NVMTS} 2015), and the 2018 Excellent Young Talent Award of Beijing Natural Science Foundation.
\end{IEEEbiography}
\vspace{-40pt}
\begin{IEEEbiography}[{\includegraphics[width=1in,height=1.25in,clip,keepaspectratio]{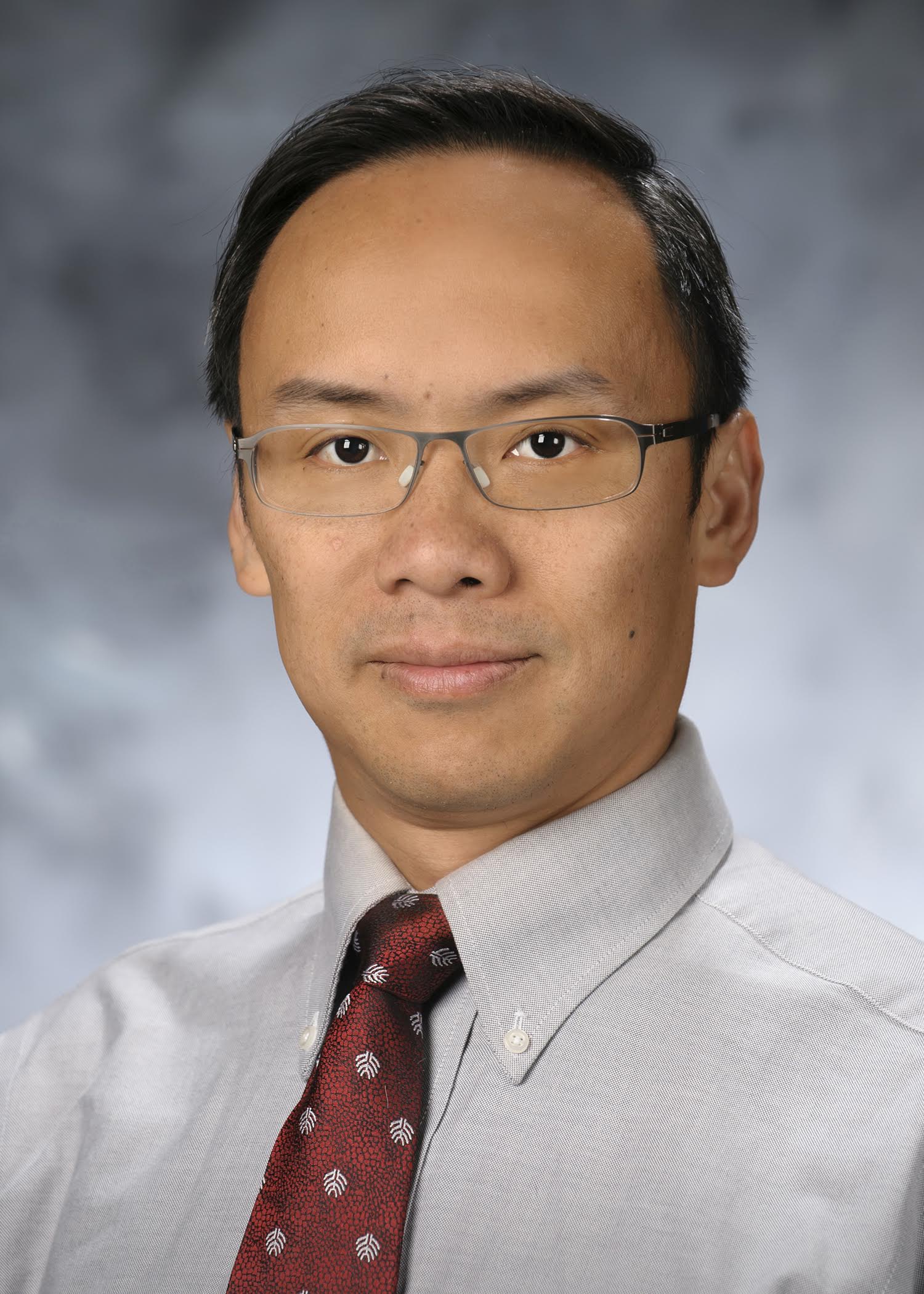}}]{Yuan Xie} received the B.S. degree in Electronic Engineering from Tsinghua University, Beijing, China in 1997, and M.S. and Ph.D. degrees in Electrical Engineering from Princeton University, NJ, USA in 1999 and 2002, respectively. He was an Advisory Engineer with IBM Microelectronic Division, VT, USA from 2002 to 2003. He was a Full Professor with Pennsylvania State University, PA, USA from 2003 to 2014. He was a Visiting Researcher with Interuniversity Microelectronics Centre (IMEC), Leuven, Belgium from 2005 to 2007 and in 2010. He was a Senior Manager and Principal Researcher with AMD Research China Lab, Beijing, China from 2012 to 2013. He is currently a Professor with the Department of Electrical and Computer Engineering, University of California at Santa Barbara, CA, USA. His interests include VLSI design, Electronics Design Automation (EDA), computer architecture, and embedded systems. 

Dr. Xie is an expert in computer architecture who has been inducted to \emph{ISCA}/\emph{MICRO}/\emph{HPCA} Hall of Fame and IEEE/AAAS/ACM Fellow. He was a recipient of Best Paper Awards (\emph{HPCA} 2015, \emph{ICCAD} 2014, \emph{GLSVLSI} 2014, \emph{ISVLSI} 2012, \emph{ISLPED} 2011, \emph{ASPDAC} 2008, \emph{ASICON} 2001) and Best Paper Nominations (\emph{ASPDAC} 2014, \emph{MICRO} 2013, \emph{DATE} 2013, \emph{ASPDAC} 2010-2009, \emph{ICCAD} 2006), the 2016 IEEE Micro Top Picks Award, the 2008 IBM Faculty Award, and the 2006 NSF CAREER Award. He served as the TPC Chair for \emph{ICCAD} 2019, \emph{HPCA} 2018, \emph{ASPDAC} 2013, \emph{ISLPED} 2013, and \emph{MPSOC} 2011, a committee member in IEEE Design Automation Technical Committee (DATC), the Editor-in-Chief for \emph{ACM Journal on Emerging Technologies in Computing Systems}, and an Associate Editor for \emph{ACM Transactions on Design Automations for Electronics Systems}, \emph{IEEE Transactions on Computers}, \emph{IEEE Transactions on Computer-Aided Design of Integrated Circuits and Systems}, \emph{IEEE Transactions on VLSI, IEEE Design and Test of Computers}, and \emph{IET Computers and Design Techniques}. Through extensive collaboration with industry partners (e.g. AMD, HP, Honda, IBM, Intel, Google, Samsung, IMEC, Qualcomm, Alibaba, Seagate, Toyota, etc.), he has helped the transition of research ideas to industry. 
\end{IEEEbiography}

% use section* for acknowledgment
%\ifCLASSOPTIONcompsoc
  % The Computer Society usually uses the plural form
%  \section*{Acknowledgments}
%\else
  % regular IEEE prefers the singular form
%  \section*{Acknowledgment}
%\fi

%The authors would like to thank...

\ifCLASSOPTIONcaptionsoff
  \newpage
\fi

% \begin{IEEEbiography}{Michael Shell}
% Biography text here.
% \end{IEEEbiography}

% if you will not have a photo at all:
% \begin{IEEEbiographynophoto}{John Doe}
% Biography text here.
% \end{IEEEbiographynophoto}

% insert where needed to balance the two columns on the last page with
% biographies
%\newpage

% \begin{IEEEbiographynophoto}{Jane Doe}
% Biography text here.
% \end{IEEEbiographynophoto}

\clearpage

\appendices
\section{Proofs}

% Characterizing well-behaved neural networks

% Main theory

\subsection{A review of free probability: useful definitions and theorems}
Before starting, we review several transforms of free probability theory that will be used in later proofs following Mingo \& Speicher (2017) \cite{mingo2017free}, Ling \& Qiu (2018) \cite{ling2018spectrum}, and Pennington et al. (2017) \cite{pennington2017resurrecting}.

\begin{definition}
\textbf{(Stieltjes transform)} Let $\nu$ be a probability measure on $\mathbb{R}$ whose probability density function is $\rho_X$ and for $z \notin \mathbb{R}$, the stieltjes transform of $\rho_X$ is defined as
\begin{equation}
    G_X(z) = \int_{\mathbb{R}}\frac{\rho_X(t)}{z-t}dt.
\label{equ:stieltjes transform}
\end{equation}
$G_{\mathbf{X}}(z)$ can be expanded into a power series with coefficients as $G_{\mathbf{X}}(z)=\sum_{k=0}^{\infty}\frac{\alpha_k(\lambda_{\mathbf{X}})}{z^{k+1}}$ where $\alpha_k(\lambda_{\mathbf{X}})=\int  \rho_{\mathbf{X}}(\lambda)\lambda^k d\lambda$, which is the $k^{th}$ moment of $\lambda_{\mathbf{X}}$. \cite{ling2018spectrum, pennington2017resurrecting}
\label{def:stieltjes_transform}
\end{definition}

\begin{definition}
\textbf{(Moment generating function)} Given a stieltjes transform of $\rho_X$, the moment generating function (M-transform) is defined as
\begin{equation}
    M_X(z) = zG_X(z) - 1.
\label{equ:M-transform}
\end{equation}
\label{def:M-transform}
\end{definition}

\begin{definition}
\textbf{(S-transform)} Given the inverse function of $M^{-1}_X(M_X(z))=z$, which can be obtained through the Lagrange inversion theorem: $M^{-1}_X(z) = \frac{\alpha_1}{z} + \frac{\alpha_2}{\alpha_1}+...$, the S-transform is defined as
\begin{equation}
    S_X(z)=\frac{1+z}{zM_X^{-1}(z)}.
\label{equ:S-transform}
\end{equation}
\label{def:S-transform}
\end{definition}

\begin{definition}
\textbf{(R-transform)} Given the inverse function of $G^{-1}_X(G_X(z))=z$, the R-transform is defined as
\begin{equation}
    R_X(z) = G^{-1}_X(z) - \frac{1}{z}.
\label{equ:R-transform}
\end{equation}
\label{def:R-transform}
\end{definition}

One of the most important properties of S-transform and R-transform is that for any two freely independent non-commutative random variables $\mathbf{X,Y}$, the S-transform and R-transform have the following definite (convolution) properties \cite{ling2018spectrum}
\begin{equation}
    S_{XY}(z) = S_X(z)S_Y(z),~~R_{XY}(z) = R_X(z)R_Y(z).
\label{equ:S_R_transformation_property}
\end{equation}

While free probability handles the spectrum density of random matrix $\mathbf{X}$, we are actually interested in the property of $\phi(\mathbf{X}^k)$. As a result, we build a bridge between them with the following lemma:

\begin{lemma}
Let $\mathbf{X}$ be a wide enough real symmetric random matrix, then we have $\alpha_k(\lambda_{\mathbf{X}})=\phi(\mathbf{X}^k)$.
\label{lemma:bridge_eig_phi}
\end{lemma}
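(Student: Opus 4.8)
The plan is to reduce $\phi(\mathbf{X}^k)$ to an average of powers of the eigenvalues via the spectral theorem, and then identify that average with the $k^{th}$ moment of the eigenvalue density $\rho_{\mathbf{X}}$.

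First I would use that $\mathbf{X}$ is real symmetric, so the spectral theorem yields an orthogonal matrix $\mathbf{Q}$ and a diagonal matrix $\mathbf{\Lambda}=\mathrm{diag}(\lambda_1,\dots,\lambda_n)$ of its eigenvalues with $\mathbf{X}=\mathbf{Q}^T\mathbf{\Lambda Q}$. Then $\mathbf{X}^k=\mathbf{Q}^T\mathbf{\Lambda}^k\mathbf{Q}$, and since $\mathbf{X}$ (hence $\mathbf{Q}$) is square, the normalized trace is cyclic-invariant, giving $tr(\mathbf{X}^k)=tr(\mathbf{\Lambda}^k)=\tfrac1n\sum_{i=1}^n\lambda_i^k$. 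Taking expectations, $\phi(\mathbf{X}^k)=E\!\left[\tfrac1n\sum_{i=1}^n\lambda_i^k\right]$, i.e. the expected $k^{th}$ moment of the empirical spectral measure $\hat\rho_n:=\tfrac1n\sum_i\delta_{\lambda_i}$.

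Second I would invoke the ``wide enough'' hypothesis: as the dimension $n\to\infty$, the (expected) empirical spectral measure $\hat\rho_n$ converges weakly to the limiting eigenvalue distribution with density $\rho_{\mathbf{X}}$, and the associated moments converge, so $E\!\left[\tfrac1n\sum_i\lambda_i^k\right]\to\int_{\mathbb{R}}\lambda^k\rho_{\mathbf{X}}(\lambda)\,d\lambda$. By the power-series expansion of the Stieltjes transform in Definition \ref{def:stieltjes_transform}, the right-hand side is precisely $\alpha_k(\lambda_{\mathbf{X}})$, which yields $\phi(\mathbf{X}^k)=\alpha_k(\lambda_{\mathbf{X}})$ (exactly in the limit, and approximately for any sufficiently wide $\mathbf{X}$).

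The main obstacle is this second step: the identity is intrinsically asymptotic, so one needs convergence of the empirical spectral distribution and of its moments to a deterministic limit (e.g.\ via a bounded-moment / uniform-integrability argument, in the spirit of the method of moments), together with the tacit assumption that $\rho_{\mathbf{X}}$ is well defined with finite moments of the relevant orders. Accordingly the statement should be read in the same ``wide enough'' spirit used elsewhere in the framework: for finite but large $n$ the two quantities agree up to $O(1/n)$-type corrections stemming from fluctuations of $\hat\rho_n$ about $\rho_{\mathbf{X}}$.
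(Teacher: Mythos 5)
Your proposal is correct and follows essentially the same route as the paper: eigendecomposition of the real symmetric $\mathbf{X}$, identification of $\phi(\mathbf{X}^k)$ with the expected normalized sum $\frac{1}{N}\sum_i \lambda_i^k$, and the ``wide enough'' (large-dimension) limit equating this with the $k^{th}$ moment of the spectral density $\rho_{\mathbf{X}}$. You merely make explicit the convergence and expectation steps that the paper's one-line proof leaves implicit, which is fine.
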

\begin{proof}
Since $\mathbf{X}$ is a real symmetric random matrix, with eigendecomposition, it can be decomposed as $\mathbf{X}=\mathbf{Q\Lambda Q}^T$, where $\mathbf{\Lambda}=diag(\lambda_1, \lambda_2, ...)$ is a diagonal matrix whose elements are the eigenvalues of $\mathbf{X}$. As a result, we have
\begin{equation}
    \alpha_k(\lambda_{\mathbf{X}}) = \lim_{N\rightarrow \infty}\frac{1}{N}\sum_{i=1}^N\lambda_i^k = \phi(\mathbf{X}^k).
\end{equation}
\end{proof}

\subsection{Proof of Theorem \ref{theorem:multiplication}}\label{proof:multiplication}

\textbf{Theorem \ref{theorem:multiplication}. }\textit{\textbf{(Multiplication).} Given $\mathbf{J} := \Pi_{i=L}^1\mathbf{J_i}$, where $\{\mathbf{J_i}\in\mathbb{R}^{m_{i}\times m_{i-1}}\}$ is a series of independent random matrices.}

\textit{If $(\Pi_{i=L}^1\mathbf{J_i})(\Pi_{i=L}^1\mathbf{J_i})^T$ is at least $1^{st}$ moment unitarily invariant, we have}
\begin{equation}
    \phi\left((\Pi_{i=L}^1\mathbf{J_i})(\Pi_{i=L}^1\mathbf{J_i})^T\right) = \Pi_{i}\phi(\mathbf{J_iJ_i}^T).
\end{equation}
\textit{If $(\Pi_{i=L}^1\mathbf{J_i})(\Pi_{i=L}^1\mathbf{J_i})^T$ is at least $2^{nd}$ moment unitarily invariant, we have}
\begin{equation}
\begin{split}
    &\varphi((\Pi_{i=L}^1\mathbf{J_i})(\Pi_{i=L}^1\mathbf{J_i})^T)=\\
    &~~~~~\phi^2\left((\Pi_{i=L}^1\mathbf{J_i})(\Pi_{i=L}^1\mathbf{J_i})^T\right) \sum_{i}\frac{m_{L}}{m_{i}}\frac{\varphi(\mathbf{J_iJ_i}^T)}{\phi^2(\mathbf{J_iJ_i}^T)}.
\end{split}
\end{equation}
\rule[0pt]{0.49\textwidth}{0.05em}

To begin with, we prove the following lemmas:

\begin{lemma}
Let $[f(z)]_{@k}$ denote the truncated series of the power series expanded from $f(z)$: i.e. given $f(z) = f_0 + f_1z+...$, we have $[f(z)]_{@2} = f_0 + f_1z$. 

Let $\mathbf{A}$, $\mathbf{B}$ be two hermitian matrices, if for $i\in\{1,..,k\}$, $k\in\{1,2\}$, we have $\phi(\mathbf{A}^k) = \phi(\mathbf{B}^k)$, then we have
\begin{equation}
    \left[S_{\mathbf{A}}(z)\right]_{@k} =  \left[S_{\mathbf{B}}(z)\right]_{@k}.
\end{equation}
\label{lemma:equivalent_S_transform}
\end{lemma}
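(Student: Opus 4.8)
The plan is to unwind the definitions of the S-transform in terms of the moments $\alpha_k(\lambda)=\phi(\mathbf{A}^k)$ and observe that the first $k$ coefficients of the power series of $S_{\mathbf{A}}(z)$ depend only on $\alpha_1,\dots,\alpha_k$. Since $S$ is built by composing the Stieltjes transform, the M-transform, a functional (Lagrange) inversion, and a final rational manipulation, I would track the order to which each of these operations is determined by a given number of leading moments.

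First I would recall from Definition~\ref{def:stieltjes_transform} that $G_{\mathbf{A}}(z)=\sum_{k\ge 0}\alpha_k z^{-(k+1)}$ with $\alpha_0=1$ (since $\rho_{\mathbf A}$ is a probability measure) and $\alpha_k=\phi(\mathbf A^k)$ by Lemma~\ref{lemma:bridge_eig_phi}. Then $M_{\mathbf A}(z)=zG_{\mathbf A}(z)-1=\sum_{k\ge 1}\alpha_k z^{-k}$, so in the variable $w=1/z$ we have $M_{\mathbf A}(1/w)=\alpha_1 w+\alpha_2 w^2+\cdots$, a power series in $w$ whose first $k$ coefficients are exactly $\alpha_1,\dots,\alpha_k$. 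Composition-inverting this series: the inverse series $M_{\mathbf A}^{-1}$, written as $M_{\mathbf A}^{-1}(z)=\frac{\alpha_1}{z}+\frac{\alpha_2}{\alpha_1}+\cdots$ (as in Definition~\ref{def:S-transform}), has the property that its coefficients up to a given order are polynomial functions of $\alpha_1,\dots,\alpha_k$ only — this is the standard fact that functional inversion of a formal power series is "triangular" in the coefficients, so the $j$-th coefficient of the inverse depends only on the first $j$ coefficients of the original. Hence if $\phi(\mathbf A^i)=\phi(\mathbf B^i)$ for $i=1,\dots,k$, the series $M_{\mathbf A}^{-1}$ and $M_{\mathbf B}^{-1}$ agree up to the relevant order.

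Next I would feed this into $S_{\mathbf A}(z)=\frac{1+z}{z\,M_{\mathbf A}^{-1}(z)}$. Since $z M_{\mathbf A}^{-1}(z)=\alpha_1+\frac{\alpha_2}{\alpha_1}z+\cdots$ is a power series in $z$ with nonzero constant term $\alpha_1$ (assuming $\alpha_1\neq 0$, which holds for the Jacobian Gram matrices in question since $\phi(\mathbf J\mathbf J^T)>0$), its reciprocal is again a well-defined power series whose coefficients up to order $k-1$ depend only on the coefficients of $zM^{-1}$ up to order $k-1$, i.e. only on $\alpha_1,\dots,\alpha_k$. Multiplying by the polynomial $1+z$ preserves this. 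Therefore the truncation $[S_{\mathbf A}(z)]_{@k}$ is a function of $\alpha_1(\lambda_{\mathbf A}),\dots,\alpha_k(\lambda_{\mathbf A})=\phi(\mathbf A),\dots,\phi(\mathbf A^k)$ alone, and likewise for $\mathbf B$; equality of these moments for $i\le k$ gives $[S_{\mathbf A}(z)]_{@k}=[S_{\mathbf B}(z)]_{@k}$. The case $k=1$ is just $[S_{\mathbf A}(z)]_{@1}=1/\alpha_1$, and the case $k=2$ gives the first two coefficients explicitly; I would spell those out to make the indexing conventions unambiguous but not belabor them.

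The main obstacle is bookkeeping rather than conceptual: I must be careful about the exact indexing in the statement (whether $[\,\cdot\,]_{@k}$ means "keep terms up through $z^{k-1}$" or "through $z^{k}$", as suggested by the example $[f]_{@2}=f_0+f_1 z$ — two terms), and correspondingly how many leading moments are needed, and about the harmless nonvanishing assumption $\alpha_1\neq 0$ needed to invert $M$ and to form the reciprocal in the definition of $S$. I would state these as a short remark and then give the triangularity argument for power-series inversion, either citing it as standard (Lagrange inversion, as already invoked in Definition~\ref{def:S-transform}) or sketching it in one line: if $w(v)=c_1 v+c_2 v^2+\cdots$ with $c_1\neq 0$ and $v(w)$ its inverse, then $v(w)$'s coefficient of $w^j$ is determined by $c_1,\dots,c_j$, proved by matching coefficients in $v(w(v))=v$ inductively. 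With that in hand the lemma follows immediately.
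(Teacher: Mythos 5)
Your proposal is correct and follows essentially the same route as the paper: expand $M_{\mathbf{X}}^{-1}$ via Lagrange inversion, form $S_{\mathbf{X}}(z)=\frac{1+z}{zM_{\mathbf{X}}^{-1}(z)}$, and observe that the coefficients up to order $k$ are functions of $\alpha_1,\dots,\alpha_k$ alone, which agree for $\mathbf{A}$ and $\mathbf{B}$ by hypothesis. The only difference is that you make the triangularity of series inversion and the $\alpha_1\neq 0$ assumption explicit, which the paper leaves implicit.
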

\begin{proof}
Let's consider a hermitian matrix $\mathbf{X}$ whose spectrum density is $\rho_{X}(\lambda)$, following Definition \ref{def:S-transform}, we have
\begin{equation}
    M_{\mathbf{X}}^{-1} = \frac{\alpha_1(\lambda_{\mathbf{X}})}{z} + \frac{\alpha_2(\lambda_{\mathbf{X}})}{\alpha_1(\lambda_{\mathbf{X}})} + ...
\end{equation}
Therefore, we have
\begin{equation}
    S_{\mathbf{X}}(z) = \frac{1}{\alpha_1(\lambda_{\mathbf{X}})} + \left(\frac{1}{\alpha_1(\lambda_{\mathbf{X}})} - \frac{\alpha_2(\lambda_{\mathbf{X}})}{\left(\alpha_1(\lambda_{\mathbf{X}})\right)^3}\right)z + ...
\label{equ:expand_S}
\end{equation}
As $[S_{\mathbf{X}}]_{@k}$ is only determined by $\alpha_1(\lambda_{\mathbf{X}}),...,\alpha_k(\lambda_{\mathbf{X}})$, and with Lemma \ref{lemma:bridge_eig_phi}, $\alpha_i(\lambda_{\mathbf{X}})=\phi(\mathbf{X}^k)$, we have
\begin{equation}
    \left[S_{\mathbf{A}}(z)\right]_{@k} =  \left[S_{\mathbf{B}}(z)\right]_{@k}.
\end{equation}
\end{proof}

\begin{lemma}
Let $\mathbf{A}\in \mathbb{R}^{m\times n},\mathbf{B} \in \mathbb{R}^{n\times q}$ be two random matrices, $\mathbf{A}^T\mathbf{A}$ and $\mathbf{BB}^T$ are freely independent, then we have

\begin{equation}
\begin{split}
    & \phi(\mathbf{ABB}^T\mathbf{A}^T) = \phi(\mathbf{AA}^T)\phi(\mathbf{BB}^T),\\
    & \varphi(\mathbf{ABB}^T\mathbf{A}^T) = \frac{m}{n}\phi^2(\mathbf{AA}^T)\varphi(\mathbf{BB}^T)\! +\! \phi^2(\mathbf{BB}^T)\varphi(\mathbf{AA}^T)
\end{split}
\end{equation}
\label{lemma:cyclic_variant}
\end{lemma}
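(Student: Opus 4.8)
The plan is to prove Lemma \ref{lemma:cyclic_variant} by relating the normalized-trace moments of the rectangular products to $S$-transforms, exploiting the freeness hypothesis, and then carefully tracking the aspect-ratio factors that arise because the normalized trace is \emph{not} cyclic-invariant for rectangular matrices. First I would set $\mathbf{P} := \mathbf{A}^T\mathbf{A} \in \mathbb{R}^{n\times n}$ and $\mathbf{Q} := \mathbf{B}\mathbf{B}^T \in \mathbb{R}^{n\times n}$, which are freely independent by hypothesis, and observe that $\mathbf{A}\mathbf{B}\mathbf{B}^T\mathbf{A}^T = \mathbf{A}\mathbf{Q}\mathbf{A}^T$ is an $m\times m$ matrix while $\mathbf{A}^T\mathbf{A}\mathbf{Q} = \mathbf{P}\mathbf{Q}$ is $n\times n$. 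The key bookkeeping identity is that for any $k\ge 1$, $\mathrm{Tr}\big((\mathbf{A}\mathbf{Q}\mathbf{A}^T)^k\big) = \mathrm{Tr}\big((\mathbf{P}\mathbf{Q})^k\big)$ as \emph{unnormalized} traces (genuine cyclic invariance), so $\phi\big((\mathbf{A}\mathbf{Q}\mathbf{A}^T)^k\big) = \tfrac{n}{m}\,\phi\big((\mathbf{P}\mathbf{Q})^k\big)$ after dividing by the respective dimensions. This is precisely the ``rectangular correction'' the authors emphasize in the introduction.

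Next I would compute $\phi\big((\mathbf{P}\mathbf{Q})^k\big)$ for $k=1,2$ using freeness. For $k=1$, freeness gives $\phi(\mathbf{P}\mathbf{Q}) = \phi(\mathbf{P})\phi(\mathbf{Q})$ directly. Since $\phi(\mathbf{P}) = \phi(\mathbf{A}^T\mathbf{A}) = \tfrac{m}{n}\phi(\mathbf{A}\mathbf{A}^T)$ (again the unnormalized trace is cyclic) and $\phi(\mathbf{Q}) = \phi(\mathbf{B}\mathbf{B}^T)$, we get $\phi(\mathbf{P}\mathbf{Q}) = \tfrac{m}{n}\phi(\mathbf{A}\mathbf{A}^T)\phi(\mathbf{B}\mathbf{B}^T)$, and multiplying by $\tfrac{n}{m}$ yields $\phi(\mathbf{A}\mathbf{B}\mathbf{B}^T\mathbf{A}^T) = \phi(\mathbf{A}\mathbf{A}^T)\phi(\mathbf{B}\mathbf{B}^T)$, the first claim. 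For $k=2$ I would use the freeness second-moment formula (equivalently, the $S$-transform convolution $S_{\mathbf{P}\mathbf{Q}} = S_{\mathbf{P}}S_{\mathbf{Q}}$ truncated to second order as in Lemma \ref{lemma:equivalent_S_transform}, or the direct moment-cumulant expansion): for free $\mathbf{P},\mathbf{Q}$,
\begin{equation}
\phi\big((\mathbf{P}\mathbf{Q})^2\big) = \phi(\mathbf{P}^2)\phi(\mathbf{Q})^2 + \phi(\mathbf{P})^2\phi(\mathbf{Q}^2) - \phi(\mathbf{P})^2\phi(\mathbf{Q})^2.
\end{equation}
Then $\varphi(\mathbf{P}\mathbf{Q}) = \phi\big((\mathbf{P}\mathbf{Q})^2\big) - \phi(\mathbf{P}\mathbf{Q})^2 = \varphi(\mathbf{P})\phi(\mathbf{Q})^2 + \phi(\mathbf{P})^2\varphi(\mathbf{Q})$. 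I would now substitute the rectangular conversions: $\varphi(\mathbf{P}) = \phi(\mathbf{P}^2) - \phi(\mathbf{P})^2 = \tfrac{m}{n}\phi\big((\mathbf{A}\mathbf{A}^T)^2\big) - \tfrac{m^2}{n^2}\phi(\mathbf{A}\mathbf{A}^T)^2$, being careful that $\phi(\mathbf{P}^2) = \tfrac{m}{n}\phi\big((\mathbf{A}\mathbf{A}^T)^2\big)$ (cyclicity of unnormalized trace on $(\mathbf{A}^T\mathbf{A})^2 = \mathbf{A}^T(\mathbf{A}\mathbf{A}^T)\mathbf{A}$, an $n\times n$ matrix equated to rearranged $m\times m$ blocks). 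Finally, multiplying $\varphi(\mathbf{P}\mathbf{Q})$ by the factor needed to pass from $\phi\big((\mathbf{P}\mathbf{Q})^2\big)$ back to $\phi\big((\mathbf{A}\mathbf{B}\mathbf{B}^T\mathbf{A}^T)^2\big)$ (namely $\tfrac{n}{m}$) and collecting terms, the $\tfrac{m}{n}$ and $\tfrac{n}{m}$ factors combine to give $\varphi(\mathbf{A}\mathbf{B}\mathbf{B}^T\mathbf{A}^T) = \tfrac{m}{n}\phi^2(\mathbf{A}\mathbf{A}^T)\varphi(\mathbf{B}\mathbf{B}^T) + \phi^2(\mathbf{B}\mathbf{B}^T)\varphi(\mathbf{A}\mathbf{A}^T)$, matching the stated formula.

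The main obstacle I anticipate is not the freeness algebra (which is standard) but getting every aspect-ratio factor exactly right: one must consistently distinguish $\phi$ over $m\times m$ matrices from $\phi$ over $n\times n$ matrices, remember that the true cyclic invariant is the \emph{unnormalized} trace, and verify that the conversion factor multiplying a given word is $\tfrac{(\text{source dim})}{(\text{target dim})}$. A clean way to avoid errors is to do all computations with unnormalized traces $\mathrm{Tr}$ first, where cyclicity holds unconditionally, invoke freeness at the level of the $n\times n$ matrices $\mathbf{P}$ and $\mathbf{Q}$, and only normalize at the very end; I would also sanity-check against the square case $m=n$, where the asymmetry between the two terms must disappear up to the known $S$-transform result, and against Lemma \ref{lemma:equivalent_S_transform} to confirm the second-moment freeness formula is applied correctly.
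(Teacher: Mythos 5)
Your proposal is correct, and it reaches the paper's formulas by a slightly different and more elementary route. The paper's proof shares your first step verbatim — using cyclicity of the \emph{unnormalized} trace to get $m\,\phi\left((\mathbf{A}\mathbf{B}\mathbf{B}^T\mathbf{A}^T)^k\right)=n\,\phi\left((\mathbf{A}^T\mathbf{A}\mathbf{B}\mathbf{B}^T)^k\right)$ and $n\,\phi\left((\mathbf{A}^T\mathbf{A})^k\right)=m\,\phi\left((\mathbf{A}\mathbf{A}^T)^k\right)$ — but then exploits freeness through the multiplicativity of S-transforms, $S_{\mathbf{A}^T\mathbf{A}\mathbf{B}\mathbf{B}^T}=S_{\mathbf{A}^T\mathbf{A}}S_{\mathbf{B}\mathbf{B}^T}$, truncating each series at second order (their Lemma on truncated S-transforms) and solving the resulting pair of coefficient equations for $\phi$ and $\varphi$. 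You instead use freeness directly at the level of mixed moments of $\mathbf{P}=\mathbf{A}^T\mathbf{A}$ and $\mathbf{Q}=\mathbf{B}\mathbf{B}^T$, i.e.\ $\phi(\mathbf{P}\mathbf{Q})=\phi(\mathbf{P})\phi(\mathbf{Q})$ and the standard centering identity $\phi(\mathbf{P}\mathbf{Q}\mathbf{P}\mathbf{Q})=\phi(\mathbf{P}^2)\phi(\mathbf{Q})^2+\phi(\mathbf{P})^2\phi(\mathbf{Q}^2)-\phi(\mathbf{P})^2\phi(\mathbf{Q})^2$, which gives $\varphi(\mathbf{P}\mathbf{Q})=\varphi(\mathbf{P})\phi(\mathbf{Q})^2+\phi(\mathbf{P})^2\varphi(\mathbf{Q})$; substituting the aspect-ratio conversions then reproduces exactly the stated result (I checked the bookkeeping: the second moment of the $m\times m$ product picks up $\tfrac{n}{m}$, the conversions $\phi(\mathbf{P})=\tfrac{m}{n}\phi(\mathbf{A}\mathbf{A}^T)$, $\phi(\mathbf{P}^2)=\tfrac{m}{n}\phi\left((\mathbf{A}\mathbf{A}^T)^2\right)$ produce precisely the asymmetric $\tfrac{m}{n}$ in front of $\varphi(\mathbf{B}\mathbf{B}^T)$). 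Your route buys simplicity: it avoids the Stieltjes/S-transform expansions and Lagrange-inversion-style series matching, needing only the alternating-centered-word computation for two free elements. The paper's route buys uniformity: the same truncated-S-transform machinery is reused inductively for the $L$-fold product in the Multiplication theorem and parallels the R-transform treatment of the Addition theorem. One small caution in your write-up: you cannot literally ``multiply $\varphi(\mathbf{P}\mathbf{Q})$ by $\tfrac{n}{m}$'' to pass to $\varphi(\mathbf{A}\mathbf{B}\mathbf{B}^T\mathbf{A}^T)$, because the second moment converts with factor $\tfrac{n}{m}$ while the squared first moment converts with $\left(\tfrac{n}{m}\right)^2\cdot\left(\tfrac{m}{n}\right)^2=1$; the two moments must be converted separately and the variance reassembled afterwards — which is exactly what your proposed safeguard (do everything with $\mathrm{Tr}$ and normalize only at the end) accomplishes, so the plan as stated goes through.
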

\begin{proof}
Firstly, with the cyclic-invariant property of trace operator, we have $Tr\left((\mathbf{ABB}^T\mathbf{A}^T)^k\right)=Tr\left((\mathbf{A}^T\mathbf{ABB}^T)^k\right)$. As $\phi:=E[tr]$, we have
\begin{equation}
    m\phi\left((\mathbf{ABB}^T\mathbf{A}^T)^k\right) = n\phi\left((\mathbf{A}^T\mathbf{ABB}^T)^k\right).
\end{equation}
As a result, we have
\begin{equation}
\begin{split}
    & \left[S_{\mathbf{ABB}^T\mathbf{A}^T}\right(z)]_{@1}=\frac{m}{n}\left[S_{\mathbf{A}^T\mathbf{ABB}^T}\right]_{@1},\\
    & \left[S_{\mathbf{ABB}^T\mathbf{A}^T}\right(z)]_{@2} = \frac{m}{n}\left[ S_{\mathbf{A}^T\mathbf{ABB}^T}\right]_{@2}\\
    & ~~~~~~~~~~~~~~~~~~~~~~~~~~~~~~~~~+ \left(\frac{n}{m}-1\right)\frac{\phi((\mathbf{\mathbf{ABB}^T\mathbf{A}^T})^2)}{\phi^3(\mathbf{\mathbf{ABB}^T\mathbf{A}^T})}z.
\end{split}
\label{equ:expand_S_1_2}
\end{equation}
As $\mathbf{A}^T\mathbf{A}$ and $\mathbf{BB}^T$ are freely independent, Equation \eqref{equ:S_R_transformation_property} yields
\begin{equation}
    S_{\mathbf{A}^T\mathbf{ABB}^T} = S_{\mathbf{A}^T\mathbf{A}}S_{\mathbf{BB}^T}.
\label{equ:expand_S_3}
\end{equation}

Similarly, as $\forall 0<k\in \mathbb{N}\le 2,~~n\phi\left((\mathbf{A}^T\mathbf{A})^k\right)=m\phi\left((\mathbf{AA}^T)^k\right)$, we have
\begin{equation}
\begin{split}
    & \left[S_{\mathbf{A}^T\mathbf{A}}\right(z)]_{@1}=\frac{n}{m}\frac{1}{\phi(\mathbf{AA}^T)},\\
    & \left[S_{\mathbf{A}^T\mathbf{A}}\right(z)]_{@2} = \\
    & \frac{n}{m}\left[\frac{1}{\phi(\mathbf{AA}^T)} + \left(\frac{1}{\phi(\mathbf{AA}^T)} - \frac{n}{m} \frac{\phi\left((\mathbf{AA}^T)^2\right)}{\phi^3(\mathbf{AA}^T)}\right)z\right].
\end{split}
\label{equ:expand_S_4}
\end{equation}
For the sake of simplicity, we denote $\widetilde{\mathcal{X}}:=\phi(\mathbf{XX}^T)$, $\widetilde{\mathcal{X}}_2 = \phi\left((\mathbf{XX}^T)^2\right)$. With Equation \eqref{equ:expand_S}, Equation \eqref{equ:expand_S_1_2}, Equation \eqref{equ:expand_S_3} and Equation \eqref{equ:expand_S_4}, we can get the following equations:
\begin{equation}
    \begin{split}
        &\frac{1}{\widetilde{\mathcal{AB}}} = \frac{1}{\widetilde{\mathcal{A}}\widetilde{\mathcal{B}}},\\
        &\frac{1}{\widetilde{\mathcal{AB}}} - \frac{n\widetilde{\mathcal{AB}}_2}{m\widetilde{\mathcal{AB}}^3} = \frac{1}{\widetilde{\mathcal{A}}}\left(\frac{1}{\widetilde{\mathcal{B}}} - \frac{\widetilde{\mathcal{B}}_2}{\widetilde{\mathcal{B}}^3}\right) + \frac{1}{\widetilde{\mathcal{B}}}\left(\frac{1}{\widetilde{\mathcal{A}}} - \frac{n\widetilde{\mathcal{A}}_2}{m\widetilde{\mathcal{A}}^3}\right).
    \end{split}
\end{equation}
whose solution is
\begin{equation}
\begin{split}
    & \phi(\mathbf{ABB}^T\mathbf{A}^T) = \phi(\mathbf{AA}^T)\phi(\mathbf{BB}^T),\\
    & \varphi(\mathbf{ABB}^T\mathbf{A}^T) = \frac{m}{n}\phi^2(\mathbf{AA}^T)\varphi(\mathbf{BB}^T)\! +\! \phi^2(\mathbf{BB}^T)\varphi(\mathbf{AA}^T).
\end{split}
\end{equation}
\end{proof}

With the above lemmas, $\phi\left(\left(\Pi_{i}\mathbf{U_iJ_i})(\Pi_{i}\mathbf{U_iJ_i})^T\right)^k\right)$ can be derived with Mathematical Induction:

\textcircled{1} For $\mathbf{U_LJ_L}\in\mathbb{R}^{m_L\times m_{L-1}}, \mathbf{U_{L-1}J_{L-1}} \in \mathbb{R}^{m_{L-1}\times m_{L-2}}$ where $\mathbf{U_i}$ is haar unitary matrix independent from $\mathbf{J_{L}}$, with Lemma \ref{lemma:unitary_invariant-free}, $\mathbf{J_L}^T\mathbf{J_L}$ is freely independent with $\mathbf{\mathbf{U_{L-1}J_{L-1}J_{L-1}}^T\mathbf{U_{L-1}}^T}$. As a result, with Lemma \ref{lemma:cyclic_variant}, we have
\begin{equation}
\begin{split}
    & \phi\left((\mathbf{U_LJ_LU_{L-1}J_{L-1}})(\mathbf{U_LJ_LU_{L-1}J_{L-1}})^T\right) =\\ 
    & ~~~~~~~~~~~~~~~~~~~~~~~~~~~~~~~~~~~~~~~~~~~~~~~~~\phi(\mathbf{J_LJ_L}^T)\phi(\mathbf{J_{L-1}J_{L-1}}^T),\\
    & \frac{\varphi((\mathbf{U_LJ_LU_{L-1}J_{L-1}})(\mathbf{U_LJ_LU_{L-1}J_{L-1}})^T)}{\phi^2\left((\mathbf{U_LJ_LU_{L-1}J_{L-1}})(\mathbf{U_LJ_LU_{L-1}J_{L-1}})^T\right)} =\\
    & ~~~~~~~~~~~~~~~~~~~~\frac{m_L}{m_{L-1}}\frac{\varphi(\mathbf{J_{L-1}J_{L-1}}^T)}{\phi^2(\mathbf{J_{L-1}J_{L-1}}^T)} + \frac{m_L}{m_L}\frac{\varphi(\mathbf{J_LJ_L}^T)}{\phi^2(\mathbf{J_LJ_L}^T)}.
\end{split}
\end{equation}

\textcircled{2} Assuming that for $\mathbf{J_i}\in \mathbb{R}^{m_{i}\times m_{i-1}}$, we have
\begin{equation}
\begin{split}
    & \phi\left((\Pi_{i=L}^n\mathbf{U_iJ_i})(\Pi_{i=L}^n\mathbf{U_iJ_i})^T\right) = \Pi_{i=L}^n\phi(\mathbf{J_iJ_i}^T),\\
    & \frac{\varphi((\Pi_{i=L}^n\mathbf{U_iJ_i})(\Pi_{i=L}^n\mathbf{U_iJ_i})^T)}{\phi^2\left((\Pi_{i=L}^n\mathbf{U_iJ_i})(\Pi_{i=L}^n\mathbf{U_iJ_i})^T\right)} = \sum_{i=1}^n\frac{m_{L}}{m_{i}}\frac{\varphi(\mathbf{J_iJ_i}^T)}{\phi^2(\mathbf{J_iJ_i}^T)}.
\end{split}
\end{equation}
For $\Pi_{i=L}^n\mathbf{U_iJ_i} \in \mathbb{R}^{m_L\times m_{n-1}}$ and $\mathbf{U_{n-1}J_{n-1}} \in \mathbb{R}^{m_{n-1}\times m_{n-2}}$, as $\mathbf{U_{n-1}}$ is a haar unitary matrix independent with $(\Pi_{i=L}^n\mathbf{U_iJ_i})^T(\Pi_{i=L}^n\mathbf{U_iJ_i})$, with Lemma \ref{lemma:cyclic_variant}, we have
\begin{equation}
    \begin{split}
        & \phi\left((\Pi_{i=L}^{n-1}\mathbf{U_iJ_i})(\Pi_{i=L}^{n-1}\mathbf{U_iJ_i})^T\right) = \Pi_{i=L}^{n-1}\phi(\mathbf{J_iJ_i}^T),\\
        & \frac{\varphi((\Pi_{i=L}^{n-1}\mathbf{U_iJ_i})(\Pi_{i=L}^{n-1}\mathbf{U_iJ_i})^T)}{\phi^2\left((\Pi_{i=L}^{n-1}\mathbf{U_iJ_i})(\Pi_{i=L}^{n-1}\mathbf{U_iJ_i})^T\right)} = \sum_{i=L}^{n-1}\frac{m_{L}}{m_{i}}\frac{\varphi(\mathbf{J_iJ_i}^T)}{\phi^2(\mathbf{J_iJ_i}^T)}.
    \end{split}
\end{equation}

At last, \textcircled{1} and \textcircled{2} yield
\begin{equation}
\begin{split}
    & \phi\left((\Pi_{i=L}^1\mathbf{U_iJ_i})(\Pi_{i=L}^1\mathbf{U_iJ_i})^T\right) = \Pi_{i}\phi(\mathbf{J_iJ_i}^T),\\
    & \varphi((\Pi_{i=L}^1\mathbf{U_iJ_i})(\Pi_{i=L}^1\mathbf{U_iJ_i})^T)=\\
    & \phi^2\left((\Pi_{i=L}^1\mathbf{U_iJ_i})(\Pi_{i=L}^1\mathbf{U_iJ_i})^T\right) \sum_{i}\frac{m_{L}}{m_{i}}\frac{\varphi(\mathbf{J_iJ_i}^T)}{\phi^2(\mathbf{J_iJ_i}^T)}.
\end{split}
\end{equation}

According to the prerequisite that $(\Pi_{i=L}^1\mathbf{J_i})(\Pi_{i=L}^1\mathbf{J_i})^T$ is at least $k^{th}$ moment unitarily invariant where $k\in\{1,2\}$, we have $k\in\{1, 2\}$, $\phi\left(\left(\Pi_{i=L}^1\mathbf{J_i})(\Pi_{i=L}^1\mathbf{J_i})^T\right)^k\right)=\phi\left(\left(\Pi_{i=L}^1\mathbf{U_iJ_i})(\Pi_{i=L}^1\mathbf{U_iJ_i})^T\right)^k\right)$, and the theory is proved.

\subsection{Proof of Theorem \ref{theorem:Addition}}\label{proof:addition}

\textbf{Theorem \ref{theorem:Addition}. }\textit{\textbf{(Addition)} Given $\mathbf{J} := \sum_{i}\mathbf{J_i}$, where $\{\mathbf{J_i}\}$ is a series of independent random matrices.}

\textit{If at most one matrix in $\{\mathbf{J_i}\}$ is not a central matrix (Definition \ref{def:central_matrix}), we have}
\begin{equation}
    \phi\left(\mathbf{JJ}^T\right)=\sum_i \phi\left(\mathbf{J_iJ_i}^T\right).
\label{equ:prof:addtion_exp}
\end{equation}

\textit{If $(\sum_{i}\mathbf{J_i})(\sum_{i}\mathbf{J_i})^T$ is at least $2^{nd}$ moment unitarily invariant (Definition \ref{def:moment_unitary_invariant}), and $\mathbf{U_iJ_i}$ is R-diagonal (Definition \ref{def:R-diagonal}), we have}
\begin{equation}
    \varphi\left(\mathbf{JJ}^T\right) = \phi^2\left(\mathbf{JJ}^T\right) + \sum_i \varphi\left(\mathbf{J_iJ_i}^T\right) - \phi^2\left(\mathbf{J_iJ_i}^T\right).
\label{equ:prof:addtion_var}
\end{equation}
\rule[0pt]{0.48\textwidth}{0.05em}

For Equation \eqref{equ:prof:addtion_exp}, we assume that $\mathbf{J_i}\in\mathbb{R}^{m_i\times n_i}$. Since we have
\begin{equation}
    \phi((\sum_{i=1}^L \mathbf{J_i})(\sum_{i=1}^L \mathbf{J_i})^T) = \sum_i\sum_j\phi(\mathbf{J_i}\mathbf{J_j}^T),
\end{equation}
and when $i\neq j$, $\mathbf{J_i}$ is independent of $\mathbf{J_j}$, we have
\begin{equation}
    \phi(\mathbf{J_iJ_j}^T) = \frac{1}{m_i}\sum_p\sum_qE\left[\left[\mathbf{J_i}\right]_{p,q}\right]E\left[\left[\mathbf{J_j}\right]_{p,q}\right].
\end{equation}
As at most one matrix in $\{\mathbf{J_i}\}$ is not a central matrix, $\forall i,j\neq i, E\left[\left[\mathbf{J_i}\right]_{p,q}\right]E\left[\left[\mathbf{J_j}\right]_{p,q}\right]=0$, and we have
\begin{equation}
    \phi((\sum_{i} \mathbf{J_i})(\sum_{i} \mathbf{J_i})^T)=\sum_i\phi(\mathbf{J_i}\mathbf{J_i}^T).
\end{equation}

For Equation \eqref{equ:prof:addtion_var}, we firstly propose the following lemma:
\begin{lemma}
Let $[f(z)]_{@k}$ denote the truncated series of the power series expanded from $f(z)$: i.e. given $f(z) = f_0 + f_1z+...$, we have $[f(z)]_{@2} = f_0 + f_1z$. 

Let $\mathbf{A}$, $\mathbf{B}$ be two hermitian matrices, if $\forall i\in\{1,..,k\}$, $\phi(\mathbf{A}^{i}) = \phi(\mathbf{B}^{i})$, then we have
\begin{equation}
    \left[G_{\mathbf{A}}(z)\right]_{@k+1} =  \left[G_{\mathbf{B}}(z)\right]_{@k+1}.
\end{equation}
\label{lemma:equivalent_G_transform}
\end{lemma}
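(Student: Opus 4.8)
The plan is to imitate the proof of Lemma~\ref{lemma:equivalent_S_transform} almost verbatim, only replacing the $S$-transform by the Stieltjes transform itself, which makes the argument even shorter. First I would recall from Definition~\ref{def:stieltjes_transform} that for any probability measure the Stieltjes transform admits the power-series expansion $G_{\mathbf{X}}(z)=\sum_{j=0}^{\infty}\frac{\alpha_j(\lambda_{\mathbf{X}})}{z^{j+1}}$, so that the truncation $[G_{\mathbf{X}}(z)]_{@k+1}$ — keeping the $k+1$ lowest-order terms in $1/z$ — equals $\sum_{j=0}^{k}\frac{\alpha_j(\lambda_{\mathbf{X}})}{z^{j+1}}$ and is therefore determined entirely by the moments $\alpha_0(\lambda_{\mathbf{X}}),\dots,\alpha_k(\lambda_{\mathbf{X}})$ of the spectral measure.

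Next I would invoke Lemma~\ref{lemma:bridge_eig_phi} to translate these spectral moments into trace moments of the matrices: since $\mathbf{A}$ and $\mathbf{B}$ are hermitian (hence real symmetric in the real case, and wide enough as in the statement of that lemma), one has $\alpha_j(\lambda_{\mathbf{A}})=\phi(\mathbf{A}^{j})$ and $\alpha_j(\lambda_{\mathbf{B}})=\phi(\mathbf{B}^{j})$ for every $j$. For $j=0$ both sides equal $\phi(\mathbf{I})=1$, and for $1\le j\le k$ they coincide by the hypothesis $\phi(\mathbf{A}^{i})=\phi(\mathbf{B}^{i})$. Thus the coefficient lists of the two truncated series agree term by term, giving $[G_{\mathbf{A}}(z)]_{@k+1}=[G_{\mathbf{B}}(z)]_{@k+1}$, which is exactly the claim.

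There is no genuinely hard step; the only point deserving care is the bookkeeping of the truncation index. Because $G_{\mathbf{X}}$ is a series in $1/z$ whose constant-order term $\alpha_0/z$ already carries the trivial moment $\phi(\mathbf{X}^0)=1$, matching the moments up to order $k$ automatically matches the first $k+1$ terms of the $1/z$-expansion; this explains the $k\mapsto k+1$ shift that distinguishes the present lemma from Lemma~\ref{lemma:equivalent_S_transform}, where the $S$-transform is a genuine power series in $z$ whose $k$-th truncation is fixed by $\alpha_1,\dots,\alpha_k$. Once this indexing convention is spelled out, the proof reduces to the two observations above and closes immediately.
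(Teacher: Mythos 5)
Your proposal is correct and follows essentially the same route as the paper: expand $G_{\mathbf{X}}(z)$ in powers of $1/z$ via Definition \ref{def:stieltjes_transform}, observe that the truncation $[G_{\mathbf{X}}(z)]_{@k+1}$ is determined by the moments $\alpha_0,\dots,\alpha_k$, and use Lemma \ref{lemma:bridge_eig_phi} together with the hypothesis $\phi(\mathbf{A}^i)=\phi(\mathbf{B}^i)$ to match the coefficients. Your explicit remark that $\alpha_0=\phi(\mathbf{X}^0)=1$ handles the zeroth coefficient slightly more carefully than the paper, but the argument is the same.
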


\begin{proof}
Consider a hermitian matrix $\mathbf{X}$ whose spectrum density is defined as $\rho_{X}(\lambda)$, following Definition \ref{def:stieltjes_transform}, we have
\begin{equation}
    G_{\mathbf{X}}(z) = \sum_{k=0}^{\infty}\frac{\alpha_k(\lambda_{\mathbf{X}})}{z^{k+1}}.
\end{equation}
As $[G_{\mathbf{X}}]_{@k+1}$ is only determined by $\alpha_1(\lambda_{\mathbf{X}}),...,\alpha_k(\lambda_{\mathbf{X}})$ and with Lemma \ref{lemma:bridge_eig_phi}, we have $\alpha_i(\lambda_{\mathbf{X}})=\phi(\mathbf{X}^i)$, therefore we get
\begin{equation}
    \left[G_{\mathbf{A}}(z)\right]_{@k+1} =  \left[G_{\mathbf{B}}(z)\right]_{@k+1}.
\end{equation}
\end{proof}

As argued in Ling \& Qiu (2018) \cite{ling2018spectrum}, the elements of the expansion of $\left(\left(\sum_i\mathbf{J_i}\right)\left(\sum_i\mathbf{J_i}\right)^T\right)^k$ are not freely independent and when $i\ne j$, the resulting $\mathbf{J_iJ_j}^T$ is not hermitian, which can be overcome by the non-hermitian random matrix theory\cite{cakmak2012non,ling2018spectrum}. 

For a hermitian matrix $\widetilde{\mathbf{X}}$ whose empirical eigenvalue distribution is 
\begin{equation}
    \rho_{\widetilde{\mathbf{X}}}(\lambda) = \frac{\rho_{\sqrt{\mathbf{XX}^T}}(\lambda) + \rho_{\sqrt{\mathbf{XX}^T}}(-\lambda)}{2}.
\end{equation}
we have the lemmas below:

\begin{lemma}
(Lemma 9 in Cakmak (2012) \cite{cakmak2012non}) Let $\mathbf{X}$ be a rectangular non-Hermitian random matrix in general. Then we have
\begin{equation}
    G_{\widetilde{\mathbf{X}}}(z)=zG_{\mathbf{XX}^T}(z^2).
\label{equ:lemmaxxxt}
\end{equation}
where $G$ denotes the Stieltjes transform (Definition \ref{def:stieltjes_transform}).
\label{lemma:lemmaxxxt}
\end{lemma}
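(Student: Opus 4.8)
The plan is to evaluate $G_{\widetilde{\mathbf{X}}}(z)$ directly from the defining density of $\widetilde{\mathbf{X}}$ and collapse it, by one elementary change of variables, to the Stieltjes transform of $\mathbf{XX}^T$ at the point $z^2$. First I would substitute the symmetrized density $\rho_{\widetilde{\mathbf{X}}}(\lambda)=\tfrac12\big(\rho_{\sqrt{\mathbf{XX}^T}}(\lambda)+\rho_{\sqrt{\mathbf{XX}^T}}(-\lambda)\big)$ into Definition \ref{def:stieltjes_transform}, splitting $G_{\widetilde{\mathbf{X}}}(z)$ into two integrals. In the second one I would apply the change of variable $t\mapsto -t$; because $\rho_{\sqrt{\mathbf{XX}^T}}$ is supported on $[0,\infty)$, it becomes $\tfrac12\int \rho_{\sqrt{\mathbf{XX}^T}}(t)\,(z+t)^{-1}\,dt$. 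Combining the two halves with the partial-fraction identity $\tfrac1{z-t}+\tfrac1{z+t}=\tfrac{2z}{z^2-t^2}$ then gives $G_{\widetilde{\mathbf{X}}}(z)=z\int \rho_{\sqrt{\mathbf{XX}^T}}(t)\,(z^2-t^2)^{-1}\,dt$.

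The second step is to observe that the singular-value law $\rho_{\sqrt{\mathbf{XX}^T}}$ is precisely the pushforward of the eigenvalue law $\rho_{\mathbf{XX}^T}$ under $s\mapsto\sqrt{s}$, so that $\int g(t)\,\rho_{\sqrt{\mathbf{XX}^T}}(t)\,dt=\int g(\sqrt{s})\,\rho_{\mathbf{XX}^T}(s)\,ds$ for every bounded continuous $g$. Taking $g(t)=(z^2-t^2)^{-1}$ turns the remaining integral into $\int \rho_{\mathbf{XX}^T}(s)\,(z^2-s)^{-1}\,ds=G_{\mathbf{XX}^T}(z^2)$, whence $G_{\widetilde{\mathbf{X}}}(z)=zG_{\mathbf{XX}^T}(z^2)$, which is Equation \eqref{equ:lemmaxxxt}.

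To make the identity transparent at the matrix level — and as an independent check — I would also note that the nonzero eigenvalues of $\widetilde{\mathbf{X}}$ coincide with those of the Hermitian dilation $\begin{pmatrix}\mathbf{0}&\mathbf{X}\\\mathbf{X}^T&\mathbf{0}\end{pmatrix}$, whose square is block diagonal with diagonal blocks $\mathbf{XX}^T$ and $\mathbf{X}^T\mathbf{X}$. Computing the $(1,1)$ block of the resolvent $\big(z\mathbf{I}-\widetilde{\mathbf{X}}\big)^{-1}$ by the Schur complement yields $z\,(z^2\mathbf{I}-\mathbf{XX}^T)^{-1}$, whose normalized trace is exactly $zG_{\mathbf{XX}^T}(z^2)$.

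The main obstacle I anticipate is purely technical: making these manipulations rigorous for a general (possibly atomic — e.g. a mass at $0$ when $\mathbf{X}$ is rectangular) probability measure rather than for an honest smooth density, and checking that every integral converges and the change of variables is legitimate for $z\notin\mathbb{R}$ (so that $z^2-t^2$ never vanishes on the support). This is handled by reading the ``densities'' above as measures and invoking the pushforward formula, and it requires neither freeness nor any other structural hypothesis on $\mathbf{X}$, consistent with the phrase ``in general'' in the statement.
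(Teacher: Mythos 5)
Your proof is correct. Note, however, that the paper itself does not prove this statement at all: it is quoted verbatim as Lemma 9 of Cakmak (2012) and used as an imported black box in the proof of Theorem \ref{theorem:Addition}. What you have supplied is therefore a self-contained derivation of a cited result, and it is the standard one: starting from the paper's definition $\rho_{\widetilde{\mathbf{X}}}(\lambda)=\tfrac12\bigl(\rho_{\sqrt{\mathbf{XX}^T}}(\lambda)+\rho_{\sqrt{\mathbf{XX}^T}}(-\lambda)\bigr)$, the reflection $t\mapsto -t$, the identity $\tfrac{1}{z-t}+\tfrac{1}{z+t}=\tfrac{2z}{z^2-t^2}$, and the pushforward of $\rho_{\mathbf{XX}^T}$ under $s\mapsto\sqrt{s}$ give $G_{\widetilde{\mathbf{X}}}(z)=zG_{\mathbf{XX}^T}(z^2)$ with no freeness or distributional assumptions, matching the ``in general'' of the statement; your remarks about treating the laws as measures (allowing the atom at $0$ for rectangular $\mathbf{X}$) and keeping $z\notin\mathbb{R}$ so that $z^2-t^2$ never vanishes on the support are exactly the right technical caveats. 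One small caution on your ``independent check'': for rectangular $\mathbf{X}$ the full normalized trace of the resolvent of the Hermitian dilation is a dimension-weighted mixture of the two blocks and does not equal $G_{\widetilde{\mathbf{X}}}$ unless the matrix is square, but since you only take the normalized trace of the $(1,1)$ block, $z\bigl(z^2\mathbf{I}-\mathbf{XX}^T\bigr)^{-1}$, the check is consistent with the lemma as stated.
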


\begin{lemma}
(Theorem 25 in Cakmak (2012) \cite{cakmak2012non}) Let the asymptotically free random matrices $\mathbf{A}$ and $\mathbf{B}$ be R-diagonal, Define $\mathbf{C}=\mathbf{A}+\mathbf{B}$, then we have $R_{\widetilde{\mathbf{C}}}(z) = R_{\widetilde{\mathbf{A}}}(z) + R_{\widetilde{\mathbf{B}}}(z)$
\label{lemma:r-diagonal-R-transform}
\end{lemma}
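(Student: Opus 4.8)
The statement is the free-probability fact that the $R$-transform linearizes the free additive convolution of the \emph{symmetrized} singular-value laws of R-diagonal elements. The plan is to lift each symmetrized law to the spectral law of a self-adjoint \emph{Hermitian dilation} and then appeal to Voiculescu's additivity of the $R$-transform. For any element $\mathbf{X}$ I would set
\begin{equation}
\hat{\mathbf{X}}=\begin{pmatrix}\mathbf{0}&\mathbf{X}\\\mathbf{X}^T&\mathbf{0}\end{pmatrix},
\end{equation}
which is self-adjoint with eigenvalues $\pm\sigma_k(\mathbf{X})$. Consequently the spectral law of $\hat{\mathbf{X}}$, taken with respect to $(tr_2\otimes\phi)$, is exactly the even symmetrization $\rho_{\widetilde{\mathbf{X}}}$, and its Stieltjes transform obeys $G_{\widetilde{\mathbf{X}}}(z)=zG_{\mathbf{XX}^T}(z^2)$---this is precisely Lemma \ref{lemma:lemmaxxxt}. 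In this realization $R_{\widetilde{\mathbf{X}}}$ is the ordinary $R$-transform of the Hermitian element $\hat{\mathbf{X}}$, so the target identity becomes a statement about $\hat{\mathbf{A}},\hat{\mathbf{B}},\hat{\mathbf{C}}$.

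The first and easy observation is that dilation is linear, so $\hat{\mathbf{C}}=\hat{\mathbf{A}}+\hat{\mathbf{B}}$. Were $\hat{\mathbf{A}}$ and $\hat{\mathbf{B}}$ freely independent as \emph{scalar} self-adjoint elements, Voiculescu's theorem that the $R$-transform adds under free additive convolution would at once give $R_{\widetilde{\mathbf{C}}}=R_{\widetilde{\mathbf{A}}}+R_{\widetilde{\mathbf{B}}}$. The obstruction is that $\hat{\mathbf{A}},\hat{\mathbf{B}}$ live in $M_2(\mathcal{A})$, and the freeness of $\mathbf{A},\mathbf{B}$ transfers only to freeness \emph{with amalgamation} over the diagonal subalgebra $D=\mathbb{C}\oplus\mathbb{C}\subset M_2(\mathbb{C})$. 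Hence, a priori, the law of $\hat{\mathbf{C}}$ is governed by the operator-valued free convolution, whose additivity holds for the $D$-valued $R$-transform rather than the scalar one.

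The main obstacle is therefore collapsing the $D$-valued convolution to the scalar convolution, and this is exactly where R-diagonality enters. I would use the gauge invariance of R-diagonal elements: by Definition \ref{def:R-diagonal}, $\mathbf{X}=\mathbf{U}\mathbf{Y}$ with $\mathbf{U}$ Haar unitary free of $\mathbf{Y}=\sqrt{\mathbf{X}\mathbf{X}^T}$, so $e^{i\theta}\mathbf{X}$ shares the $*$-distribution of $\mathbf{X}$ for every $\theta$ because $e^{i\theta}\mathbf{U}$ is again Haar. Conjugating $\hat{\mathbf{X}}$ by the unitary $\mathrm{diag}(e^{i\theta/2},e^{-i\theta/2})\in D$ realizes this rotation. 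Since each of $\hat{\mathbf{A}},\hat{\mathbf{B}}$ is invariant in distribution under this $D$-valued gauge action, the symmetry propagates through the operator-valued subordination equations governing the $D$-valued convolution and keeps every $D$-valued quantity (Cauchy transforms, subordination functions, $R$-transforms), evaluated at $z\mathbf{I}_2$, a scalar multiple of $\mathbf{I}_2$. The matrix fixed-point equation then degenerates into a single scalar equation, the $D$-valued $R$-transform reduces to the ordinary $R_{\widetilde{\mathbf{X}}}$, and operator-valued additivity descends to the claimed scalar identity.

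I expect the technical heart to be this collapse: verifying that the off-diagonal blocks of the $D$-valued transforms vanish and that the two diagonal scalar equations coincide, so that operator-valued additivity becomes $R_{\widetilde{\mathbf{C}}}(z)=R_{\widetilde{\mathbf{A}}}(z)+R_{\widetilde{\mathbf{B}}}(z)$; the dilation identity and Voiculescu's additivity are otherwise standard. As a shortcut, one may instead simply invoke Theorem 25 of Cakmak (2012) \cite{cakmak2012non}, which packages precisely this reduction for R-diagonal elements.
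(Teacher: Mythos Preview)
The paper does not prove this lemma at all: it is stated as a citation of Theorem 25 in Cakmak (2012) and is then used as a black box inside the proof of Theorem \ref{theorem:Addition}. So there is no ``paper's own proof'' to compare against; the paper's treatment is exactly the shortcut you mention in your last sentence.

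Your Hermitian-dilation sketch is a legitimate and standard route to this kind of result (it is essentially the Haagerup--Larsen approach to R-diagonal elements), and your identification of the real work---collapsing the $D$-valued free convolution to a scalar one via the gauge invariance built into R-diagonality---is accurate. For the purposes of this paper, however, all of that is strictly more than what is required: matching the paper means simply citing Cakmak and moving on.
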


Let $\mathbf{J_i}_u := \mathbf{U_i}\mathbf{J_i}$, where $\mathbf{U_i}$ is a Haar unitary matrix free of $\mathbf{J_i}$. As the prerequisite suggests that $\mathbf{J_i}_u$ is R-diagonal, with Lemma \ref{lemma:r-diagonal-R-transform}, we have:
\begin{equation}
    R_{\widetilde{\sum_{i}\mathbf{J_i}_u}}(z) = \sum_{i}R_{\widetilde{\mathbf{J_i}_u}}(z).
\label{equ:non-hermitian_R-transform}
\end{equation}
With Equation \eqref{equ:R-transform}, we have $R_{\mathbf{X}}(G_{\mathbf{X}}(z)) = z-\frac{1}{G_{\mathbf{X}}(z)}$. By substituting this into Equation \eqref{equ:non-hermitian_R-transform}, we can get
\begin{equation}
\begin{split}
    &\sum_{i}R_{\widetilde{\mathbf{J_i}_u}}\left[G_{\widetilde{\sum_{i}\mathbf{J_i}_u}}(z)\right] = R_{\widetilde{\sum_{i}\mathbf{J_i}_u}}\left[G_{\widetilde{\sum_{i}\mathbf{J_i}_u}}(z)\right]\\
    & = z - \frac{1}{G_{\widetilde{\sum_{i}\mathbf{J_i}_u}}(z)}.
\end{split}
\label{equ:equ10_pre}
\end{equation}
Finally, by substituting Equation \eqref{equ:lemmaxxxt} into Equation \eqref{equ:equ10_pre}, we have
\begin{equation}
\begin{split}
    & \frac{1}{\sqrt{z}G_{\sum_{i}\mathbf{J_i}_u(\sum_{i}\mathbf{J_i}_u)^T}(z)} + \sum_{i}R_{\widetilde{\mathbf{J_i}_u}}\left[\sqrt{z}G_{\sum_{i}\mathbf{J_i}_u(\sum_{i}\mathbf{J_i}_u)^T}(z)\right]\\
    & = \sqrt{z}.
\end{split}
\label{equ:equ10}
\end{equation}

With the prerequisite that $(\sum_{i}\mathbf{J_i})(\sum_{i}\mathbf{J_i})^T$ is at least $k^{th}$ moment unitarily invariant, using Lemma \ref{lemma:equivalent_G_transform}, Equation \eqref{equ:equ10} can be reformulated to:
\begin{equation}
\begin{split}
    &\left[\!1\! +\! \sqrt{z}G_{\sum_{i}\!\mathbf{J_i}(\!\sum_{i}\!\mathbf{J_i}\!)^T}\!(z)\!\sum_{i}\!R_{\widetilde{\mathbf{J_i}_u}}\!\left[\!\sqrt{z}G_{\sum_{i}\mathbf{J_i}(\sum_{i}\mathbf{J_i})^T}(z)\!\right]\!\right]_{@k+1}\\
    & = \left[zG_{\sum_{i}\mathbf{J_i}(\sum_{i}\mathbf{J_i})^T}(z)\right]_{@k+1}.
\end{split}
\label{equ:equ10@k}
\end{equation}

As the Stieltjes transform can be expanded into a power series with coefficients as $G_X(z)=\sum_{q=0}^{\infty}\frac{\alpha_q(\lambda_{\mathbf{X}})}{z^{q+1}}$ where $\alpha_q(\lambda_{\mathbf{X}})=\phi(\mathbf{X}^q)$, we can solve Equation \eqref{equ:equ10} by expanding both side of the equals sign into a polynomial of z in which the coefficients of all the orders of z are equal. For the sake of simplicity, we denote $m^{(i)}_q := \alpha_q(\lambda_{\mathbf{J_iJ_i}^T})$, $m_q := \alpha_q(\lambda_{\mathbf{JJ}^T})$, $m_{qu}:= \alpha_q(\lambda_{\mathbf{J_i}_u\mathbf{J_i}_u^T})$, and our first step is deriving the expansion of $R_{\widetilde{\mathbf{J_i}}}(z)$.

Firstly, with Lemma \ref{lemma:equivalent_G_transform}, we can obtain the first 2k+1 terms of $G_{\mathbf{J_i}_u\mathbf{J_i}_u^T}(z^2)$ (That's all we need) from $G_{\mathbf{J_i}\mathbf{J_i}^T}(z^2)$. Secondly, we will derives $G_{\widetilde{\mathbf{J_i}_u}}(z)$ from $G_{\mathbf{J_i}_u\mathbf{J_i}_u^T}(z^2)$ with Lemma \ref{lemma:lemmaxxxt}. Then, $G^{-1}_{\widetilde{\mathbf{J_i}_u}}(z)$ can be derived from $G_{\widetilde{\mathbf{J_i}_u}}(z)$ with the Lagrange inversion theorem. Last, as we have $R_{\mathbf{X}}(z) + \frac{1}{z} = G_{\mathbf{X}}^{-1}(z)$, $R_{\widetilde{\mathbf{J_i}_u}}(z)$ can be easily derived from $G_{\widetilde{\mathbf{J_i}_u}}^{-1}(z)$.

According to Lemma \ref{lemma:lemmaxxxt} and \ref{lemma:equivalent_G_transform}, we have:
\begin{equation}
\begin{split}
    & G_{\widetilde{\mathbf{J_i}_u}}(z)=zG_{\mathbf{J_i}_u\mathbf{J_i}_u^T}(z^2)=\sum_{q=0}^\infty \frac{m^{(i)}_q}{z^{2q+1}} =\\
    & ~~~~~~~~~~~~~~~~~~~~~~~~~\frac{1}{z} + \frac{0}{z^2} + \frac{m^{(i)}_1}{z^3}+\frac{0}{z^4} + \frac{m^{(i)}_2}{z^5}+...
\end{split}
\label{equ:GJ(z)}
\end{equation}

We can view Equation \eqref{equ:GJ(z)} as a formal power series: $f(\frac{1}{z}):=\sum_{k=1}^{\infty} \frac{f_k}{k!}(\frac{1}{z})^k$, where we have $f_0=0,f_1=1,f_2=0,f_3=m^{(i)}_13!,f_4=0,f_5=m^{(i)}_25!, f_6=0...$. Because of $f_0=0,f_1\ne 0$, $G^{-1}_{\widetilde{\mathbf{J_i}_u}}(z)$ can be obtained with the Lagrange inversion theorem: assuming that g is the inverse function of f: $g(f(z))=z$, since we have  $G_{\widetilde{\mathbf{J_i}_u}}(z)=f(\frac{1}{z})$, $z = G^{-1}_{\widetilde{\mathbf{J_i}_u}}(f(\frac{1}{z}))=\frac{1}{g(f(\frac{1}{z}))}$, we have $G^{-1}_{\widetilde{\mathbf{J_i}_u}}(z) = \frac{1}{g(z)}$. With the Lagrange inversion theorem, the expansion of $g(z)$ is
\begin{equation}
    \begin{split}
        & g(z) = \sum_{k=0}^\infty g^{(i)}_q\frac{z^q}{q!},~~g^{(i)}_0=0,~~ g^{(i)}_1=\frac{1}{f_1}=1,\\
        & g^{(i)}_{n\ge2}=\frac{1}{f_1^n}\sum_{q=1}^{n-1}(-1)^qn^{(q)}\mathbf{B}_{n-1,q}(\hat{f}_1,\hat{f}_2,...,\hat{f}_{n-q}).
    \end{split}
\label{equ:expansion:g(z)}
\end{equation}
where $\mathbf{B}$ denotes the Bell polynomials, $\hat{f}_q = \frac{f_{q+1}}{(q+1)f_1}$, $n^{(q)}=n(n+1)...(n+q-1)$. With Equation \eqref{equ:expansion:g(z)}, we have
\begin{equation}
\begin{split}
    & G^{-1}_{\widetilde{\mathbf{J_i}_u}}(z) = \frac{1}{z + \sum_{q=2}^{\infty}g^{(i)}_q\frac{z^q}{q!}} = \frac{1}{z}(1 - \frac{g^{(i)}_2}{2}z + ...)\\
    & := \frac{1}{z} + \sum_{q=0}^{\infty}{h^{(i)}_q}\frac{z^q}{q!}.
\end{split}
\end{equation}
Thus, we have
\begin{equation}
\begin{split}
    & R_{\widetilde{\mathbf{J_i}_u}}(z) = G^{-1}_{\widetilde{\mathbf{J_i}_u}}(z)-\frac{1}{z} = \sum_{k=0}^{\infty}{h^{(i)}_k}\frac{z^k}{k!}.
\end{split}
\label{equ:RJ(z):expansion}
\end{equation}
Then we substitute Equation \eqref{equ:RJ(z):expansion} into Equation \eqref{equ:equ10@k}, which yields
\begin{equation}
\begin{split}
    & \left[z\sum_{q=0}^{\infty}\frac{m_q}{z^{q+1}}\right]_{@k+1} =\\
    & \left[\sum_i\left[z^{\frac{1}{2}}\sum_{q=0}^{\infty}\frac{m_q}{z^{q+1}}\sum_{p=0}^{\infty}\frac{h^{(i)}_p}{p!}\left(z^{\frac{1}{2}}\sum_{q=0}^{\infty}\frac{m_q}{z^{q+1}}\right)^p\right] + 1\right]_{@k+1}.
\end{split}
\end{equation}

Let's consider the coefficient of $\frac{1}{z}$ first:
\begin{equation}
    m_1 = \sum_i h_1^{(i)}.
\end{equation}
Since $h_1^{(i)} = \frac{1}{2}\frac{d^2}{dz^2}zG^{-1}_{\widetilde{\mathbf{J_i}_u}}(z)|_{z=0} = \frac{g^{(i)}_2}{4}-\frac{g^{(i)}_3}{6}$, with Equation \eqref{equ:expansion:g(z)}, we have
\begin{equation}
g_2^{(i)} = 0,~~g_3^{(i)}=-6m^{(i)}_1.
\end{equation}
Thus we have $m_1=\sum_i m_1^{(i)}$.

Then, we consider the coefficient of $\frac{1}{z^2}$:
\begin{equation}
    m_2 = \sum_i (2m_1h^{(i)}_1+\frac{h^{(i)}_3}{6}).
\end{equation}
Since $h_3^{(i)} = \frac{3!}{4!}\frac{d^4}{dz^4}zG^{-1}_{\widetilde{\mathbf{J_i}_u}}(z)|_{z=0}$. This time, $g_2^{(i)}$ to $g_5^{(i)}$ are used, and we derive $g_4^{(i)},g_5^{(i)}$ first.
\begin{equation}
    g_4^{(i)}=0,~~g_5^{(i)}=360(m^{(i)}_1)^2-120m^{(i)}_2.
\end{equation}
Then we calculate $h_3^{(i)}$:
\begin{equation}
    h_3^{(i)} = \frac{3!}{4!}\frac{d^4}{dz^4}zG^{-1}_{\widetilde{\mathbf{J_i}_u}}(z)|_{z=0} = \frac{(g_3^{(i)})^2}{6}-\frac{g_5^{(i)}}{20}.
\end{equation}
Finally we have
\begin{equation}
    \begin{split}
        & \phi(\mathbf{JJ}^T)=m_1=\sum_i \phi(\mathbf{J_iJ_i}^T),\\
        & \varphi(\mathbf{JJ}^T) = m_2 - m_1^2 = m_1^2 + \sum_i \varphi(\mathbf{J_iJ_i}^T) - \phi^2(\mathbf{J_iJ_i}^T).
    \end{split}
\label{equ:the_forge_add}
\end{equation}

% prerequisites
\subsection{Proof of Proposition \ref{prop:1st_moment_unitary_invariant_exp_diag}}\label{proof:1st_moment_unitary_invariant_exp_diag}
\textbf{Proposition \ref{prop:1st_moment_unitary_invariant_exp_diag}. }\textit{
$(\Pi_{i=L}^i \mathbf{J_i})(\Pi_{i=L}^1 \mathbf{J_i})^T$ is at least $1^{st}$ moment unitary invariant if: \textcircled{1} $\forall i,j\neq i$, $\mathbf{J_i}$ is independent with $\mathbf{J_j}$; \textcircled{2} $\forall i \in [2, L]$, $\mathbf{J_i}$ is an expectant orthogonal matrix.
}
\rule[0pt]{0.48\textwidth}{0.05em}

Firstly, we prove the following lemma:

\begin{lemma}
Let $\mathbf{D}\in \mathbb{R}^{k\times n}$ and $\mathbf{W}\in\mathbb{R}^{n\times m}$ be two independent random matrices. We further let $\widetilde{\mathbf{W}}\in \mathbb{R}^{n\times m}$ be a random matrix that satisfies: $\phi(\widetilde{\mathbf{W}}\widetilde{\mathbf{W}}^T) = \phi(\mathbf{WW}^T)$ and $\mathbf{D}$ be an expectant orthogonal matrix. Then we have
\begin{equation}
    \phi\left(\left(\mathbf{DW}\right)\left(\mathbf{DW}\right)^T\right) = \phi\left(\left(\mathbf{UD}\widetilde{\mathbf{W}}\right)\left(\mathbf{UD}\widetilde{\mathbf{W}}\right)^T\right).
\label{equ:exp_diag_lemma_equ}
\end{equation}
\label{lemma:exp_diag_lemma}
\end{lemma}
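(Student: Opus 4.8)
The plan is to show that both sides of \eqref{equ:exp_diag_lemma_equ} equal $\phi(\mathbf{DD}^T)\phi(\mathbf{WW}^T)$, so that they coincide. First I would handle the left-hand side. Since $(\mathbf{DW})(\mathbf{DW})^T = \mathbf{DWW}^T\mathbf{D}^T \in \mathbb{R}^{k\times k}$, by definition $\phi((\mathbf{DW})(\mathbf{DW})^T) = \tfrac{1}{k}E[Tr(\mathbf{DWW}^T\mathbf{D}^T)]$. Crucially, the \emph{un-normalized} trace $Tr$ is cyclic-invariant even for rectangular factors (unlike the normalized trace $tr$, the point stressed throughout this paper), so $Tr(\mathbf{DWW}^T\mathbf{D}^T) = Tr(\mathbf{D}^T\mathbf{D}\,\mathbf{WW}^T)$. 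Expanding entrywise and using the independence of $\mathbf{D}$ and $\mathbf{W}$ gives $E[Tr(\mathbf{D}^T\mathbf{D}\,\mathbf{WW}^T)] = \sum_{i,j}E[[\mathbf{D}^T\mathbf{D}]_{i,j}]\,E[[\mathbf{WW}^T]_{j,i}]$.

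Next I would invoke the expectant-orthogonal hypothesis on $\mathbf{D}$ (Definition \ref{def:expectattion_diagonal_matrix}): the off-diagonal terms $E[[\mathbf{D}^T\mathbf{D}]_{i,j}]$ with $i\ne j$ vanish, and the diagonal terms $E[[\mathbf{D}^T\mathbf{D}]_{i,i}]$ all equal a common constant $d$. Taking traces pins down $d$: $nd = Tr(E[\mathbf{D}^T\mathbf{D}]) = Tr(E[\mathbf{DD}^T]) = k\,\phi(\mathbf{DD}^T)$, hence $d = \tfrac{k}{n}\phi(\mathbf{DD}^T)$. Substituting and using $\sum_i E[[\mathbf{WW}^T]_{i,i}] = E[Tr(\mathbf{WW}^T)] = n\,\phi(\mathbf{WW}^T)$ (recall $\mathbf{WW}^T$ is $n\times n$) yields $\phi((\mathbf{DW})(\mathbf{DW})^T) = \phi(\mathbf{DD}^T)\phi(\mathbf{WW}^T)$.

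For the right-hand side, the unitary factor $\mathbf{U}$ drops out for free: $(\mathbf{UD}\widetilde{\mathbf{W}})(\mathbf{UD}\widetilde{\mathbf{W}})^T = \mathbf{U}\bigl(\mathbf{D}\widetilde{\mathbf{W}}\widetilde{\mathbf{W}}^T\mathbf{D}^T\bigr)\mathbf{U}^T$, and $Tr(\mathbf{U}\mathbf{X}\mathbf{U}^T) = Tr(\mathbf{X})$ for any unitary $\mathbf{U}$ (no independence required), so $\phi((\mathbf{UD}\widetilde{\mathbf{W}})(\mathbf{UD}\widetilde{\mathbf{W}})^T) = \phi((\mathbf{D}\widetilde{\mathbf{W}})(\mathbf{D}\widetilde{\mathbf{W}})^T)$. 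Since $\mathbf{D}$ is still expectant orthogonal and independent of $\widetilde{\mathbf{W}}$ (as it is in the intended application, where $\widetilde{\mathbf{W}}$ is assembled from matrices independent of $\mathbf{D}$), repeating the computation of the first two paragraphs gives $\phi((\mathbf{D}\widetilde{\mathbf{W}})(\mathbf{D}\widetilde{\mathbf{W}})^T) = \phi(\mathbf{DD}^T)\phi(\widetilde{\mathbf{W}}\widetilde{\mathbf{W}}^T)$, and the hypothesis $\phi(\widetilde{\mathbf{W}}\widetilde{\mathbf{W}}^T) = \phi(\mathbf{WW}^T)$ closes the chain. The only genuinely delicate step is the normalization bookkeeping — tracking the dimension factors $k$ and $n$ when passing between $\phi$, $tr$, and $Tr$ under the rectangular reshuffles, in particular identifying $d = \tfrac{k}{n}\phi(\mathbf{DD}^T)$ via $Tr(E[\mathbf{D}^T\mathbf{D}]) = Tr(E[\mathbf{DD}^T])$; otherwise everything is routine, provided one never applies cyclic invariance of the \emph{normalized} trace to rectangular matrices.
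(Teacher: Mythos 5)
Your proof is correct and follows essentially the same route as the paper's: strip off $\mathbf{U}$ by trace invariance, use cyclic invariance of the \emph{un-normalized} trace (with the $n/k$ dimension factor), expand entrywise, factor expectations by independence, and invoke the expectant-orthogonal property of $\mathbf{D}^T\mathbf{D}$. The only cosmetic difference is that you pin down the common diagonal expectation as $\tfrac{k}{n}\phi(\mathbf{DD}^T)$ so each side factorizes as $\phi(\mathbf{DD}^T)\phi(\mathbf{WW}^T)$, whereas the paper leaves it abstract as $E[d_{diag}]$ and compares the two sides directly.
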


\begin{proof}
Firstly, we have
\begin{equation}
\begin{split}
    & \phi\left(\left(\mathbf{UD}\widetilde{\mathbf{W}}\right)\left(\mathbf{UD}\widetilde{\mathbf{W}}\right)^T\right) = \phi\left(\mathbf{D}\widetilde{\mathbf{W}}\widetilde{\mathbf{W}}^T\mathbf{D}^T\right)\\
    &  = \frac{n}{k}\phi\left(\mathbf{D}^T\mathbf{D}\widetilde{\mathbf{W}}\widetilde{\mathbf{W}}^T\right).
\end{split}
\end{equation}
For the sake of simplicity, we denote $[\mathbf{WW}^T]_{i,j} := w_{i,j}$, $[\widetilde{\mathbf{W}}\widetilde{\mathbf{W}}^T]_{i,j} := \widetilde{w_{i,j}}$, $[\mathbf{D}^T\mathbf{D}]_{i,j} := d_{i,j}$. 
\begin{equation}
\begin{split}
    & \phi\left(\mathbf{D}^T\mathbf{D}\widetilde{\mathbf{W}}\widetilde{\mathbf{W}}^T\right) = E[\frac{1}{n}\sum_i\sum_pd_{i,p}\widetilde{w_{p,i}}] =\\
    & \frac{1}{n}\sum_i\sum_pE[d_{i,p}]E[\widetilde{w_{p,i}}].
\end{split}
\end{equation}
Since $\mathbf{D}$ is an expectant orthogonal matrix, we have
\begin{equation}
\begin{split}
    &  \frac{1}{n}\sum_i\sum_pE[d_{i,p}]E[\widetilde{w_{p,i}}] = E[d_{diag}]\frac{1}{n}\sum_iE[\widetilde{w_{i,i}}]\\
    & \forall i,~~E[d_{i,i}]=E[d_{diag}].
\end{split}
\label{equ:equ94}
\end{equation}
We get $\phi\left(\left(\mathbf{UD}\widetilde{\mathbf{W}}\right)\left(\mathbf{UD}\widetilde{\mathbf{W}}\right)^T\right) = E[d_{diag}]\frac{1}{k}\sum_iE[\widetilde{w_{i,i}}]$. With the same process, it is easy to prove that $\phi\left(\left(\mathbf{D}\mathbf{W}\right)\left(\mathbf{D}\mathbf{W}\right)^T\right) = E[d_{diag}]\frac{1}{k}\sum_iE[w_{i,i}]$. Since we have $\phi(\widetilde{\mathbf{W}}\widetilde{\mathbf{W}}^T) = \phi(\mathbf{WW}^T)$, $\sum_iE[\widetilde{w_{i,i}}]=\sum_iE[w_{i,i}]$, and we have $\phi\left(\left(\mathbf{DW}\right)\left(\mathbf{DW}\right)^T\right) = \phi\left(\left(\mathbf{UD}\widetilde{\mathbf{W}}\right)\left(\mathbf{UD}\widetilde{\mathbf{W}}\right)^T\right)$.
\end{proof}

With Lemma \ref{lemma:exp_diag_lemma}, we use Mathematical Induction to prove Proposition \ref{prop:1st_moment_unitary_invariant_exp_diag}.

\textcircled{1} As $\mathbf{J_{2}}$ is an expectant orthogonal matrix and independent with $\mathbf{J_1}$, let $\mathbf{U_1}$ be a haar unitary matrix independent with $\forall i, \mathbf{J_i}$. As $\phi(\mathbf{U_1J_1J_1}^T\mathbf{U_1}^T) = \phi(\mathbf{J_1J_1}^T)$, we have
\begin{equation}
    \phi\left(\left(\mathbf{J_{2}J_{1}}\right)\left(\mathbf{J_{2}J_1}\right)^T\right) = \phi\left(\left(\mathbf{U_{2}J_{2}U_1J_{1}}\right)\left(\mathbf{U_{2}J_{2}U_1J_{1}}\right)^T\right).
\end{equation}

\textcircled{2} Assuming that we have
\begin{equation}
    \phi\left(\left(\Pi_{i=l}^{1}\mathbf{J_{i}}\right)\left(\Pi_{i=l}^{1}\mathbf{J_{i}}\right)^T\right) = \phi\left(\left(\Pi_{i=l}^{1}\mathbf{U_iJ_{i}}\right)\left(\Pi_{i=l}^1\mathbf{U_iJ_{i}}\right)^T\right).
\end{equation}

As $\mathbf{J_{l+1}}$ is an expectant orthogonal matrix, and independent with $\Pi_{i=l}^1\mathbf{J_{i}}$ and $\Pi_{i=l}^1\mathbf{U_iJ_{i}}$, we have
\begin{equation}
\begin{split}
    & \phi\left(\left(\Pi_{i=l+1}^1\mathbf{J_{i}}\right)\left(\Pi_{i=l+1}^1\mathbf{J_{i}}\right)^T\right) =\\
    & \phi\left(\left(\Pi_{i=l+1}^1\mathbf{U_iJ_{i}}\right)\left(\Pi_{i=l+1}^1\mathbf{U_iJ_{i}}\right)^T\right).
\end{split}
\end{equation}

With \textcircled{1} and \textcircled{2}, we have
\begin{equation}
    \phi\left(\left(\Pi_{i=L}^1\mathbf{J_{i}}\right)\left(\Pi_{i=L}^1\mathbf{J_{i}}\right)^T\right)\! =\! \phi\left(\left(\Pi_{i=L}^1\mathbf{U_iJ_{i}}\right)\left(\Pi_{i=L}^1\mathbf{U_iJ_{i}}\right)^T\right).
\end{equation}
and Proposition \ref{prop:1st_moment_unitary_invariant_exp_diag} is proved.

\subsection{Proof of Proposition \ref{prop:properties_of_eom_cm}}\label{proof:properties_of_eom_cm}

\textbf{Proposition \ref{prop:properties_of_eom_cm}. }\textit{\textbf{(Properties of expectant orthogonal matrices and central matrices)}
\begin{itemize}
\item Let $\{\mathbf{J_i}\}$ be a series independent expectant orthogonal matrices, then $\Pi_i\mathbf{J_i}$ is also an expectant orthogonal matrices.
    \item Let $\mathbf{J_i}$ be a central matrix, for any random matrices $\mathbf{A}$ independent with $\mathbf{J_i}$ with proper size, $\mathbf{J_iA},\mathbf{AJ_i}$ are also central matrices.
\end{itemize}}
\rule[0pt]{0.46\textwidth}{0.05em}

\textcircled{1} Let $\{\mathbf{J_i}\}$ be a series independent expectant orthogonal matrices, then $\Pi_i\mathbf{J_i}$ is also an expectant orthogonal matrix.

\begin{proof}
Firstly, we consider the multiplication between two independent expectant orthogonal matrices $\mathbf{W}\mathbf{D}$. We denote $[\mathbf{W}^T\mathbf{W}]_{i,j}=\omega_{i,j}$, $[\mathbf{D}^T\mathbf{D}]_{i,j}=\delta_{i,j}$, $[\mathbf{D}]_{i,j}=d_{i,j}$.

\begin{equation}
    E\left[\left[\mathbf{D}^T\mathbf{W}^T\mathbf{WD}\right]_{i,j}\right]=E\left[\sum_q\sum_pd_{p,i}\omega_{p,q}d_{q,j}\right].
\end{equation}
Since $\mathbf{W}$ is an expectant orthogonal matrix, we have $E[\omega_{p,q\neq p}=0]$, $E[\omega_{p,p}]:=\omega$, and we have
\begin{equation}
    E\left[\left[\mathbf{D}^T\mathbf{W}^T\mathbf{WD}\right]_{i,j}\right]=\omega E\left[\sum_qd_{q,i}d_{q,j}\right] = \omega E[\delta_{i,j}].
\end{equation}
As $\mathbf{D}$ is an expectant orthogonal matrix, $E[\delta_{p,q\neq p}=0]$, $E[\delta_{p,p}]:=\delta$, finally we have
\begin{equation}
    E\left[\left[\mathbf{DWW}^T\mathbf{D}^T\right]_{i,j\neq i}\right]=0,E\left[\left[\mathbf{DWW}^T\mathbf{D}^T\right]_{i,i}\right]=\omega\delta.
\end{equation}
Thus $\mathbf{WD}$ is also an expectant orthogonal matrix. In another word, the result of multiplication between expectant orthogonal matrices is also an expectant orthogonal matrices, and this conclusion can be easily extended to the multiplication between several expectant orthogonal matrices.
\end{proof}

\textcircled{2} Let $\mathbf{J_i}$ be a central matrix, for any random matrices $\mathbf{A}$ independent with $\mathbf{J_i}$ with proper size, $\mathbf{J_iA},\mathbf{AJ_i}$ are also central matrices.
\begin{proof}
As $\mathbf{J_i}$ is independent with $\mathbf{A}$ and $\forall i,j,~~E\left[\left[\mathbf{J_i}\right]_{i,j}\right]=0$, it's obvious that 
\begin{equation}
    \forall i,j,~~E\left[\left[\mathbf{AJ_i}\right]_{i,j}\right]= E\left[\left[\mathbf{J_iA}\right]_{i,j}\right]=0.
\end{equation}
\end{proof}

% parts library
\subsection{Proof of Table \ref{tab:parts_library}}\label{proof:part_library}
\subsubsection{Activation Functions}
For clarity, we denote the activation functions as $\mathbf{f}(\mathbf{x})$ and their Jacobian matrices as $\mathbf{f_x}$. Since the Jacobian matrix $\mathbf{J}$ of any element-wise activation functions is diagonal, and its diagonal entries share the same non-zero expectation, $\mathbf{J}$ is expectant orthogonal matrices (satisfies Definition \ref{def:expectattion_diagonal_matrix}) but defies Definition \ref{def:central_matrix}.

\textbf{ReLU.} ReLU is defined as: $f(x)=x$ if $x\ge0$ else $0$, thus $\mathbf{f_x}$ is a diagonal matrix whose elements are either 0 or 1. As a result, the spectral density of $\mathbf{f_x}$ is: $\rho_{\mathbf{f_x}}(z) = (1-p)\delta(z)+p\delta(z-1)$, where $p$ denotes the probability that the input data is greater than 0, and we have $\rho_{\mathbf{f_x}}(z) = \rho_{\mathbf{f_xf_x}^T}(z)$. Thus, we have
\begin{equation}
    \begin{split}
        &\phi(\mathbf{f_xf_x}^T) = \int_{\mathbb{R}}z\left((1-p)\delta(z)+p\delta(z-1)\right)dz = p,\\
        &\varphi(\mathbf{f_xf_x}^T) = \int_{\mathbb{R}}z^2\left((1-p)\delta(z)+p\delta(z-1)\right)dz - \phi^2(\mathbf{f_xf_x}^T)\\
        &~~~~~~~~~~~~~~ = p - p^2.
    \end{split}
\label{equ:eig_relu}
\end{equation}

\textbf{Leaky ReLU. }Leaky ReLU is defined as: $f(x)=x$ if $x\ge0$ else $\gamma x$, $\gamma$ is called negative slope coefficient. Similar to ReLU, its Jacobian matrix is diagonal and its diagonal entries are either $1$ or $\gamma$. Therefore, the spectral density is $(1-p)\delta(z-\gamma^2) + p\delta(z-1)$, and we have
\begin{equation}
    \begin{split}
        &\phi(\mathbf{f_xf_x}^T) = \int_{\mathbb{R}}z\left((1-p)\delta(z-\gamma^2)+p\delta(z-1)\right)dz\\
        &~~~~~~~~~~~~~~ = p + \gamma^2(1-p),\\
        &\varphi(\mathbf{f_xf_x}^T)\! =\! \int_{\mathbb{R}}z^2\!\left((1\!-\!p)\delta(z\!-\!\gamma^2)\!+\!p\delta(z\!-\!1)\right)dz\! -\! \phi^2(\mathbf{f_xf_x}^T)\\
        & = \gamma^4(1-p)+p - (p+\gamma^2(1-p))^2.
    \end{split}
\label{equ:eig_prelu}
\end{equation}

\textbf{Tanh. }Tanh is defined as: $f(x)=\frac{2}{1+e^{-2x}}-1$, and its derivative is $f'(x)=1 - f^2(x)$. Analysis of tanh is more challenging due to its complex nonliearity. However, as illustrated in Fig. \ref{fig:tanh}, $\forall x\in\mathbb{R},|tanh(x)|/|x|<1$, 
\begin{figure}[ht]
\centering
\includegraphics[width=0.48\textwidth]{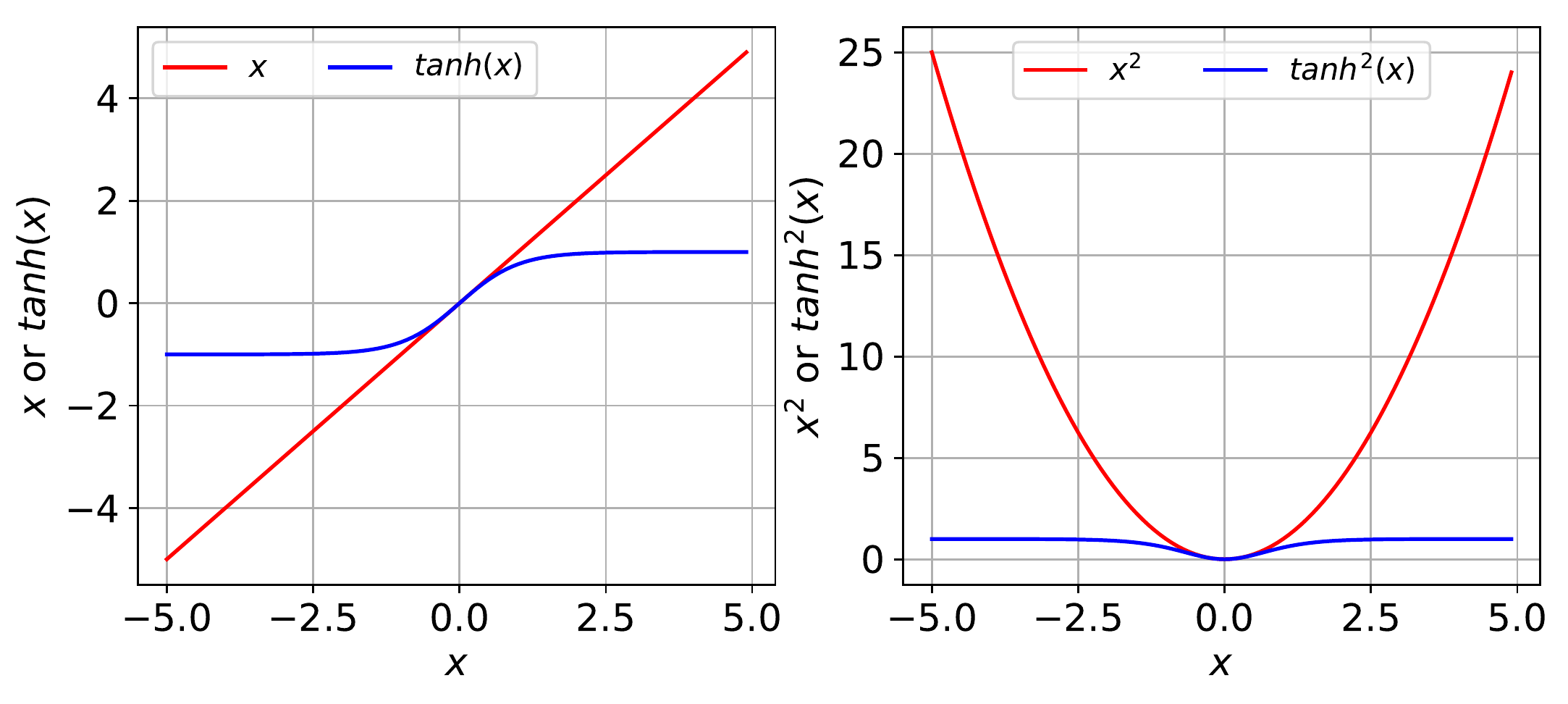}
\caption{Tanh decays the $2^{nd}$ moment of forward signal to around 0.}
\label{fig:tanh}
\end{figure}
therefore, given enough layers, most of activations will concentrate around 0, and we can simplify the tanh with Taylor series around $0$: $f(x)\approx f(0)+f'(0)x=x$, therefore $\mathbf{f_x}$ is approximately an identity matrix, for whom we have $\phi(\mathbf{f_xf_x}^T)=1$, $\varphi(\mathbf{f_xf_x}^T)=0$.

\subsubsection{Linear Transforms}\label{proof:linear_transform}

\textbf{Gaussian Fully-connected Layer. ($\mathbf{u}:=\mathbf{K}_t\mathbf{y}_t$)} $\mathbf{u}(t,\mathbf{z})$ is a linear model, thus $\mathbf{u_y} = \mathbf{K}_t \in \mathbb{R}^{m\times n}$. Assuming that $[\mathbf{K}_t]_{i,j}\sim N(\mu, \sigma^2)$ and i.i.d, we define $\hat{\mathbf{K}}_t := (\mathbf{K}_t - \mu)/\sigma$. Then, we have
\begin{equation}
\begin{split}
    &tr(\mathbf{K}_t\mathbf{K}_t^T) = \sigma^2tr(\hat{\mathbf{K}}_t\hat{\mathbf{K}}_t^T) + \mu^2,\\
    &tr\left((\mathbf{K}_t\mathbf{K}_t^T)^2\right) = \sigma^4tr\left((\mathbf{K}_t\mathbf{K}_t^T)^2\right) + 6n\mu^2\sigma^2tr\left(\mathbf{K}_t\mathbf{K}_t^T\right)\\
    & +mn^2\mu^4.
\end{split}
\label{equ:target_fc}
\end{equation}
We further define that $\mathbf{A}_t=\frac{1}{m}\hat{\mathbf{K}}_t\hat{\mathbf{K}}_t^T$, then $\mathbf{A}_t$ is a \textit{Wishart random matrix}. According to Mingo \& Speicher (2017) \cite{mingo2017free}, we have
\begin{equation}
    \lim_{m,n\rightarrow \infty,~n/m\rightarrow c}tr(\mathbf{A}_t^k) = \sum_{\pi\in NC(k)}c^{\#(\pi)},
\end{equation}
where $NC(k)$ is a set of non-crossing partition of $[k]=\{1, 2, 3,...,k\}$ and $\#(\pi)$ denotes the number of blocks of $\pi$. As we are interested in the $1^{st}$ and $2^{nd}$ moment, when $k=1$, in the limit of  $m,n\rightarrow \infty,~n/m\rightarrow c$, $NC(1) = \{\{1\}\} $ thus $\#(\pi)=1$; when $k=2$, we have $NC(2) = \{\{1, 2\}, \{\{1\},\{2\}\}\}$, and we have
\begin{equation}
\begin{split}
    & \phi(\mathbf{K}_t\mathbf{K}_t^T) = \sigma^2 n+n\mu^2,\\
    & \varphi(\mathbf{K}_t\mathbf{K}_t^T) = \left(m^2\sigma^4(c+c^2)+6n^2\mu^2\sigma^2+mn^2\mu^4\right)\\
    & -(\sigma^2 n+n\mu^2)^2,~~c = n/m.
\end{split}
\label{equ:eig_fc}   
\end{equation}
If $\mu=0$, we have $E[[\mathbf{K_t}^T\mathbf{K_t}]_{i,j\neq i}]= 0$, thus fully-connected layers with i.i.d. zero-mean weight satisfy Definition \ref{def:expectattion_diagonal_matrix}  and \ref{def:central_matrix}. And we have
\begin{equation}
\begin{split}
    \phi(\mathbf{K}_t\mathbf{K}_t^T) = \sigma^2 n, \varphi(\mathbf{K}_t\mathbf{K}_t^T) = \sigma^4 mn.
\end{split}
\end{equation}

\textbf{2D Convolution with Gaussian Kernel.} We assume that the input data $\mathbf{Y}\in \mathbb{R}^{c_{in} \times h_{in} \times w_{in}}$, where $c_{in}, h_{in}, w_{in}$ denote the number of input channels, the height and width of the images of each channel, respectively. We further assume that the elements of the convolving kernel $\mathbf{K}\in \mathbb{R}^{c_{out}\times c_{in}\times k_h \times k_w}$ follow i.i.d $N(0, \sigma_K^2)$, $c_{out}$ denotes the number of output channels while $k_h, k_w$ represent the shape of filters. The stride of convolution is defined by $s_h, s_w$ for the two directions. We use $p_h, p_w$ to denote the zero-padding added to both side of the input respect to each direction.

The convolution can be expanded into the multiplication between $\widetilde{\mathbf{K}}\in \mathbb{R}^{c_{out}h_{out}w_{out}\times c_{in}h_{in}w_{in}}$ and the vectorized $\mathbf{Y}$: $\widetilde{\mathbf{y}}\in \mathbb{R}^{c_{in}h_{in}w_{in}}$. $\widetilde{\mathbf{K}}$ is a expanded version of $\mathbf{K}$ and $h_{out}, w_{out}$ can be calculated with:
\begin{equation}
\begin{split}
    & h_{out} = \left\lfloor \frac{1}{s_h}(h_{in}+2p_{h} - k_h) + 1\right\rfloor,\\
    & w_{out} = \left\lfloor\frac{1}{s_w}(w_{in}+2p_{w} - k_w) + 1\right\rfloor.
\end{split}
\end{equation}

Since the Jacobian matrix of a linear transform is the transform itself, for the sake of simplicity, we write $\widetilde{\mathbf{K}}$ as a block matrix where each block is denoted as $\widetilde{\mathbf{K}}_{i,j} \in \mathbb{R}^{h_{out}w_{out}\times h_{in}w_{in}}$, thus we have $\left[\widetilde{\mathbf{K}}\widetilde{\mathbf{K}}^T\right]_{i,j} = \sum_{p=1}^{c_{in}}\widetilde{\mathbf{K}}_{i,p}\widetilde{\mathbf{K}}_{j,p}^T$, and $\phi\left(\widetilde{\mathbf{K}}\widetilde{\mathbf{K}}^T\right)$ is given by
\begin{equation}
    \phi\left(\widetilde{\mathbf{K}}\widetilde{\mathbf{K}}^T\right) =  E\left[\frac{1}{c_{out}}\sum_itr\left(\left[\widetilde{\mathbf{K}}\widetilde{\mathbf{K}}^T\right]_{i,i}\right)\right].
\end{equation}

If $p_h, p_w$ are zero, the diagonal elements of $\left[\widetilde{\mathbf{K}}\widetilde{\mathbf{K}}^T\right]_{i,i}$ will follow a i.i.d. scaled chi-square distribution with degrees of freedom $k_hk_w$, and we have $\phi\left(\widetilde{\mathbf{K}}\widetilde{\mathbf{K}}^T\right) = \sigma_K^2c_{in}k_hk_w$.

\begin{figure}[ht]
\centering
\includegraphics[width=0.48\textwidth]{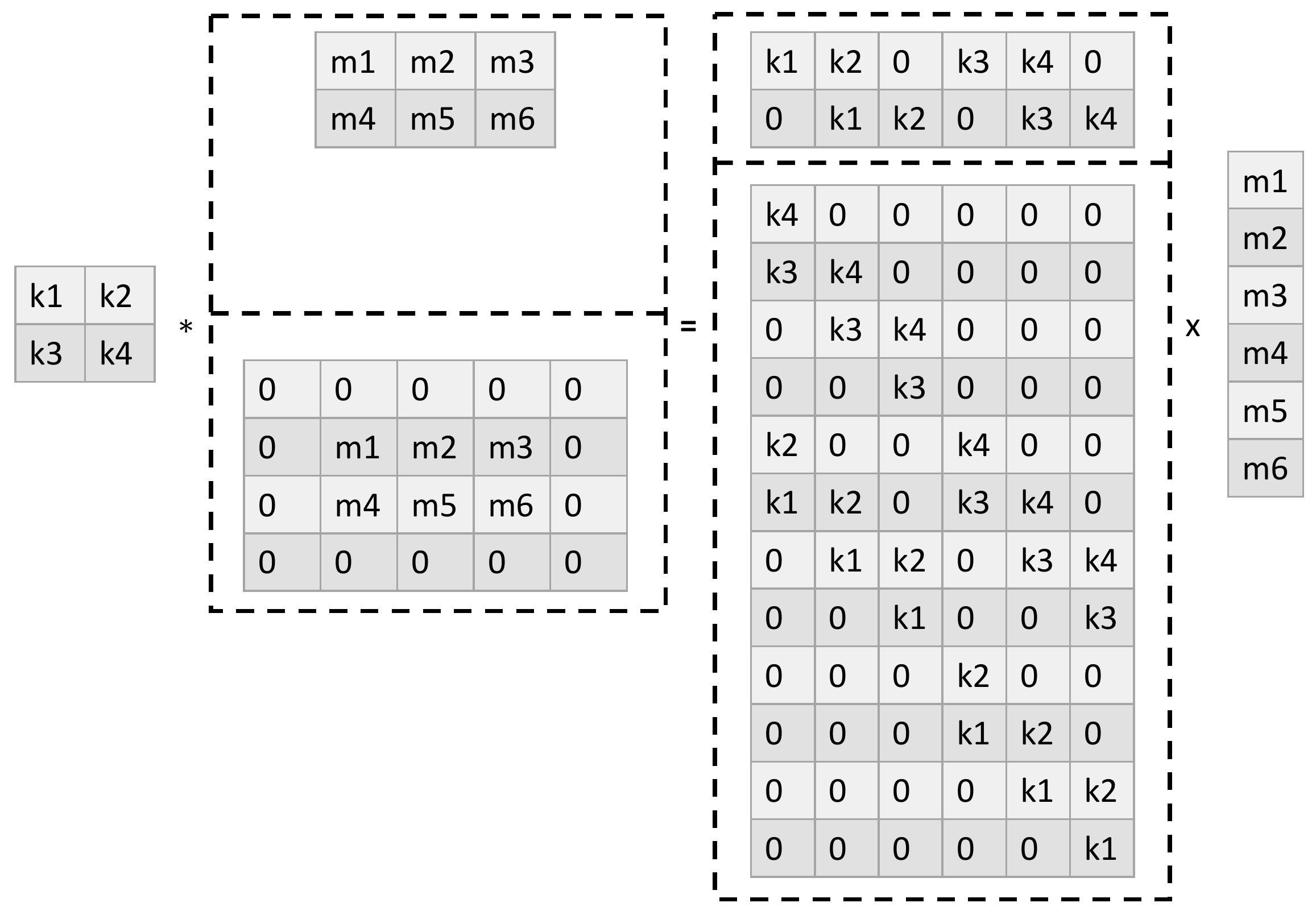}
\caption{A simple illustration for the cutting-off effect of padding under $stride=1$.}
\label{fig:padding_cutoff}
\end{figure}

If $p_h, p_w$ are not zero, it will cause the "cutting-off" effect shown in Fig. \ref{fig:padding_cutoff}: without padding, each row of the linear transform matrix contains one full set of the entries of the convolving kernel. Whereas when padding is involved, in some of the convolution operations where a part of the filter falls on the padded zeros, therefore some of the entries do not appear in the final transform matrix.

We can view the cutting-off effect of padding as a reduction to the effective size of the kernel. As a result, to precisely reflect this, we can simply replace $k_hk_w$ with $\widetilde{k_hk_w}$, where $\widetilde{k_hk_w}$ denotes the average amount of elements of the kernel appeared in each row of the transform matrix. For instance, in Fig. \ref{fig:padding_cutoff}, $k_hk_w=4$ while $\widetilde{k_hk_w}=2$. For 2D convolutions with any kernel size, strides, padding size and feature map size, Alg. \ref{Alg:conv_adjust} provide a general way to calculate the effective kernel size.

\begin{algorithm}[ht]
\DontPrintSemicolon
 \KwData{input channels: $c_{in}$; output channels: $c_{out}$; kernel size: $[k_h, k_w]$; stride: $[s_h, s_w]$; padding: $[p_h, p_w]$; input images' size: $[h_{in}, w_{in}]$.}
 \KwResult{effective kernel size $\widetilde{k_hk_w}$}
 $UpperSide(i) = min(k_h, p_h - i\times s_h)$, $LowerSize(i)=min(k_h, s_h(i+h_{out}^{hp}) + k_h - h_{in}-p_h$\\
 $LeftSide(i) = min(k_w, p_w - i\times s_w)$, $RightSide(i)=min(k_w, s_h(i+w_{out}^{hp}) + k_w - w_{in}-p_w$
 \Begin{
 $h_{out} = \left\lfloor\frac{1}{s_h}(h_{in} + 2p_h - k_h) \right\rfloor + 1$, $w_{out} = \left\lfloor\frac{1}{s_w}(w_{in} + 2p_w - k_w) \right\rfloor + 1$\\
 $h_{out}^{hp} = \left\lfloor\frac{1}{s_h}(h_{in} + p_h - k_h) \right\rfloor + 1$, $w_{out}^{hp} = \left\lfloor\frac{1}{s_w}(w_{in} + p_w - k_w) \right\rfloor + 1$\\
 $it_{h}^{upper}=\left\lfloor\frac{p_h}{s_h}\right\rfloor + 1$,  $it_{h}^{lower}=h_{out} - h_{out}^{hp}$, 
 $it_{w}^{left}=\left\lfloor\frac{p_w}{s_w}\right\rfloor + 1$,  $it_{w}^{right}=w_{out} - w_{out}^{hp}$\\
 
 $T=k_hk_wh_{out}w_{out},~~P=0,~~R=0$\\
 \For{$i \in range(it_{h}^{upper})$}{
 $P+=UpperSide(i)w_{out}k_w$\\
 \For{$j \in range(it_{w}^{left})$}{
 $R+=UpperSide(i)LeftSide(j)$\\
 }
 \For{$j \in range(it_{w}^{right})$}{
 $R+=UpperSide(i)RightSide(j)$
 }
 }
 \For{$i \in range(it_{h}^{lower})$}{
 $P+=LowerSide(i)w_{out}k_w$\\
 \For{$j \in range(it_{w}^{left})$}{
 $R+=LowerSide(i)LeftSide(j)$\\
 }
 \For{$j \in range(it_{w}^{right})$}{
 $R+=LowerSide(i)RightSide(j)$
 }
 }
 \For{$i \in range(it_{w}^{left})$}{
 $P+=LeftSide(i)h_{out}k_h$
 }
 \For{$i \in range(it_{w}^{right})$}{
 $P+=RightSide(i)h_{out}k_h$
 }

 }
 \Return{$\widetilde{k_hk_w}=\frac{T-P+R}{T}k_hk_w$}
 \caption{Effective Kernel Size of Conv.2D}
 \label{Alg:conv_adjust}
\end{algorithm}

Of course, this cutting-off effect is neglectable when the network is not too deep or input image size is large enough compared with the padding size, thus is neglected in previous studies, whereas when the image size is small, we find it is quiet influential to the estimation of $\phi\left(\widetilde{\mathbf{K}}\widetilde{\mathbf{K}}^T\right)$. To illustrate this point, we uniformly sample convolutional layers from the joint state $[h_{in}, w_{in}, c_{in}, c_{out}, stride, \sigma, padding, $ $kernel]$, whose elements represent height/width of input images (from $7$ to $32$), number of input/output channels (from $8$ to $32$), the convolving stride ($1$ to $3$ for each direction), the variance of the Gaussian kernel (for $0.1$ to $5$), the padding size ($1$ or $2$ for each direction), and the kernel size ($1$ to $5$ for each direction), respectively. We run the experiment for 100 times and evaluate how the theoretical $\phi(\mathbf{JJ}^T)_{t}$ approximate to the real value with the metric $\phi(\mathbf{JJ}^T)/\phi(\mathbf{JJ}^T)_{t}$. The result is shown in Fig. \ref{fig:verify_effective_kernel_size}.
\begin{figure}[htb!]
\centering
    \includegraphics[width=0.40\textwidth]{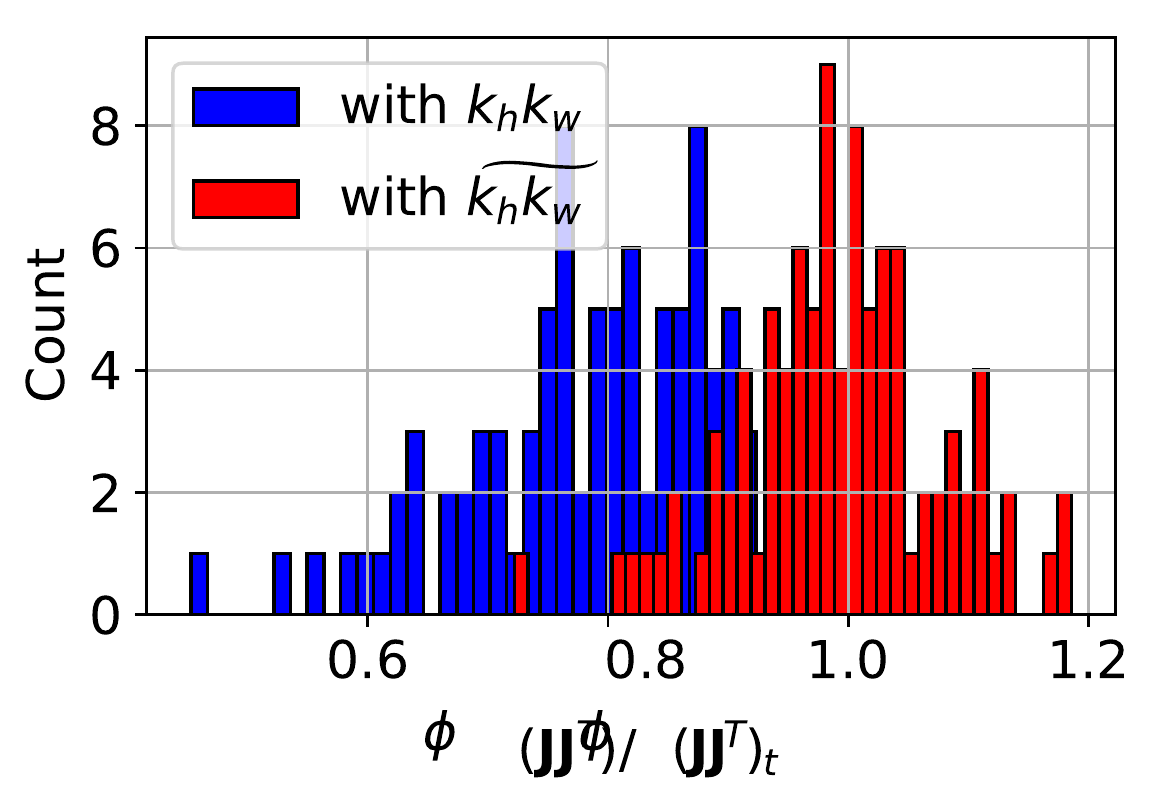}
  \caption{The effectiveness of the correction with Algorithm \ref{Alg:conv_adjust}.}
  \label{fig:verify_effective_kernel_size}
\end{figure}
We can see that the results with the effective kernel size $\widetilde{k_hk_w}$ perfectly centered around 1, while those with $k_hk_w$ has an obvious bias over 10\%.

Then we prove that convolution 2D with the Gaussian kernel satisfies Definition \ref{def:expectattion_diagonal_matrix} and \ref{def:central_matrix}.

We can write the Jacobian matrix of the 2D convolution as $\mathbf{J}=[\mathbf{j_1}, \mathbf{j_2}, .., \mathbf{j_n}]$, where $\mathbf{j_i}$ are column vectors. Thus 
\begin{equation}
    [\mathbf{J}^T\mathbf{J}]_{i,j} = \mathbf{j_i}^T\cdot \mathbf{j_j}.
\end{equation}

Since the kernel is initialized with i.i.d. $N(0, \sigma^2)$, when $i \neq j$, the corresponding entries of $\mathbf{j_i}$ and $\mathbf{j_j}$ are independent. As a result, $\forall i,j\neq i, E[\mathbf{j_i}^T\cdot \mathbf{j_j}] = 0$.

When $i=j$, $E\left[\mathbf{J}^T\mathbf{J}\right]_{i,i} = E\left[||\mathbf{j_i}||_2^2\right] = ||\mathbf{j_i}||_0\sigma^2$, where $||\mathbf{j_i}||_0$ denotes the $L_0$-norm of vector $\mathbf{j_i}$ (the amount of non-zero entries in $\mathbf{j_i}$).As almost all the column vectors contain the same amount of non-zero entries, we say that the diagonal elements of $\widetilde{\mathbf{K}}^T\widetilde{\mathbf{K}}$ almost share the same expectation
The proof of Definition \ref{def:central_matrix} is much simpler, as the entries of the Jacobian matrix is either zero-mean random variable or 0, it is a central matrix.

All in all, the 2D convolution satisfies Definition \ref{def:expectattion_diagonal_matrix} and \ref{def:central_matrix}.

\textbf{Orthogonal Transform} \modify{Because our target is to have $\phi(\mathbf{J_iJ_i}^T)=1$ and $\varphi(\mathbf{J_iJ_i}^T)\approx 0$, one intuitive idea is to initialize $\mathbf{J_i}$ to have orthogonal rows, such that we have}
\begin{equation}
    \phi(\mathbf{JJ}^T)=\beta^2,~~\varphi(\mathbf{JJ}^T)=0.
\label{equ:phi_orth}
\end{equation}
\begin{algorithm}[ht]
\DontPrintSemicolon
 \KwData{Kernel size $[c_{out}, c_{in}, k_h, k_w]$; gain: $\beta$.}
 \KwResult{Kernel $\mathbf{K}$.}
 \Begin{
 Initialize $\mathbf{A}\in \mathbb{R}^{c_{in}k_hk_w\times c_{out}}$ with i.i.d. $N(0, 1)$.\\
 $\mathbf{Q}, \mathbf{R} = reduced~~QR~~decomposition(\mathbf{A})$.\\
   //$\mathbf{Q}\in \mathbb{R}^{c_{in}k_hk_w\times c_{out}}$ with orthogonal columns\\
 $\mathbf{S}=diag(sign([\mathbf{R}]_{11}), sign([\mathbf{R}]_{22}),...)$\\
 $\mathbf{K} = \beta \times reshape((\mathbf{QS})^T, [c_{out}, c_{in}, k_h, k_w])$
}
 \Return{$\mathbf{K}$}
 \caption{Haar Orthogonal Initialization}
 \label{Alg:orth_init}
\end{algorithm}
Saxe et al. (2013) \cite{saxe2013exact} initialize the kernels in CONV layers with Algorithm \ref{Alg:orth_init}.

As $\mathbf{J}$ is orthogonal, it certainly satisfies Definition \ref{def:expectattion_diagonal_matrix}. Let $\mathbf{A}:= [\mathbf{\alpha_1}, \mathbf{\alpha_2}, ..., \mathbf{\alpha_n}]$ is an i.i.d. Gaussian matrix. We can get a group of orthogonal basis $[\hat{\mathbf{\beta_1}}, \hat{\mathbf{\beta_2}}, ..., \hat{\mathbf{\beta_n}}]$with Gram-Schmidt process:
\begin{equation}
    \mathbf{\beta_j} = \mathbf{\alpha_j} - \sum_{k=1}^{j-1}\frac{\mathbf{\alpha_j}^T\cdot\mathbf{\beta_k}}{\mathbf{\beta_k}^T\cdot\mathbf{\beta_k}}\mathbf{\beta_k},~~~\hat{\mathbf{\beta_j}} = \frac{1}{||\mathbf{\beta_j}||_2}\mathbf{\beta_j}.
\label{equ:gram-schmidt-process}
\end{equation}

As long as $\forall i,j,~~E\left[\left[\mathbf{A}\right]_{i,j}\right]=0$, with Equation \eqref{equ:gram-schmidt-process}, we have $\forall j,k, E\left[\left[\hat{\mathbf{\beta_j}}\right]_{k}\right]=0$. As a result, orthogonal layers satisfy Definition \ref{def:central_matrix}.

\subsubsection{Normalization and Regularization}

\textbf{Data Normalization.($\mathbf{g}:=norm(\mathbf{u})$)} We first formulate $\mathbf{g}(\mathbf{u})$ as:

\begin{equation}
    \mathbf{g}_i = \frac{\mathbf{u}_i-\mu_B}{\sigma_B}, \mu_B = \frac{1}{m}\sum_{k=1}^m \mathbf{u}_k, \sigma_B = \sqrt{\frac{1}{m}\sum_{k=1}^m (\mathbf{u}_k - \mu_B)^2}
\end{equation}
Following this formulation, $\partial \mathbf{g}_i/\partial \mathbf{u_j}$ can be calculated as below:

\begin{equation}
    \begin{split}
        & \frac{\partial \mathbf{g}_i}{\partial \mathbf{u}_j} = \frac{1}{\sigma_B^2}\left[\left( \frac{\partial \mathbf{u}_i}{\partial \mathbf{u}_j} - \frac{\partial \mu_B}{\partial \mathbf{u}_j}\right) \sigma_B - \left((\mathbf{u}_i - \mu)\frac{\partial \sigma_B}{\partial \mathbf{u}_j}\right) \right],\\
        & \frac{\partial \mu_B}{\partial \mathbf{u}_j} = \frac{1}{m},~~\frac{\partial\sigma_B}{\partial\mathbf{u}_j} = \frac{1}{m\sigma_B}(\mathbf{u}_j-\mu_B).
    \end{split}
\label{equ:Jacobin_g_u}
\end{equation}
From Equation \eqref{equ:Jacobin_g_u}, we can get:

\begin{equation}
\begin{split}
    & \frac{\partial \mathbf{g}_i}{\partial \mathbf{u}_j} = \frac{1}{\sigma_B^2}\left[\left( \Delta - \frac{1}{m}\right) \sigma_B - \frac{1}{m\sigma_B}\left(\mathbf{u}_i - \mu_B)(\mathbf{u}_j - \mu_B)\right) \right]\\
    & \Delta = 1 ~~ if~~i=j~~else~~0.
\end{split}
\label{equ:Jacobin_g_u_result}
\end{equation}
We denote $\hat{\mathbf{u}}_i := (\mathbf{u}_i - \mu_B)/\sigma_B$, then Equation \eqref{equ:Jacobin_g_u_result} can be simplified as:

\begin{equation}
    \frac{\partial \mathbf{g}_i}{\partial \mathbf{u}_j} = \frac{1}{\sigma_B}\left[ \Delta - \frac{1}{m}(1 + \hat{\mathbf{u}}_i\hat{\mathbf{u}}_j) \right],\Delta = 1 ~~ if~~i=j~~else~~0.
\label{equ:Jacobin_g_u_result_simple}
\end{equation}
A matrix $\mathbf{U}$ defined as $U_{i,j} := 1 + \hat{\mathbf{u}}_i\hat{\mathbf{u}}_j$ is a real symmetric matrix, which can be broken down with eigendecomposition to $\mathbf{U} = \mathbf{P}^T\mathbf{D}\mathbf{P}$. And we have

\begin{equation}
    \mathbf{g_u} = \frac{1}{\sigma_B}\mathbf{P}^T(\mathbf{I - \frac{1}{m}\mathbf{D}})\mathbf{P}.
\label{equ:bn_eig_expansion}
\end{equation}
With Equation \eqref{equ:bn_eig_expansion}, as $\mathbf{P}^T\mathbf{P}=\mathbf{I}$, if $m$ is sufficiently big, we will easily get

\begin{equation}
\begin{split}
    &\phi(\mathbf{g_ug_u}^T) = \lim_{m\rightarrow\infty} \frac{1}{\sigma_B^2}\frac{1}{m}\sum_{i=1}^m(1-\frac{\lambda_i}{m})^2,\\
    &\varphi(\mathbf{g_ug_u}^T) = \lim_{m\rightarrow\infty} \frac{1}{\sigma_B^4}\frac{1}{m}\sum_{i=1}^m(1-\frac{\lambda_i}{m})^4 - \phi^2(\mathbf{g_ug_u}^T).
\end{split}
\label{equ:eig_bn_pre}
\end{equation}
where $\lambda_i$ are the eigenvalues of $\mathbf{U}:= \mathbf{1}+\hat{\mathbf{u}}\hat{\mathbf{u}}^T$, $\hat{\mathbf{u}}\hat{\mathbf{u}}^T = \frac{1}{\sigma_B^2}\left((\mathbf{u} - \mu_B)(\mathbf{u} - \mu_B)^T\right)$ and $\mathbf{1}$ is an $m\times m$ matrix filled with $1$.
\begin{lemma}
 $\mathbf{U}:= \mathbf{1}+\hat{\mathbf{u}}\hat{\mathbf{u}}^T$ has two non-zero eigenvalues, and in the limit of $m$, $\lim_{m\rightarrow\infty}\frac{\lambda_1}{m}=\lim_{m\rightarrow\infty}\frac{\lambda_2}{m}=1$.
 \label{lemma:eig_bn}
\end{lemma}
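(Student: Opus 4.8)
The plan is to exploit the rank-two structure of $\mathbf{U}$ directly, without invoking free probability. Write $\mathbf{e}\in\mathbb{R}^{m\times 1}$ for the all-ones vector, so that the matrix $\mathbf{1}$ in the lemma equals $\mathbf{e}\mathbf{e}^T$, and recall that $\hat{\mathbf{u}}=\frac{1}{\sigma_B}(\mathbf{u}-\mu_B\mathbf{e})$ with $\mu_B=\frac1m\sum_k\mathbf{u}_k$ and $\sigma_B^2=\frac1m\sum_k(\mathbf{u}_k-\mu_B)^2$. Then $\mathbf{U}=\mathbf{e}\mathbf{e}^T+\hat{\mathbf{u}}\hat{\mathbf{u}}^T$ is a sum of two rank-one matrices, hence $\mathrm{rank}(\mathbf{U})\le 2$ and $\mathbf{U}$ has at most two non-zero eigenvalues; this settles the first claim.

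For the eigenvalues themselves, I would compute the action of $\mathbf{U}$ on the subspace $\mathrm{span}\{\mathbf{e},\hat{\mathbf{u}}\}$, which contains the range of $\mathbf{U}$. The point is that the two inner products that appear encode exactly the mean-zero and unit-variance properties enforced by the normalization: since $\mu_B$ is the empirical mean, $\mathbf{e}^T\hat{\mathbf{u}}=\frac{1}{\sigma_B}\sum_k(\mathbf{u}_k-\mu_B)=0$, and since $\sigma_B^2$ is the empirical variance, $\hat{\mathbf{u}}^T\hat{\mathbf{u}}=\frac{1}{\sigma_B^2}\sum_k(\mathbf{u}_k-\mu_B)^2=m$, while trivially $\mathbf{e}^T\mathbf{e}=m$. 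Substituting into $\mathbf{U}\mathbf{e}=\mathbf{e}(\mathbf{e}^T\mathbf{e})+\hat{\mathbf{u}}(\hat{\mathbf{u}}^T\mathbf{e})$ and $\mathbf{U}\hat{\mathbf{u}}=\mathbf{e}(\mathbf{e}^T\hat{\mathbf{u}})+\hat{\mathbf{u}}(\hat{\mathbf{u}}^T\hat{\mathbf{u}})$ yields $\mathbf{U}\mathbf{e}=m\,\mathbf{e}$ and $\mathbf{U}\hat{\mathbf{u}}=m\,\hat{\mathbf{u}}$. Thus $\mathbf{e}$ and $\hat{\mathbf{u}}$ are eigenvectors with eigenvalue $m$, and they are linearly independent whenever $\mathbf{u}$ is non-constant (i.e. $\sigma_B>0$), so the two non-zero eigenvalues are exactly $\lambda_1=\lambda_2=m$. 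Dividing by $m$ gives $\lambda_1/m=\lambda_2/m=1$ exactly, hence in particular in the limit $m\to\infty$.

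The only refinement needed is to account for the $\epsilon$-regularized denominator $\sigma_B^2=\frac1m\sum_k(\mathbf{u}_k-\mu_B)^2+\epsilon$ used in Equation \eqref{equ:data_normalization}: there $\hat{\mathbf{u}}^T\hat{\mathbf{u}}=m\bigl(1-\epsilon/\sigma_B^2\bigr)$ while $\mathbf{e}^T\hat{\mathbf{u}}=0$ is unchanged, so the eigenvalues become $m$ and $m\bigl(1-\epsilon/\sigma_B^2\bigr)$, and the ratios tend to $1$ as $m\to\infty$ (so that $\epsilon$ is negligible against the data variance). This is the sole place where the limiting statement is genuinely required; the rest of the argument is exact. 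There is no real obstacle here beyond the bookkeeping of the two inner products $\mathbf{e}^T\hat{\mathbf{u}}$ and $\hat{\mathbf{u}}^T\hat{\mathbf{u}}$, together with the harmless non-degeneracy assumption $\sigma_B>0$.
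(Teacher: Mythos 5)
Your proof is correct, and it takes a genuinely different and cleaner route than the paper's. The paper proves this lemma by brute force: it expands $\det(\mathbf{1}+\hat{\mathbf{u}}\hat{\mathbf{u}}^T-\lambda\mathbf{I})$, reduces the characteristic polynomial to the quadratic $\lambda^2-(B+m)\lambda+(mB-A^2)=0$ with $A=\sum_i\hat u_i$, $B=\sum_i\hat u_i^2$, solves it, and then lets $A/m\to 0$, $B/m\to 1$ "for large $m$" to conclude $\lambda/m\to 1$. You instead use the rank-two structure $\mathbf{U}=\mathbf{e}\mathbf{e}^T+\hat{\mathbf{u}}\hat{\mathbf{u}}^T$ and observe that the standardization gives the exact identities $\mathbf{e}^T\hat{\mathbf{u}}=0$ and $\hat{\mathbf{u}}^T\hat{\mathbf{u}}=m$, so $\mathbf{e}$ and $\hat{\mathbf{u}}$ are orthogonal eigenvectors both with eigenvalue exactly $m$, and there are no other non-zero eigenvalues. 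This buys you a stronger statement ($\lambda_1=\lambda_2=m$ exactly, so the limit is trivial, modulo the $\epsilon$-regularization which you handle separately), avoids the determinant computation entirely, and makes transparent \emph{why} the eigenvalues equal $m$: the centering makes the two rank-one directions orthogonal and the variance normalization fixes $\|\hat{\mathbf{u}}\|_2^2=m$. In fact the paper's own quantities satisfy $A=0$ and $B=m$ identically (not just asymptotically), so its "limit" phrasing is looser than necessary; the only thing the determinant route offers in exchange is the explicit eigenvalue formula in terms of $A$ and $B$ for a general (non-standardized) vector, which the lemma does not need. Your non-degeneracy caveat $\sigma_B>0$ is the right one, since otherwise $\hat{\mathbf{u}}$ is undefined.
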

\begin{proof}
We denote the $i^{th}$ element of $\hat{\mathbf{u}}$ as $u_i$ and solve the eigenvalues with $det(\mathbf{1}+\hat{\mathbf{u}}\hat{\mathbf{u}}^T-\lambda \mathbf{I})=0$.

\begin{equation}
\begin{split}
&{\left|\begin{array}{cccc} 
    u_1u_1+1-\lambda &    u_1u_2+1    & ... &u_1u_m+1 \\ 
    u_2u_1+1 &    u_2u_2+1 - \lambda    & ... &u_2u_m+1 \\
    ... & ... & ...&...\\
    u_mu_1+1 &    u_mu_2+1    & ... &u_mu_m+1-\lambda
\end{array}\right|}\\
&=(-\lambda)^m\left(-\sum_{i=1}^m\frac{u_i^2-1}{\lambda}\right)\\
& \times \left(\left(1-\frac{m}{\lambda}\right)\!+\!\left(1-\sum_{i=1}^m\frac{u_i+1}{\lambda}\right)\frac{\lambda+\sum_{i=1}^mu_i-1}{-\sum_{i=1}^mu_i^2-1}\right)=0.
\end{split}
\label{equ:1+uut}
\end{equation}
For the sake of simplicity, we denote $A:=\sum_{i=1}^mu_i,B:=\sum_{i=1}^mu_i^2$, and Equation \eqref{equ:1+uut} can be simplified as $\lambda^2 - (B+m)\lambda + (mB-A^2)=0$, whose solution is:

\begin{equation}
    \lambda = \frac{1}{2}(B+m)\pm \sqrt{(B-m)^2+A^2}.
\end{equation}
When $m$ is large enough, $\frac{A}{m}=\frac{1}{m}\sum_{i=1}^mu_i=0,\frac{B}{m}=\frac{1}{m}\sum_{i=1}^mu_i^2=1$ and we have:
\begin{equation}
    \lim_{m\rightarrow \infty}\frac{\lambda}{m} = \frac{1}{2}\left(\frac{B}{m}+1\right)\pm \sqrt{\left(\frac{B}{m}-1\right)^2+\left(\frac{A}{m}\right)^2}=1.
\end{equation}
\end{proof}

By substituting Lemma \ref{lemma:eig_bn} into Equation \eqref{equ:eig_bn_pre}, we get
\begin{equation}
    \phi(\mathbf{g_ug_u}^T) \approx \frac{1}{\sigma_B^2},~~\varphi(\mathbf{g_ug_u}^T) \approx \frac{2}{m\sigma_B^4}.
\label{equ:eig_bn}
\end{equation}

As $m$ is greater than tens of thousands, $\mathbf{g_u}\approx\frac{1}{\sigma_B}\mathbf{I}$, which is diagonal. As a result, it satisfies Definition \ref{def:expectattion_diagonal_matrix} and defies Definition \ref{def:central_matrix}.

\subsection{Proof of Equ. \ref{equ:eigs_l2norm}}\label{proof:eigs_l2norm}
We denote $\sqrt{E\left[[\mathbf{x}]_i^2\right]}$ as $\alpha_2$, therefore we have
\begin{equation}
    \begin{split}
        & \frac{\partial [\hat{\mathbf{x}}]_i}{\partial [\mathbf{x}]_j} = \frac{1}{\alpha_2}\left[\Delta - \frac{1}{m}[\hat{\mathbf{x}}]_i[\hat{\mathbf{x}}]_j\right],~~\Delta=1~~if~~i=j~~else~~0.
    \end{split}
\end{equation}
A matrix $\mathbf{U}$ defined as $[\mathbf{U}]_{i,j}:=[\hat{\mathbf{x}}]_i[\hat{\mathbf{x}}]_j$ is a real symmetric matrix, which can be broken down with eigendecomposition to $\mathbf{U}=\mathbf{P}^T\mathbf{DP}$. And we have
\begin{equation}
    \hat{\mathbf{x}}_{\mathbf{x}}=\frac{1}{\alpha_2}\mathbf{P}^T(\mathbf{I}-\frac{1}{m}\mathbf{D})\mathbf{P}.
    \label{equ:l2n_eig_expansion}
\end{equation}
With Equation \eqref{equ:l2n_eig_expansion}, as $\mathbf{P}^T\mathbf{P}=\mathbf{I}$ and assuming that $m$ is sufficiently big, we can easily get
\begin{equation}
\begin{split}
    &\phi(\hat{\mathbf{x}}_{\mathbf{x}}\hat{\mathbf{x}}_{\mathbf{x}}^T) = \lim_{m\rightarrow\infty} \frac{1}{\alpha_2^2}\frac{1}{m}\sum_{i=1}^m(1-\frac{\lambda_i}{m})^2,\\
    &\varphi(\hat{\mathbf{x}}_{\mathbf{x}}\hat{\mathbf{x}}_{\mathbf{x}}^T) = \lim_{m\rightarrow\infty} \frac{1}{\alpha_2^4}\frac{1}{m}\sum_{i=1}^m(1-\frac{\lambda_i}{m})^4 - \phi^2(\hat{\mathbf{x}}_{\mathbf{x}}\hat{\mathbf{x}}_{\mathbf{x}}^T).
\end{split}
\label{equ:eig_l2n_pre}
\end{equation}
where $\lambda_i$ is the eigenvalue of $\mathbf{U}:= \hat{\mathbf{x}}\hat{\mathbf{x}}^T$, $\hat{\mathbf{x}}\hat{\mathbf{x}}^T = \frac{1}{\alpha_2^2}\mathbf{x}\mathbf{x}^T$.
\begin{lemma}
 $\mathbf{U}:= \hat{\mathbf{u}}\hat{\mathbf{u}}^T$ has one non-zero eigenvalues, and in the limit of $m$, $\lim_{m\rightarrow\infty}\frac{\lambda}{m}=0$.
 \label{lemma:eig_l2n}
\end{lemma}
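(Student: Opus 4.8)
The plan is to exploit the rank-one structure of $\mathbf{U}$. Because SMN omits the mean-subtraction step of data normalization, the matrix entering the Jacobian is the pure outer product $\mathbf{U}=\hat{\mathbf{x}}\hat{\mathbf{x}}^T=\frac{1}{\alpha_2^2}\mathbf{x}\mathbf{x}^T$, rather than the $\mathbf{1}+\hat{\mathbf{u}}\hat{\mathbf{u}}^T$ form that produced two large eigenvalues in Lemma \ref{lemma:eig_bn}. As the product of a single vector with its own transpose, $\mathbf{U}$ has rank at most one, so at most one eigenvalue can be non-zero. First I would make this precise by forming the characteristic polynomial exactly as in the determinant expansion \eqref{equ:1+uut}, which here collapses (the all-ones term being absent) to $\det(\hat{\mathbf{x}}\hat{\mathbf{x}}^T-\lambda\mathbf{I})=(-\lambda)^{m-1}\big(\sum_i[\hat{\mathbf{x}}]_i^2-\lambda\big)$. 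Setting this to zero yields $m-1$ roots equal to $0$ together with a single non-zero root, which establishes the first half of the claim.

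Next I would read the lone non-zero eigenvalue off that factorization, namely $\lambda=\sum_i[\hat{\mathbf{x}}]_i^2$ (equivalently the trace of the rank-one matrix, attained along the eigenvector $\hat{\mathbf{x}}/\|\hat{\mathbf{x}}\|_2$). The remaining task, and the crux of the argument, is to control $\lambda/m$ as $m\to\infty$. Following the large-$m$ asymptotics used after \eqref{equ:1+uut}, I would track how this single non-zero eigenvalue scales against the growing dimension $m$ and show that, divided by $m$, it vanishes in the limit, i.e. $\lim_{m\to\infty}\lambda/m=0$, in contrast with the two eigenvalues of the data-normalization matrix that each saturate at $m$.

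Finally I would close the loop through the spectral-moment formulas: substituting ``one non-zero eigenvalue with $\lambda/m\to0$, together with $m-1$ vanishing eigenvalues'' into \eqref{equ:eig_l2n_pre} leaves $\frac{1}{m}\sum_i(1-\lambda_i/m)^2\to1$ and $\frac{1}{m}\sum_i(1-\lambda_i/m)^4\to1$, so that $\phi(\hat{\mathbf{x}}_{\mathbf{x}}\hat{\mathbf{x}}_{\mathbf{x}}^T)\to\frac{1}{\alpha_2^2}$ and $\varphi(\hat{\mathbf{x}}_{\mathbf{x}}\hat{\mathbf{x}}_{\mathbf{x}}^T)\to0$, which is exactly \eqref{equ:eigs_l2norm}. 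The hard part will be the limiting step for the lone eigenvalue: since a single spectral direction is being weighed against the full dimension $m$, the argument must carefully isolate that one non-trivial direction from the $m-1$ trivial ones, mirroring but simplifying the two-eigenvalue bookkeeping of Lemma \ref{lemma:eig_bn}.
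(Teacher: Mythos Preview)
Your rank-one approach and the determinant factorization are exactly what the paper does, and your identification $\lambda=\sum_i[\hat{\mathbf{x}}]_i^{2}$ is correct. The problem is the step you flag as ``the hard part'': once you actually evaluate it you get $\lambda/m\to 1$, not $0$. By the very construction of SMN, $\hat{\mathbf{x}}=\mathbf{x}/\alpha_2$ with $\alpha_2^{2}=E\big[[\mathbf{x}]_i^{2}\big]$, so $\tfrac{1}{m}\sum_i[\hat{\mathbf{x}}]_i^{2}=\big(\tfrac{1}{m}\sum_i x_i^{2}\big)/\alpha_2^{2}\to 1$; in fact this equals $1$ exactly when $\alpha_2^{2}$ is the sample second moment used in Algorithm~\ref{Alg:smn_alg}. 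The paper's own proof slips at precisely this point: after deriving $(-\lambda)^m\big(1-\tfrac{1}{\lambda}\sum_i u_i^{2}\big)=0$ it records the non-zero root as $\sum_i u_i$ instead of $\sum_i u_i^{2}$ and then argues that the sample \emph{first} moment vanishes, which is a different quantity.

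Fortunately this does not disturb the conclusion you draw in your last paragraph. In \eqref{equ:eig_l2n_pre} the averages are over all $m$ eigenvalues, so the single non-trivial one contributes $O(1/m)$ to each sum regardless of whether $(1-\lambda/m)^{2}\to 1$ or $(1-\lambda/m)^{2}\to 0$; either way $\tfrac{1}{m}\sum_i(1-\lambda_i/m)^{2}\to 1$ and $\tfrac{1}{m}\sum_i(1-\lambda_i/m)^{4}\to 1$, and \eqref{equ:eigs_l2norm} follows. So your plan is sound and matches the paper's, but the lemma as stated (and the paper's proof of it) should read $\lambda/m\to 1$; the downstream $\phi$ and $\varphi$ are unaffected.
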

\begin{proof}
We denote the $i^{th}$ element of $\hat{\mathbf{u}}$ as $u_i$ and solve the eigenvalues with $det(\hat{\mathbf{u}}\hat{\mathbf{u}}^T-\lambda \mathbf{I})=0$.

\begin{equation}
\begin{split}
&{\left|\begin{array}{cccc} 
    u_1u_1-\lambda &    u_1u_2    & ... &u_1u_m \\ 
    u_2u_1 &    u_2u_2 - \lambda    & ... &u_2u_m \\
    ... & ... & ...&...\\
    u_mu_1 &    u_mu_2    & ... &u_mu_m-\lambda
\end{array}\right|}\\
%&={\left|\begin{array}{ccccc}
%    1 & u_1 & u_2 & ... &u_m\\
%    0 & u_1u_1-\lambda &    u_1u_2    & ... &u_1u_m \\ 
%    0 & u_2u_1 &    u_2u_2 - \lambda    & ... &u_2u_m \\
%    ...& ... & ... & ...&...\\
%    0 & u_mu_1 &    u_mu_2    & ... &u_mu_m-\lambda
%\end{array}\right|}\\
%&={\left|\begin{array}{ccccc}
%    1 & u_1 & u_2 & ... &u_m\\
%    -u_1 & -\lambda &    0   & ... &0 \\ 
%    -u_2 & 0 &     - \lambda    & ... &0 \\
%    ...& ... & ... & ...&...\\
%    -u_m & 0 &        & ... &-\lambda
%\end{array}\right|}\\
&~~~~~~~~~~~~~~~~~~~~~~~~~~~~~=\left(-\lambda\right)^m\left(1-\frac{1}{\lambda}\sum_{i=1}^mu_i^2\right)=0.
\end{split}
\label{equ:uut}
\end{equation}
Therefore the only non-zero eigenvalue is $\lambda = \sum_{i=1}^mu_i$.
When $m$ is large enough, we have
\begin{equation}
    \lim_{m\rightarrow \infty}\frac{\lambda}{m} = \frac{1}{m}\sum_{i=1}^mu_i = 0.
\end{equation}
\end{proof}

By substituting Lemma \ref{lemma:eig_bn} into Equation \eqref{equ:eig_bn_pre}, we get
\begin{equation}
    \phi(\hat{\mathbf{x}}_{\mathbf{x}}\hat{\mathbf{x}}_{\mathbf{x}}^T) \approx \frac{1}{\alpha_2^2},~~\varphi(\hat{\mathbf{x}}_{\mathbf{x}}\hat{\mathbf{x}}_{\mathbf{x}}^T) \approx 0.
\label{equ:eig_l2n}
\end{equation}

As $m$ is greater than tens of thousands, $\hat{\mathbf{x}}_{\mathbf{x}}\approx\frac{1}{\alpha_2}\mathbf{I}$, which is diagonal. As a result, it satisfies Definition \ref{def:expectattion_diagonal_matrix} and defies Definition \ref{def:central_matrix}.

% series nerual networks
\subsection{Proof of Proposition \ref{prop:gaussian_activ_2nd_invariant}}\label{proof:gaussian_activ_2nd_invariant}

\textbf{Proposition \ref{prop:gaussian_activ_2nd_invariant}. } \textit{A neural network composed of cyclic central Gaussian transform with i.i.d. entries and any network components is $\infty^{th}$ order moment unitary invariant.}
\rule[0pt]{0.48\textwidth}{0.05em}

\begin{definition}
\textbf{(bi-unitarily invariant random matrix)}\cite{cakmak2012non}
Let $\mathbf{X}$ be a $R\times T$ random matrix. If for any unitary matrices $\mathbf{U}$, $\mathbf{V}$, the joint distribution of the entries of $\mathbf{X}$ equals to the joint distribution of the entries of $\mathbf{Y}=\mathbf{UHV}^{H}$, then $\mathbf{X}$ is called bi-unitarily invariant random matrix.
\end{definition}

\begin{lemma}
Standard Gaussian matrices are bi-unitarily invariant random matrices.(Example 7 of chapter 3.1 in Cakmak (2012) \cite{cakmak2012non})
\label{lemma:bi_unitary_gaussain}
\end{lemma}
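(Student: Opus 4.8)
The plan is to reduce this matrix-valued statement to the familiar rotational invariance of a standard Gaussian vector via vectorization. First I would fix the normalization: a \emph{standard Gaussian matrix} $\mathbf{X}$ is the one whose entries are i.i.d.\ (real or complex) standard Gaussians, so that $\mathrm{vec}(\mathbf{X})$ --- the vector obtained by stacking the columns of $\mathbf{X}$ --- is a standard Gaussian vector in the ambient Euclidean (resp.\ Hermitian) space of dimension $RT$. The entire content of bi-unitary invariance is then the claim that, for fixed unitary $\mathbf{U}\in\mathbb{C}^{R\times R}$ and $\mathbf{V}\in\mathbb{C}^{T\times T}$, the vectorized image $\mathrm{vec}(\mathbf{UXV}^H)$ has the same law as $\mathrm{vec}(\mathbf{X})$ (here $\mathbf{Y}=\mathbf{UXV}^H$, correcting the obvious typo in the definition).

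Second, I would apply the Kronecker identity $\mathrm{vec}(\mathbf{AXB})=(\mathbf{B}^T\otimes\mathbf{A})\,\mathrm{vec}(\mathbf{X})$ with $\mathbf{A}=\mathbf{U}$ and $\mathbf{B}=\mathbf{V}^H$, which gives $\mathrm{vec}(\mathbf{Y})=(\overline{\mathbf{V}}\otimes\mathbf{U})\,\mathrm{vec}(\mathbf{X})$. The key algebraic fact to verify is that $\overline{\mathbf{V}}\otimes\mathbf{U}$ is itself unitary: this follows from $(\mathbf{P}\otimes\mathbf{Q})^H(\mathbf{P}\otimes\mathbf{Q})=(\mathbf{P}^H\mathbf{P})\otimes(\mathbf{Q}^H\mathbf{Q})=\mathbf{I}\otimes\mathbf{I}=\mathbf{I}$, together with the remark that $\overline{\mathbf{V}}$ is unitary whenever $\mathbf{V}$ is. Since a standard Gaussian vector $\mathbf{z}$ satisfies $\mathbf{Oz}\overset{d}{=}\mathbf{z}$ for every unitary $\mathbf{O}$ --- its density depends on $\mathbf{z}$ only through $\|\mathbf{z}\|^2$, which $\mathbf{O}$ preserves, while the Jacobian of $\mathbf{O}$ has modulus one --- I conclude $\mathrm{vec}(\mathbf{Y})\overset{d}{=}\mathrm{vec}(\mathbf{X})$, which is exactly the required equality of joint distributions of the entries.

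As an equivalent route that avoids vectorization, I could work directly with the density $p(\mathbf{X})\propto\exp(-\tfrac{1}{2}\mathrm{Tr}(\mathbf{X}\mathbf{X}^H))$ and note that $\mathrm{Tr}(\mathbf{Y}\mathbf{Y}^H)=\mathrm{Tr}(\mathbf{U}\mathbf{X}\mathbf{V}^H\mathbf{V}\mathbf{X}^H\mathbf{U}^H)=\mathrm{Tr}(\mathbf{X}\mathbf{X}^H)$, using $\mathbf{V}^H\mathbf{V}=\mathbf{I}$, the cyclicity of the trace, and $\mathbf{U}^H\mathbf{U}=\mathbf{I}$; combined with the unit Jacobian this again yields $p(\mathbf{Y})=p(\mathbf{X})$. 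The only genuine subtlety --- and hence the main thing to pin down carefully --- is the Jacobian/measure-preservation step: one must confirm that the linear map $\mathbf{X}\mapsto\mathbf{UXV}^H$ is an isometry of the underlying real Euclidean space (of dimension $RT$ in the real case, $2RT$ in the complex case), so that Lebesgue measure is preserved, and one must keep the real-versus-complex bookkeeping consistent with the setting actually used (real orthogonal Haar matrices in the applications of Theorems~\ref{theorem:multiplication} and~\ref{theorem:Addition}). Everything else reduces to routine linear algebra.
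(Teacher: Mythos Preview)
Your proposal is correct. The paper itself does not prove this lemma at all; it merely states it and cites Example~7 of Chapter~3.1 in Cakmak~(2012)~\cite{cakmak2012non} as the source, then uses it as a black box in the proof of Proposition~\ref{prop:gaussian_activ_2nd_invariant}. You, by contrast, supply a self-contained argument: reduce to the rotational invariance of a standard Gaussian vector via $\mathrm{vec}(\mathbf{UXV}^H)=(\overline{\mathbf{V}}\otimes\mathbf{U})\,\mathrm{vec}(\mathbf{X})$ and the fact that $\overline{\mathbf{V}}\otimes\mathbf{U}$ is unitary, or equivalently note that the density $p(\mathbf{X})\propto\exp(-\tfrac{1}{2}\mathrm{Tr}(\mathbf{X}\mathbf{X}^H))$ depends only on the Frobenius norm, which is preserved under left and right unitary multiplication. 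Both routes are standard and sound; the only point to keep tidy, as you already flag, is the real-versus-complex bookkeeping (in the real case $\mathbf{U},\mathbf{V}$ are orthogonal and $\overline{\mathbf{V}}=\mathbf{V}$), which does not affect the argument.
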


For a neural network composed of cyclic central Gaussian transform with i.i.d. entries and any network components with $L$ layers, it's input-output Jacobian can be written as $\Pi_{i=L}^1\mathbf{D_i}\mathbf{W_i}$, where $\mathbf{W_i}=\beta_i \mathbf{W}$, $\mathbf{W}$ is a standard Gaussian Matrix. $\forall p < \infty$, we have
\begin{equation}
\begin{split}
    & \phi\left(\left((\Pi_{i=L}^1 \mathbf{U_{D,i}D_iU_{W,i}W_i})(\Pi_{i=L}^1 \mathbf{U_{D,i}D_iU_{W,i}W_i})^T\right)^p\right)\\
    &=\phi\left(\mathbf{U_{D, L}}((\Pi_{i=L}^1 \mathbf{D_iU_{W,i}\beta_iWU_{D,i-1}})\right.\\
    &~~~~~~~~~~~~~~~~~~~~~~~~~~~\left.(\Pi_{i=L}^1 \mathbf{D_iU_{W,i}\beta_iWU_{D,i-1}})^T)^p\mathbf{U_{D, L}}^T\right).
\end{split}
\end{equation}
With Lemma \ref{lemma:bi_unitary_gaussain}, as $\mathbf{W}$ is a bi-unitarily invariant matrix, we have $\forall i, \mathbf{D_iU_{W,i}\beta_iWU_{D,i-1}} = \mathbf{D_i\beta_iW}$. As a result, we have
\begin{equation}
\begin{split}
    & \phi\left(\left((\Pi_{i=L}^1 \mathbf{U_{D,i}D_iU_{W,i}W_i})(\Pi_{i=L}^1 \mathbf{U_{D,i}D_iU_{W,i}W_i})^T\right)^p\right)\\
    &  =\phi\left(\left((\Pi_{i} \mathbf{D_iW_i})(\Pi_{i} \mathbf{D_iW_i})^T\right)^p\right).
\end{split}
\end{equation}
and the proposition is proved.

\subsection{Proof of Proposition \ref{prop:orth_activ_2nd_invariant}}\label{proof:orth_activ_2nd_invariant}

\textbf{Proposition \ref{prop:orth_activ_2nd_invariant}. } \textit{A neural network block composed of an orthogonal transform layer and an activation function layer is at least $2^{nd}$ moment unitary invariant.}

\rule[0pt]{0.46\textwidth}{0.05em}

Let $\mathbf{O}\in \mathbb{R}^{m\times n}$ be an orthogonal random matrix and $\mathbf{D}\in \mathbb{R}^{k\times m}$ be an diagonal random matrix. Let $\mathbf{U_o}$, $\mathbf{U_d}$ be two haar unitary matrices with proper size and independent with $\mathbf{O}$ and $\mathbf{D}$, then we have
\begin{equation}
\begin{split}
    &\phi\left(\left(\mathbf{U_dDU_oO}\right)\left(\mathbf{U_dDU_oO}\right)^T\right) = \phi(\mathbf{DD}^T),\\
    &\phi\left(\left(\left(\mathbf{U_dDU_oO}\right)\left(\mathbf{U_dDU_oO}\right)^T\right)^2\right) = \phi(\mathbf{DD}^T\mathbf{DD}^T).
\end{split}
\end{equation}
Similarly, we can prove that
\begin{equation}
\begin{split}
    &\phi\left(\left(\mathbf{DO}\right)\left(\mathbf{DO}\right)^T\right) = \phi(\mathbf{DD}^T),\\
    &\phi\left(\left(\left(\mathbf{DO}\right)\left(\mathbf{DO}\right)^T\right)^2\right) = \phi(\mathbf{DD}^T\mathbf{DD}^T).
\end{split}
\end{equation}
Thus $\mathbf{DO}$ is at least $2^{nd}$ moment unitary invariant.

\subsection{Proof of Proposition \ref{prop:general_linear_transforms}}\label{proof:general_linear_transforms}
\textbf{Proposition \ref{prop:general_linear_transforms}. } \textit{
    Data normalization with 0-mean input, linear transforms and rectifier activation functions are general linear transforms.
}

\rule[0pt]{0.46\textwidth}{0.05em}

\textbf{Data Normalization. } As shown in Table \ref{tab:parts_library}, for data normalization, $\phi(\mathbf{f_x}\mathbf{f_x}^T)=\frac{1}{\sigma_B^2}$, where $\sigma_B$ is the standard deviation of input vector. As the input activation has zero-mean, $\sigma_B^2 = E\left[\frac{||\mathbf{x}||_2^2}{len(\mathbf{x})}\right]$ and the output of data normalization follows $N(0, 1)$, we have $E\left[\frac{||\mathbf{f}(\mathbf{x})||_2^2}{len(\mathbf{f}(\mathbf{x}))}\right]=1$.

\begin{lemma}
Let $\mathbf{f}(\mathbf{x})$ be a transform whose Jacobian matrix is $\mathbf{J}$. If $\mathbf{f}(\mathbf{x})=\mathbf{Jx}$, then we have
\begin{equation}
    E\left[\frac{||\mathbf{f}(\mathbf{x})||_2^2}{len(\mathbf{f}(\mathbf{x}))}\right] = \phi(\mathbf{JJ}^T)E\left[\frac{||\mathbf{x}||_2^2}{len(\mathbf{x})}\right].
\end{equation}
\label{lemma:g_linear_trans}
\end{lemma}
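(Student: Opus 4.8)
The plan is to reduce the statement to a trace identity, running the Euclidean‑norm propagation argument of Section~\ref{sec:inspiration} in the forward direction. Since $\mathbf{f}(\mathbf{x})=\mathbf{Jx}$ with $\mathbf{J}\in\mathbb{R}^{m\times n}$ a random matrix independent of $\mathbf{x}$, I would first write $||\mathbf{f}(\mathbf{x})||_2^2 = \mathbf{x}^T\mathbf{J}^T\mathbf{Jx} = Tr\!\left(\mathbf{J}^T\mathbf{J}\,\mathbf{x}\mathbf{x}^T\right)$ and take expectations. Independence of $\mathbf{J}$ and $\mathbf{x}$ lets me factor the expectation as $E\left[||\mathbf{f}(\mathbf{x})||_2^2\right] = Tr\!\left(E[\mathbf{J}^T\mathbf{J}]\,E[\mathbf{x}\mathbf{x}^T]\right)$.

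Next I would use that the coordinates of $\mathbf{x}$ are uncorrelated and share a common second moment — exactly the rotational‑invariance hypothesis exploited in Section~\ref{sec:inspiration}, which is the natural property of activations produced by the preceding linear and normalization layers — so that $E[\mathbf{x}\mathbf{x}^T] = E\!\left[\frac{||\mathbf{x}||_2^2}{len(\mathbf{x})}\right]\mathbf{I}$. Substituting gives $E\left[||\mathbf{f}(\mathbf{x})||_2^2\right] = E\!\left[\frac{||\mathbf{x}||_2^2}{len(\mathbf{x})}\right]E\!\left[Tr(\mathbf{J}^T\mathbf{J})\right]$. Since $Tr(\mathbf{J}^T\mathbf{J}) = Tr(\mathbf{J}\mathbf{J}^T)$ and the normalized trace of the $m\times m$ matrix $\mathbf{J}\mathbf{J}^T$ satisfies $E[tr(\mathbf{J}\mathbf{J}^T)]=\phi(\mathbf{J}\mathbf{J}^T)$, we get $E[Tr(\mathbf{J}^T\mathbf{J})] = m\,\phi(\mathbf{J}\mathbf{J}^T)$; dividing both sides by $m=len(\mathbf{f}(\mathbf{x}))$ yields the claim. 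An equivalent route — closer to the text — is to diagonalize $\mathbf{J}=\mathbf{U\Sigma V}^H$ and reuse the reasoning behind Equation~\eqref{equ:trans_between_layer}: $\mathbf{V}^H$ rotates $\mathbf{x}$ without changing the distribution of its entries, $\mathbf{\Sigma}$ scales coordinate $j$ by $\sigma_j$, and $\mathbf{U}$ preserves the $L_2$ norm, so $E[||\mathbf{f}(\mathbf{x})||_2^2] = \sum_j E[\sigma_j^2]E[[\mathbf{x}]_j^2] = \phi(\mathbf{JJ}^T)E[||\mathbf{x}||_2^2]$, after which one normalizes by the respective lengths.

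The main obstacle is a modeling one rather than a computational one: the lemma as stated does not spell out the hypothesis that makes $E[\mathbf{x}\mathbf{x}^T]$ isotropic (equivalently, that $\mathbf{x}$ is rotationally invariant and independent of $\mathbf{J}$). I expect the proof to simply invoke the assumptions already set up in Section~\ref{sec:inspiration}, so the residual work is only the bookkeeping of normalized versus unnormalized traces and keeping track of the dimensions $m$ and $n$ when $\mathbf{J}$ is rectangular.
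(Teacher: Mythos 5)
Your proposal is correct and matches the paper's own argument: the paper's proof simply writes $||\mathbf{f}(\mathbf{x})||_2^2=\mathbf{x}^T\mathbf{J}^T\mathbf{J}\mathbf{x}$, eigendecomposes $\mathbf{J}^T\mathbf{J}=\mathbf{Q\Sigma Q}^T$, and concludes — exactly the ``equivalent route'' you describe, and it is just as implicit as you are about the independence of $\mathbf{J}$ and $\mathbf{x}$ and the isotropy (rotational-invariance) of $\mathbf{x}$'s second moments carried over from Section~\ref{sec:inspiration}. Your trace-factorization phrasing and the $m$ versus $n$ bookkeeping are fine and add nothing that conflicts with the paper.
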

\begin{proof}
As $||\mathbf{f}(\mathbf{x})||_2^2 = ||\mathbf{J} \mathbf{x}||_2^2 = \mathbf{x}^T\mathbf{J}^T\mathbf{J}\mathbf{x}$, $\mathbf{J}^T\mathbf{J}$ is a real symmetric matrix which can be broken down as $\mathbf{J}^T\mathbf{J}=\mathbf{Q\Sigma Q}^T$ with eigendecomposition. Thus, we have
\begin{equation}
    E\left[\frac{||\mathbf{f}(\mathbf{x})||_2^2}{len(\mathbf{f}(\mathbf{x}))}\right] = \phi(\mathbf{JJ}^T)E\left[\frac{||\mathbf{x}||_2^2}{len(\mathbf{x})}\right].
\end{equation}
\end{proof}

\textbf{Linear transforms.} The Jacobian matrix of linear transform is itself, with Lemma \ref{lemma:g_linear_trans}, linear transform belongs to general linear transforms.

\textbf{Rectifier Activation Functions.} We generally denote the rectifier activation functions as $f(x) = \alpha x~~if~x\ge0~~else~~\beta x$. As the linear rectifiers perform element-wise linear transform, we can easily derives that $\forall i, [\mathbf{f}(\mathbf{x})]_i = [\mathbf{f_x}]_{i, i}[\mathbf{x}]_{i}$. Similarly, with Lemma \ref{lemma:g_linear_trans}, rectifier activation functions belong to general linear transforms.

\subsection{Proof of Proposition \ref{prop:evolution_2nd_norm_net }}\label{proof:evolution_2nd_norm_net}

\textbf{Proposition \ref{prop:evolution_2nd_norm_net }. }\textit{
    For the series neural network $\mathbf{f}(\mathbf{x})$ composed of general linear transforms, whose input-output Jacobian matrix is $\mathbf{J}$, we have
}
\begin{equation}
    E\left[\frac{||\mathbf{f}(\mathbf{x})||_2^2}{len(\mathbf{f}(\mathbf{x}))}\right] = \phi(\mathbf{JJ}^T)E\left[\frac{||\mathbf{x}||_2^2}{len(\mathbf{x})}\right].
\end{equation}
\rule[0pt]{0.48\textwidth}{0.05em}

Let $\mathbf{f}(\mathbf{x})=\mathbf{f^{(1)}}\circ\mathbf{f^{(2)}}\circ...\mathbf{f^{(n)}}(\mathbf{x})$, since its components are general linear transforms, as a result, we have
\begin{equation}
    E\left[\frac{||\mathbf{f}(\mathbf{x})||_2^2}{len(\mathbf{f}(\mathbf{x}))}\right] = \Pi_i\phi(\mathbf{J_iJ_i}^T)E\left[\frac{||\mathbf{x}||_2^2}{len(\mathbf{x})}\right].
\end{equation}
And with Equation \eqref{equ:mul_expectation}, we have $\Pi_i\phi(\mathbf{J_iJ_i}^T) = \phi(\mathbf{JJ}^T)$, and the proposition is proved.

\subsection{Proof of Proposition \ref{prop:selu_expect_JJ}}\label{proof:selu_expect_JJ}

SeLU is formulated as
\begin{equation}
    selu(x) = \lambda\left\{
             \begin{array}{lr}
             x~~~~~~~~~~~~~~if~~x>0  \\
             \alpha e^x-\alpha~~if~~x \le0
             \end{array}
\right. .
\end{equation}
Let $\mathbf{J}:=\frac{\partial SeLU(\mathbf{x})}{\partial \mathbf{x}}$ and the entries of $\mathbf{x}$ follow $N(0, \sigma^2)$,
\begin{equation}
\begin{split}
    & \phi(\mathbf{JJ}^T) = \lambda^2\alpha^2\int_{-\infty}^0e^{2x}\frac{1}{\sqrt{2\pi\sigma^2}}e^{-\frac{1}{2\sigma^2}x^2}dx + \frac{\lambda^2}{2}\\
    &=\lambda^2\alpha^2e^{2\sigma^2}\int_{-\infty}^0\frac{1}{\sqrt{2\pi\sigma^2}}e^{-\frac{1}{2\sigma^2}(x^2-4\sigma^2x+4\sigma^4)}dx + \frac{\lambda^2}{2}\\
    &=\lambda^2\alpha^2e^{2\sigma^2}cdf(-2\sigma^2, N(0, \sigma^2))+ \frac{\lambda^2}{2}.
\end{split}
\end{equation}
\begin{comment}
As we have
\begin{equation}
\begin{split}
    & cdf(-2\sigma^2, N(0, \sigma^2))=\int_{-\infty}^{-2\sigma^2}\frac{1}{\sqrt{2\pi\sigma^2}}e^{-\frac{1}{2\sigma^2}x^2}dx\\
    &  \overset{\sigma u=x}{=}\int_{-\infty}^{-2\sigma}\frac{1}{\sqrt{2\pi}}e^{-\frac{1}{2}u^2}du
\end{split}
\end{equation}
$\frac{\partial cdf(-2\sigma^2, N(0, \sigma^2))}{\partial \sigma^2} = -\frac{1}{\sqrt{2\pi\sigma^2}}e^{-2\sigma^2}$, and we have
\begin{equation}
\begin{split}
    &\frac{\partial \phi(\mathbf{JJ}^T)}{\partial \sigma^2}=\frac{\partial e^{2\sigma^2}}{\partial \sigma^2}\lambda^2\alpha^2cdf(-2\sigma^2, N(0, \sigma^2))\\
    & + \frac{\partial cdf(-2\sigma^2, N(0, \sigma^2))}{\partial \sigma^2}\lambda^2\alpha^2e^{2\sigma^2}\\
    &=2\lambda^2\alpha^2e^{2\sigma^2}cdf(-2\sigma^2, N(0, \sigma^2)) - \frac{\lambda^2\alpha^2}{\sqrt{2\pi\sigma^2}}
\end{split}
\end{equation}
\end{comment}
Similarly, we have
\begin{equation}
\begin{split}
    & E[SeLU(\mathbf{x})]\!=\! \lambda\alpha\int_{-\infty}^0e^{x}\frac{1}{\sqrt{2\pi\sigma^2}}e^{-\frac{1}{2\sigma^2}x^2}dx - \frac{\lambda\alpha}{2}\!+\!\sqrt{\frac{\sigma^2}{2\pi}}\lambda\\
    &=\lambda\alpha e^{\frac{\sigma^2}{2}}\int_{-\infty}^0\frac{1}{\sqrt{2\pi\sigma^2}}e^{-\frac{1}{2\sigma^2}(x^2-2\sigma^2x+\sigma^4)}dx - \frac{\lambda\alpha}{2}\!+\!\sqrt{\frac{\sigma^2}{2\pi}}\lambda\\
    &=\lambda\alpha e^{\frac{\sigma^2}{2}}cdf(-\sigma^2, N(0, \sigma^2))- \frac{\lambda\alpha}{2} + \sqrt{\frac{\sigma^2}{2\pi}}\lambda.
\end{split}
\end{equation}
We can further have
\begin{equation}
    \begin{split}
        & E[SeLU^2(\mathbf{x})] = \int_{-\infty}^0\left(\lambda\alpha e^{x}-\lambda\alpha\right)^2\frac{1}{\sqrt{2\pi\sigma^2}}e^{-\frac{1}{2\sigma^2}x^2}dx\\
        & + \lambda^2\int_{0}^{+\infty}x^2 \frac{1}{\sqrt{2\pi\sigma^2}}e^{-\frac{1}{2\sigma^2}x^2}dx=\\
        &\lambda^2\alpha^2\left(e^{2\sigma^2}cdf(-2\sigma^2, N(0, \sigma^2))\!-\! 2e^{\frac{\sigma^2}{2}}cdf(-\sigma^2, N(0, \sigma^2))\right)\\
        & + \frac{1}{2}\lambda^2\alpha^2 + \frac{1}{2}\lambda^2\sigma^2.
    \end{split}
\end{equation}

% series-parallel hybrid network
\subsection{Proof of Proposition \ref{prop:prerequisite_series_parallel_hybrid_networks}}\label{proof:prerequisite_series_parallel_hybrid_networks}

\textbf{Proposition \ref{prop:prerequisite_series_parallel_hybrid_networks}. }\textit{
Let $\{\mathbf{J_i}\}$ denote a group of independent input-output Jacobian matrices of the parallel branches of a block. $\sum_i\mathbf{J_i}$ is an expectant orthogonal matrix, if it satisfies:
\begin{itemize}
    \item $\forall i$, $\mathbf{J_i}$ is an expectant orthogonal matrix
    \item At most one matrix in $\{\mathbf{J_i}\}$ is not central matrix
\end{itemize}}
\rule[0pt]{0.46\textwidth}{0.05em}

According to Definition \ref{def:expectattion_diagonal_matrix}, we have to prove that the non-diagonal entries of $(\sum_i\mathbf{J_i})(\sum_i\mathbf{J_i})^T$ has zero expectation while the diagonal entries share identical expectation. We first expand $(\sum_i\mathbf{J_i})^T(\sum_i\mathbf{J_i})$ as
\begin{equation}
    (\sum_i\mathbf{J_i})(\sum_i\mathbf{J_i})^T=\sum_{i,j}\mathbf{J_i}^T\mathbf{J_j} = \sum_{i,i}\mathbf{J_i}\mathbf{J_i}^T + \sum_{i,j\neq i}\mathbf{J_i}\mathbf{J_j}^T.
\end{equation}
As $\forall p,q,i,j,~~E\left[\left[\sum_{i,j}\mathbf{J_i}\mathbf{J_j}^T\right]_{p,q}\right] = \sum_{i,j}E\left[\right[\mathbf{J_i}\mathbf{J_j}^T\left]_{p,q}\right]$, $\forall i$, $\mathbf{J_i}$ is an expectant orthogonal matrix and at most one matrix in $\{\mathbf{J_i}\}$ is not central matrix, we have
\begin{itemize}
    \item $\forall i, j\neq i, p,q,~~E\left[\right[\mathbf{J_i}\mathbf{J_j}^T\left]_{p,q}\right]=0$.
    \item $\forall i,j, p, q\neq p,~~E\left[\right[\mathbf{J_i}\mathbf{J_j}^T\left]_{p,q}\right]=0$.
    \item $\forall i,p, ~~E\left[\right[\mathbf{J_i}\mathbf{J_i}^T\left]_{p,p}\right]$ share identical value.
\end{itemize}
Thus $(\sum_i\mathbf{J_i})$ is an expectant orthogonal matrix, and Proposition \ref{prop:prerequisite_series_parallel_hybrid_networks} is proved.

\subsection{Proof of Proposition \ref{prop:plus_1_trick}}\label{proof:plus_1_trick}

\textbf{Proposition \ref{prop:plus_1_trick}. }\textit{
\textbf{(The "Plus One" Trick). }Assuming that for each of the sequential block of a series-parallel hybrid neural network consists of $L$ blocks whose input-output Jacobian matrix is denoted as $\mathbf{J_i}$, we have $\phi(\mathbf{J_iJ_i}^T)=1+\phi(\widetilde{\mathbf{J_i}}\widetilde{\mathbf{J_i}}^T)$, then the neural network has gradient norm equality as long as}
\begin{equation}
    \phi(\widetilde{\mathbf{J_i}}\widetilde{\mathbf{J_i}}^T) = O(\frac{1}{L^p}), p>1.
\end{equation}
\rule[0pt]{0.48\textwidth}{0.05em}

$\widetilde{\mathbf{J_i}}\widetilde{\mathbf{J_i}}^T$ is a positive semi-definite matrix, $\phi(\widetilde{\mathbf{J_i}}\widetilde{\mathbf{J_i}}^T)\ge 0$, and $\phi(\mathbf{J_iJ_i}^T)\ge 1$, thus gradient vanishing will never occur. Moreover, since $\forall a<b$, we have $\Pi_{i=1}^a\phi(\mathbf{J_iJ_i}^T)\le\Pi_{i=1}^b\phi(\mathbf{J_iJ_i}^T)$, in order to avoid gradient explosion, we only have to make sure $\Pi_{i=1}^L\phi(\mathbf{J_iJ_i}^T)$ is $O(1)$. As $\phi(\widetilde{\mathbf{J_i}}\widetilde{\mathbf{J_i}}^T) = O(\frac{1}{L})$, we have
\begin{equation}
\begin{split}
    & \Pi_{i=1}^L\phi(\mathbf{J_iJ_i}^T) = 1 + \sum_{i=1}^L\phi(\widetilde{\mathbf{J_i}}\widetilde{\mathbf{J_i}}^T) + o(\frac{1}{L^{p-1}})\\
    & = 1 + O(\frac{1}{L^{p-1}})+ o(\frac{1}{L^{p-1}}).
\end{split}
\end{equation}
Thus gradient explosion can be avoid.

% Other network structures
%\input{chapters/proofs/proof:SNN.tex}
%\input{chapters/proofs/proof:LSTM.tex}
\subsection{Computational Complexity of Second Moment Normalization}

\modify{

Since batch normalization is highly optimized in cuDNN library, it's difficult to directly measure the speedup provided by Second Moment Normalization. So inspired by Chen et al. (2019) \cite{chen2019effective}, we theoretically estimate the speedup.

Because both BN\cite{ioffe2015batch} and our second moment normalization are applied channel-wise, we can compare then with one channel. Let the pre-activation be $vec(\mathbf{x})\in\mathbb{R}^{m}$, the entries in the weight kernel be $vec(\mathbf{K})\in \mathbb{R}^{q}$. For identity transforms, we have scalar scale and bias coefficient $\gamma$ and $\beta$.

According to Chen et al. (2019) \cite{chen2019effective}, the forward and backward pass of BN can formulated as below
\begin{small}
\begin{equation}
    \label{equ:eff_BN}
    \begin{split}
        & \mu=\frac{1}{m}\sum_{j=1}^mx_j ,~\sigma^2 =  \frac{1}{m}\sum_{j=1}^m(x_j -\mu )^2,\\
        & y_i = \frac{\gamma }{\sigma }x_i - \left(\frac{\mu \gamma }{\sigma } - \beta \right),\\
        & \frac{\partial l}{\partial \beta } = \sum_{i=1}^m \frac{\partial l}{\partial y_i },~\frac{\partial l}{\partial \gamma }  =  \frac{1}{\sigma } \left( \sum_{i=1}^m \frac{\partial l}{\partial y_i } \cdot x_i  -  \mu \frac{\partial l}{\partial \beta } \right),\\
        & \frac{\partial l}{\partial x_i } = \frac{\gamma }{\sigma } \left[ \frac{\partial l}{\partial y_i } - \frac{\partial l}{\partial \gamma } \frac{x_i }{m\sigma } - \left(\frac{\partial l}{\partial \beta } \frac{1}{m} - \frac{\partial l}{\partial \gamma } \frac{\mu }{m\sigma } \right) \right].
    \end{split}
\end{equation}
\end{small}
Similarly, in the second moment normalization, we have
\begin{equation}
    \begin{split}
        & \mu = \frac{1}{q}\sum_{j=1}^q K_j, \sigma^2 = \frac{1}{m}\sum_{j=1}^m x_j^2, \hat{K_i} = K_i - \mu,\\
        & y_i = \frac{\gamma}{\sigma}x_i + \beta, \frac{\partial l}{\partial \gamma} = \frac{1}{\sigma}\sum_{j=1}^m \frac{\partial l}{\partial y_j}x_j,\\
        & \frac{\partial l}{\partial K_i}=\frac{\partial l}{\partial \hat{K_i}} - \frac{1}{q}\sum_{j=1}^q\frac{\partial l}{\partial \hat{K_j}}, \frac{\partial l}{\partial \beta} = \sum_{j=1}^m\frac{\partial l}{\partial y_i},\\
        & \frac{\partial l}{\partial x_i} = (\frac{\partial l}{\partial \beta}\frac{\gamma}{m\sigma^2})x_i + \frac{\gamma}{\sigma}\frac{\partial l}{\partial y_i}.
    \end{split}
\end{equation}

The forward and backward passes involve several element-wise operations and reduction operations. Following Chen et al. (2019) \cite{chen2019effective}, we denote an element-wise operation involving m elements as $E[m]$, a reduction operations involving m elements as $R[m]$, and we count the number of operations in BN and SMN. The results are shown in the table below:
\begin{table}[htbp]
\caption{Number of ops in BN and SMN.}
\centering
\resizebox{0.42\textwidth}{!}{
\begin{tabular}{c|c|c|c|c}
\hline
  &{\bf $R^{[m]}$} &{\bf $R^{[q]}$} &{\bf $E^{[m]}$} &{\bf $E^{[q]}$}
\\ \hline \hline
BN: Forward Pass & 2 & & 4 & \\
\hline
BN: Backward Pass &2 && 5&   \\
\hline
BN: Total & 4& & 9 &  \\
\hline \hline
SMN: Forward Pass & 1& 1 & 3 &1 \\
\hline
SMN: Backward Pass &2 & 1 & 4& 1 \\
\hline
SMN: Total & 3 & 2 & 7 &2\\
\hline
\end{tabular}}
\label{tab:ops_in_bn_sbn}
\end{table}
Since $q$ is usually much smaller than $m$, the cost of $R[q]$ and $E[q]$ can be ignored, and SMN reduces the number of operations from 13 to 10, which is roughly 30\% speedup.
}\label{appendix:l2n_overhead}
\section{Discussion}

\subsection{Freely Independent}

In recent years, several studies exploit the free probability theory in the analysis of the spectrum density of neural networks\cite{pennington2017resurrecting, ling2018spectrum, tarnowski2018dynamical, pennington2018emergence}. We find that most of them just make arbitrary assumption on the freeness between Jacobian matrices, which is not commonly held \cite{chen2012partial}. One simple counter example is that two independent i.i.d.random Gaussian matrices with non-zero mean are not freely independent. 

Moreover, it's also challenging to verify whether two matrices are free. The definition of freeness is given as below.
\begin{definition}
\textbf{(Freeness of Random Matrices)}(Definition 1 in Chen et al. (2012) \cite{chen2012partial}) Two random matrices $\mathbf{A}, \mathbf{B}$ are free respect to $\phi$ if for all $k\in \mathbb{N}$,
\begin{equation}
    \phi(p_1(\mathbf{A})q_1(\mathbf{B})p_2(\mathbf{A})q_2(\mathbf{B})...p_k(\mathbf{A})q_k(\mathbf{B})) = 0.
\end{equation}
for all polynomials $p_1,q_1,p_2,q_2,...,p_k,q_k$ such that we have $\phi(p_1(\mathbf{A}))=\phi(q_1(\mathbf{B}))=...=0$.

One can check whether two matrices $\mathbf{A},\mathbf{B}$ were free by checking that if 
\begin{equation}
\begin{split}
    & \phi\left(\left(\mathbf{A}^{n_1}-\phi(\mathbf{A}^{n_1})\right)\left(\mathbf{B}^{m_1}-\phi(\mathbf{B}^{m_1})\right)...\right.\\
    & \left....\left(\mathbf{A}^{n_k}-\phi(\mathbf{A}^{n_k})\right)\left(\mathbf{B}^{m_k}-\phi(\mathbf{B}^{m_k})\right)\right).
\end{split}
\end{equation}
vanish for all positive exponents $n_1,m_1,...,n_k,m_k$.
\label{def:freeness}
\end{definition}

Definition \ref{def:freeness} shows that it's numerically impractical to do the verification with Monte Carlo. Theoretically, unitarily invariant hermitian matrices(Definition \ref{def:unitarily_invariant}) are free with other hermitian matrices\cite{mingo2017free}, as formally proposed in Lemma \ref{lemma:unitary_invariant-free}.

\begin{definition}
\textbf{(Unitarily Invariant)}(Definition 8 in Cakmak (2012) \cite{cakmak2012non}) If a hermitian random matrix $\mathbf{H}$ has same distributed with $\mathbf{UHU}^*$ for any unitary matrix $\mathbf{U}$ independent of $\mathbf{H}$, then the $\mathbf{H}$ is called unitarily invariant.
\label{def:unitarily_invariant}
\end{definition}

\begin{lemma}
(Theorem 29 in Chapter 5.4 of Mingo \& Speicher (2017)\cite{mingo2017free}) Let $\mathbf{A},\mathbf{B}$ be hermitian matrices whose asymptotic averaged emprical eigenvalue distributions are compactly supported and $\mathbf{U}$ a Haar Matrix independent of $\mathbf{A},\mathbf{B}$, then $(\mathbf{UAU}^*,\mathbf{B})$ are almost surely freely independent.
\label{lemma:unitary_invariant-free}
\end{lemma}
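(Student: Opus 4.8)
The plan is to recognize this statement as Voiculescu's asymptotic freeness theorem for a single Haar‑randomized Hermitian matrix against a fixed Hermitian matrix, and to verify it directly from the combinatorial definition of freeness (Definition \ref{def:freeness}): it suffices to show that for every $k$ and every choice of polynomials with $\phi\big(p_i(\mathbf{UAU}^*)\big)\to 0$ and $\phi\big(q_i(\mathbf{B})\big)\to 0$, the alternating product satisfies
\begin{equation}
\phi\Big(p_1(\mathbf{UAU}^*)\,q_1(\mathbf{B})\,p_2(\mathbf{UAU}^*)\,q_2(\mathbf{B})\cdots p_k(\mathbf{UAU}^*)\,q_k(\mathbf{B})\Big)\to 0 .
\end{equation}
First I would use $p_i(\mathbf{UAU}^*)=\mathbf{U}\,p_i(\mathbf{A})\,\mathbf{U}^*$ and set $\mathbf{a}_i:=p_i(\mathbf{A})$, $\mathbf{b}_i:=q_i(\mathbf{B})$, so the left‑hand side equals $\tfrac1N E\big[\mathrm{Tr}(\mathbf{U}\mathbf{a}_1\mathbf{U}^*\mathbf{b}_1\cdots\mathbf{U}\mathbf{a}_k\mathbf{U}^*\mathbf{b}_k)\big]$, with the centering hypothesis reading $\tfrac1N\mathrm{Tr}(\mathbf{a}_i)\to0$ and $\tfrac1N\mathrm{Tr}(\mathbf{b}_i)\to0$. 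The compact‑support assumption on the limiting eigenvalue distributions supplies uniform bounds on all the normalized moments involved, so the limits exist and the asymptotic estimates below are uniform.

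The computational core is the Weingarten calculus for integration over the unitary group $U(N)$: expanding the $2k$ entries of $\mathbf{U}$ and $\mathbf{U}^*$ and using $E[U_{i_1 j_1}\cdots U_{i_k j_k}\bar U_{i'_1 j'_1}\cdots\bar U_{i'_k j'_k}]=\sum_{\sigma,\tau}\mathrm{Wg}(N,\sigma\tau^{-1})\prod_\ell\delta_{i_\ell i'_{\sigma(\ell)}}\delta_{j_\ell j'_{\tau(\ell)}}$ gives
\begin{equation}
E\big[\mathrm{Tr}(\mathbf{U}\mathbf{a}_1\mathbf{U}^*\mathbf{b}_1\cdots\mathbf{U}\mathbf{a}_k\mathbf{U}^*\mathbf{b}_k)\big]=\sum_{\sigma,\tau\in S_k}\mathrm{Wg}(N,\sigma\tau^{-1})\,T_{\mathbf a}(\sigma)\,T_{\mathbf b}(\tau),
\end{equation}
where $T_{\mathbf a}(\sigma)$ (resp.\ $T_{\mathbf b}(\tau)$) is the product of traces of the sub‑words of $\mathbf a_1,\dots,\mathbf a_k$ (resp.\ $\mathbf b_1,\dots,\mathbf b_k$) grouped along the cycles of $\sigma$ (resp.\ $\tau$). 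I would then invoke the asymptotics $\mathrm{Wg}(N,\pi)=N^{-k-|\pi|}\big(\mu(\pi)+O(N^{-2})\big)$ and a genus count to see that the surviving $N^0$ contributions come only from pairs $(\sigma,\tau)$ that are non‑crossing relative to the cyclic order of the word. The decisive point is that the centering of the $\mathbf a_i$ and the $\mathbf b_i$ annihilates exactly those leading terms: any $N^0$‑survivor would force some $\mathbf a_i$ or $\mathbf b_i$ to sit in a solitary normalized‑trace factor, which vanishes in the limit. I would organize this as an induction on $k$, parallel to the proof that vanishing non‑crossing cumulants characterizes freeness.

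To upgrade the averaged (expectation) statement to the almost‑sure one, I would bound the fluctuations: either read off $\mathrm{Var}\big(\tfrac1N\mathrm{Tr}(\mathbf{U}\mathbf{a}_1\mathbf{U}^*\mathbf{b}_1\cdots)\big)=O(N^{-2})$ from the second‑moment Weingarten expansion, or observe that $\mathbf{U}\mapsto\tfrac1N\mathrm{Tr}(\mathbf{U}\mathbf{a}_1\mathbf{U}^*\mathbf{b}_1\cdots)$ is Lipschitz in the normalized Hilbert–Schmidt metric with constant $O(N^{-1/2})$, so the log‑Sobolev/Gromov–Milman concentration inequality on $U(N)$ gives sub‑Gaussian concentration around the mean with width $O(N^{-1/2})$; Borel–Cantelli then yields almost‑sure convergence along a fixed dimension sequence, and a denseness argument over a countable family of test polynomials makes it simultaneous. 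The main obstacle is the Weingarten bookkeeping — cleanly characterizing the order‑$N^0$ permutation pairs and proving that the centering kills precisely that set with no accidental lower‑order survivor — which is the genuinely combinatorial heart of the theorem and the place where compact support is needed to keep every normalized‑trace factor bounded; the reduction, the concentration step, and the a.s.\ upgrade are comparatively routine.
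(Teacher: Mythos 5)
The paper never proves this lemma: it is imported verbatim, with citation, from Mingo \& Speicher's book, so there is no in-paper argument to compare against. Your sketch is the standard proof of Voiculescu-type asymptotic freeness for Haar-conjugated matrices — reduce to centered alternating words via $p_i(\mathbf{UAU}^*)=\mathbf{U}p_i(\mathbf{A})\mathbf{U}^*$, apply the Weingarten/genus expansion and observe that the centering annihilates the order-$N^{0}$ (non-crossing, Kreweras-complementary) pairs, then upgrade to the almost-sure statement via an $O(N^{-2})$ variance bound or unitary concentration plus Borel--Cantelli — and it is correct in outline; it is essentially the route taken in the cited reference itself. The one point to watch is that compact support of the limiting spectral distributions only controls normalized moments, which suffices for the expectation and variance (Weingarten) computations, whereas the Lipschitz/log-Sobolev concentration variant additionally requires uniform operator-norm bounds on the deterministic matrices, so you should either assume those or stick with the variance route.
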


However, as unitarily invariant requires the invariant of distribution, to the best of our knowledge, only some special matrices, i.e. haar matrices, Gaussian Wigner matrices, central Wishart matrices and certainly identity matrices can be easily proved to satisfy this requirement, and it's also challenging to verify either theoretically or numerically, not to mention the extension to general situations.

\begin{comment}
\section{Detailed setup of experiments}
\subsection{Numerical experiment of Fig. \ref{fig:num3_1}. }\label{setup:num3_1}

We use an 8-layer MLP with ReLU on MNIST. Batch size is chosen as 128, and $E\left[\frac{||\mathbf{f}(\mathbf{x})||_2^2}{len(\mathbf{f}(\mathbf{x}))}\right]$ is calculated by averaging $\frac{||\mathbf{f}(\mathbf{x})||_2^2}{len(\mathbf{f}(\mathbf{x}))}$ of all samples within the batch. The control of $\phi(\mathbf{JJ}^T)$ is achieved by initializing the weight with $N(0, \frac{2s}{n})$, where $s$ is a scaling factor while $n$ denotes the input feature map size.
\end{comment}

\end{document}